\documentclass[11pt]{article}

\usepackage[margin=1in]{geometry}
\usepackage{graphicx}
\usepackage{amsmath, amssymb, amsthm, mathtools}
\usepackage{enumitem}
\usepackage{xcolor}
\usepackage[
    colorlinks=true,
    linkcolor=black,
    citecolor=blue,
    urlcolor=blue
]{hyperref}
\usepackage{microtype}

\usepackage[section]{placeins}
\usepackage{caption}
\usepackage{booktabs}
\usepackage[sort&compress,numbers]{natbib}

\newcommand{\R}{\mathbb{R}}
\newcommand{\E}{\mathbb{E}}
\newcommand{\PP}{\mathbb{P}}
\newcommand{\1}{\mathbf{1}}
\newcommand{\norm}[1]{\left\lVert #1 \right\rVert}
\newcommand{\abs}[1]{\left\lvert #1 \right\rvert}
\newcommand{\ip}[2]{\left\langle #1,#2\right\rangle}
\newcommand{\opnorm}[1]{\left\lVert #1 \right\rVert_{\mathrm{op}}}
\newcommand{\Sph}{\mathbb{S}}
\newcommand{\assitemref}[1]{\hyperref[#1]{\textbf{(\ref*{#1})}}}
\newcommand{\assitemrefup}[1]{\hyperref[#1]{\textup{(\ref*{#1})}}}

\DeclareMathOperator{\Lip}{Lip}
\DeclareMathOperator{\tr}{tr}

\DeclareMathOperator{\Var}{Var}
\DeclareMathOperator{\supp}{supp}

\theoremstyle{plain}
\newtheorem{theorem}{Theorem}[section]
\newtheorem{proposition}[theorem]{Proposition}
\newtheorem{lemma}[theorem]{Lemma}
\newtheorem{corollary}[theorem]{Corollary}

\theoremstyle{definition}
\newtheorem{assumption}[theorem]{Assumption}
\newtheorem{definition}[theorem]{Definition}
\newtheorem{remark}[theorem]{Remark}

\numberwithin{equation}{section}

\title{Scaling Limits of Constant-Stepsize SGD at Flat Minima}

\newcommand\blfootnote[1]{%
  \begingroup
  \renewcommand\thefootnote{}\footnote{#1}%
  \addtocounter{footnote}{-1}%
  \endgroup
}   

\author{Jingyi Zhang$^1$,  
  Cheng Mao$^2$,
  Debankur Mukherjee$^3$}
\date{}

\begin{document}
\maketitle

\begin{abstract}
For stochastic gradient descent (SGD) with a constant stepsize $\alpha$, the invariant law of the iterates, centered at a minimizer, describes the behavior of the algorithm over long time horizons.  In the strongly convex case, this invariant law has the familiar $\sqrt{\alpha}$ scaling and a Gaussian limit as $\alpha\downarrow 0$. We show that this behavior changes fundamentally for convex objectives $H$ with flat minima and (sub)quadratic tails.

More specifically, we study SGD with Markovian noise generated by a contractive driving chain.  For every sufficiently small constant stepsize $\alpha$, we prove existence, uniqueness, and geometric convergence to an augmented invariant law in a Wasserstein distance induced by an $\alpha$-dependent metric.
When the minimizer \(x_\star\) has local flatness exponent \(m\ge2\), meaning that \(\nabla^2H(x)\asymp \norm{x-x_\star}^{m-2}I_d\) as
\(x\to x_\star\), we obtain a contraction bound with factor
\(1-c\alpha^{m-1}\), where \(c>0\) is a constant.  This recovers the factor \(1-c\alpha\) in the quadratic case \(m=2\).
We then analyze the small-stepsize scaling limit. We show that the invariant law concentrates on the scale $\alpha^{1/m}$ and that the rescaled iterates converge weakly to the stationary distribution of the stochastic differential equation
\[
        dY_t=-h_0(Y_t)\,dt+\Sigma^{1/2}\,dB_t ,
\]
where $h_0$ is the limiting drift at the minimizer and $\Sigma$ denotes the asymptotic covariance.
This recovers the Gaussian limit when \(m=2\) and gives generally non-Gaussian
stationary limits in the flat case \(m>2\).
Finally, we give corresponding results for coordinate-separable objectives with unequal flatness exponents.
\end{abstract}

\blfootnote{$^1$Georgia Institute of Technology, \emph{Email:} \href{mailto:jzhang3450@gatech.edu}{jzhang3450@gatech.edu}}
\blfootnote{$^2$Georgia Institute of Technology, \emph{Email:}  \href{mailto:cheng.mao@math.gatech.edu}{cheng.mao@math.gatech.edu}}
\blfootnote{$^3$Georgia Institute of Technology, \emph{Email:} \href{mailto:debankur.mukherjee@isye.gatech.edu}{debankur.mukherjee@isye.gatech.edu}}

\tableofcontents

\section{Introduction}
\label{sec:introduction}

Stochastic approximation originated in recursive methods for root-finding and optimization based on noisy observations \cite{RobbinsMonro1951,KieferWolfowitz1952,benveniste1990,KushnerYin2003,borkar2008}.  Classical theory uses decreasing stepsizes and studies almost-sure convergence to a target point.  In large-scale optimization, by contrast, constant or piecewise-constant stepsizes are often used for long time intervals.  With a constant stepsize and nondegenerate gradient noise, the last stochastic iterate generally does not converge to the minimizer.  It is instead natural to study the invariant law of the Markov chain generated by the algorithm, which describes the stationary error of the last iterate.

For smooth strongly convex objectives, the stationary error of stochastic gradient descent (SGD) is well understood.  It is typically of order \(\sqrt\alpha\), and after the \(\alpha^{-1/2}\)-rescaling the invariant law converges to the stationary distribution of an Ornstein--Uhlenbeck diffusion, hence Gaussian.  This picture appears in small-stepsize asymptotic laws \cite{pflug1986}, in diffusion approximations for constant-step SGD \cite{MandtHoffmanBlei2017}, and in stationary small-stepsize characterizations for SGD-type algorithms \cite{chen2022stationary}.  Complementary Markov chain analyses establish invariant measure expansions, convergence, and concentration in strongly convex settings \cite{DieuleveutDurmusBach2020,MeradGaiffas2025}. However, the quadratic case is not representative of all convex objectives.  When the deterministic drift vanishes to order higher than one at the minimizer, the normalization of the invariant law and its scaling limit both change.

This paper studies convex objectives whose minimizer may be flatter than quadratic.  Near the minimizer, the deterministic drift may satisfy
\[
        \nabla^2H(x)\asymp \norm{x}^{m-2}I_d,
        \qquad m\ge2,
\]
in dimension $d$, so the restoring force is of order \(\norm{x}^{m-1}\).  At the same time, the objective may have subquadratic tails: for \(1\le \beta<2\), the drift for large \(\norm{x}\) may grow only as \(\norm{x}^{\beta-1}\).  These features occur in standard statistical objectives.  Quantile estimation when the distribution function crosses the target probability level at higher order provides a canonical example of local flatness.  For median and \(L_1\) regression, this type of nonregular local behavior was studied by Knight~\cite{Knight1998}; their regularly varying mechanism applies to general quantile loss introduced by Koenker and Bassett~\cite{KoenkerBassett1978}.  The same local geometry appears in the Rockafellar--Uryasev variational representation of conditional value-at-risk (CVaR)~\cite{RockafellarUryasev2000}.  Subquadratic tails arise in robust losses with bounded or sublinear scores, such as Huber-type and generalized Charbonnier losses \cite{Huber1964,Charbonnier1997,Barron2019}, and in logistic losses \cite{BartlettJordanMcAuliffe2006}.  Recent work on subquadratic SGD treats the locally strongly convex case \(m=2\) with \(\beta<2\) \cite{zhangPiecewiseLyapunovAnalysis2025}; the case \(m>2\) requires a different local scale and a different contraction argument.

We emphasize the different roles played by the two key exponents used throughout the paper: The local flatness exponent \(m\) determines the order of the restoring drift near the minimizer and therefore the stationary scaling.  
The tail exponent \(\beta\) determines the drift for large \(\norm{x}\) and the weight function needed to control excursions.  

Moreover, we consider SGD with Markovian noise. 
A driving Markov chain \((\xi_n)_{n \ge 0}\) evolves according to a contractive recursion, and the SGD iterate follows
\begin{equation*}
        X_{n+1}=X_n-\alpha\{h(X_n)+g(X_n,\xi_{n+1})\},
        \qquad h=\nabla H.
\end{equation*}
The augmented process \((X_n,\xi_n)\) is Markov.  This formulation covers not only statistical settings with i.i.d.\ noise but also temporally dependent data streams. 


\paragraph{Challenges and main contributions.}
There are three difficulties for analyzing SGD in the above setting.  First, ordinary Euclidean contraction degenerates near the minimizer when \(m>2\), which is also the region where the invariant laws concentrate as \(\alpha\downarrow0\).  Second, for \(\beta<2\), quadratic Lyapunov functions are not well matched to the drift for large \(\norm{x}\).  Third, under Markovian noise, the current iterate is correlated with future values of the driving chain, which prevents a direct one-step martingale argument. Our main results below address these challenges.

First, for the convergence of SGD with a small fixed stepsize $\alpha$, 
we prove that the SGD iterates converge to a unique invariant law geometrically in a suitably chosen Wasserstein distance, with contraction factor \(1-c\alpha^{m-1}\) for a constant \(c>0\). 
The proof uses the metric of Qu, Blanchet, and Glynn~\cite{QBG2025} induced by a Lyapunov weight function $V_\alpha$ that we carefully construct.
The weight function consists of two terms, one for controlling the subquadratic tail of the objective and the other for controlling the near-minimizer region.  
Moreover, directly applying the result of \cite{QBG2025} does not work in our case, and we show contraction of SGD iterates with a new argument.

Second, with the fixed-\(\alpha\) invariant law in hand, we then identify its scaling limit as $\alpha \downarrow 0$.  The local \(m\)-flat geometry
determines the normalization: the SGD iterates concentrate on the scale \(\alpha^{1/m}\).
Then, the scaling limit is given by the stationary distribution of the stochastic differential equation
\[
        dY_t=-h_0(Y_t)\,dt+\Sigma^{1/2}\,dB_t ,
\]
where $h_0$ denotes the limiting drift, and $\Sigma$ denotes the asymptotic variance which is identified via a Poisson equation argument.
When \(m=2\), the above equation recovers the Ornstein--Uhlenbeck diffusion, and the invariant law is
Gaussian. 
For \(m>2\), the drift \(h_0\) is nonlinear, and the stationary distribution is generally non-Gaussian.
We note that 
Chen, Mou, and Maguluri~\cite{chen2022stationary} numerically exhibited the non-Gaussian quartic scaling. Wang et al.~\cite[Section~5]{WangEtAl2026} conjectured a one-dimensional flat-minimum Gibbs approximation with the corresponding nonstandard scaling and supported it numerically.  Our result establishes the scaling limit rigorously in a multidimensional setting with Markovian noise, confirming and extending the phenomenon suggested by prior work.
 
Finally, for coordinate-separable objective functions, we provide the corresponding constant-stepsize and scaling limit results. Notably, if the coordinates have unequal local flatness exponents, only the flattest coordinates can remain nonzero under the common scale associated with the largest exponent.

Table~\ref{tab:regime-summary} summarizes the cases appearing in the main results.
\begin{table}[t]
\centering
\begin{tabular}{lll}
\toprule
Feature & Representative condition & Effect on the invariant law \\
\midrule
Quadratic & \(h(x)\sim Ax\) & \(\alpha^{1/2}\) scale, Gaussian limit \\
Flat minimum & \(h(x)\sim \norm{x}^{m-2}x\) & \(\alpha^{1/m}\) scale, nonlinear diffusion \\
Markovian noise & correlated \(g(0,\xi_n)\) & asymptotic covariance \(\Sigma\) \\
Subquadratic tail & \(\norm{h(x)}\asymp \norm{x}^{\beta-1}\) & \(\norm{x}^{2-\beta}\) term in the Lyapunov weight \\
\bottomrule
\end{tabular}
\caption{The primary features covered by the main results.  The exponent \(m\) is local and determines the normalization of the invariant law and the limiting drift \(h_0\).  The exponent \(\beta\) is global and determines the tail part of the induced metric.}
\label{tab:regime-summary}
\end{table}

\paragraph{Related literature.}
The classical stochastic approximation literature studies decreasing stepsizes and averaging of iterates, establishing almost-sure convergence and asymptotic normality \cite{RobbinsMonro1951,KieferWolfowitz1952,Fabian1968,Ljung1977,Ruppert1988,PolyakJuditsky1992,benveniste1990,KushnerYin2003,borkar2008}.  Modern nonasymptotic analyses of decreasing-step stochastic optimization include \cite{NemirovskiEtAl2009,moulines2011,BachMoulines2013}.  These results concern convergence of the iterates to an optimizer.  The object here is different: for a constant stepsize, the last iterate has a nontrivial invariant law, and the limit \(\alpha\downarrow0\) is taken at stationarity.

For constant stepsizes, the algorithm is analyzed as a Markov chain.  Pflug~\cite{pflug1986} studied small-stepsize asymptotic laws for stochastic optimization. Mandt, Hoffman, and Blei~\cite{MandtHoffmanBlei2017} developed quadratic diffusion approximation for constant-stepsize SGD.  Dieuleveut, Durmus, and Bach~\cite{DieuleveutDurmusBach2020} developed invariant measure and bias expansions for smooth strongly convex SGD. Merad and Ga{\"i}ffas~\cite{MeradGaiffas2025} proved Wasserstein convergence and concentration in related strongly convex settings. In a different asymptotic regime, Yu, Balasubramanian, Volgushev, and Erdogdu~\cite{YuBalasubramanianVolgushevErdogdu2021} established a central limit theorem for averages of iterates at a fixed stepsize, together with a characterization of the invariant bias.

Besides the classical work of Pflug~\cite{pflug1986}, the recent works most closely related to our scaling limit of invariant laws are Chen, Mou, and Maguluri~\cite{chen2022stationary}, Zhang et al.~\cite{ZhangHuoChenXie2024Prelimit}, and Wang et al.~\cite{WangEtAl2026}.  Pflug~\cite{pflug1986} and Chen, Mou, and Maguluri~\cite{chen2022stationary} both first pass to constant-stepsize invariant laws and then let \(\alpha\downarrow0\) to obtain a Gaussian limit. The latter work obtains Gaussian limits in smooth strongly convex, linear, and contractive settings, and numerically demonstrates the \(\alpha^{1/4}\) scaling and a non-Gaussian limit for a quartic objective.  Zhang et al.~\cite{ZhangHuoChenXie2024Prelimit} establish steady-state convergence for nonsmooth contractive stochastic approximation: their scale remains \(\sqrt\alpha\), while nonsmooth local dynamics can produce a non-Gaussian limit.  Wang et al.~\cite{WangEtAl2026} give a one-dimensional Gibbs approximation for flat convex objectives with the correct nonstandard scaling, conditional on two conjectures concerning stationary moments and Stein equation regularity.  We establish the flat-minimum limit unconditionally and allow a multidimensional setting with Markovian noise.  

The tail behavior we study in this work is closest to recent work on subquadratic SGD for robust and quantile regression~\cite{zhangPiecewiseLyapunovAnalysis2025}.  In the notation below, that setting corresponds to \(m=2\) and \(\beta<2\): the objective is locally strongly convex, but its tail for large \(\norm{x}\) is subquadratic.  The present paper allows this tail behavior to coexist with local flatness \(m>2\).  

Markovian noise introduces a separate issue because the stationary iterate is correlated with future values of the driving chain.  Poisson equations are the standard tool for separating this temporal dependence in stochastic approximation.  For linear stochastic approximation, Huo, Chen, and Xie~\cite{HuoChenXie2026Bias} proved convergence to a unique stationary distribution and developed a stepsize expansion of its bias.  More recent analyses quantify how Markovian memory and nonlinear updates affect stationary bias and weak convergence \cite{HuoZhangChenXie2024Collusion,HadaviMouSamsonovWai2026DecisionDependent}.  Here the Poisson equation facilitates identifying the diffusion covariance $\Sigma$ as the asymptotic covariance of the stationary sequence \(g(0,\xi_n)\).

Our proof of constant-stepsize ergodicity builds on recent work by Qu, Blanchet, and Glynn~\cite{QBG2025}. Their work proves Wasserstein convergence from contractive drift conditions.  
We do not verify their conditions directly as they may not always hold in our case. 
Thus, while we adopt the induced metric of~\cite{QBG2025}, our constant-stepsize ergodicity result requires a new directional kernel estimate adapted to the SGD dynamics.

For the scaling limit as \(\alpha\downarrow0\), a related line of work addresses quantitative stationary approximations, often through generator comparisons or Stein-type arguments.  As a methodological precursor, Gast~\cite{Gast2017} used generator comparison to obtain rates for mean-field models.  Allmeier and Gast~\cite{AllmeierGast2024} used related generator expansions to compute bias corrections for constant-step stochastic approximation with Markovian noise.  Wang et al.~\cite{WangEtAl2026} proved nonasymptotic Wasserstein and tail bounds in smooth settings, including Markovian noise through Poisson equations. Wei et al.~\cite{WeiLiLouWu2025} obtained finite-sample Gaussian approximations for constant-stepsize SGD through linearization.  Our result instead identifies the scaling limit of invariant laws for multidimensional flat objectives, where the limiting drift is nonlinear and the limit is generally non-Gaussian.  While we do not obtain a finite-\(\alpha\) convergence rate, to the best of our knowledge this is the first rigorous invariant-law scaling limit for flat minima, even in dimension one.

\paragraph{Organization.}
The paper is organized around the two limits involved in the analysis: first
the limit of the SGD iterates at constant stepsize, and then the small-stepsize limit at
stationarity.  Section~\ref{sec:main-results} states all the main results.
Section~\ref{sec:applications-numerics}
studies statistical examples that motivate local flatness and subquadratic tails. 
Section~\ref{sec:main-proofs} provides the proof strategy and proves the main theorems.  
Additional technical proofs are deferred to the appendices.

\section{Main results}
\label{sec:main-results}

To simplify the statement of the main results, we assume that zero is the minimizer of the objective function without loss of generality. 
Indeed, with \(x_\star\) denoting the minimizer of the original objective, replacing the optimization variable by \(x-x_\star\) translates the minimizer to the origin. 

Let \(H:\R^d\to\R\) be the objective function, and let \(h:=\nabla H\). Consider SGD with Markovian noise defined as follows. Let \((\xi_n)\) be the driving Markov chain. It takes values in a closed convex set \(\Xi\subseteq\R^{d_\Xi}\) and evolves according to
\[
        \xi_{n+1}=\Phi(\xi_n,U_{n+1}),
\]
where \((U_n)_{n\ge1}\) are i.i.d.\ innovations taking values in a measurable space \(\mathcal U\) and \(\Phi:\Xi\times\mathcal U\to\Xi\) is Borel measurable.  For a constant stepsize \(\alpha\in(0,1)\), the SGD is defined by 
\begin{equation}
\label{eq:markov-sgd-main}
        X_{n+1}
        =
        X_n-\alpha\{h(X_n)+g(X_n,\xi_{n+1})\}.
\end{equation}
Equivalently, with \(Z_n=(X_n,\xi_n)\in\mathsf Z:=\R^d\times\Xi\),
\[
        Z_{n+1}=F_{\alpha,U_{n+1}}(Z_n),
\]
where
\begin{equation}
\label{eq:augmented-random-map}
        F_{\alpha,u}(x,\xi)
        :=
        \left(
        x-\alpha\{h(x)+g(x,\Phi(\xi,u))\},
        \Phi(\xi,u)
        \right).
\end{equation}
We suppress the dependence on \(\alpha\) when it is clear from context.
Note that the augmented process \(Z_n\) is a Markov chain while 
the \(X\)-coordinate alone need not be Markov.  


\begin{remark}
\label{rem:iid-specialization}
The i.i.d.\ noise setting is a special case of the preceding Markovian
formulation. Indeed, to construct i.i.d.\ \(\xi_1,\xi_2,\ldots\) with common law \(\mu\), it suffices to take
\[
        \mathcal U=\Xi,\qquad
        U_1,U_2,\ldots\stackrel{\rm i.i.d.}{\sim}\mu,
        \qquad
        \Phi(\xi,u)=u.
\]
Then
\[
        \xi_{n+1}=\Phi(\xi_n,U_{n+1})=U_{n+1}.
\]
Consequently, the results below apply to this i.i.d.\ setting.
\end{remark}

\subsection{Geometric ergodicity in Wasserstein distance}
\label{subsec:markov-ergodicity}

Our first result is regarding the convergence of the constant-stepsize SGD to an invariant law. More precisely, we establish the geometric ergodicity of the SGD iterates in a Wasserstein distance. 
We first introduce the
metrics used in this work.  These metrics come from the work of Qu, Blanchet, and
Glynn~\cite{QBG2025}, specialized to the augmented finite-dimensional state
space.

\begin{definition}
\label{def:induced-wasserstein}
Let \(E\) be a closed convex subset of a finite-dimensional normed space,
with base norm \(\vert\cdot\vert_\star\).  For a continuous weight function
\(V:E\to[1,\infty)\), define the metric induced by \(V\)
\[
        d_{V,\star}(z,z')
        :=
        \inf_{\gamma\in\operatorname{AC}_E(z,z')}
        \int_0^1
        V(\gamma(t))\,\vert\dot\gamma(t)\vert_\star\,dt,
\]
where \(\operatorname{AC}_E(z,z')\) is the set of absolutely continuous curves
\(\gamma:[0,1]\to E\) with \(\gamma(0)=z\) and \(\gamma(1)=z'\).  For
probability measures \(\mu,\nu\) on \(E\), define
\[
        W_{V,\star}(\mu,\nu)
        :=
        \inf_{\gamma\in\Pi(\mu,\nu)}
        \int_{E\times E}
        d_{V,\star}(z,z')\,\gamma(dz,dz'),
\]
where \(\Pi(\mu,\nu)\) is the set of couplings of \(\mu\) and \(\nu\).

When \(V\equiv1\) on \(\R^d\) with the Euclidean norm, we simply write \(W_1\) for the Wasserstein distance.
For the augmented chain, the base norm is defined to be
\begin{equation} \label{eq:aug-base-norm}
        \vert (x,\xi)\vert_\alpha
        :=
        \norm{x}+\alpha^{-1}\norm{\xi},
        \qquad (x,\xi)\in\R^d\times\R^{d_\Xi},
\end{equation}
where the choice of the balancing factor \(\alpha^{-1}\) is suggested by the analysis. 
The corresponding induced metric and Wasserstein distance are denoted
by \(d_{V,\alpha}\) and \(W_{V,\alpha}\).
\end{definition}



Next, we introduce the assumptions on the objective function $H$. 

\begin{assumption}[Objective function]
\label{ass:H}
Fix \(m\ge2\) and \(\beta\in[1,2]\).  The objective \(H:\R^d\to\R\) satisfies the following conditions.
\begin{enumerate}[label=\textbf{(H\arabic*)}, ref=H\arabic*, leftmargin=3.6em]
\item \label{itm:H1}\textbf{Regularity and convexity.}
\(H\in C^2(\R^d)\), \(H\) is convex, and \(0\in\arg\min H\).  

\item \label{itm:H2}\textbf{Local flatness.}
There exists \(R_H>0\) and constants \(c_{\rm in},C_{\rm in}>0\) such that, for all \(0<\norm{x}\le R_H\),
\[
\nabla^2H(x)\succeq c_{\rm in}\norm{x}^{m-2}I_d,
\qquad
\norm{h(x)}\le C_{\rm in}\norm{x}^{m-1}.
\]
When \(m=2\), this condition is understood to extend to \(x=0\) by continuity of \(\nabla^2H\).

\item \label{itm:H3}\textbf{Tail condition.}
One of the following two tail conditions holds.
\begin{enumerate}[label=\textbf{(H3-\alph*)}, ref=H3-\alph*, leftmargin=3.2em]
\item \label{itm:H3a}\textbf{Quadratic tail.} For \(\beta=2\), there exists \(c_{\rm out}>0\) such that, for all \(\norm{x}\ge R_H\),
\[
\nabla^2H(x)\succeq c_{\rm out}I_d.
\]
\item \label{itm:H3b}\textbf{Subquadratic tail.} For \(1\le\beta<2\), there exist \(c_{\rm out},C_{\rm out}>0\) such that, for all \(\norm{x}\ge R_H\),
\[
\ip{x}{h(x)}\ge c_{\rm out}\norm{x}^{\beta},\qquad
\norm{h(x)}\le C_{\rm out}\norm{x}^{\beta-1}.
\]
\end{enumerate}
\end{enumerate}
\end{assumption}

The tail condition \assitemref{itm:H3} supplies the restoring drift towards the minimizer for the SGD, while the local flatness \assitemref{itm:H2} determines the scale of fluctuations near the minimizer and the scaling limit.

\begin{remark}
To exemplify \assitemref{itm:H2}, 
we note that it includes \emph{locally defined} functions:
\begin{itemize} 
\item
powers of norms such as
\(H(x)=\|x\|^m/m\) and
\(H(x)=m^{-1}(x^\top A x)^{m/2}\) with \(A\succ0\);
\item
polynomials such as
\(H(x)=\|x\|^4+c\|x\|_4^4\) and \(H(x)=\|x\|^4 + c\|x\|^6\) where
\(c>0\) for $m = 4$;
\item
non-polynomial objectives 
such as
\(H(x)=\sin^4(\|x\|)/4\).  
\end{itemize}
Examples with a degenerate Hessian or unequal flatness along different coordinates, such as 
\(x_1^4+x_2^4\) and \(x_1^4+x_2^6\), fail \assitemref{itm:H2}, but they are
covered by the coordinate-separable results in
Section~\ref{subsec:separable-main}.  Exponentially flat functions, such as \(H(x)=e^{-1/x^2}\) near \(0\) with \(H(0)=0\), vanish to infinite order at the minimizer and therefore do not satisfy \assitemref{itm:H2} for any finite \(m\).
\end{remark}

The next assumptions concern the noise in the SGD and the one-step iterate in \eqref{eq:markov-sgd-main}--\eqref{eq:augmented-random-map}.  

\begin{assumption}[Stochastic update]
\label{ass:N}
The driving Markov chain and stochastic update satisfy the following conditions.  
\begin{enumerate}[label=\textbf{(N\arabic*)}, ref=N\arabic*, leftmargin=3.6em]
\item \label{itm:N1}\textbf{Driving-chain contraction.}
There exists a measurable function \(L_\Phi:\mathcal U\to[0,1]\) such
that, for all \(\xi,\eta\in\Xi\) and all \(u\in\mathcal U\),
\[
\norm{\Phi(\xi,u)-\Phi(\eta,u)}
\le L_\Phi(u)\norm{\xi-\eta},
\qquad
\E L_\Phi(U_1)<1.
\]

\item \label{itm:N2}\textbf{Lipschitzness of noise.}
There exists \(L_{g,\Phi}>0\) such that, for all \(x\in\R^d\), all \(\xi,\eta\in\Xi\), and all \(u\in\mathcal U\),
\[
\norm{g(x,\Phi(\xi,u))-g(x,\Phi(\eta,u))}
\le L_{g,\Phi}\norm{\xi-\eta}.
\]

\item \label{itm:N3}\textbf{Reference-point integrability.}
There exists \(\xi_\star\in\Xi\) such that
\begin{equation}
\label{eq:markov-reference-state}
        \E\norm{\Phi(\xi_\star,U_1)-\xi_\star}<\infty.
\end{equation}
If \(\beta=2\), assume also
\begin{equation*}
        \E\norm{g(0,\Phi(\xi_\star,U_1))}<\infty.
\end{equation*}
When \(1\le\beta<2\), this condition follows from \assitemref{itm:N5} at \(x=0\).

\item \label{itm:N4}\textbf{Subquadratic tail dissipativity.}
Define 
\begin{equation}
\label{eq:markov-conditional-mean-g}
        \bar g(x,\xi)
        :=\E_{U_1}[g(x,\Phi(\xi,U_1))].
\end{equation}
When \(\beta=2\), this condition is not imposed.
When \(1\le\beta<2\), there exists
\(c_{\rm diss}>0\) such that, for all \(\norm{x}\ge R_H\) and
all \(\xi\in\Xi\),
\begin{equation*}
        \ip{x}{h(x)+\bar g(x,\xi)}
        \ge c_{\rm diss}\norm{x}^{\beta}.
\end{equation*}

\item \label{itm:N5}\textbf{Subquadratic exponential integrability.}
When \(\beta=2\), this condition is not imposed.
When \(1\le\beta<2\), there exists \(\lambda_0>0\) such that
\[
\sup_{x\in\R^d}\sup_{\xi\in\Xi}
\E\exp\!\left(
\lambda_0\frac{\norm{g(x,\Phi(\xi,U_1))}}
{1+\norm{x}^{\beta-1}}
\right)<\infty.
\]
Here and below, \(\norm{x}^0=1\) by convention.

\item \label{itm:N6}\textbf{Nondegeneracy of noise at minimizer.}
If \(m=2\), this condition is not imposed.  If \(m>2\), assume that there exist constants \(\varepsilon_g>0\) and \(p_g>0\) such that
\[
        \inf_{\xi\in\Xi}
        \PP\!\left(\norm{g(0,\Phi(\xi,U_1))}\ge\varepsilon_g\right)
        \ge p_g.
\]

\item \label{itm:N7}
\textbf{Co-coercivity and noise perturbation.}
For \((x,\xi)\in\R^d\times\Xi\), set
\[
        \mathsf G(x,\xi):=h(x)+g(x,\xi).
\]
For every \(\xi\in\Xi\), the map \(x\mapsto\mathsf G(x,\xi)\) is
\(C^1\).
There exist constants \(L_{\mathsf G}\ge1\) and \(\theta\in[0,1)\)
such that, for every \(x,y\in\R^d\) and every \(\xi\in\Xi\),
\begin{equation}
\label{eq:sample-cocoercivity}
        \ip{x-y}{\mathsf G(x,\xi)-\mathsf G(y,\xi)}
        \ge
        L_{\mathsf G}^{-1}
        \norm{\mathsf G(x,\xi)-\mathsf G(y,\xi)}^2,
\end{equation}
and \(\bar g\) defined in
\eqref{eq:markov-conditional-mean-g} satisfies
\begin{equation}
\label{eq:mean-perturbation-control}
        \ip{x-y}{\bar g(x,\xi)-\bar g(y,\xi)}
        \ge
        -\theta\ip{x-y}{h(x)-h(y)}.
\end{equation}
\end{enumerate}
\end{assumption}


\begin{remark}
We now make a few clarifications about the above assumptions. 
\begin{itemize}
\item
Assumption~\ref{ass:N}\assitemref{itm:N4} is imposing that the
mean increment of the SGD iterate has an inward radial component of order
\(\norm{x}^{\beta-1}\). In particular, conditional centering
\(\bar g\equiv0\) makes this assumption immediate from
Assumption~\ref{ass:H}\assitemref{itm:H3b}.


\item
Assumption~\ref{ass:N}\assitemref{itm:N6} requires the noise to move the iterate away from the minimizer with uniformly positive probability. 

\item
In Assumption~\ref{ass:N}\assitemref{itm:N7}, the uniform co-coercivity \eqref{eq:sample-cocoercivity} is automatic when
\(\mathsf G(\cdot,\xi)\) is the gradient of a convex function with
Lipschitz gradient.  The condition \eqref{eq:mean-perturbation-control} allows the conditional
mean perturbation to oppose the deterministic drift, but not to cancel more
than a \(\theta\)-fraction of it.
\end{itemize}
\end{remark}

Before stating the first result, recall that the augmented space \(\mathsf Z=\R^d\times\Xi\) is equipped with the base norm \(\vert\cdot\vert_\alpha\) in \eqref{eq:aug-base-norm}.  The metrics \(d_{V,\alpha}\) and \(W_{V,\alpha}\) are those of Definition~\ref{def:induced-wasserstein}.  
For a metric \(d\) on a space \(E\), define
\[
        \mathcal P_1(E,d)
        :=
        \left\{\mu:\int_E d(z,z_0)\,\mu(dz)<\infty
        \text{ for some } z_0\in E\right\}.
\]

\begin{theorem}[Geometric ergodicity of constant-stepsize SGD]
\label{thm:main-markov}
Assume Assumptions~\ref{ass:H} and~\ref{ass:N}.  Then there exist
\(\alpha_0>0\) and \(c\in(0,1)\) such that, for every
\(0<\alpha\le\alpha_0\), one can choose a continuous weight function
\(V_\alpha:\mathsf Z\to[1,\infty)\) for which the augmented chain
\(Z_n=(X_n,\xi_n)\) admits a unique invariant law
\[
        \pi_\alpha\in\mathcal P_1(\mathsf Z,d_{V_\alpha,\alpha}).
\]
Moreover, 
\begin{equation}
\label{eq:markov-contraction-laws}
        W_{V_\alpha,\alpha}(\mu P_\alpha^n,\nu P_\alpha^n)
        \le
        (1-c\alpha^{m-1})^n
        W_{V_\alpha,\alpha}(\mu,\nu),
        \qquad n\ge0,
\end{equation}
for all initial laws \(\mu,\nu\) with finite \(W_{V_\alpha,\alpha}(\mu,\nu)\), where \(P_\alpha\) is the transition kernel of the augmented chain.
\end{theorem}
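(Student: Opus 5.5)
The approach is the standard route for induced-weight Wasserstein metrics. We show that the kernel \(P_\alpha\) is a strict contraction in \(W_{V_\alpha,\alpha}\) with factor \(\rho_\alpha=1-c\alpha^{m-1}\). Because \(d_{V,\alpha}\) is a length metric, it suffices to establish an infinitesimal, or directional, estimate along curves. Concretely, for any \(z=(x,\xi)\) and any tangent direction \(v=(v_x,v_\xi)\) we aim to prove
\[
        \E\Bigl[V_\alpha\bigl(F_{\alpha,U_1}(z)\bigr)\,\bigl\vert D F_{\alpha,U_1}(z)v\bigr\vert_\alpha\Bigr]
        \le(1-c\alpha^{m-1})\,V_\alpha(z)\,\vert v\vert_\alpha .
\]
Integrating this along an almost-optimal curve \(\gamma\) from \(z\) to \(z'\) gives a coupling (the synchronous one, with the same \(U_1\)) of \(\delta_zP_\alpha\) and \(\delta_{z'}P_\alpha\) with expected \(d_{V_\alpha,\alpha}\)-cost at most \(\rho_\alpha d_{V_\alpha,\alpha}(z,z')\). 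Here we use that \(t\mapsto F_{\alpha,u}(\gamma(t))\) is a curve joining the images. Since \(\Phi\) and \(g\) in \(\xi\) are only Lipschitz, we would replace derivatives in \(\xi\) by difference quotients using \assitemref{itm:N1}--\assitemref{itm:N2}. Gluing couplings then yields \eqref{eq:markov-contraction-laws} for general \(\mu,\nu\). Completeness of \((\mathcal P_1(\mathsf Z,d_{V_\alpha,\alpha}),W_{V_\alpha,\alpha})\) and the Banach fixed point theorem give existence and uniqueness of \(\pi_\alpha\), once we check that \(\delta_{(0,\xi_\star)}P_\alpha\) has finite first moment, which follows from \assitemref{itm:N3} and \assitemref{itm:N5}.

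The weight is chosen as
\[
        V_\alpha(x,\xi)=1+a\,\alpha^{-1}\bigl(\alpha^{2/m}\vee\norm{x}^2\bigr)^{(m-2)/2}\cdot(\text{cutoff})+b\,\norm{x}^{2-\beta},
\]
suitably smoothed. The smoothing is by \((\alpha^{2/m}+\norm{x}^2)^{(m-2)/2}\) or a variant, possibly with a small \(\norm{\xi}\) term added to absorb the driving chain. The tail term \(\norm{x}^{2-\beta}\) has negative drift of order \(\alpha\) on \(\norm{x}\ge R_H\), by \assitemref{itm:N4} and the exponential moments \assitemref{itm:N5}. For the \(x\)-direction, co-coercivity \eqref{eq:sample-cocoercivity} gives \(\norm{(I-\alpha D\mathsf G)v_x}^2\le\norm{v_x}^2-\alpha(2-\alpha L_{\mathsf G})\ip{v_x}{D\mathsf G v_x}\). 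Averaging over \(U_1\) and applying \eqref{eq:mean-perturbation-control} shows that the expected Euclidean contraction is at least \((1-\theta)\alpha\,\lambda_{\min}(\nabla^2H(x))\gtrsim\alpha\norm{x}^{m-2}\) near the origin. This is of order \(\alpha^{m-1}\) only once \(\norm{x}\gtrsim\alpha\) and, more usefully, of order \(\alpha\cdot\alpha^{(m-2)/m}\) on the natural scale \(\norm{x}\asymp\alpha^{1/m}\). The \(\xi\)-direction contracts by \(\E L_\Phi<1\). With the \(\alpha^{-1}\) weighting in \(\vert\cdot\vert_\alpha\), the cross term \(\alpha L_{g,\Phi}\norm{v_\xi}=\alpha^2L_{g,\Phi}\cdot\alpha^{-1}\norm{v_\xi}\) is negligible.

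The main obstacle is the degenerate region \(\norm{x}\lesssim\alpha^{1/m}\), which includes \(x=0\), where \(\nabla^2H\) may vanish and the Euclidean factor gives no contraction. Our plan is to obtain contraction there from the weight rather than the Jacobian. Condition \assitemref{itm:N6} ensures that, with probability at least \(p_g\), one step moves \(x\) by at least \(\alpha\varepsilon_g\). A weight increasing in \(\norm{x}\) on that scale, for example one containing \(a\norm{x}^2\) with \(a\asymp\alpha^{-(2-?)}\) tuned, therefore has \(\E V_\alpha(X_1)\ge V_\alpha(x)(1+c\alpha^{m-1})\) at worst. That is the wrong direction, so instead we let \(V_\alpha\) be \emph{decreasing} toward a plateau. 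Equivalently, we arrange that the product of the weight change and the Jacobian contraction is at most \(1-c\alpha^{m-1}\) uniformly. We balance the two regimes: on \(\norm{x}\ge\alpha\) the Jacobian gives \(1-c\alpha\norm{x}^{m-2}\le1-c\alpha^{m-1}\), and on \(\norm{x}\le\alpha\) we use the noise kick \assitemref{itm:N6} to land, with probability \(p_g\), in \(\{\norm{x}\gtrsim\alpha\}\), where \(\nabla^2H\succeq c\alpha^{m-2}\). Since the Jacobian is evaluated at the current point, a second-step or two-step kernel argument may be needed. We would first try proving the contraction for \(P_\alpha^2\) and deducing the one-step rate up to constants via \(\sup_n\) bounds, and fall back on the one-step version if the weight can be tuned to carry it.
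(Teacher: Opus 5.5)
Your overall framework matches the paper's. The shared pieces are: a directional estimate for \(\E[V_\alpha(F_{U_1}(z))\,\vert DF_{U_1}(z)v\vert_\alpha]\) integrated along curves under the synchronous coupling; the co-coercivity bound on \(\E\norm{(I_d-\alpha A)e}\); the \(\xi\)-direction contracting by \(\E L_\Phi<1\) with an \(\alpha^2L_{g,\Phi}\) cross term; and a fixed-point argument with reference-point integrability. The gaps are in the weight, which is where the real work lies, and there are two.

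\textbf{Tail weight for \(1\le\beta<2\).} Your additive tail term \(b\norm{x}^{2-\beta}\) cannot give a uniform contraction factor. Under \assitemref{itm:H3b}, convexity only gives \(\nabla^2H\succeq0\) in the tail, so the Jacobian factor there can be \(1-o(1)\). The whole gain must therefore come from the weight, i.e.\ from \(\E V(f_{\xi^+}(x))\le(1-c\alpha^{m-1})V(x)\) uniformly on \(\norm{x}\ge R_H\). An additive weight with additive drift \(-c\alpha\) has relative decrease only of order \(\alpha\norm{x}^{-(2-\beta)}\), which tends to \(0\).

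Concretely, take \(d=1\), \(H(x)=\sqrt{1+x^2}\), and i.i.d.\ bounded additive noise; all assumptions hold with \(m=2\), \(\beta=1\). For your weight \(1+b|x|\) one gets \(\E[(1-\alpha H''(x))(1+b|f(x)|)]=(1-O(\alpha/|x|))(1+b|x|)\) for large \(|x|\). The synchronous coupling is monotone, hence optimal, in this example, so the contraction coefficient of the induced Wasserstein distance itself tends to \(1\). This is a failure of the metric, not just of the estimate.

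The paper instead uses \(\mathcal T_\beta(x)=\exp\{\kappa(\norm{x}^{2-\beta}-(3R_H/4)^{2-\beta})_+\}\), whose drift is multiplicative. On \(\{N_x\le\alpha^{-\eta_0}\}\), a second-order expansion of \(\norm{\cdot}^{2-\beta}\), \assitemref{itm:N4}, and a sub-Gaussian bound on the centered normalized noise give the factor \(e^{-c\alpha+C\alpha^{2-2\eta_0}}\). On the complement, \assitemref{itm:N5} gives an \(O(e^{-c\alpha^{-\eta_0}})\) contribution. This is where the exponential moment is genuinely needed. In addition, \(\kappa\) must be small so that positive weight increments near the core stay \(O(\kappa\alpha)\).

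\textbf{Near-minimizer weight for \(m>2\).} Here the mechanism is left open. The weight you display is increasing in \(\norm{x}\), which, as you note, the \assitemref{itm:N6} kick pushes the wrong way. ``Decreasing toward a plateau'' is the right idea, but you never fix its profile or amplitude, and that balance is the crux.

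The paper adds \(\delta_\alpha\omega\) with \(\omega(x)=(1-(2\norm{x}/R_H)^{s_m})_+\) and \(s_m=(m-2)\wedge1\). The amplitude is \(\delta_\alpha=\kappa_0\alpha\) for \(2<m\le3\) and \(\delta_\alpha=\kappa_0\alpha^{m-2}\) for \(m>3\). Because the kernel evaluates the weight at the image point, the kick is felt in a single step. Write \(f_{\xi^+}(x)=-\alpha g(0,\xi^+)+v\) with \(\norm{v}\le C\norm{x}\); H\"older continuity of \(\omega\) then gives \(\E\omega(f_{\xi^+}(x))-\omega(x)\le-c\,\E\min\{(\alpha\norm{g(0,\xi^+)})^{s_m},1\}+C\norm{x}^{s_m}\).
\begin{itemize}
\item The gain \(\delta_\alpha\alpha^{s_m}\asymp\kappa_0\alpha^{m-1}\) holds uniformly in \(\xi\).
\item The loss \(C\delta_\alpha\norm{x}^{s_m}\) is absorbed by the curvature gain \(a_{\rm in}\alpha\norm{x}^{m-2}\). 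For \(m\le3\) this is direct. For \(m>3\) it uses Young's inequality with exponent \((m-2)/(m-3)\) and small \(\kappa_0\).
\end{itemize}
No split at \(\norm{x}\asymp\alpha\) and no two-step kernel is needed.

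Your two-step fallback would not give the theorem as stated anyway. Contraction of \(P_\alpha^2\), combined with the one-step bound \(\E d_{V_\alpha,\alpha}(F_{U_1}(z),F_{U_1}(z'))\le(1+C\alpha)d_{V_\alpha,\alpha}(z,z')\), yields only \(W(\mu P_\alpha^n,\nu P_\alpha^n)\le C(1-c'\alpha^{m-1})^nW(\mu,\nu)\) with \(C>1\). That misses \eqref{eq:markov-contraction-laws} already at \(n=1\). Removing the prefactor requires an averaged metric, which is generally not of the induced form \(d_{V_\alpha,\alpha}\).
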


\begin{remark}
The contraction factor \(1-c\alpha^{m-1}\) reflects the local flatness of the
objective.  In the quadratic case \(m=2\), the deterministic curvature at
the minimizer gives the factor \(1-c\alpha\), matching the usual strongly
convex behavior.  When \(m>2\), the curvature vanishes exactly where the
invariant law concentrates.  Nevertheless, the locally flat objective and
the noise still ensure strict contraction for each fixed small stepsize,
with the contraction factor approaching one at order \(\alpha^{m-1}\).
\end{remark}

The proof of Theorem~\ref{thm:main-markov} is given in Section~\ref{sec:proof-fixed-markov}. 
Projecting the contraction of the augmented chain to the $X$-coordinate gives the following ordinary
\(W_1\) consequence, whose proof is given in Appendix~\ref{sec:proof-ordinary-W1}.

\begin{corollary}
\label{cor:ordinary-W1-markov}
Assume the hypotheses of Theorem~\ref{thm:main-markov}, and fix
\(\alpha\in(0,\alpha_0]\).  
Let $(\pi_\alpha)_X$ denote the $X$-marginal of the invariant law $\pi_\alpha$ from Theorem~\ref{thm:main-markov}. 
Let \(z_\star=(0,\xi_\star)\), with \(\xi_\star\) as in
Assumption~\ref{ass:N}\assitemref{itm:N3}.  There exists a sufficiently small
constant \(\kappa>0\), depending only on the constants in
Assumptions~\ref{ass:H} and~\ref{ass:N} and independent of \(\alpha\), such
that the following holds.  Define
\[
        \Gamma_\beta(r):=
        (1+r)^{\beta-1}
        \exp\!\left\{
        \kappa\bigl((1+r)^{2-\beta}-1\bigr)
        \right\},
        \qquad 1\le\beta\le2.
\]
Then
\[
        \mathcal P_1
        \bigl(\mathsf Z,d_{V_\alpha,\alpha}\bigr)
        =
        \left\{
        \mu:
        \int_{\mathsf Z}
        \Bigl[
        \Gamma_\beta(\norm{x})
        +\alpha^{-1}\norm{\xi-\xi_\star}
        \Bigr]\,\mu(dx,d\xi)<\infty
        \right\}.
\]
For every
\(\mu\in\mathcal P_1(\mathsf Z,d_{V_\alpha,\alpha})\) and every \(n\ge0\),
\[
        W_1\bigl((\mu P_\alpha^n)_X,(\pi_\alpha)_X\bigr)
        \le
        (1-c\alpha^{m-1})^n
        W_{V_\alpha,\alpha}(\mu,\pi_\alpha).
\]
\end{corollary}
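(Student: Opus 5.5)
Two things must be established: the characterization of \(\mathcal P_1(\mathsf Z,d_{V_\alpha,\alpha})\) and the \(W_1\) bound. Both follow once we know the shape of \(V_\alpha\) from the proof of Theorem~\ref{thm:main-markov}. We need two properties of this weight. First, \(V_\alpha\ge1\). Second, \(V_\alpha\) is comparable, uniformly in \(\xi\), to a radial weight \(w(\norm{x})\) with
\[
w(r)\asymp(1+r)^{1-\beta}\exp\{\kappa((1+r)^{2-\beta}-1)\}\,(1+r)^{2\beta-2}
\]
up to \(\alpha\)-dependent constants. In other words, \(\int_0^r w(s)\,ds\asymp\Gamma_\beta(r)\), since \(\Gamma_\beta'(r)\asymp(1+r)^{\beta-2}\Gamma_\beta(r)(1+\kappa(2-\beta)(1+r)^{1-\beta})\). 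The near-minimizer part of \(V_\alpha\) is bounded on compacts, so it only changes constants.

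\textbf{Step 1: the \(W_1\) bound.} Since \(V_\alpha\ge1\) and \(\vert(x,\xi)\vert_\alpha\ge\norm{x}\), any admissible curve \(\gamma=(\gamma_X,\gamma_\xi)\) satisfies
\[
\int_0^1 V_\alpha\vert\dot\gamma\vert_\alpha\,dt\ \ge\ \int_0^1\norm{\dot\gamma_X}\,dt\ \ge\ \norm{x-x'}.
\]
Hence \(d_{V_\alpha,\alpha}(z,z')\ge\norm{x-x'}\). Projecting any coupling of \(\mu P^n_\alpha\) and \(\pi_\alpha\) to its \(X\)-coordinates gives a coupling of the marginals, so
\[
W_1\bigl((\mu P^n_\alpha)_X,(\pi_\alpha)_X\bigr)\le W_{V_\alpha,\alpha}(\mu P^n_\alpha,\pi_\alpha).
\]
Invariance gives \(\pi_\alpha=\pi_\alpha P^n_\alpha\), and \eqref{eq:markov-contraction-laws} applied with \(\nu=\pi_\alpha\) finishes the bound. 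This requires \(W_{V_\alpha,\alpha}(\mu,\pi_\alpha)<\infty\), which holds because both \(\mu\) and \(\pi_\alpha\) lie in \(\mathcal P_1\) and the triangle inequality through \(z_\star\) applies.

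\textbf{Step 2: the characterization of \(\mathcal P_1\).} It suffices to show the two-sided estimate
\[
d_{V_\alpha,\alpha}(z,z_\star)\asymp \Gamma_\beta(\norm{x})+\alpha^{-1}\norm{\xi-\xi_\star}
\]
up to additive and multiplicative constants, which may depend on \(\alpha\). Finiteness of the first moment does not depend on the reference point \(z_0\), by the triangle inequality.

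For the upper bound, take the piecewise straight path from \((0,\xi_\star)\) to \((0,\xi)\) and then radially to \((x,\xi)\). The first leg costs at most \(\sup_{\norm{x}\le1}V_\alpha\cdot\alpha^{-1}\norm{\xi-\xi_\star}\), provided \(V_\alpha\) is bounded in \(\xi\) (as constructed, \(V_\alpha\) does not grow in \(\xi\)). The radial leg costs \(\int_0^{\norm{x}}w\lesssim\Gamma_\beta(\norm{x})\).

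For the lower bound, any curve from \(z_\star\) to \(z\) must satisfy two estimates. Its \(\xi\)-component contributes at least \(\alpha^{-1}\norm{\xi-\xi_\star}\), because \(V_\alpha\ge1\). Its \(X\)-component passes through every radius up to \(\norm{x}\), and since \(V_\alpha\gtrsim w(\norm{\cdot})\), this contributes at least \(\int_0^{\norm{x}}w\gtrsim\Gamma_\beta(\norm{x})-C\). Averaging the two lower bounds gives the claim.

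\textbf{Main obstacle.} The delicate point is Step 2's reliance on the exact form of \(V_\alpha\). We must extract from Section~\ref{sec:proof-fixed-markov} that the tail part of \(V_\alpha\) is \(x\)-radial with derivative profile matching \(\Gamma_\beta'\). This profile is roughly \(\exp(\kappa\norm{x}^{2-\beta})\) for \(\beta<2\) and linear growth for \(\beta=2\). We must also confirm that \(V_\alpha\) is bounded above and below uniformly in \(\xi\). If the construction instead includes \(\xi\)-dependence, we would bound it through \(\norm{\xi-\xi_\star}\) and absorb it using the \(\alpha^{-1}\) factor in the base norm.
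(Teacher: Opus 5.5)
Your Step 1, and the architecture of Step 2, coincide with the paper's proof. Step 2 consists of projection lower bounds $d_{V_\alpha,\alpha}\ge\alpha^{-1}\norm{\xi-\xi_\star}$, a radial lower bound for the $x$-part, and an explicit two-leg path for the upper bound.

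The problem is the quantitative input you feed into Step 2, and as stated that step fails. Your displayed profile simplifies to
\[
w(r)\asymp(1+r)^{\beta-1}\exp\{\kappa((1+r)^{2-\beta}-1)\}=\Gamma_\beta(r).
\]
You also say explicitly that it grows linearly when $\beta=2$. Your formula for $\Gamma_\beta'$ is miscomputed too; the correct one is
\[
\Gamma_\beta'(r)=\exp\{\kappa((1+r)^{2-\beta}-1)\}\bigl[(\beta-1)(1+r)^{\beta-2}+\kappa(2-\beta)\bigr].
\]
With $w\asymp\Gamma_\beta$ one gets $\int_0^r w\asymp r^{\beta-1}\Gamma_\beta(r)$, for instance $\asymp r^2$ when $\beta=2$. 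Your argument would then characterize $\mathcal P_1(\mathsf Z,d_{V_\alpha,\alpha})$ by second moments when $\beta=2$, and by an extra polynomial factor when $1<\beta<2$. Membership is an integrability condition, so polynomial prefactors on top of $e^{\kappa r^{2-\beta}}$ genuinely change the class. The error is therefore not cosmetic.

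What is needed, and true, is $V_\alpha(x)\asymp\Gamma_\beta'(\norm{x})$. You can read this off the explicit construction $V_\alpha=\mathcal T_\beta+\delta_\alpha\omega$, so nothing has to be extracted from the contraction proof:
\begin{itemize}
\item $V_\alpha$ depends only on $x$, which settles your worry about $\xi$-dependence.
\item $0\le\delta_\alpha\omega\le\delta_\alpha$, with support in $\{\norm{x}\le R_H/2\}$.
\item $\mathcal T_2\equiv1$.
\item For $\beta<2$, $\mathcal T_\beta(x)=\exp\{\kappa(\norm{x}^{2-\beta}-(3R_H/4)^{2-\beta})_+\}\asymp\exp\{\kappa((1+\norm{x})^{2-\beta}-1)\}$.
\end{itemize}
Hence for $\beta=2$ you get $\norm{x}\le d^X_{V_\alpha}(x,0)\le(1+\delta_\alpha)\norm{x}$, matching $\Gamma_2(r)=1+r$.

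For $\beta<2$, your radial lower bound and straight-line upper bound give
\[
\Psi_\kappa(\norm{x})\le d^X_{V_\alpha}(x,0)\le\Psi_\kappa(\norm{x})+\delta_\alpha R_H/2,
\]
where
\[
\Psi_\kappa(r)=\int_0^r\exp\{\kappa(s^{2-\beta}-(3R_H/4)^{2-\beta})_+\}\,ds\sim\frac{e^{-\kappa(3R_H/4)^{2-\beta}}}{\kappa(2-\beta)}\,r^{\beta-1}e^{\kappa r^{2-\beta}}.
\]
The factor $(1+r)^{\beta-1}$ in $\Gamma_\beta$ thus comes from integrating the weight, not from the weight itself. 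With this correction your proof is the paper's proof.
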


The set $\mathcal P_1 \bigl(\mathsf Z,d_{V_\alpha,\alpha}\bigr)$ is explicitly characterized in the above corollary. 
In particular, every deterministic initial condition belongs to
\(\mathcal P_1(\mathsf Z,d_{V_\alpha,\alpha})\).  When \(1\le\beta<2\), membership in this
class requires an exponential moment of order \(2-\beta\) in the \(X\)-coordinate and a
first moment in the \(\xi\)-coordinate.  

\subsection{Small-stepsize scaling limit}

The invariant law $\pi_\alpha$ of the SGD iterates in the last subsection is implicit and generally difficult to characterize. Therefore, we turn to studying its scaling limit as $\alpha \downarrow 0$. 
To this end, we introduce two additional conditions. 


\begin{assumption}[Objective function]
\label{ass:Hscale}
Assume Assumption~\ref{ass:H} together with the following.
\begin{enumerate}[label=\textbf{(H\arabic*)}, ref=H\arabic*, leftmargin=3.6em, start=4]
\item \label{itm:H4}\textbf{Local gradient expansion.}
There exists \(H_0\in C^2(\R^d)\), convex and homogeneous of degree \(m\), such that \(H_0(0)=0\), \(\nabla H_0(0)=0\), and
\[
\nabla H(x)=\nabla H_0(x)+o(\norm{x}^{m-1})
\qquad\text{as }x\to0.
\]
\end{enumerate}
\end{assumption}

\begin{assumption}[Stochastic update]
\label{ass:Nscale}
Assume Assumption~\ref{ass:N} together with the following.  Let \(\pi_\Xi\) denote the invariant law of the driving chain $(\xi_n)_{n \ge 0}$, whose existence and uniqueness are proved in Lemma~\ref{lem:xi-invariant-law}.  The Markovian noise satisfies the following condition.
\begin{enumerate}[label=\textbf{(N\arabic*)}, ref=N\arabic*, leftmargin=3.6em, start=8]
\item \label{itm:N8}\textbf{Stationary centering and second moments.}
The gradient error $g(x,\xi)$ satisfies
\begin{equation}
\label{eq:stationary-centering-gkx}
        \int_\Xi g(x,\xi)\,\pi_\Xi(d\xi)=0,
        \qquad x\in\R^d,
\end{equation}
and 
\begin{equation}
\label{eq:g0-moment-gk}
        \int_\Xi
        \bigl(\norm{g(0,\xi)}^2+\norm{\xi-\xi_\star}^2\bigr)\,
        \pi_\Xi(d\xi)<\infty.
\end{equation}
\end{enumerate}
\end{assumption}



Let \((\xi_n)_{n\ge0}\) be the driving chain initialized according to its invariant law \(\pi_\Xi\), so that \((\xi_n)_{n\ge0}\) is stationary.  By Assumption~\ref{ass:Nscale}\assitemref{itm:N8}, \((g(0,\xi_n))_{n\ge0}\) is centered and stationary.  Define
\begin{equation}
\label{eq:GK-lag-series}
\Sigma:=\Gamma_0+\sum_{k=1}^{\infty}(\Gamma_k+\Gamma_k^\top),
        \qquad
        \Gamma_k:=\E[g(0,\xi_0)g(0,\xi_k)^\top].
\end{equation}
We will show that $\Sigma$ is the asymptotic covariance appearing in the scaling limit. To see that $\Sigma$ is a well-defined covariance matrix, Lemma~\ref{lem:poisson-gk-construction} shows that the series converges absolutely and that \(\Sigma\) is a finite positive semidefinite matrix. 

\begin{remark}
\label{rem:why-green-kubo}
The identity \eqref{eq:GK-lag-series} is the Green--Kubo representation
of the asymptotic covariance; see, for example,
\cite{benveniste1990,KushnerYin2003}.
For Markovian noise, the correlations between $\xi_k$'s accumulate over time.  Lemma~\ref{lem:poisson-gk-construction} uses the Poisson equation to decompose the correlated sequence into a conditionally centered increment plus a telescoping term.  The telescoping part is lower order in the generator calculation, while the quadratic variation of the centered increment gives \(\Sigma\) as in \eqref{eq:GK-lag-series}.
\end{remark}

The following theorem, proved in Section~\ref{sec:proof-clt-markov}, identifies both the scale $\alpha^{1/m}$ of the invariant law and its limiting distribution as the solution of a stochastic differential equation.

\begin{theorem}[Scaling limit of invariant law]
\label{thm:clt-iso-markov}
Assume Assumptions~\ref{ass:Hscale} and~\ref{ass:Nscale}.  Let \(H_0\) be as in Assumption~\ref{ass:Hscale}\assitemref{itm:H4} and set \(h_0:=\nabla H_0\).  For each sufficiently small \(\alpha>0\), let $(X_\infty^{(\alpha)},\xi_\infty^{(\alpha)})$ have law $\pi_\alpha$ from Theorem~\ref{thm:main-markov}, and set
\[
        Y_\alpha:=\alpha^{-1/m}X_\infty^{(\alpha)}.
\]
Then $Y_\alpha\Rightarrow Y_\infty$ as $\alpha\downarrow0$, where $Y_\infty$ is the unique invariant distribution of
\begin{equation}
\label{eq:limitSDE-markov}
        dY_t=-h_0(Y_t)\,dt+\Sigma^{1/2}\,dB_t,
\end{equation}
where $\Sigma$ is defined in \eqref{eq:GK-lag-series} and \(B=(B_t)_{t\ge0}\) is a standard \(d\)-dimensional Brownian motion.
\end{theorem}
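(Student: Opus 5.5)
The plan is the standard tightness-plus-generator-identification route: show that $(Y_\alpha)$ is tight, then prove that every subsequential limit is invariant for the limiting SDE, and conclude by uniqueness of that invariant law. Rescaling time by $\alpha^{(m-2)/m}$ explains the scale. Write $Y_n=\alpha^{-1/m}X_n$. Then
\[
Y_{n+1}=Y_n-\alpha^{1-1/m}\bigl\{h(\alpha^{1/m}Y_n)+g(\alpha^{1/m}Y_n,\xi_{n+1})\bigr\}.
\]
Using \assitemref{itm:H4} and homogeneity, $\alpha^{1-1/m}h(\alpha^{1/m}y)=\alpha^{2-2/m}\{h_0(y)+o(1)\}$ locally uniformly in $y$. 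The noise contributes increments $\alpha^{1-1/m}g$, whose squares are of order $\alpha^{2-2/m}$. The natural time step is therefore $\varepsilon:=\alpha^{2-2/m}$, and both drift and diffusion then appear at order $\varepsilon$.

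\emph{Step 1 (moments and tightness).} I would show $\sup_{\alpha\le\alpha_0}\E_{\pi_\alpha}\norm{Y_\alpha}^{p}<\infty$ for some $p\ge m$. To do so, I would test stationarity against a Lyapunov function $W(x)=\norm{x}^{2}$, or $H$ itself combined with the tail part of $V_\alpha$. The identity $\E_{\pi_\alpha}[W(X_1)-W(X_0)]=0$, together with a one-step expansion, would give
\[
\E\ip{X}{h(X)}\lesssim \alpha\,\E\norm{g}^2+(\text{Markov correction}).
\]
By \assitemref{itm:H2}, $\ip{x}{h(x)}\gtrsim\norm{x}^m$ near $0$, and the tails are controlled by \assitemref{itm:H3} and the exponential moments coming from $V_\alpha$ (Corollary~\ref{cor:ordinary-W1-markov}). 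This yields $\E\norm{X}^m\lesssim\alpha$, i.e.\ $\E\norm{Y}^m=O(1)$. The Markov correction $\E\ip{X_n}{g(X_n,\xi_{n+1})}$ does not vanish, because $X_n$ is correlated with $\xi_{n+1}$. I would handle it with the Poisson solution $\hat g$ from Lemma~\ref{lem:poisson-gk-construction}: rewrite $g(x,\xi_{n+1})$ as a martingale increment plus $\hat g(x,\xi_{n+1})-\hat g(x,\xi_{n+2})$, and sum by parts. The leftover terms then involve $X_{n+1}-X_n=O(\alpha)$, which makes them lower order. I would also need Lipschitz dependence of $g$ on $x$, which I would extract from \assitemref{itm:N7} to replace $g(x,\cdot)$ by $g(0,\cdot)$.

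\emph{Step 2 (generator identification).} Fix $f\in C_c^\infty(\R^d)$ and use the corrected test function
\[
f_\alpha(y,\xi)=f(y)+\alpha^{1-1/m}\ip{\nabla f(y)}{\hat g(0,\xi)}.
\]
Stationarity gives $\E_{\pi_\alpha}[f_\alpha(Z_1)-f_\alpha(Z_0)]=0$. Expand to second order and divide by $\varepsilon$. The drift term gives $-\ip{\nabla f}{h_0}$ up to $o(1)$. The first-order noise term $\alpha^{1-1/m}\ip{\nabla f}{g}$ is cancelled by the Poisson corrector up to terms of order $\varepsilon$. The surviving second-order contribution is $\tfrac12\tr(\Sigma\nabla^2f)$, obtained from the Green--Kubo identity \eqref{eq:GK-lag-series} (the cross term with $\hat g$ produces the $\Gamma_k+\Gamma_k^\top$ part). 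To replace the conditional second moments of $g(0,\xi)$ under $\pi_\alpha$ by their $\pi_\Xi$ averages, I would use that the $\xi$-marginal of $\pi_\alpha$ is exactly $\pi_\Xi$, since the driving chain is autonomous. The replacement of $g(\alpha^{1/m}y,\xi)$ by $g(0,\xi)$ costs $O(\alpha^{1/m}\norm{y})$ and vanishes by Step 1. The $o(\norm{x}^{m-1})$ remainder in \assitemref{itm:H4} is handled by tightness together with uniform integrability on compacts. Passing to a subsequential limit $\nu$ then yields $\int\mathcal Lf\,d\nu=0$ for all $f\in C_c^\infty$, where $\mathcal L f=-\ip{h_0}{\nabla f}+\tfrac12\tr(\Sigma\nabla^2 f)$.

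\emph{Step 3 (uniqueness).} Since $h_0=\nabla H_0$ is locally Lipschitz, convex, and coercive (homogeneous of degree $m$, with $\ip{y}{h_0(y)}=mH_0(y)\gtrsim\norm{y}^m$ by \assitemref{itm:H2}), the SDE \eqref{eq:limitSDE-markov} is well posed. Any $\nu$ with $\int\mathcal Lf\,d\nu=0$ is invariant, by Echeverría's theorem. Uniqueness of the invariant law follows from synchronous coupling, because $h_0$ is monotone: along two solutions driven by the same Brownian motion, $\norm{Y_t-Y'_t}$ is nonincreasing. Strict decrease holds away from the diagonal since $\nabla^2H_0(y)\succ0$ for $y\neq0$, and this suffices for uniqueness even when $\Sigma$ is degenerate. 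I expect the main obstacle to be Step 1, namely the uniform $\alpha^{1/m}$-scale moment bound under Markovian noise, where the Poisson-equation bookkeeping must be carried out without the strong convexity available for $m=2$.
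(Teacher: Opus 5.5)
Your architecture (tightness, a Poisson-corrected test function, then Echeverr\'ia plus synchronous coupling) is the paper's. However, the Markov-noise bookkeeping in Steps 1--2 has a genuine gap.

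\textbf{Step 2: the corrector at $x=0$ is not enough.} After dividing by $\varepsilon=\alpha^{2-2/m}$, the singular first-order term is $-\alpha^{1/m-1}\E\ip{\nabla f(Y_\alpha)}{\bar g(\alpha^{1/m}Y_\alpha,\xi_\alpha)}$, with $\bar g(x,\xi)=\E g(x,\Phi(\xi,U_1))$. A corrector built from $g(0,\cdot)$ cancels only the $\bar g(0,\xi_\alpha)$ part and leaves
\[
        \alpha^{1/m-1}\E\ip{\nabla f(Y_\alpha)}{\bar g(\alpha^{1/m}Y_\alpha,\xi_\alpha)-\bar g(0,\xi_\alpha)} .
\]
For Markovian noise this is not zero pointwise, because \assitemref{itm:N8} centers $g(x,\cdot)$ under $\pi_\Xi$, not under $Q(\xi,\cdot)$. (In the i.i.d.\ case it vanishes.) Your Lipschitz replacement cost $O(\alpha^{1/m})$ becomes $O(\alpha^{2/m-1})$ after multiplication by $\alpha^{1/m-1}$. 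That is $O(1)$ for $m=2$ and diverges for $m>2$.

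The missing idea is the $x$-dependent Poisson solution $\chi_x=\sum_{k\ge1}Q^kg_x$. It is evaluated at $x=\alpha^{1/m}y$, localized near $\supp\nabla f$, and enters the corrector with a minus sign. Check your sign and lag as well: if $\hat g-Q\hat g=g$, as your Step 1 decomposition implies, the right object is $Q\hat g=\chi$, not $\hat g$. The price is the change $\chi_{\alpha^{1/m}Y_\alpha^+}-\chi_{\alpha^{1/m}Y_\alpha}$. This is controlled by a H\"older bound
\[
\norm{\chi_x(\xi)-\chi_y(\xi)}\le C_p\norm{x-y}^p(1+\norm{\xi-\xi_\star})^{1-p},
\]
obtained by interpolating $\norm{Q^k(g_x-g_y)}\le L\norm{x-y}$ with the geometric bound $C\rho_\Xi^{k-1}(1+\norm{\xi-\xi_\star})$. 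Taking $p\in(1-1/m,1)$ makes this error $O(\alpha^{p-(1-1/m)})$. Your replacement $x\to0$ is harmless only afterwards, inside the second-order coefficient $QM_x$ with $M_x=g_xg_x^\top+g_x\chi_x^\top+\chi_xg_x^\top$.

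\textbf{Step 2: decorrelation is missing.} Knowing that the $\xi$-marginal of $\pi_\alpha$ is $\pi_\Xi$ does not justify replacing $\E\tr(QM_0(\xi_\alpha)\nabla^2f(Y_\alpha))$ by $\int\tr(\Sigma\nabla^2f)\,d\nu$. You need asymptotic independence of $Y_\alpha$ and $\xi_\alpha$. The paper gets it from a second Poisson equation: for bounded Lipschitz centered $f$ with $u-Qu=f$, stationarity gives $\E[\psi(Y_\alpha)f(\xi_\alpha)]=\E[(\psi(Y_\alpha^+)-\psi(Y_\alpha))u(\xi_\alpha^+)]=O(\alpha^{1-1/m})$. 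Density then extends this to $L^1(\pi_\Xi)$.

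\textbf{Step 1 has the same flaw.} If you replace $g(X_\alpha,\cdot)$ by $g(0,\cdot)$ using only Lipschitz dependence, the error $\E\ip{X_\alpha}{\bar g(X_\alpha,\xi_\alpha)-\bar g(0,\xi_\alpha)}$ is bounded only by $C\E\norm{X_\alpha}^2$. Near the minimizer $\ip{x}{h(x)}\asymp\norm{x}^m\ll\norm{x}^2$, so the inequality $\E\ip{X_\alpha}{h(X_\alpha)}\le C\E\norm{X_\alpha}^2+C\alpha$ does not close. Even for $m=2$ it closes only if the Lipschitz constant is small relative to the curvature. The paper instead uses the one-sided condition \eqref{eq:mean-perturbation-control} in \assitemref{itm:N7} with $y=0$. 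This gives $\E\ip{X_\alpha}{g(X_\alpha,\xi_\alpha^+)-g(0,\xi_\alpha^+)}\ge-\theta\E\ip{X_\alpha}{h(X_\alpha)}$, which is absorbed since $\theta<1$, so Poisson is needed only at $x=0$.

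\textbf{Smaller points.} You neither need nor have justified the global bound $\E\norm{X_\alpha}^m\lesssim\alpha$. The moments in Corollary~\ref{cor:ordinary-W1-markov} are not uniform in $\alpha$, and $\ip{x}{h(x)}\asymp\norm{x}^\beta$ at infinity. Tightness already follows from $\E\ip{X_\alpha}{h(X_\alpha)}\le C\alpha$, together with $\norm{x}^m\lesssim\ip{x}{h(x)}$ on $\norm{x}\le r_0$ and $\ip{x}{h(x)}\ge c_*$ off it. For $\beta=2$, the stationary square identity also requires $\E_{\pi_\alpha}\norm{X}^2<\infty$. That corollary gives only first moments, so this must be proved separately.
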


We note that the covariance matrix $\Sigma$ is allowed to be degenerate, in which case the scaling limit is singular.

\begin{remark}
If \(m=2\) and \(h_0(y)=Ay\), then \eqref{eq:limitSDE-markov} defines an Ornstein--Uhlenbeck process.  Its invariant law is Gaussian with covariance \(C\) solving
\[
        AC+CA^\top=\Sigma.
\]
Thus Theorem~\ref{thm:clt-iso-markov} recovers the classical \(\sqrt\alpha\) scaling and Gaussian limit. 
For \(m>2\), the drift \(h_0\) is nonlinear, the scale becomes \(\alpha^{1/m}\), and the limiting invariant law is non-Gaussian.
\end{remark}

\subsection{Coordinate-separable objectives}
\label{subsec:separable-main}

The preceding results impose a common local flatness exponent in all
directions.  We now consider coordinate-separable objectives, for which the
flatness exponent, and hence the natural scale of the invariant law, may vary
across coordinates.  Suppose that
\begin{equation}
\label{eq:separable-model}
H(x)=\sum_{i=1}^d H_i(x_i),
\qquad
g(x,\xi)=\bigl(g_1(x_1,\xi),\ldots,g_d(x_d,\xi)\bigr).
\end{equation}
Writing \(h_i=H_i'\), the SGD recursion becomes
\begin{equation}
\label{eq:separable-sgd}
X_{n+1,i}
=X_{n,i}-\alpha\{h_i(X_{n,i})+g_i(X_{n,i},\xi_{n+1})\},
\qquad 1\le i\le d,
\end{equation}
where \(\xi_{n+1}=\Phi(\xi_n,U_{n+1})\) is the driving-chain recursion from
the beginning of this section.  Thus the coordinates may remain dependent
through the common driving chain.

For each \(1\le i\le d\), assume that the one-dimensional pair
\((H_i,g_i)\) satisfies the one-dimensional versions of
Assumptions~\ref{ass:H} and~\ref{ass:N}, with parameters
\((m_i,\beta_i)\).  The constants in these assumptions may depend on \(i\).
Applying the construction of Section~\ref{subsec:roadmap-fixed} with \(d=1\)
and \((H,g,m,\beta)\) replaced by \((H_i,g_i,m_i,\beta_i)\), on
\(\R\times\Xi\) equipped with the common base norm
\[
        \lvert(x_i,\xi)\rvert_\alpha
        :=|x_i|+\alpha^{-1}\norm{\xi},
\]
gives a weight \(V_{\alpha,i}\) and an induced metric
\(d_{i,\alpha}:=d_{V_{\alpha,i},\alpha}^{(i)}\).  Define the additive metric
on \(\R^d\times\Xi\) by
\begin{equation}
\label{eq:separable-metric}
d_{{\rm sep},\alpha}((x,\xi),(y,\eta))
:=\sum_{i=1}^d d_{i,\alpha}((x_i,\xi),(y_i,\eta)),
\end{equation}
and let \(W_{{\rm sep},\alpha}\) denote the Wasserstein--1 distance with cost
\(d_{{\rm sep},\alpha}\).  Set
\[
        m_{\max}:=\max_{1\le i\le d}m_i.
\]

The following corollary extends Theorem~\ref{thm:main-markov} to this
separable setting.

\begin{corollary}[Geometric ergodicity for separable objectives]
\label{cor:sep-markov-fixed}
Suppose that \eqref{eq:separable-model} holds and that, for every
\(1\le i\le d\), the pair \((H_i,g_i)\) satisfies the one-dimensional
versions of Assumptions~\ref{ass:H} and~\ref{ass:N}.  Then there exist
\(\alpha_0\in(0,1)\) and \(c_0\in(0,1)\) such that, for every
\(\alpha\in(0,\alpha_0]\), the augmented chain \(Z_n=(X_n,\xi_n)\)
admits a unique invariant law
\[
        \pi_\alpha^{\rm sep}
        \in\mathcal P_1(\R^d\times\Xi,d_{{\rm sep},\alpha}).
\]
Moreover, if \(P_\alpha^{\rm sep}\) denotes its transition kernel, then
\begin{equation}
\label{eq:sep-markov-contraction}
W_{{\rm sep},\alpha}
\bigl(\mu(P_\alpha^{\rm sep})^n,\nu(P_\alpha^{\rm sep})^n\bigr)
\le
(1-c_0\alpha^{m_{\max}-1})^n
W_{{\rm sep},\alpha}(\mu,\nu),
\qquad n\ge0,
\end{equation}
for all probability laws \(\mu,\nu\) for which the right-hand side is finite.
\end{corollary}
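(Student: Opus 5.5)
The plan is to reduce the corollary to Theorem~\ref{thm:main-markov}, applied coordinatewise, and then sum the resulting one-step contractions.

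\textbf{Step 1 (per-coordinate chains).} Fix $1\le i\le d$. The pair $(X_{n,i},\xi_n)$ is itself a Markov chain on $\R\times\Xi$. It is driven by the same innovations $U_{n+1}$ through the random map
\[
F^{(i)}_{\alpha,u}(x_i,\xi)=\bigl(x_i-\alpha\{h_i(x_i)+g_i(x_i,\Phi(\xi,u))\},\Phi(\xi,u)\bigr),
\]
and this chain satisfies the one-dimensional hypotheses. The proof of Theorem~\ref{thm:main-markov} (Section~\ref{sec:proof-fixed-markov}) should yield a one-step estimate at the level of the synchronous coupling, not merely for laws. The typical form is a directional kernel estimate along paths, giving
\[
\E\, d_{i,\alpha}\bigl(F^{(i)}_{\alpha,U}(z_i),F^{(i)}_{\alpha,U}(z_i')\bigr)\le(1-c_i\alpha^{m_i-1})\,d_{i,\alpha}(z_i,z_i')
\]
for all $z_i=(x_i,\xi)$, $z_i'=(y_i,\eta)$ and all $\alpha\le\alpha_{0,i}$. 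If the theorem only gives a Kantorovich bound for laws, I would instead invoke the underlying lemma that bounds $P_\alpha$ acting on $d_{V,\alpha}$-Lipschitz functions. That bound is equivalent to a one-step pointwise bound $W_{i}(\delta_{z}P^{(i)},\delta_{z'}P^{(i)})\le(1-c_i\alpha^{m_i-1})d_{i,\alpha}(z,z')$.

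\textbf{Step 2 (summing under the synchronous coupling).} Here is the key point. Starting from $(x,\xi)$ and $(y,\eta)$, run both full chains with the same innovation $U$. The $i$-th coordinate pair then evolves exactly under $F^{(i)}_{\alpha,U}$, and the $\xi$-component is shared across all $i$. Summing the Step~1 bounds over $i$ gives
\[
\E\, d_{{\rm sep},\alpha}\bigl(F_{\alpha,U}(z),F_{\alpha,U}(z')\bigr)\le\sum_i(1-c_i\alpha^{m_i-1})\,d_{i,\alpha}\le(1-c_0\alpha^{m_{\max}-1})\,d_{{\rm sep},\alpha}(z,z').
\]
This uses $\alpha^{m_i-1}\ge\alpha^{m_{\max}-1}$ for $\alpha<1$, and we take $c_0=\min_i c_i$ and $\alpha_0=\min_i\alpha_{0,i}$.

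\textbf{Main obstacle.} Step~2 requires the coupling that attains the per-coordinate contraction to be synchronous, which is what the proof of Theorem~\ref{thm:main-markov} does. The argument integrates along a path $\gamma$ and pushes it forward by the same map $F_{\alpha,u}$. The difficulty is that the infimizing paths for different $i$ differ in their $\xi$-components. This is harmless, because each $d_{i,\alpha}$ is handled separately with its own path and the same random map, and the synchronous coupling is one common coupling for all coordinates. If the theorem's proof used a non-synchronous coupling, I would redo its directional kernel estimate with the synchronous one, since SGD with shared noise is a random-map iteration.

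\textbf{Step 3 (from one step to the corollary).} Take any coupling of $\mu$ and $\nu$, apply the random map with common $U$, and iterate. This gives \eqref{eq:sep-markov-contraction} by induction. For existence of $\pi_\alpha^{\rm sep}$ I would use the standard completeness argument. The space $\mathcal P_1(\R^d\times\Xi,d_{{\rm sep},\alpha})$ is complete, since $d_{\rm sep}$ is a complete metric as a finite sum of complete length metrics dominating $|\cdot|_\alpha$. One checks $\delta_z P\in\mathcal P_1$ using the per-coordinate Lyapunov moment bounds from Theorem~\ref{thm:main-markov}. The sequence $\delta_z P^n$ is then Cauchy by the contraction, and its limit is invariant. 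Uniqueness follows directly from the strict contraction.
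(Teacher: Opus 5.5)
Your proposal is correct and follows the paper's proof. The per-coordinate synchronous estimate is exactly Lemma~\ref{lem:aug-lipschitz} with $d=1$; summing it under the common innovation with $c_0=\min_i c_i$ and $\alpha^{m_i-1}\ge\alpha^{m_{\max}-1}$ gives the one-step contraction for $d_{{\rm sep},\alpha}$, and Proposition~\ref{prop:CD-to-ergodic} finishes. The only difference is the reference-point check: the paper uses Assumption~\ref{ass:N}\assitemref{itm:N1}--\assitemref{itm:N2} to show that a single $\xi^\circ$ serves all coordinates, whereas you get $\delta_zP\in\mathcal P_1$ because each coordinate kernel preserves its own $\mathcal P_1$ class (contraction plus reference integrability at $\xi_\star^{(i)}$), which is equally valid.
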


The contraction rate is determined by \(m_{\max}\): the coordinates with the
largest flatness exponent have the largest contraction factor and hence govern
the convergence of the full chain.  Corollary~\ref{cor:sep-markov-fixed} is
proved in Appendix~\ref{sec:proof-sep-markov-fixed}.

\begin{remark}
\label{rem:sep-independent-coordinates}
The setup above also includes independent coordinate driving chains as a special case.  More
precisely, suppose that \(\Xi=\prod_{i=1}^d\Xi_i\), the driving map and
innovations act independently across coordinates, and each \(g_i\) depends
only on the \(i\)th coordinate of \(\xi\).  Then the transition kernel is a
product kernel.  Hence the product of the one-dimensional invariant laws is
invariant and, by the uniqueness in Corollary~\ref{cor:sep-markov-fixed},
equals \(\pi_\alpha^{\rm sep}\).
\end{remark}

We next turn to the scaling limit.  For each coordinate, assume in addition
the one-dimensional versions of Assumptions~\ref{ass:Hscale}
and~\ref{ass:Nscale}.  Let \(H_{i,0}\) be the homogeneous function appearing
in the one-dimensional version of
Assumption~\ref{ass:Hscale}\assitemref{itm:H4}, and set
\(h_{i,0}:=H_{i,0}'\).  Initialize the driving chain in its invariant law
\(\pi_\Xi\), and define
\[
        g_0^{\rm sep}(\xi)
        :=\bigl(g_1(0,\xi),\ldots,g_d(0,\xi)\bigr),
        \qquad \xi\in\Xi.
\]
Then \((g_0^{\rm sep}(\xi_n))_{n\ge0}\) is centered and stationary.  Its
asymptotic covariance is
\begin{equation}
\label{eq:separable-GK-series}
\Sigma_{\rm sep}
:=\Gamma_0^{\rm sep}
+\sum_{k=1}^{\infty}
\bigl(\Gamma_k^{\rm sep}+(\Gamma_k^{\rm sep})^\top\bigr),
\qquad
\Gamma_k^{\rm sep}
:=\E\bigl[g_0^{\rm sep}(\xi_0)g_0^{\rm sep}(\xi_k)^\top\bigr].
\end{equation}
The coordinatewise Poisson construction in
Lemma~\ref{lem:poisson-gk-construction} shows that the series converges
absolutely and that \(\Sigma_{\rm sep}\) is finite and positive semidefinite.
Its off-diagonal entries retain the temporal cross-covariances created by a
common driving chain.
Let
\begin{equation}
        \mathcal I_*:=\{i:m_i=m_{\max}\},
        \qquad
        P_*:=\operatorname{diag}\bigl(\1_{\{i\in\mathcal I_*\}}\bigr),
        \qquad
        \Sigma_*:=P_*\Sigma_{\rm sep}P_*. \label{eq:def-Sigma-star}
\end{equation}
Both \(\Sigma_{\rm sep}\) and \(\Sigma_*\) are allowed to be
degenerate.

For each \(1\le i\le d\), the \((x_i,\xi)\)-marginal of
\(\pi_\alpha^{\rm sep}\) is invariant for the one-dimensional augmented
chain associated with \((H_i,g_i)\).  Therefore
Theorem~\ref{thm:clt-iso-markov} applied to this chain gives
\[
        \alpha^{-1/m_i}X_{\infty,i}^{(\alpha),\rm sep}
        \Rightarrow Y_{\infty,i},
\]
where \(Y_{\infty,i}\) has the unique invariant law of the one-dimensional
diffusion
\begin{equation*}
        dY_{i,t}
        =-h_{i,0}(Y_{i,t})\,dt
        +(\Sigma_{\rm sep})_{ii}^{1/2}\,dB_{i,t},
\end{equation*}
and \(B_i\) is a standard one-dimensional Brownian motion.  Thus coordinate
\(i\) has natural scale \(\alpha^{1/m_i}\).

\begin{corollary}[Scaling limit for separable objectives]
\label{cor:sep-markov-scaling}
Suppose that \eqref{eq:separable-model} holds and that, for every
\(1\le i\le d\), the pair \((H_i,g_i)\) satisfies the one-dimensional
versions of Assumptions~\ref{ass:Hscale} and~\ref{ass:Nscale}.  Let
\((X_\infty^{(\alpha),\rm sep},\xi_\infty^{(\alpha),\rm sep})\) have law
\(\pi_\alpha^{\rm sep}\), and set
\[
        h_0^{\rm sep}(y)
        :=\bigl(h_{1,0}(y_1),\ldots,h_{d,0}(y_d)\bigr).
\]
Then, under the common normalization determined by \(m_{\max}\),
\[
        \alpha^{-1/m_{\max}}X_\infty^{(\alpha),\rm sep}
        \Rightarrow Y_\infty^{\rm sep},
\]
where \(Y_\infty^{\rm sep}\) has the unique invariant law of the
\(d\)-dimensional diffusion
\begin{equation}
\label{eq:sep-limitSDE-d}
        dY_t
        =-h_0^{\rm sep}(Y_t)\,dt
        +\Sigma_*^{1/2}\,dB_t.
\end{equation}
Here, $\Sigma_*$ is defined in \eqref{eq:def-Sigma-star} and \(B\) is a standard \(d\)-dimensional Brownian motion.
\end{corollary}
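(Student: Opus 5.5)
Since the separable recursion \eqref{eq:separable-sgd} acts coordinatewise, the plan is to redo the generator-comparison argument of Theorem~\ref{thm:clt-iso-markov} jointly under the common scale $\alpha^{1/m_{\max}}$, and to handle non-flattest coordinates by a separate degeneration argument. First, tightness and degeneration. For each $i$, the $(x_i,\xi)$-marginal of $\pi_\alpha^{\rm sep}$ is invariant for the one-dimensional chain, so by uniqueness it equals that chain's invariant law. Theorem~\ref{thm:clt-iso-markov} then gives $\alpha^{-1/m_i}X^{(\alpha),\rm sep}_{\infty,i}\Rightarrow Y_{\infty,i}$, and hence tightness of $\alpha^{-1/m_i}X_{\infty,i}$. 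For $i\notin\mathcal I_*$ we have $m_i<m_{\max}$, so $\alpha^{-1/m_{\max}}X_{\infty,i}=\alpha^{1/m_i-1/m_{\max}}\cdot\alpha^{-1/m_i}X_{\infty,i}\to0$ in probability. For $i\in\mathcal I_*$ the coordinate is tight at scale $\alpha^{1/m_{\max}}$. The rescaled vector $Y_\alpha:=\alpha^{-1/m_{\max}}X_\infty^{(\alpha),\rm sep}$ is therefore tight, and every subsequential limit $\nu$ is supported on $\{y:y_i=0,\ i\notin\mathcal I_*\}$. I will also use the uniform moment bounds established inside the proof of Theorem~\ref{thm:clt-iso-markov} for each coordinate; these give uniform integrability of $\alpha^{-(m_i-1)/m_{\max}}|h_i(X_{\infty,i})|$ for $i\in\mathcal I_*$.

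Second, identifying the limit on the flat block. For $f\in C_c^\infty(\R^{d})$, I would run the stationary generator identity of the proof of Theorem~\ref{thm:clt-iso-markov}: $\E[f(Y^{(\alpha)}_{1})-f(Y^{(\alpha)}_0)]=0$ under $\pi^{\rm sep}_\alpha$, with the Poisson corrector of Lemma~\ref{lem:poisson-gk-construction} applied to the vector $g_0^{\rm sep}$. Take the time unit $\alpha^{(m_{\max}-2)/m_{\max}}$ after dividing by $\alpha^{2/m_{\max}}\cdot\alpha^{-1}$. The Taylor expansion has three kinds of terms. The drift term for $i\in\mathcal I_*$ is $\alpha^{1-1/m_{\max}}h_i(\alpha^{1/m_{\max}}y_i)=\alpha\,\alpha^{(m_{\max}-1)/m_{\max}}[h_{i,0}(y_i)+o(1)]$, which gives $-h_{i,0}(y_i)\partial_if$. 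For $i\notin\mathcal I_*$, the drift $h_i(X_i)$ has size $|X_i|^{m_i-1}\sim\alpha^{(m_i-1)/m_i}$, which is $o(\alpha^{(m_{\max}-1)/m_{\max}})$ because $(m-1)/m$ is increasing in $m$; this term vanishes. The noise term is order $\alpha^{1-1/m_{\max}}$ and centered via the Poisson decomposition, and its quadratic variation gives $\tfrac12\sum_{i,j}(\Sigma_{\rm sep})_{ij}\partial_{ij}f\cdot\alpha^{2-2/m_{\max}}$. Consequently, every subsequential limit $\nu$ satisfies $\int\mathcal L f\,d\nu=0$, where
\[
\mathcal L f(y)=-\sum_{i\in\mathcal I_*}h_{i,0}(y_i)\partial_if(y)+\tfrac12\sum_{i,j}(\Sigma_{\rm sep})_{ij}\partial_{ij}f(y).
\]
However, the non-flat directions carry diffusion without drift, so $\mathcal L$ cannot be the generator that determines $\nu$ there. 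I therefore restrict to test functions depending only on $(y_i)_{i\in\mathcal I_*}$. Then $\nu_*$ (the flat-block marginal) is stationary for $dY_*=-h_{*,0}(Y_*)dt+(\Sigma_*|_{\mathcal I_*})^{1/2}dB$, while the remaining components of $\nu$ are $\delta_0$. This is exactly the invariant law of \eqref{eq:sep-limitSDE-d}: with $\Sigma_*=P_*\Sigma_{\rm sep}P_*$, the non-flat coordinates satisfy $dY_i=-h_{i,0}(Y_i)dt$, and since $h_{i,0}(y)=c_i|y|^{m_i-2}y$ with $c_i>0$ (convex, homogeneous, and nondegenerate by (H2)), their invariant law is $\delta_0$.

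Third, uniqueness. The flat block has drift $\nabla\bigl(\sum_{i\in\mathcal I_*}H_{i,0}\bigr)$, which is homogeneous and convex with a coercive potential, but the diffusion matrix may be degenerate. I would invoke the same uniqueness result for the stationary Fokker--Planck equation used in Theorem~\ref{thm:clt-iso-markov}, which allows degenerate $\Sigma$ and relies on convexity and contraction of the drift. Combined with the $\delta_0$ factor, this pins down $\nu$, so the whole family converges. The main obstacle is the cross terms in the generator expansion. Under the single test-function scale, non-flat coordinates evolve on the faster time scale $\alpha^{(m_i-2)/m_i}$. I must check that their contributions to mixed derivatives $\partial_{ij}f$ with $i\in\mathcal I_*$ vanish, which is why I restrict to test functions of the flat block only. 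With that restriction, the flat block is effectively a $|\mathcal I_*|$-dimensional instance of Theorem~\ref{thm:clt-iso-markov} whose noise is $(g_i(0,\xi))_{i\in\mathcal I_*}$. The remaining task is to verify that the proof of that theorem only used the assumptions coordinatewise in the separable case, with the Hessian lower bound replaced by the coordinatewise (H2) bounds. If this holds, the argument goes through.
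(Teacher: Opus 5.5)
Your overall architecture matches the paper's: identify each $(x_i,\xi)$-marginal with the one-dimensional invariant law, get tightness of the active block and $\alpha^{1/m_i-1/m_{\max}}\cdot(\text{tight})\to0$ for $i\notin\mathcal I_*$, run the perturbed-test-function generator argument only for test functions of $y_*=(y_i)_{i\in\mathcal I_*}$, identify the block covariance as $(\Sigma_{\rm sep})_{\mathcal I_*\times\mathcal I_*}$, and conclude by uniqueness for the active block plus $\delta_0$ on the inactive coordinates. The middle step, however, contains a real error.

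\textbf{Inactive drift.} Your drift comparison for the inactive coordinates is backwards. Since $(m-1)/m$ is increasing in $m$, the condition $m_i<m_{\max}$ gives $\alpha^{(m_i-1)/m_i}\gg\alpha^{(m_{\max}-1)/m_{\max}}$. Equivalently, the rescaled drift satisfies $\alpha^{-(m_{\max}-1)/m_{\max}}h_i(\alpha^{1/m_{\max}}y_i)\approx\alpha^{(m_i-m_{\max})/m_{\max}}h_{i,0}(y_i)$, which diverges rather than vanishes. Consequently the $d$-dimensional identity $\int\mathcal Lf\,d\nu=0$ you write down is false, not merely non-determining. Testing with $f(y)=y_i^2\psi(y)$ for a cutoff $\psi$ gives $\int\mathcal Lf\,d\nu=(\Sigma_{\rm sep})_{ii}\int\psi\,d\nu$, because $\nu$ is carried by $\{y_i=0\}$. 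So the identity would force $(\Sigma_{\rm sep})_{ii}=0$. Your later remark that inactive coordinates move on a faster time scale is the correct picture.

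Restricting to test functions of $y_*$ is still the right move, but the justification is separability. The pair $(X_*,\xi)$ is itself an augmented Markov chain whose update never involves the inactive coordinates. So with $f=f(y_*)$ there are no inactive drift terms and no mixed derivatives to control, and the inactive coordinates enter only through the tightness argument. The obstacle you describe at the end therefore does not arise.

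\textbf{The corrector.} The corrector must be the $x$-dependent one, $\chi_{*,x}=(\chi_{i,x_i})_{i\in\mathcal I_*}$ evaluated at $x=\alpha^{1/m_{\max}}Y_{\alpha,*}$, as in the paper, not the corrector of $g_0^{\rm sep}$ alone. Write $g_{*,x}:=g_*(x,\cdot)$ and $X_*=\alpha^{1/m_{\max}}Y_{\alpha,*}$. The singular term has conditional mean $-\alpha^{1/m_{\max}-1}\ip{\nabla f(Y_{\alpha,*})}{Qg_{*,X_*}(\xi)}$. A corrector built from $g_{*,0}$ cancels only $Qg_{*,0}$ and leaves $-\alpha^{1/m_{\max}-1}\ip{\nabla f(Y_{\alpha,*})}{Q(g_{*,X_*}-g_{*,0})(\xi)}$. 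The Lipschitz bound in $x$ controls this only by $C\alpha^{2/m_{\max}-1}$, which is $O(1)$ at $m_{\max}=2$ and divergent for $m_{\max}>2$. Killing this term is exactly what the $x$-dependent corrector is for. With $\chi_{*,x}$ you then need two further ingredients. The first is the H\"older estimate \eqref{eq:poisson-solution-x-holder} with $p>1-1/m_{\max}$, to handle the change $X_*\mapsto X_*^+$. The second is the coordinatewise analogue of Lemma~\ref{lem:covariance-replacement-compact}.

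\textbf{Uniqueness on the active block.} As you anticipated, the active block violates the multivariate (H2) (e.g.\ $y_1^4+y_2^4$), so Lemma~\ref{lem:limit-diffusion-identification} cannot be cited verbatim. Convexity is not enough when $\Sigma_*$ is degenerate. What that proof actually uses is $\ip{y}{h_{0,*}(y)}\ge c\norm{y}^{m_{\max}}$ and $\ip{y-z}{h_{0,*}(y)-h_{0,*}(z)}\ge c\norm{y-z}^{m_{\max}}$. Both follow by summing the one-dimensional inequalities, which is what the paper does.

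Also, $h_{i,0}$ need not be $c_i\abs{y}^{m_i-2}y$. Positive homogeneity allows different constants on the two half-lines, as in the quantile example. The bound $y\,h_{i,0}(y)\ge c\abs{y}^{m_i}$ still holds and still gives $\delta_0$ for the inactive coordinates.
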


For \(i\notin\mathcal I_*\), the \(i\)th component of
\eqref{eq:sep-limitSDE-d} has no Brownian forcing and follows the deterministic
flow \(dY_{i,t}=-h_{i,0}(Y_{i,t})\,dt\), whose unique invariant law is
\(\delta_0\).  Thus, under the common normalization, only the coordinates
with the largest flatness exponent can have a nonzero limit. 
If \(m_i=m\) for every \(i\), then \(P_*=I_d\) and
\(\Sigma_*=\Sigma_{\rm sep}\).  Corollary~\ref{cor:sep-markov-scaling}
then gives
\(
        \alpha^{-1/m}X_\infty^{(\alpha),\rm sep}
        \Rightarrow Y_\infty^{\rm sep},
\)
where \eqref{eq:sep-limitSDE-d} becomes
\(
        dY_t
        =-h_0^{\rm sep}(Y_t)\,dt
        +\Sigma_{\rm sep}^{1/2}\,dB_t.
\) 
The proof of
Corollary~\ref{cor:sep-markov-scaling} is given in
Appendix~\ref{sec:proof-sep-markov-scaling}.

\begin{remark}
\label{rem:sep-scaling-independent}
Under the independent-coordinate conditions of
Remark~\ref{rem:sep-independent-coordinates}, the naturally rescaled vector
satisfies
\[
        \left(
        \frac{X_{\infty,1}^{(\alpha),\rm sep}}{\alpha^{1/m_1}},\ldots,
        \frac{X_{\infty,d}^{(\alpha),\rm sep}}{\alpha^{1/m_d}}
        \right)
        \Rightarrow (Y_{\infty,1},\ldots,Y_{\infty,d}),
\]
where the limiting coordinates are independent. See Appendix~\ref{sec:pf-rmk-rem:sep-scaling-independent} for more details.
\end{remark}

\section{Applications and numerical experiments}
\label{sec:applications-numerics}

This section illustrates the roles of the local exponent \(m\) and the tail
exponent \(\beta\), and reports numerical experiments for the invariant-law
predictions.  Section~\ref{subsec:weak-quantile-flat} uses quantile and tail-risk
estimation to interpret \(m\) as a local flatness exponent and
illustrates its consequences for local scaling.  
Section~\ref{subsec:robust-subquadratic-tail} uses robust and
logistic losses to interpret \(\beta\) as a global tail-growth exponent; these examples are locally quadratic with \(m=2\), emphasizing
that \(\beta\) controls large excursions rather than the small-stepsize
normalization.

\subsection{Local flatness in quantile and tail-risk estimation}
\label{subsec:weak-quantile-flat}

Quantile and tail-risk estimation problems provide a statistical interpretation of the
local flatness exponent.  The quantile-loss formulation goes back to Koenker and
Bassett~\cite{KoenkerBassett1978}, while Knight~\cite{Knight1998} studied its
asymptotic behavior under general local conditions on the distribution function,
including the higher-order crossings considered below.  

\paragraph{Background.}
Let $Y$ be a scalar response, let $F$ denote its distribution
function, and let $t_\star$ be a $\tau$-quantile, so that $F(t_\star)=\tau$.
The population quantile objective is
\[
        H_\tau(t):=\E\rho_\tau(Y-t),
        \qquad
        \rho_\tau(u):=u(\tau-\1_{\{u<0\}}).
\]
At continuity points of $F$, the population gradient is \(F(t)-\tau\).
Consequently,
\[
        t_\star\in\operatorname*{argmin}_{t\in\R}H_\tau(t).
\]
The local objective behavior is exactly the local crossing behavior of
$F$ at the target quantile.  Suppose that the following higher-order crossing
condition holds: for some $m\ge2$ and constants $c_+,c_->0$,
\begin{equation}
        F(t_\star+u)-\tau
        =c_+u_+^{m-1}-c_-u_-^{m-1}+o(|u|^{m-1}),
        \qquad u\to0,
        \label{eq:weak-quantile-crossing-v3}
\end{equation}
where $u_+=\max\{u,0\}$ and $u_-=\max\{-u,0\}$.  Integrating the population
gradient gives
\begin{equation*}
        H_\tau(t_\star+u)-H_\tau(t_\star)
        =\frac{c_+}{m}u_+^m+\frac{c_-}{m}u_-^m+o(|u|^m).
\end{equation*}
The positive-density case corresponds to \(m=2\).  If the density vanishes at
the target quantile like \(|u|^{m-2}\), then the equation \(F(t)=\tau\) still has the unique solution \(t=t_\star\), but
\(F(t)-\tau\) vanishes at order \(m-1\) near \(t_\star\). As a result, the population risk is \(m\)-flat.  This setup is closely related to the regularly varying local regime studied by Knight~\cite{Knight1998}.

The Rockafellar--Uryasev variational representation of
CVaR~\cite{RockafellarUryasev2000} gives an analogous population objective.  For a
scalar loss $L$ and confidence level $\tau\in(0,1)$, this representation is
\[
        \operatorname{CVaR}_\tau(L)
        =\min_{t\in\R}C_\tau(t),
        \qquad
        C_\tau(t):=t+\frac{1}{1-\tau}\E(L-t)_+.
\]
At continuity points of $F_L$,
\[
        C_\tau'(t)
        =1-\frac{1}{1-\tau}\PP(L>t)
        =\frac{F_L(t)-\tau}{1-\tau}.
\]
Hence, if $t_\star$ is the unique
$\tau$-quantile of $L$, then
\[
        \operatorname*{argmin}_{t\in\R}C_\tau(t)
        =\{t_\star\}.
\]
The local expansion in \eqref{eq:weak-quantile-crossing-v3} then implies
\[
        C_\tau(t_\star+u)-C_\tau(t_\star)
        =\frac{c_+}{m(1-\tau)}u_+^m
        +\frac{c_-}{m(1-\tau)}u_-^m+o(|u|^m).
\]
Thus, combining the variational representation with the local expansion in
\eqref{eq:weak-quantile-crossing-v3} shows that the quantile objective and the
variational objective for CVaR have the same local exponent \(m\).

\paragraph{Invariant law for median estimation.}
To illustrate the local effect of this flatness, consider median estimation with distribution function
\begin{equation}
        F_m(x)=\frac12+\frac12\operatorname{sgn}(x)|x|^{m-1},
        \qquad |x|\le1.
        \label{eq:weak-median-family-v3}
\end{equation}
Thus, $\tau=1/2$ and the median is zero.
The stochastic gradient at the minimizer has nonzero variance, while the restoring
force is of order $|x|^{m-1}$.  Their balance gives the scale
$\alpha^{1/m}$ and a nonlinear diffusion when $m>2$.  The raw
pinball score is nonsmooth, so the finite-state calculation below illustrates
this mechanism rather than directly applying
Theorem~\ref{thm:clt-iso-markov}. 

The finite-state example in Figure~\ref{fig:quantile-flat-scaling}
uses a two-sample lazy version of the recursion
\begin{equation}
        X_{n+1}=X_n-
        \alpha\left\{\frac12\bigl(\1_{\{Y_{n+1,1}\le X_n\}}
        +\1_{\{Y_{n+1,2}\le X_n\}}\bigr)-\frac12\right\}.
        \label{eq:lazy-weak-median-v3}
\end{equation}
For $\alpha=2^{-j}$, the chain lives on the grid
$\{-1,-1+\alpha/2,\ldots,1\}$.  If $x_k=-1+k\alpha/2$, then its right and left
transition probabilities are
\[
        p_k=(1-F_m(x_k))^2,
        \qquad
        q_k=F_m(x_k)^2,
\]
and its invariant law is computed exactly from the detailed balance equation
\[
        \frac{\pi_\alpha(k+1)}{\pi_\alpha(k)}
        =\frac{p_k}{q_{k+1}}.
\]
The
variance of the noise at the minimizer in \eqref{eq:lazy-weak-median-v3} is
\(\Sigma=1/8\), and the limiting diffusion is
\[
        dZ_t=-\frac12 Z_t|Z_t|^{m-2}\,dt+\sqrt{\Sigma}\,dB_t.
\]
Let \(Z_\infty\) have the invariant law of this diffusion.  Its density is
\[
        p_m(z)=\frac{m(1/(m\Sigma))^{1/m}}{2\Gamma(1/m)}
        \exp\!\left(-\frac{|z|^m}{m\Sigma}\right).
\]
In particular, for $m=4$, $p_4(z)\propto \exp(-2z^4)$, which is not Gaussian.
Figure~\ref{fig:quantile-flat-scaling} confirms all these theoretical predictions.

\begin{figure}
    \centering
    \includegraphics[
        width=\textwidth,
        height=0.38\textheight,
        keepaspectratio
    ]{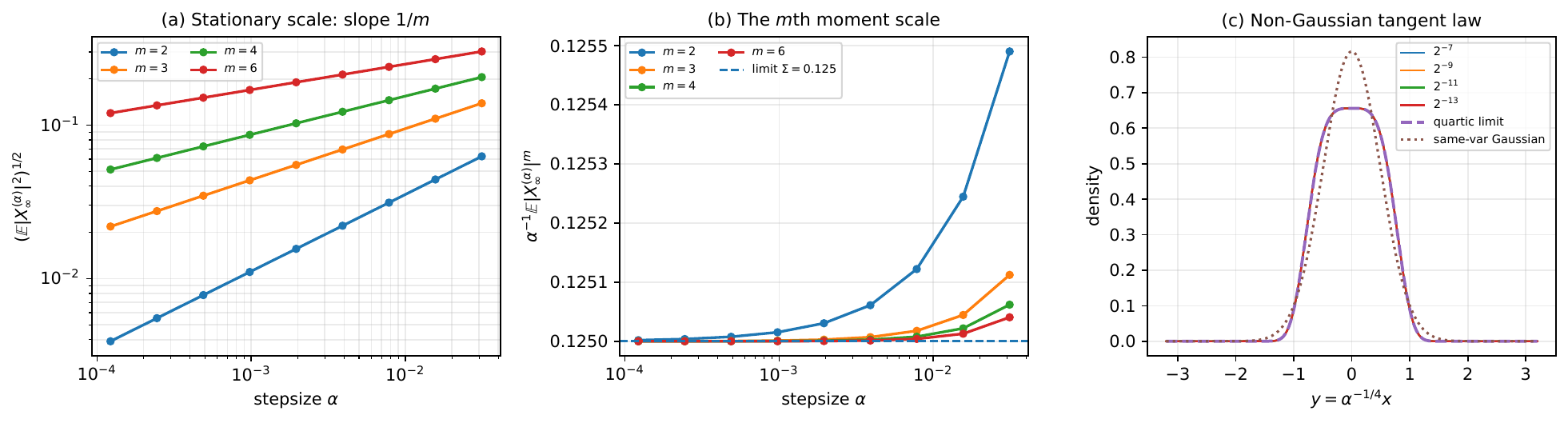}
    \caption{SGD for median estimation.  Panel (a) shows the
    $L_2$ error for several crossing orders \(m\); the fitted
    slopes match the theoretical exponent \(1/m\).  Panel (b) checks the
    predicted local moment scale: in this
    model \(\alpha^{-1}\E|X_\infty^{(\alpha)}|^m\to\Sigma=1/8\).  Panel (c)
    shows the convergence of \(\alpha^{-1/4}X_\infty^{(\alpha)}\) for \(m=4\) to
    the quartic invariant density of the limiting diffusion, together with a
    Gaussian of the same variance for comparison.}
    \label{fig:quantile-flat-scaling}
\end{figure}

\paragraph{Markovian covariance and unequal exponents.}
A related finite-state experiment in Figure~\ref{fig:markovian-separable-effects}(a) verifies the effect of temporal dependence
on the asymptotic covariance.  Let \(\rho\in[0,1)\), and let \((S_n)\) be a
stationary two-state Markov chain on \(\{-1,1\}\) with transition probabilities
\[
        \PP(S_{n+1}=s\mid S_n=s)=\frac{1+\rho}{2},
        \qquad
        \PP(S_{n+1}=-s\mid S_n=s)=\frac{1-\rho}{2},
        \qquad s\in\{-1,1\}.
\]
Its stationary distribution is uniform on \(\{-1,1\}\), and
\(\E[S_nS_{n+k}]=\rho^k\).  Independently, let \((R_n)\) be i.i.d.\ with
\[
        \PP(R_n\le r)=r^{m-1},\quad 0\le r\le1,
\]
and set \(Y_n=S_nR_n\).  The two-state chain can be realized on \([-1,1]\)
by taking \(\Phi(s,U)=s\)
with probability \(\rho\), and \(\Phi(s,U)=1\) or \(-1\), each with
probability \((1-\rho)/2\).  Hence \(\E L_\Phi(U)=\rho<1\), where \(L_\Phi(U)\) is the random Lipschitz coefficient from
Assumption~\ref{ass:N}\assitemref{itm:N1}.  The marginal
law of $Y_n$ is still \eqref{eq:weak-median-family-v3}, but at the median
\(
        \1_{\{Y_n\le0\}}-\frac12=-\frac12S_n.
\)
Thus the asymptotic covariance in the Green--Kubo representation \eqref{eq:GK-lag-series} is
\[
        \Sigma
        =\frac14\left(1+2\sum_{k\ge1}\rho^k\right)
        =\frac14 \cdot \frac{1+\rho}{1-\rho}.
\]
For $m=4$, the invariant law satisfies
$\E Z_\infty^4=\Sigma$, and hence
\[
        \alpha^{-1}\E|X_\infty^{(\alpha)}|^4
        \longrightarrow \Sigma.
\]
Figure~\ref{fig:markovian-separable-effects}(a) shows that the rescaled moment approaches the predicted value $\Sigma = \Sigma_{\rm GK}$ as $\alpha \downarrow 0$.

Moreover, we consider the separable case by taking the product of two scalar recursions of the
form \eqref{eq:lazy-weak-median-v3}, with \(m_1=2\) and \(m_2=4\).
Under the common scale associated with \(m_2\), only the second coordinate
has a nonzero limit.  Figure~\ref{fig:markovian-separable-effects}(b) confirms this
conclusion.

\begin{figure}
    \centering
    \includegraphics[
        width=0.92\textwidth,
        height=0.38\textheight,
        keepaspectratio
    ]{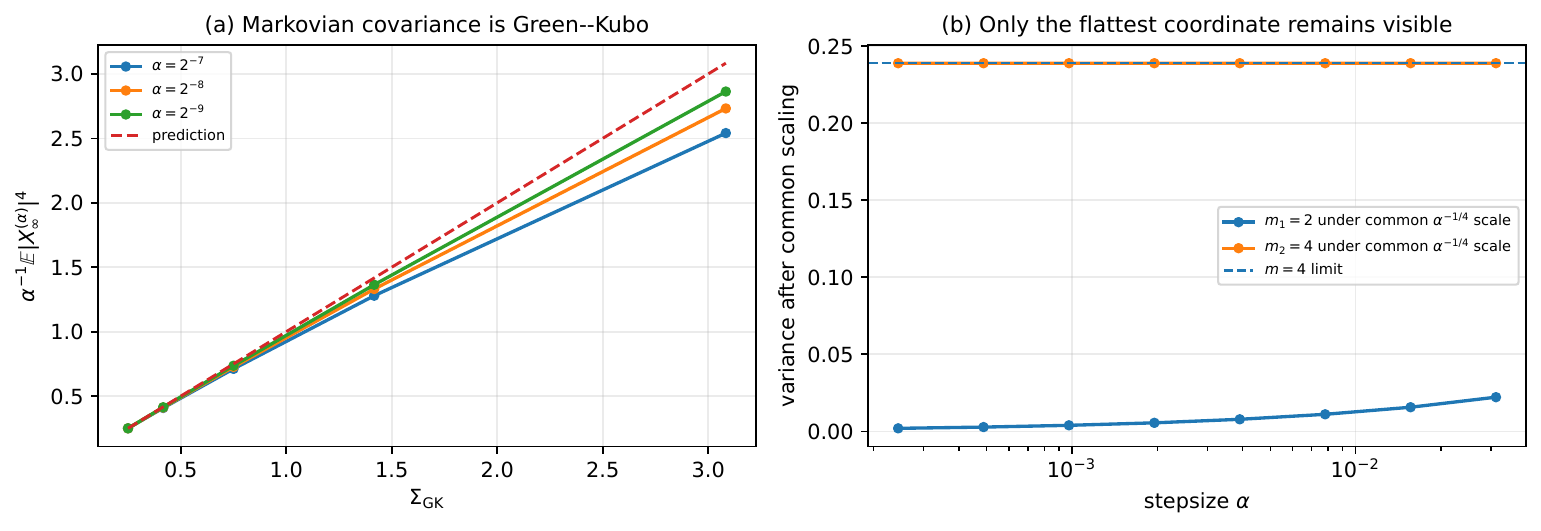}
    \caption{Two consequences of the scaling limit of invariant laws.
    Panel (a) uses a Markovian sign stream and verifies that the fourth moment
    constant in the \(m=4\) quantile example is governed by the asymptotic
    covariance rather than by the marginal variance.  Panel (b) considers a
    separable two-coordinate model with exponents \(m_1=2\) and \(m_2=4\); under
    the common \(\alpha^{-1/4}\) scale, the \(m_1=2\) coordinate converges to zero while
    the \(m_2=4\) coordinate converges to a nondegenerate limit.}
    \label{fig:markovian-separable-effects}
\end{figure}

\subsection{Subquadratic tails for robust and logistic losses}
\label{subsec:robust-subquadratic-tail}

The tail exponent \(\beta\) describes the growth of the objective at
infinity.  The generalized Charbonnier family below realizes every
\(\beta\in[1,2]\) \cite{Charbonnier1997,Barron2019}; the classical Huber loss is a related piecewise-defined robust loss with linear tails \cite{Huber1964}.  Under the bounded, nondegenerate, nonseparable model verified below, the population logistic risk has linear coercive growth corresponding to \(\beta=1\)
\cite{BartlettJordanMcAuliffe2006}.

\paragraph{Robust location estimation.}
Consider a scalar location model.
For \(1\le\beta\le2\), define the generalized Charbonnier loss
\[
        \rho_\beta(u):=\frac{(1+u^2)^{\beta/2}-1}{\beta},
        \qquad
        \psi_\beta(u):=\rho_\beta'(u)
        =u(1+u^2)^{\beta/2-1}.
\]
The endpoint \(\beta=1\) is the pseudo-Huber/Charbonnier loss, while
\(\beta=2\) is ordinary least squares.  In the scalar location model
\[
        Y=\theta_\star+\varepsilon,
        \qquad
        H_\beta(\theta):=\E\rho_\beta(\theta-Y),
\]
recenter \(x=\theta-\theta_\star\).  If the error distribution is symmetric,
then the constant-stepsize SGD becomes
\begin{equation*}
        X_{n+1}=X_n-\alpha\psi_\beta(X_n-\varepsilon_{n+1}).
\end{equation*}
The population drift is
\[
        h_\beta(x)=\E\psi_\beta(x-\varepsilon).
\]
Since
\[
        \psi_\beta'(u)
        =
        (1+u^2)^{\beta/2-2}\{1+(\beta-1)u^2\},
\]
we have
\[
        h_\beta'(0)=\E\psi_\beta'(-\varepsilon)>0
\]
for every nondegenerate bounded symmetric error.  Thus the minimizer is
locally quadratic with \(m=2\).  Consequently the invariant law remains
the classical Gaussian limit,
\[
        \alpha^{-1/2}X_\infty^{(\alpha)}
        \Rightarrow
        N\left(0,\frac{\Var(\psi_\beta(-\varepsilon))}{2h_\beta'(0)}\right).
\]

The difference from least squares is not local but global.  As \(|u|\to\infty\),
\[
        \rho_\beta(u)\sim \frac{|u|^\beta}{\beta},
        \qquad
        \psi_\beta(u)\sim \operatorname{sgn}(u)|u|^{\beta-1}.
\]
Therefore, for bounded errors,
\[
        H_\beta(x)\sim\frac{|x|^\beta}{\beta},
        \qquad
        h_\beta(x)\sim \operatorname{sgn}(x)|x|^{\beta-1},
        \qquad |x|\to\infty.
\]
For \(1\le\beta<2\), the behavior at infinity of the deterministic flow \(\dot x_t=-h_\beta(x_t)\) is described by
\begin{equation}
        \frac{d}{dt}|x_t|^{2-\beta}
        \sim -(2-\beta),
        \qquad |x_t|\to\infty.
        \label{eq:robust-tail-coordinate-v3}
\end{equation}
This calculation explains why the weight function for subquadratic tails involves 
\(|x|^{2-\beta}\).

For the numerical illustration, take
\[
        \varepsilon\sim0.9\,\operatorname{Unif}[-1,1]
        +0.1\,\operatorname{Unif}[-10,10].
\]
This bounded symmetric mixture satisfies the assumptions used above.
Figure~\ref{fig:robust-subquadratic-tail} reports the corresponding numerical results.

\begin{figure}[!t]
    \centering
    \includegraphics[
        width=0.92\textwidth,
        height=0.62\textheight,
        keepaspectratio
    ]{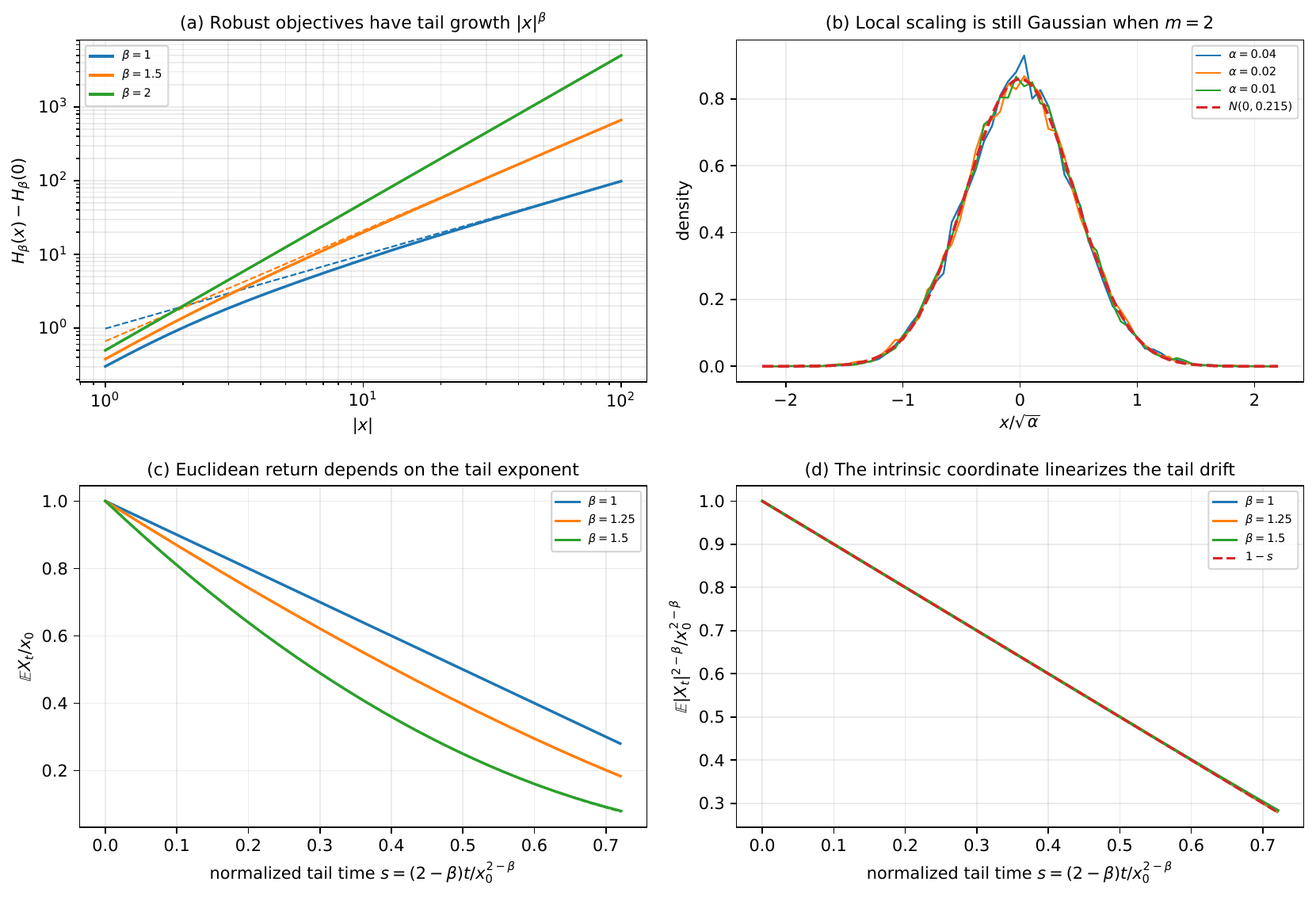}
    \caption{Robust location estimation with generalized Charbonnier losses.
    Panel (a) verifies the population tail \(H_\beta(x)\asymp |x|^\beta\).
    Panel (b) verifies the Gaussian scaling limit of \(X_\infty^{(\alpha)}/\sqrt\alpha\) for the pseudo-Huber loss with \(\beta=1\) because it remains locally
    quadratic.  Panel (c) shows that the decay of the normalized state \(\mathbb E X_t/x_0\), starting from a large \(x_0>0\), depends on the tail exponent.  Panel (d) normalizes the same trajectories by the tail scaling \(|x|^{2-\beta}\), showing that they follow the linear prediction in
    \eqref{eq:robust-tail-coordinate-v3}.}
    \label{fig:robust-subquadratic-tail}
\end{figure}
\FloatBarrier

\paragraph{Logistic regression.}
Finally, we verify theoretically that our results apply to SGD for logistic regression under a correctly specified model with bounded, nondegenerate design and nonvanishing label noise.
Let \((Y_n,Z_n)_{n\ge1}\) be i.i.d.\ data with covariates satisfying
\[
        \norm{Z_n} \le R,
        \qquad
        \E[Z_n Z_n^\top]\succeq\lambda I_d
\]
for some \(R,\lambda>0\).  With \(\sigma(t):=(1+e^{-t})^{-1}\), suppose that
\[
        \PP(Y_n=1\mid Z_n)=\sigma(Z_n^\top\theta_\star),
        \qquad Y_n\in\{-1,1\}.
\]
Set \(S_n:=Y_nZ_n\).  For the logistic loss
\[
        \ell(\theta;S):=\log(1+e^{-S^\top\theta}),
\]
the unique population minimizer is \(\theta_\star\).  After recentering
\(x=\theta-\theta_\star\), the sample and population gradients are
\[
        \mathsf G(x,S)
        :=-\frac{S}{1+e^{S^\top(\theta_\star+x)}},
        \qquad
        h(x):=\E\mathsf G(x,S).
\]
The centered population objective is \(C^2\) and convex, with \(h(0)=0\), and
hence satisfies Assumption~\ref{ass:H}\assitemref{itm:H1}; moreover,
\(\norm{\mathsf G(x,S)}\le R\).

Since \(\lvert Z^\top\theta_\star\rvert\le R\norm{\theta_\star}\), setting
\(q:=\sigma(-R\norm{\theta_\star})\) gives
\(
        q\le \PP(Y=1\mid Z)\le 1-q .
\)
Hence, for every \(v\in\Sph^{d-1}\),
\[
        \gamma(v):=\E[(-S^\top v)_+]
        \ge q\E|Z^\top v|
        \ge \frac{q}{R}\E[(Z^\top v)^2]
        \ge \frac{q\lambda}{R}.
\]
Moreover, writing \(B:=R\norm{\theta_\star}\), one has the uniform bound
\[
        \sup_{v\in\Sph^{d-1}}
        \left|\frac{1}{r}\ip{rv}{h(rv)}-\gamma(v)\right|
        \le \frac{e^B}{er}.
\]
Thus, for all sufficiently large \(r\),
\[
        \ip{rv}{h(rv)}\ge\frac{q\lambda}{2R}r,
        \qquad
        \norm{h(rv)}\le R,
\]
which verifies Assumption~\ref{ass:H}\assitemref{itm:H3b} with \(\beta=1\).
The above condition and the nondegenerate design also imply, by a standard covering and
concentration argument, that an i.i.d.\ sample is not linearly separable with
probability tending to one as its size grows, for fixed \(d\).

The population Hessian at \(\theta_\star\) satisfies
\[
        A:=\E\!\left[
        \sigma(Z^\top\theta_\star){1-\sigma(Z^\top\theta_\star)}ZZ^\top
        \right]
        \succeq q(1-q)\lambda I_d.
\]
Continuity of the population Hessian verifies
Assumption~\ref{ass:H}\assitemref{itm:H2} with \(m=2\), while
\[
        h(x)=Ax+o(\norm{x}),
        \qquad x\to0,
\]
verifies Assumption~\ref{ass:Hscale}\assitemref{itm:H4} with
\(H_0(x)=\tfrac12x^\top A x\).

For each observation,
\[
        \nabla_x\mathsf G(x,S)
        =
        \frac{e^{S^\top(\theta_\star+x)}}
        {(1+e^{S^\top(\theta_\star+x)})^2}SS^\top
        \preceq \frac{R^2}{4}I_d.
\]
Thus each sample loss is convex with \((R^2/4)\)-Lipschitz gradient, and the
Baillon--Haddad theorem~\cite{BauschkeCombettes2010} verifies the co-coercivity part of
Assumption~\ref{ass:N}\assitemref{itm:N7} with any
\(L_{\mathsf G}\ge\max\{1,R^2/4\}\).  Although the displayed Hessian has rank
at most one, no samplewise strict curvature is required.  With
\(g(x,S):=\mathsf G(x,S)-h(x)\), i.i.d.\ sampling gives
\(\E g(x,S)=0\) and \(\norm{g(x,S)}\le2R\).  Thus the mean-perturbation part
of \assitemref{itm:N7} holds with \(\theta=0\), and the tail estimate above
verifies Assumption~\ref{ass:N}\assitemref{itm:N4}.  The uniform bound on
\(g\) verifies \assitemref{itm:N5}, while centering and boundedness verify
Assumption~\ref{ass:Nscale}\assitemref{itm:N8}; \assitemref{itm:N6} is not
imposed because \(m=2\).  Finally, the i.i.d.\ embedding in
Remark~\ref{rem:iid-specialization} verifies \assitemref{itm:N1}--\assitemref{itm:N3}.
Thus Assumptions~\ref{ass:Hscale}
and~\ref{ass:Nscale} hold with \(m=2\) and \(\beta=1\), so our main results apply.

\section{Proofs of the main results}
\label{sec:main-proofs}

We prove the main results in this section. In Section~\ref{subsec:roadmap-fixed}, 
we show that for any sufficiently small constant stepsize $\alpha$, the augmented Markov chain admits a unique invariant law and contracts geometrically in the Wasserstein distance, thereby establishing Theorem~\ref{thm:main-markov}.
In Section~\ref{subsec:proof-roadmap-scaling}, with the fixed-$\alpha$ invariant law in hand, we derive its scaling limit as $\alpha \downarrow 0$ and prove Theorem~\ref{thm:clt-iso-markov}.
Additional technical proofs are deferred to Appendices~\ref{sec:constant-step-technical-estimates}, \ref{sec:scaling-technical-estimates}, and \ref{sec:separable-proofs-appendix}.



\subsection{Analysis of constant-stepsize SGD}
\label{subsec:roadmap-fixed}
\label{sec:proof-fixed-markov}

At fixed \(\alpha\), we use the framework of Qu,
Blanchet, and Glynn~\cite{QBG2025}, where the main task is to
prove a one-step contraction for a certain weight-induced metric. More specifically, to prove
Theorem~\ref{thm:main-markov}, it suffices to construct a metric
\(d_{V_\alpha,\alpha}\) induced by a weight function $V_\alpha$ such that
\begin{equation}
        \E d_{V_\alpha,\alpha}(F_{U_1}(z),F_{U_1}(z'))
        \le
        (1-c_0 \alpha^{m-1})d_{V_\alpha,\alpha}(z,z') \label{eq:main-contraction-aug}
\end{equation}
for some constant \(c_0>0\) and all \(z,z'\in\mathsf Z\).
Proposition~\ref{prop:CD-to-ergodic} below then converts this into the
invariant law, uniqueness, and geometric convergence.

We remark that the technique for establishing the contraction
in~\cite{QBG2025} cannot be applied directly here.  In short, it controls each realized random map through its local Lipschitz modulus
before averaging.  For the SGD maps considered here, a stochastic-gradient
Jacobian may be rank deficient, so the corresponding update map can have
local Lipschitz modulus one and therefore may not be a strict contraction.  Instead, we retain the induced metric but prove
the desired contraction in expectation via a direction kernel (cf.\ Lemma~\ref{lem:fixed-conditional-drift}).

For a metric \(d\) on a space \(E\), we write \(W_1^d\) for the Wasserstein--1
distance induced by \(d\), namely
\[
W_1^d(\mu,\nu)
:=
\inf_{\gamma\in\Pi(\mu,\nu)}
\int_{E\times E} d(z,z')\,\gamma(dz,dz').
\]

\begin{proposition}
\label{prop:CD-to-ergodic}
Let \((E,d)\) be a Polish space and let
\(Z_{n+1}=F_{U_{n+1}}(Z_n)\) be a random-map Markov chain with transition kernel
\(P\).  Suppose that there exist \(r\in(0,1)\) and \(z_\star\in E\) such that
\begin{equation}
\label{eq:metric-one-step-contraction}
        \E d(F_U(z),F_U(z'))\le r d(z,z'),
        \qquad z,z'\in E,
\end{equation}
and
\begin{equation}
\label{eq:metric-reference-integrability}
        \E d(F_U(z_\star),z_\star)<\infty.
\end{equation}
Then the chain admits a unique invariant law
\(\pi\in\mathcal P_1(E,d)\).
Moreover, for all probability laws
\(\mu,\nu\) for which the right-hand side is finite,
\begin{equation}
\label{eq:metric-W1-contraction}
        W_1^d(\mu P^n,\nu P^n)
        \le r^n W_1^d(\mu,\nu),
        \qquad n\ge0.
\end{equation}
For every bounded function \(\varphi:E\to\R\) that is Lipschitz with respect
to the metric \(d\), and every fixed \(z\in E\), one has
\(P^n\varphi(z)\to\pi(\varphi)\).  
\end{proposition}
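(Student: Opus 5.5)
The plan is the standard Banach fixed-point argument on the Wasserstein space \((\mathcal P_1(E,d),W_1^d)\), which is complete because \((E,d)\) is Polish.

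\textbf{Step 1: one-step contraction of laws.} Given \(\mu,\nu\) with \(W_1^d(\mu,\nu)<\infty\), fix an optimal (or \(\varepsilon\)-optimal) coupling \(\gamma\in\Pi(\mu,\nu)\). Draw \((Z,Z')\sim\gamma\) independently of \(U\), and push forward by the \emph{common} random map. The pair \((F_U(Z),F_U(Z'))\) is then a coupling of \(\mu P\) and \(\nu P\). Conditioning on \((Z,Z')\) and applying \eqref{eq:metric-one-step-contraction} gives
\[
W_1^d(\mu P,\nu P)\le\E d(F_U(Z),F_U(Z'))\le r\int d\,d\gamma .
\]
Letting \(\varepsilon\downarrow0\) yields the bound with factor \(r\). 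Iterating gives \eqref{eq:metric-W1-contraction}.

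Measurability of \((z,z',u)\mapsto d(F_u(z),F_u(z'))\) is needed here. I will assume it from joint measurability of \(F\) and continuity of \(d\), or else use the Markov-kernel formulation via a measurable selection of couplings.

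\textbf{Step 2: \(P\) maps \(\mathcal P_1\) into itself.} For \(\mu\in\mathcal P_1(E,d)\), the triangle inequality gives
\[
\int\E d(F_U(z),z_\star)\,\mu(dz)\le \int\E d(F_U(z),F_U(z_\star))\,\mu(dz)+\E d(F_U(z_\star),z_\star).
\]
This is bounded by \(r\int d(z,z_\star)\,\mu(dz)+\E d(F_U(z_\star),z_\star)<\infty\) by \eqref{eq:metric-reference-integrability}. Note that finiteness of the first moment does not depend on the choice of base point \(z_0\). Hence \(\mu P\in\mathcal P_1\), and \(P\) is a strict contraction on the complete space \(\mathcal P_1(E,d)\). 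Completeness of \(W_1^d\) over a Polish space is standard (Villani), so Banach's theorem yields a unique fixed point \(\pi\in\mathcal P_1(E,d)\).

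Invariance is exactly \(\pi P=\pi\). For uniqueness, any other invariant \(\pi'\in\mathcal P_1\) satisfies \(W_1^d(\pi,\pi')<\infty\) by the triangle inequality through \(\delta_{z_\star}\), and then \(W_1^d(\pi,\pi')\le rW_1^d(\pi,\pi')\) forces \(\pi'=\pi\).

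\textbf{Step 3: convergence for Lipschitz test functions.} For bounded \(\varphi\) with \(d\)-Lipschitz constant \(L\), note that \(\delta_z\in\mathcal P_1\). Kantorovich duality, or directly the coupling bound, gives
\[
|P^n\varphi(z)-\pi(\varphi)|\le L\,W_1^d(\delta_zP^n,\pi)\le L\,r^n W_1^d(\delta_z,\pi)\to0 .
\]
Here \(W_1^d(\delta_z,\pi)=\int d(z,\cdot)\,d\pi<\infty\). Boundedness of \(\varphi\) ensures integrability, though it is not strictly needed.

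The only delicate points are technical. The first is the measurable construction of the coupling in Step 1. The second is completeness of \((\mathcal P_1,W_1^d)\) for a possibly unbounded metric. Both are standard, so I expect no real obstacle.
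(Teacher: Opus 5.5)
Your proposal is correct and follows essentially the same route as the paper. Both use synchronous coupling to get the one-step $W_1^d$ contraction, the reference point $z_\star$ to show that $P$ preserves $\mathcal P_1(E,d)$, and a fixed point in the complete space $(\mathcal P_1(E,d),W_1^d)$. The paper writes out the Cauchy iteration from $\delta_{z_\star}$ explicitly rather than quoting Banach's theorem, but that is only a presentational difference.

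The one substantive difference is in uniqueness. You exclude only other invariant laws lying in $\mathcal P_1(E,d)$. The paper excludes every Borel invariant probability law by feeding your Step 3 back in. If $\widetilde\pi P=\widetilde\pi$ and $\varphi$ is bounded and $d$-Lipschitz, then $\widetilde\pi(\varphi)=\widetilde\pi(P^n\varphi)\to\pi(\varphi)$ by dominated convergence. Since bounded Lipschitz functions determine Borel probability measures, $\widetilde\pi=\pi$.

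This step is exactly where the boundedness of $\varphi$ is used: it dominates $P^n\varphi$ under a law $\widetilde\pi$ that need not have a first moment. So your remark that boundedness is ``not strictly needed'' holds for the convergence claim itself, but it misses boundedness's role in the stronger uniqueness statement. Under the literal reading (uniqueness within $\mathcal P_1(E,d)$) your argument is complete; adding this one line gives the paper's stronger conclusion.
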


The proof is a Banach fixed-point argument for the Markov operator in the
metric \(W_1^d\); the details are given in
Appendix~\ref{subsec:metric-fixed-point-appendix}.



Before applying Proposition~\ref{prop:CD-to-ergodic} to the augmented Markovian map $F_U$, 
we first record a lemma which converts Assumption~\ref{ass:N}\assitemref{itm:N7} into
samplewise nonexpansiveness and averaged contraction for the stochastic-gradient Jacobian.
For \((x,\xi)\in\R^d\times\Xi\), write
\[
        A(x,\xi):=\nabla_x\mathsf G(x,\xi)
        =\nabla^2H(x)+\nabla_xg(x,\xi).
\]

\begin{lemma}
\label{lem:N7-consequences}
Under Assumptions~\ref{ass:H}\assitemref{itm:H1}
and~\ref{ass:N}\assitemref{itm:N7},
\begin{equation}
\label{eq:global-jacobian-bounds}
        \opnorm{A(x,\xi)}
        \le L_{\mathsf G},\qquad
        \opnorm{\nabla_xg(x,\xi)}
        \le\frac{(2-\theta)L_{\mathsf G}}{1-\theta},
        \qquad (x,\xi)\in\R^d\times\Xi.
\end{equation}
Consequently, \(h\) is globally \(\frac{L_{\mathsf G}}{1-\theta}\)-Lipschitz,
\(g(\cdot,\xi)\) is globally
\(\frac{2-\theta}{1-\theta}L_{\mathsf G}\)-Lipschitz uniformly in \(\xi\), and
\begin{equation}
\label{eq:population-cocoercivity}
        \norm{h(x)}^2
        \le\frac{L_{\mathsf G}}{1-\theta}\ip{x}{h(x)},
        \qquad x\in\R^d.
\end{equation}
For every \(0<\alpha\le L_{\mathsf G}^{-1}\),
\begin{equation}
\label{eq:samplewise-nonexpansive}
        \norm{(I_d-\alpha A(x,\xi))v}\le \norm v,
        \qquad x\in\R^d,\ \xi\in\Xi,\ v\in\R^d.
\end{equation}
Furthermore, for every \(x\in\R^d\), \(\xi\in\Xi\), and \(v\ne0\),
\begin{equation}
\label{eq:averaged-directional-contraction}
        \E\!\left[
        \norm{(I_d-\alpha A(x,\Phi(\xi,U_1)))v}
        \right]
        \le
        \left(
        1-\frac{\alpha(1-\theta)}{2}
        \frac{\ip{v}{\nabla^2H(x)v}}{\norm v^2}
        \right)\norm v.
\end{equation}
\end{lemma}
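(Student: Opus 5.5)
The plan is to work throughout at the level of Jacobians. The co-coercivity \eqref{eq:sample-cocoercivity} and the mean-perturbation bound \eqref{eq:mean-perturbation-control} will be linearized into pointwise matrix inequalities, and every claim will then be read off from elementary quadratic-form estimates.

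\textbf{Step 1 (linearization).} Fix \(\xi\) and \(v\), put \(y=x+tv\) in \eqref{eq:sample-cocoercivity}, divide by \(t^2\) and let \(t\to0\). Since \(\mathsf G(\cdot,\xi)\in C^1\), this gives
\[
\ip{v}{A(x,\xi)v}\ge L_{\mathsf G}^{-1}\norm{A(x,\xi)v}^2 .
\]
Cauchy--Schwarz then yields \(\norm{A(x,\xi)v}\le L_{\mathsf G}\norm v\), i.e.\ \(\opnorm{A}\le L_{\mathsf G}\). As a byproduct, \(\mathsf G(\cdot,\xi)\) is \(L_{\mathsf G}\)-Lipschitz for every \(\xi\).

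\textbf{Step 2 (control of \(\nabla^2H\)).} The same linearization applied to \eqref{eq:mean-perturbation-control} gives
\[
\ip{v}{\nabla_x\bar g(x,\xi)v}\ge-\theta\ip{v}{\nabla^2H(x)v}.
\]
This requires that \(\bar g(\cdot,\xi)\) be differentiable with \(\nabla_x\bar g(x,\xi)=\E[\nabla_xg(x,\Phi(\xi,U_1))]\). I expect this interchange to be the only delicate point. The difference quotients of \(g=\mathsf G-h\) are bounded by \(L_{\mathsf G}\) plus the local Lipschitz constant of \(h\in C^1\), which is finite on compacts, so dominated convergence applies.

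Averaging \(A(x,\Phi(\xi,U_1))\) over \(U_1\) gives
\[
\nabla^2H(x)+\nabla_x\bar g(x,\xi),
\]
whose operator norm is at most \(L_{\mathsf G}\) by Step 1. Hence, for unit \(v\),
\[
(1-\theta)\ip{v}{\nabla^2H v}\le\ip{v}{(\nabla^2H+\nabla_x\bar g)v}\le L_{\mathsf G}.
\]
Because \(\nabla^2H\) is symmetric and positive semidefinite by convexity, its operator norm equals the maximum of this quadratic form, so \(\opnorm{\nabla^2H}\le L_{\mathsf G}/(1-\theta)\). Then
\[
\nabla_xg=A-\nabla^2H
\quad\Longrightarrow\quad
\opnorm{\nabla_xg}\le L_{\mathsf G}+\tfrac{L_{\mathsf G}}{1-\theta}=\tfrac{(2-\theta)L_{\mathsf G}}{1-\theta},
\]
which completes \eqref{eq:global-jacobian-bounds}. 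The global Lipschitz constants of \(h\) and \(g(\cdot,\xi)\) follow from the mean value inequality on \(\R^d\).

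\textbf{Step 3 (co-coercivity of \(h\)).} Since \(H\) is convex with \(\tfrac{L_{\mathsf G}}{1-\theta}\)-Lipschitz gradient, the Baillon--Haddad theorem gives co-coercivity of \(h\) with that constant. Taking \(y=0\) and using \(h(0)=0\), which holds because \(0\) is a minimizer, gives \eqref{eq:population-cocoercivity}.

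\textbf{Step 4 (nonexpansiveness and averaged contraction).} Expand and apply Step 1:
\[
\norm{(I_d-\alpha A)v}^2=\norm v^2-2\alpha\ip{v}{Av}+\alpha^2\norm{Av}^2\le\norm v^2-\alpha(2-\alpha L_{\mathsf G})\ip{v}{Av}.
\]
For \(\alpha\le L_{\mathsf G}^{-1}\) the right side is at most \(\norm v^2-\alpha\ip{v}{Av}\le\norm v^2\), which gives \eqref{eq:samplewise-nonexpansive}.

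For the averaged bound, write \(s=\alpha\ip{v}{Av}/\norm v^2\in[0,1]\). The elementary inequality \(\sqrt{1-s}\le1-s/2\) gives
\[
\norm{(I_d-\alpha A)v}\le\Bigl(1-\tfrac{\alpha\ip{v}{Av}}{2\norm v^2}\Bigr)\norm v .
\]
Take expectations over \(U_1\) and use the averaged Jacobian from Step 2:
\[
\E\ip{v}{A(x,\Phi(\xi,U_1))v}=\ip{v}{(\nabla^2H+\nabla_x\bar g)v}\ge(1-\theta)\ip{v}{\nabla^2H(x)v}.
\]
This yields \eqref{eq:averaged-directional-contraction}.
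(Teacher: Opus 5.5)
Your proposal is correct and follows the paper's proof essentially step by step. The paper also linearizes the co-coercivity and mean-perturbation conditions, bounds \(\nabla^2H\) through the averaged Jacobian, invokes Baillon--Haddad, and finishes with \(\sqrt{1-s}\le 1-s/2\). The only cosmetic difference is in the dominated-convergence step: you differentiate \(\bar g\) separately, using local Lipschitzness of \(h\), whereas the paper applies dominated convergence directly to the difference quotients of \(\mathsf G\), which are uniformly bounded by \(L_{\mathsf G}\). The two routes are equivalent.
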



To apply Proposition~\ref{prop:CD-to-ergodic}, the key is to establish \eqref{eq:main-contraction-aug}. Towards this end, we need to construct a suitable weight function $V_\alpha$. 
Let \(R_H\) be as in Assumption~\ref{ass:H}\assitemref{itm:H2}.
For the tail region, define
\[
        \mathcal T_\beta(x):=
        \begin{cases}
        1, & \beta=2,\\[0.3em]
        \exp\!\left\{\kappa\bigl(\norm{x}^{2-\beta}-(\tfrac{3R_H}{4})^{2-\beta}\bigr)_+\right\},
        & 1\le\beta<2,
        \end{cases}
\]
where \(\kappa>0\) will be chosen sufficiently small in the subquadratic case.
Near the minimizer, define
\begin{equation*}
\begin{aligned}
        \omega(x)
        &:=
        \begin{cases}
        0, & m=2,\\
        \left(1-\left(\dfrac{2\norm{x}}{R_H}\right)^{m-2}\right)_+,
        & 2<m\le3,\\[0.4em]
        \left(1-\dfrac{2\norm{x}}{R_H}\right)_+,
        & m>3,
        \end{cases}
        &\qquad
        \delta_\alpha
        &:=
        \begin{cases}
        0, & m=2,\\
        \kappa_0\alpha, & 2<m\le3,\\
        \kappa_0\alpha^{m-2}, & m>3,
        \end{cases}
\end{aligned}
\end{equation*}
where the constant \(\kappa_0>0\) will be chosen in the technical estimates below.
Finally set
\begin{equation*}
        V_\alpha(x,\xi):=V_\alpha(x):=\mathcal T_\beta(x)+\delta_\alpha\omega(x).
\end{equation*}
Define \(d_{V_\alpha}^{X}\) to be the metric from
Definition~\ref{def:induced-wasserstein} with the Euclidean base norm and
weight \(V_\alpha\).

Regarding the two summands in $V_\alpha$, the term \(\mathcal T_\beta\) controls
the tail region: it is constant in the quadratic-tail case \(\beta=2\), and is
exponential in \(\norm{x}^{2-\beta}\) when \(1\le\beta<2\).  The
compactly supported function \(\omega\) is used when \(m>2\): it provides
contraction in a neighborhood of the minimizer. 
In the flat cases \(m>2\), the coefficient \(\delta_\alpha\) is the amplitude
of this near-minimizer correction and is chosen so that its
decrease has order \(\alpha^{m-1}\).

For fixed \(\xi\in\Xi\), write
\[
        \xi^+:=\Phi(\xi,U_1),
        \qquad
        f_{\xi^+}(x):=x-\alpha\{h(x)+g(x,\xi^+)\}.
\]
For \(e\in\Sph^{d-1}\), define the directional contraction factor
\[
        \mathcal D_{\xi,\alpha}(x,e)
        :=
        \norm{(I_d-\alpha A(x,\Phi(\xi,U_1)))e}.
\]
For a nonnegative function \(\phi(x)\), define the directional kernel
\[
        (\mathcal K_{\xi,\alpha}\phi)(x;e)
        :=
        \E\!\left[
        \mathcal D_{\xi,\alpha}(x,e)\,
        \phi\bigl(f_{\Phi(\xi,U_1)}(x)\bigr)
        \right],
        \qquad e\in\Sph^{d-1}.
\]

The next lemma is the main estimate that shows the contraction of the weight $V_\alpha$ under the kernel $\mathcal K_{\xi,\alpha}$. 

\begin{lemma}
\label{lem:fixed-conditional-drift}
Assume Assumptions~\ref{ass:H} and~\ref{ass:N}.  There exist constants
\[
        c_0\in(0,1),\qquad C_V > 0,\qquad \alpha_0\in(0,1],
\]
such that, for every \(0<\alpha\le\alpha_0\), \(\xi\in\Xi\), and
\(x\in\R^d\),
\begin{equation}
\label{eq:conditional-directional-drift-main}
        \sup_{e\in\Sph^{d-1}}
        (\mathcal K_{\xi,\alpha}V_\alpha)(x;e)
        \le
        (1-c_0\alpha^{m-1})V_\alpha(x),
\end{equation}
and
\begin{equation}
\label{eq:conditional-growth-main}
        \E\!\left[
        V_\alpha\bigl(f_{\Phi(\xi,U_1)}(x)\bigr)
        \right]
        \le
        (1+C_V\alpha)V_\alpha(x).
\end{equation}
\end{lemma}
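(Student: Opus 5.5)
We split the state space into three regions according to $\norm{x}$: a tail region $\norm{x}\ge R_H$ (where only $\mathcal T_\beta$ matters), a near-minimizer region $\norm{x}\le R_H/2$ (where $\omega$ is active), and an intermediate annulus $R_H/2\le\norm{x}\le R_H$. The growth bound \eqref{eq:conditional-growth-main} is the easier statement and we would prove it first. Lemma~\ref{lem:N7-consequences} gives that $h$ and $g(\cdot,\xi)$ are globally Lipschitz. In the case $\beta=2$, $\mathcal T_\beta\equiv1$ and $\omega$ is Lipschitz with constant of order $R_H^{-1}$; the increment is $\alpha\norm{h(x)+g(x,\xi^+)}$, whose mean is $O(\alpha(1+\norm{x}))$, and $\omega$ vanishes outside a compact set. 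Together these give $(1+C\alpha)$, since $\delta_\alpha\le\kappa_0\alpha$.

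For $\beta<2$ we would use \assitemref{itm:N5} to bound $\E\exp\{\kappa(\norm{x^+}^{2-\beta}-\norm{x}^{2-\beta})\}$. By a mean-value bound, $|\norm{x^+}^{2-\beta}-\norm{x}^{2-\beta}|\lesssim \alpha\norm{x}^{1-\beta}\norm{h+g}$, and this is $\lesssim\alpha(1+\norm{g}/(1+\norm{x}^{\beta-1}))$. Choosing $\kappa\le\lambda_0/2$ and $\alpha$ small then gives the factor $1+C\alpha$.

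For the directional drift \eqref{eq:conditional-directional-drift-main}, we write $\mathcal D=\mathcal D_{\xi,\alpha}(x,e)\le1$ using \eqref{eq:samplewise-nonexpansive}. We then treat each region in turn.

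\emph{Tail region.} Use $\mathcal D\le1$ and show $\E\,\mathcal T_\beta(x^+)\le(1-c\alpha)\mathcal T_\beta(x)$.
\begin{itemize}
\item For $\beta<2$, expand the exponential to second order. The first-order term is $-\kappa(2-\beta)\alpha\norm{x}^{-\beta}\ip{x}{h+\bar g}\le-\kappa(2-\beta)c_{\rm diss}\alpha$ by \assitemref{itm:N4}. The second-order remainder is $O(\kappa^2\alpha^2)$ in expectation by \assitemref{itm:N5}. A small $\alpha$ yields the factor $1-c\alpha$, and $\alpha\ge\alpha^{m-1}$.
\item For $\beta=2$, $V_\alpha\equiv1$ there, so the contraction must come from $\mathcal D$ itself. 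By \eqref{eq:averaged-directional-contraction} and \assitemref{itm:H3a}, $\E\mathcal D\le1-\alpha(1-\theta)c_{\rm out}/2$.
\end{itemize}

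\emph{Annulus and near-minimizer region with $m=2$.} Here \eqref{eq:averaged-directional-contraction} gives contraction $1-c\alpha\,\lambda_{\min}(\nabla^2H(x))$. On $\norm{x}\le R_H$ this is $1-c\alpha\norm{x}^{m-2}$, so it is uniform for $m=2$ and on the annulus. Since $\mathcal T_\beta$ and $\omega$ vary by only $O(\alpha)$ in one step, the weight fluctuation must be coupled with $\mathcal D$. We would bound $\E[\mathcal D\,V(x^+)]\le \E[\mathcal D]V(x)+\E|V(x^+)-V(x)|$ and absorb the second term. This absorption is delicate: the first-order weight change $\ip{\nabla\mathcal T}{-\alpha(h+g)}$ has mean governed by $\bar g$, which need not vanish. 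On the annulus, however, $\nabla\mathcal T_\beta$ vanishes for $\norm{x}<3R_H/4$, and where it is nonzero $\ip{x}{h+\bar g}\ge0$ in the right sense, by \eqref{eq:mean-perturbation-control} with $y=0$ together with \eqref{eq:population-cocoercivity}.

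\emph{Near-minimizer region with $m>2$.} This is the main obstacle, because the Jacobian contraction $\alpha\norm{x}^{m-2}$ degenerates. The plan is to let the weight $\delta_\alpha\omega$ provide contraction: show $\E\,\omega(x^+)\le\omega(x)-c'\alpha^{m-1}/\delta_\alpha\cdot(\dots)$. For $m>3$, $\omega$ is linear decreasing in $\norm{x}$, i.e.\ $1-2\norm{x}/R_H$, and $\norm{\cdot}$ is convex. Hence $\E\norm{x^+}\ge\norm{x}-\alpha\ip{x/\norm{x}}{h+\bar g}$, and near $0$, \assitemref{itm:N6} gives $\E\norm{x^+}\ge\norm{x}+c\alpha\varepsilon_gp_g$ when $\norm{x}\lesssim\alpha$. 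In the band $\alpha\lesssim\norm{x}\le R_H/2$, the Jacobian contraction $\alpha\norm{x}^{m-2}\gtrsim\alpha^{m-1}$ takes over. The amplitude $\delta_\alpha=\kappa_0\alpha^{m-2}$ is chosen so that the noise-induced decrease $\delta_\alpha\alpha$ is of order $\alpha^{m-1}$, while the drift-induced increase $\delta_\alpha\alpha\norm{x}^{m-1}$ is dominated by the Jacobian term $\alpha\norm{x}^{m-2}$, using $\norm{h}\le C_{\rm in}\norm{x}^{m-1}$.

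For $2<m\le3$, the concave profile $1-(2\norm{x}/R_H)^{m-2}$ with $\delta_\alpha=\kappa_0\alpha$ plays the same role. Jensen applied to the concave power, together with the uniform-in-$\xi$ noise escape from \assitemref{itm:N6}, gives a decrease of order $\alpha\cdot\alpha^{m-2}$.

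Finally, correlations between $\mathcal D$ and $x^+$ are handled via $\mathcal D\le1$ and $\mathcal D\ge1-\alpha L_{\mathsf G}$. Constants are then fixed in the order $\kappa$, then $\kappa_0$, then $\alpha_0$.
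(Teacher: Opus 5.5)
Your overall architecture is the paper's: core $\norm{x}\le R_H/2$, bridge $R_H/2\le\norm{x}\le R_H$, and tail; the bound $\mathcal D_{\xi,\alpha}\le1$; a correction $\delta_\alpha\omega$ that supplies an $\alpha^{m-1}$ decrease from (N6) near the minimizer; and constants fixed in the order $\kappa$, $\kappa_0$, $\alpha_0$. The gap is in the core estimate for $m>2$, which is the heart of the lemma.

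In the band $c_*\alpha\le\norm{x}\le R_H/2$ you bound the increase of $\delta_\alpha\omega$ by the $h$-contribution $\delta_\alpha\alpha\norm{x}^{m-1}$ alone, and treat the noise as only helping. That holds only for conditionally centered noise. Your own bound $\E\norm{x^+}\ge\norm{x}-\alpha\ip{x/\norm{x}}{h(x)+\bar g(x,\xi)}$ shows that the conditional mean $\bar g(x,\xi)$ enters at first order. Assumption~\ref{ass:N} does not make it small:
\begin{itemize}
\item (N7) controls only differences $\bar g(x,\xi)-\bar g(y,\xi)$.
\item For $\beta<2$, (N5) makes $\bar g(0,\xi)$ bounded, but not zero.
\item For $\beta=2$ it need not even be bounded in $\xi$. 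For instance, the driving chain $\Phi(\xi,u)=\rho\xi+u$ with $g(x,\xi)=\xi$ gives $\bar g(0,\xi)=\rho\xi$.
\end{itemize}
So the first-order increase is of order $\kappa_0\alpha^{m-1}\norm{\bar g(x,\xi)}$. This is the same order as the target decrease and is not uniform in $\xi$.

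The same misconception appears in your bridge step. There (N7) with $y=0$ gives only $\ip{x}{h(x)+\bar g(x,\xi)}\ge(1-\theta)\ip{x}{h(x)}+\ip{x}{\bar g(0,\xi)}$, and the last term has no sign. The claim is also unnecessary there. The Jacobian gives $\E\mathcal D_{\xi,\alpha}(x,e)\le1-c\alpha$ with $c$ independent of $\kappa,\kappa_0$, while $\E(V_\alpha(x^+)-V_\alpha(x))_+\le C(\kappa+\kappa_0)\alpha V_\alpha(x)$. Taking $\kappa,\kappa_0$ small therefore suffices.

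The missing idea is to compare with the pure noise displacement from the origin, rather than linearizing $\omega$ at $x$. Write $f_{\xi^+}(x)=-\alpha g(0,\xi^+)+v$, where $\norm{v}\le C\norm{x}$ deterministically by the Lipschitz bounds of Lemma~\ref{lem:N7-consequences}. Since $\omega$ is $s_m$-H\"older with $s_m=(m-2)\wedge1$, for all $\norm{x}\le R_H/2$ and uniformly in $\xi$,
\[
\E\omega(f_{\xi^+}(x))-\omega(x)\le-c\,\E\min\bigl\{(\alpha\norm{g(0,\xi^+)})^{s_m},1\bigr\}+C\norm{x}^{s_m}.
\]
Only the size of the realized noise enters, and (N6) bounds the first term below by $c\alpha^{s_m}$. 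The cost $C\delta_\alpha\norm{x}^{s_m}$ is then absorbed as follows:
\begin{itemize}
\item for $2<m\le3$, directly into $a_{\rm in}\alpha\norm{x}^{m-2}$;
\item for $m>3$, by Young's inequality with exponent $(m-2)/(m-3)$ together with the noise term.
\end{itemize}
No split at $\norm{x}\asymp\alpha$ is needed. Your split can also be rescued using the cap $\omega(x^+)-\omega(x)\le1-\omega(x)=(2\norm{x}/R_H)^{s_m}$, and, for $m>3$, choosing $\kappa_0\lesssim c_*^{m-3}$.

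There are two further slips in the same step.
\begin{itemize}
\item ``Jensen applied to the concave power'' yields $\E\norm{x^+}^{m-2}\le(\E\norm{x^+})^{m-2}$, which is the wrong direction. The lower bound must come from the event $\{\norm{g(0,\xi^+)}\ge\varepsilon_g\}$, as in Lemma~\ref{lem:fixed-truncated-bounds}.
\item Because of the positive part, $\omega$ is not affine in $\norm{y}$. For $m>3$ you therefore need a lower bound on $\E\min\{\norm{x^+},R_H/2\}$, not on $\E\norm{x^+}$. This matters because for $\beta=2$ no moment bound on the noise is uniform in $\xi$.
\end{itemize}
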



The next lemma lifts the directional contraction to the induced metric on the augmented space, proving \eqref{eq:main-contraction-aug}.  

\begin{lemma}
\label{lem:aug-lipschitz}
Assume Assumptions~\ref{ass:H} and~\ref{ass:N}, and let
\(c_0\) be as in Lemma~\ref{lem:fixed-conditional-drift}.
There exists \(\alpha_0\in(0,1]\) such that
\eqref{eq:main-contraction-aug} holds for every
\(0<\alpha\le\alpha_0\).
\end{lemma}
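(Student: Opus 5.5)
The plan is to bound the metric one step at a time along curves. Fix $z=(x,\xi)$ and $z'=(y,\eta)$, and take an absolutely continuous curve $\gamma(t)=(\gamma_X(t),\gamma_\Xi(t))$ in $\mathsf Z$ whose length $\int_0^1 V_\alpha(\gamma_X)\{\norm{\dot\gamma_X}+\alpha^{-1}\norm{\dot\gamma_\Xi}\}\,dt$ is within $\varepsilon$ of $d_{V_\alpha,\alpha}(z,z')$. Since $\Xi$ is convex, the image curve $F_{U_1}\circ\gamma$ is admissible. By (N1), (N2) and the Lipschitz bounds of Lemma~\ref{lem:N7-consequences} it is absolutely continuous, so it suffices to prove
\[
\E\int_0^1 V_\alpha\bigl(f_{\Phi(\gamma_\Xi(t),U_1)}(\gamma_X(t))\bigr)\,\bigl\lvert\tfrac{d}{dt}F_{U_1}(\gamma(t))\bigr\rvert_\alpha\,dt\le(1-c_0\alpha^{m-1})\int_0^1V_\alpha(\gamma_X)\bigl\{\norm{\dot\gamma_X}+\alpha^{-1}\norm{\dot\gamma_\Xi}\bigr\}dt.
\]
Applying Fubini/Tonelli and letting $\varepsilon\downarrow0$ then gives \eqref{eq:main-contraction-aug}.

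Next I would compute the derivative pointwise. Write $\xi_t^+=\Phi(\gamma_\Xi(t),U_1)$, so that $\norm{\dot\xi_t^+}\le L_\Phi(U_1)\norm{\dot\gamma_\Xi}$. The $X$-component of the derivative is $(I-\alpha A(\gamma_X,\xi_t^+))\dot\gamma_X-\alpha\,\partial_\xi g\cdot\dot\xi_t^+$, where the second term needs care: $g$ is only Lipschitz in $\xi$. I would handle this by using the difference quotient $\norm{g(x,\Phi(\xi',u))-g(x,\Phi(\xi,u))}\le L_{g,\Phi}\norm{\xi'-\xi}$ from (N2), which gives a metric derivative bound of $\alpha L_{g,\Phi}\norm{\dot\gamma_\Xi}$. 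The new base norm of the velocity is therefore at most
\[
\norm{(I-\alpha A)\dot\gamma_X}+\alpha L_{g,\Phi}\norm{\dot\gamma_\Xi}+\alpha^{-1}L_\Phi(U_1)\norm{\dot\gamma_\Xi}.
\]
Multiplying by the weight and taking expectation, the first term contributes $(\mathcal K_{\gamma_\Xi,\alpha}V_\alpha)(\gamma_X;e)\norm{\dot\gamma_X}$ with $e=\dot\gamma_X/\norm{\dot\gamma_X}$. By \eqref{eq:conditional-directional-drift-main} this is at most $(1-c_0\alpha^{m-1})V_\alpha(\gamma_X)\norm{\dot\gamma_X}$.

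The $\xi$-part is where the $\alpha^{-1}$ balancing enters, and I expect it to be the main obstacle. It requires a correlated bound on $\E[L_\Phi(U_1)V_\alpha(f_{\xi^+}(x))]$, not a product of separate expectations. My first attempt would use $L_\Phi\le1$ together with the one-step variation of $V_\alpha$: since $V_\alpha(f(x))\le V_\alpha(x)+\text{(fluctuation)}$, write
\[
\E[L_\Phi V_\alpha(f(x))]\le \E L_\Phi\,V_\alpha(x)+\E\bigl[\abs{V_\alpha(f(x))-V_\alpha(x)}\bigr].
\]
I would then show the second term is $O(\alpha^{1/2})V_\alpha(x)$, or even $O(\alpha)V_\alpha(x)$. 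Near the minimizer the weight is bounded and $\nabla V_\alpha$ is $O(1)$, so the term is $O(\alpha\,\E\norm{h+g})$. In the tail one uses local Lipschitzness of $\log\mathcal T_\beta$ with constant $\kappa\norm{x}^{1-\beta}$ and the exponential moment (N5), by the same computation that gives \eqref{eq:conditional-growth-main}. If that correlated estimate proves awkward, I would fall back on Cauchy–Schwarz combined with the analogue of \eqref{eq:conditional-growth-main} for $V_\alpha^2$, which yields a $(1+C\alpha)^{1/2}$ factor.

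Collecting terms, the $\xi$-coefficient becomes $\alpha^{-1}V_\alpha(\gamma_X)\norm{\dot\gamma_\Xi}$ multiplied by
\[
\E L_\Phi+C\alpha^{1/2}+\alpha^2L_{g,\Phi}(1+C_V\alpha),
\]
which is at most $1-\tfrac12(1-\E L_\Phi)$ once $\alpha$ is small. This is far below $1-c_0\alpha^{m-1}$ for $m\ge2$, because $c_0\alpha^{m-1}\le c_0\alpha_0<\tfrac12(1-\E L_\Phi)$ after shrinking $\alpha_0$. Combining the two parts gives the pointwise bound with factor $1-c_0\alpha^{m-1}$, and hence the lemma.
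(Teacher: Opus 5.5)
Your proposal is correct and follows the paper's proof essentially step for step. It uses the same metric-derivative bound $\norm{(I_d-\alpha A)\dot x}+(L_\Phi(u)+\alpha^2L_{g,\Phi})\alpha^{-1}\norm{\dot\xi}$, the kernel estimate \eqref{eq:conditional-directional-drift-main} for the $x$-part, the growth bound \eqref{eq:conditional-growth-main} for the $\alpha^2L_{g,\Phi}$ term, and the correlated estimate $\E[L_\Phi V_\alpha(f_{\xi^+}(x))]\le(\rho_\Xi+C\alpha)V_\alpha(x)$ for the $\xi$-part.

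The one difference is in that last estimate. The paper bounds only the positive part $\E[(V_\alpha(f_{\xi^+}(x))-V_\alpha(x))_+]$ (Lemma~\ref{lem:global-weight-positive-increment}), which suffices because $0\le L_\Phi\le1$ and spares you the downward fluctuation. Also note that $\nabla V_\alpha$ is not bounded near $0$ when $2<m<3$; this does not break your argument, since the $\delta_\alpha\omega$ part changes by at most $\delta_\alpha=O(\alpha)$.
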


The final estimate verifies the reference-point integrability required by Proposition~\ref{prop:CD-to-ergodic}. 

\begin{lemma}
\label{lem:fixed-reference-estimates}
There exists \(\alpha_0\in(0,1]\) such that, for every
\(0<\alpha\le\alpha_0\), with \(z_\star=(0,\xi_\star)\) and \(\xi_\star\) as
in Assumption~\ref{ass:N}\assitemref{itm:N3},
\begin{equation}
\label{eq:fixed-reference-integrability}
        \E\,d_{V_\alpha,\alpha}\bigl(F_{U_1}(z_\star),z_\star\bigr)<\infty.
\end{equation}
\end{lemma}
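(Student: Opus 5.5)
Since $d_{V_\alpha,\alpha}$ is an infimum over curves, the plan is to pick one explicit curve from $z_\star=(0,\xi_\star)$ to $F_{U_1}(z_\star)=(X_1,\xi^+)$ and bound the weighted length. Here $\xi^+=\Phi(\xi_\star,U_1)$ and $X_1=-\alpha\{h(0)+g(0,\xi^+)\}=-\alpha g(0,\xi^+)$, because $h(0)=0$ by \assitemref{itm:H1}. Because $V_\alpha$ depends only on $x$, the natural choice is a two-leg path. The first leg moves the $\xi$-coordinate along the segment from $\xi_\star$ to $\xi^+$ with $x=0$ fixed. This stays in $\Xi$ by convexity and costs
\[
V_\alpha(0)\,\alpha^{-1}\norm{\xi^+-\xi_\star}=(\mathcal T_\beta(0)+\delta_\alpha\omega(0))\,\alpha^{-1}\norm{\xi^+-\xi_\star}\le 2\alpha^{-1}\norm{\xi^+-\xi_\star}
\]
for $\alpha$ small, since $\mathcal T_\beta(0)=1$ and $\delta_\alpha\omega(0)\le\kappa_0\alpha^{\min(1,m-2)}$. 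The second leg keeps $\xi=\xi^+$ and moves $x$ along the straight segment from $0$ to $X_1$. It costs $\int_0^1 V_\alpha(tX_1)\norm{X_1}\,dt\le \sup_{\norm{y}\le\norm{X_1}}V_\alpha(y)\,\norm{X_1}$.

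For the first leg, taking expectations and using \assitemref{itm:N3} gives a finite bound $2\alpha^{-1}\E\norm{\Phi(\xi_\star,U_1)-\xi_\star}$.

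For the second leg, use $\omega\le1$ and the monotonicity of $\mathcal T_\beta$ in $\norm{x}$ to get
\[
V_\alpha(y)\le 1+\mathcal T_\beta(y)\le 1+\exp\{\kappa\norm{X_1}^{2-\beta}\}
\qquad\text{for }\norm{y}\le\norm{X_1}.
\]
When $\beta=2$, $V_\alpha\le2$, so the cost is at most $2\alpha\norm{g(0,\xi^+)}$. Its expectation is finite by the $\beta=2$ clause of \assitemref{itm:N3}.

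When $1\le\beta<2$, write $G:=\norm{g(0,\xi^+)}$ and $\norm{X_1}=\alpha G$. The quantity to control is $\E\bigl[\alpha G(1+e^{\kappa(\alpha G)^{2-\beta}})\bigr]$. Assumption \assitemref{itm:N5} at $x=0$, using the convention $\norm{0}^{0}=1$, gives $\E e^{\lambda_0 G/2}<\infty$ uniformly in $\xi$. Since $2-\beta\le1$, the elementary bound $(\alpha G)^{2-\beta}\le 1+\alpha G$ gives $\kappa(\alpha G)^{2-\beta}\le\kappa+\kappa\alpha G$. Also $G\le C_\epsilon e^{\epsilon G}$. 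Hence the integrand is at most $C e^{(\kappa\alpha+\epsilon)G}$, which is integrable as soon as $\kappa\alpha+\epsilon\le\lambda_0/2$. This holds for all $\alpha\le\alpha_0$ after shrinking $\alpha_0$, or alternatively after taking $\kappa\le\lambda_0/4$ within the smallness already imposed on $\kappa$.

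The main point to check is this last step: the tail weight $\mathcal T_\beta$ grows exponentially, and it must be matched against the exponential moment in \assitemref{itm:N5}. The sublinearity $2-\beta\le1$ together with the factor $\alpha$ in $X_1$ makes the match work. Measurability of the chosen path cost in $U_1$ is routine because the endpoints are measurable functions of $U_1$ and the cost bound is an explicit measurable function. Summing the two legs yields \eqref{eq:fixed-reference-integrability}.
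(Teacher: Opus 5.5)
Your proposal is correct and follows essentially the same route as the paper's proof. Both use the two-leg path (first move $\xi$ at $x=0$, then move $x$ along the segment from $0$ to $-\alpha g(0,\xi_\star^+)$), invoke (N3) for the $\xi$-leg and for the case $\beta=2$, and for $\beta<2$ dominate the exponential weight via $r^{2-\beta}\le 1+r$ and the exponential moment from (N5) at $x=0$.
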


The proofs of Lemmas~\ref{lem:N7-consequences},
\ref{lem:fixed-conditional-drift}, \ref{lem:aug-lipschitz}, and
\ref{lem:fixed-reference-estimates} are given in
Appendices~\ref{subsec:update-condition-appendix},
\ref{subsec:fixed-regional-drift},
\ref{subsec:augmented-induced-metric-proof}, and
\ref{subsec:fixed-reference-estimates-proof}, respectively.

\begin{proof}[Proof of Theorem~\ref{thm:main-markov}]
We first check that \((\mathsf Z,d_{V_\alpha,\alpha})\) is Polish. 
Since \(\mathsf Z\) is closed in a finite-dimensional Euclidean space,
\(V_\alpha\) is continuous, and \(V_\alpha\ge1\), the metric
\(d_{V_\alpha,\alpha}\) dominates the augmented base norm \eqref{eq:aug-base-norm}.  Conversely, on every set
$K_R:=\{z\in\mathsf Z:\vert z\vert_\alpha\le R\},$ 
continuity of \(V_\alpha\) gives \(\sup_{K_R}V_\alpha<\infty\), so
\(d_{V_\alpha,\alpha}\) and the augmented base metric are locally comparable on
\(K_R\). 
Hence \((\mathsf Z,d_{V_\alpha,\alpha})\) is Polish by elementary general topology.

Let \(c_0\) be as in Lemma~\ref{lem:fixed-conditional-drift}, and choose
\(\alpha_0 \in (0, L_{\mathsf G}^{-1}]\) small enough that the conclusions of
Lemmas~\ref{lem:fixed-conditional-drift},~\ref{lem:aug-lipschitz},
and~\ref{lem:fixed-reference-estimates} hold.
Fix \(0<\alpha\le\alpha_0\).  
Equations~\eqref{eq:main-contraction-aug}
and~\eqref{eq:fixed-reference-integrability} verify respectively the two
conditions~\eqref{eq:metric-one-step-contraction}
and~\eqref{eq:metric-reference-integrability} of
Proposition~\ref{prop:CD-to-ergodic}, with
\((E,d)=(\mathsf Z,d_{V_\alpha,\alpha})\), \(r=1-c_0\alpha^{m-1}\), and
\(z_\star\) as in Lemma~\ref{lem:fixed-reference-estimates}. 
Proposition~\ref{prop:CD-to-ergodic}
therefore yields a unique invariant law $\pi_\alpha\in\mathcal P_1(\mathsf Z,d_{V_\alpha,\alpha})$
and gives \eqref{eq:markov-contraction-laws} with $c = c_0$.
%
%
%
%
\end{proof}

\subsection{Analysis of scaling limit}
\label{subsec:proof-roadmap-scaling}
\label{sec:proof-clt-markov}

Next, we turn to the scaling limit of the invariant law $\pi_\alpha$ given by Theorem~\ref{thm:main-markov} as $\alpha \downarrow 0$.
For every sufficiently small \(\alpha>0\), let
\[
        (X_\alpha,\xi_\alpha)\sim\pi_\alpha,
        \qquad
        \xi_\alpha^+:=\Phi(\xi_\alpha,U_1),
        \qquad
        X_\alpha^+:=X_\alpha-\alpha\{h(X_\alpha)+g(X_\alpha,\xi_\alpha^+)\},
\]
where \(U_1\) is independent of \((X_\alpha,\xi_\alpha)\).  Set
\[
        Y_\alpha:=\alpha^{-1/m}X_\alpha,
        \qquad
        Y_\alpha^+:=\alpha^{-1/m}X_\alpha^+,
        \qquad
        h_\alpha(y):=\alpha^{-(1-1/m)}h(\alpha^{1/m}y).
\]
By stationarity of \(\pi_\alpha\),
\[
        (Y_\alpha^+,\xi_\alpha^+)\stackrel d=(Y_\alpha,\xi_\alpha),
\]
and
\[
        Y_\alpha^+-Y_\alpha
        =
        -\alpha^{2-2/m}h_\alpha(Y_\alpha)
        -\alpha^{1-1/m}g(\alpha^{1/m}Y_\alpha,\xi_\alpha^+).
\]
Thus the drift and noise have orders \(\alpha^{2-2/m}\) and
\(\alpha^{1-1/m}\), respectively (recovering the familiar $\alpha$ and $\sqrt{\alpha}$ scalings for $m=2$ in particular).  We prove
Theorem~\ref{thm:clt-iso-markov} by showing tightness of \(Y_\alpha\),
identifying every weak subsequential limit through the stationary generator
equation, and using uniqueness of the invariant law of the limiting diffusion.

The following key lemma gives the Poisson decomposition of the Markovian noise and
identifies the covariance matrix \(\Sigma\).
Let \(Q\) be the transition kernel of the driving chain $(\xi_n)_{n \ge 0}$,
\[
        Q\varphi(\xi):=\E[\varphi(\Phi(\xi,U_1))].
\]

\begin{lemma}
\label{lem:poisson-gk-construction}
Under Assumptions~\ref{ass:H} and~\ref{ass:Nscale}, for \(x\in\R^d\), write \(g_x(\xi):=g(x,\xi)\).  Then the series
\begin{equation*}
        \chi_x(\xi):=\sum_{k=1}^{\infty}Q^k g_x(\xi),
        \qquad x\in\R^d,
        \ \xi\in\Xi,
\end{equation*}
converges absolutely and defines a centered solution of the Poisson equation
\begin{equation}
\label{eq:chi-poisson}
        \chi_x-Q\chi_x=Qg_x.
\end{equation}
Define
\begin{equation*}
        D_x(\xi,u):=
        g_x(\Phi(\xi,u))+\chi_x(\Phi(\xi,u))-\chi_x(\xi).
\end{equation*}
Then
\begin{equation}
\label{eq:Dx-martingale-difference}
        \E[D_x(\xi,U_1)\mid \xi]=0,
\end{equation}
and, with \(\xi^+=\Phi(\xi,u)\),
\begin{equation}
\label{eq:martingale-poisson-decomposition}
        g_x(\xi^+)=D_x(\xi,u)+\chi_x(\xi)-\chi_x(\xi^+).
\end{equation}
The series in \eqref{eq:GK-lag-series} converges absolutely, and the resulting matrix satisfies
\begin{equation*}
        \Sigma
        =
        \E_{\xi\sim\pi_\Xi}\E\!\bigl[D_0(\xi,U_1)D_0(\xi,U_1)^\top\bigr]
        \succeq0,
        \qquad D_0(\xi,u):=D_x(\xi,u)\big|_{x=0}.
\end{equation*}
\end{lemma}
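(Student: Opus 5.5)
The approach is to use the contraction of the driving chain to get geometric decay of $Q^k g_x$, and to build $\chi_x$, the martingale decomposition and the Green--Kubo identity from that decay.

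\textbf{Step 1: geometric decay of $Q^k g_x$.} By \assitemref{itm:N1}, the synchronous coupling $\xi_k=\Phi(\xi_{k-1},U_k)$, $\eta_k=\Phi(\eta_{k-1},U_k)$ gives
\[
\E\norm{\xi_k-\eta_k}\le \rho^k\norm{\xi-\eta},\qquad \rho:=\E L_\Phi(U_1)<1.
\]
Combined with \assitemref{itm:N2}, for $k\ge1$ this yields
\[
\norm{Q^kg_x(\xi)-Q^kg_x(\eta)}\le L_{g,\Phi}\,\E\norm{\xi_{k-1}-\eta_{k-1}}\le L_{g,\Phi}\rho^{k-1}\norm{\xi-\eta}.
\]
Integrate $\eta$ against $\pi_\Xi$. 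Stationary centering \eqref{eq:stationary-centering-gkx} gives $\pi_\Xi(Q^kg_x)=\pi_\Xi(g_x)=0$, so
\[
\norm{Q^kg_x(\xi)}\le L_{g,\Phi}\rho^{k-1}\int\norm{\xi-\eta}\,\pi_\Xi(d\eta)\le L_{g,\Phi}\rho^{k-1}\bigl(\norm{\xi-\xi_\star}+M\bigr),
\]
where $M:=\int\norm{\eta-\xi_\star}\,\pi_\Xi(d\eta)<\infty$ by \eqref{eq:g0-moment-gk}.

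\textbf{Step 2: the Poisson equation and the decomposition.} Step~1 shows that the series defining $\chi_x$ converges absolutely. It is centered under $\pi_\Xi$, and $\chi_x$ is Lipschitz with constant $L_{g,\Phi}/(1-\rho)$. Applying $Q$ termwise is justified by dominated convergence, since $Q\norm{\cdot-\xi_\star}(\xi)\le \norm{\xi-\xi_\star}+\E\norm{\Phi(\xi_\star,U_1)-\xi_\star}$, which is finite by \assitemref{itm:N3}. Termwise application gives $Q\chi_x=\chi_x-Qg_x$, which is \eqref{eq:chi-poisson}. Taking conditional expectations in the definition of $D_x$ gives
\[
\E[D_x\mid\xi]=Qg_x+Q\chi_x-\chi_x=0,
\]
which is \eqref{eq:Dx-martingale-difference}. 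Rearranging the definition of $D_x$ gives \eqref{eq:martingale-poisson-decomposition}. Integrability of $D_x$ follows from the linear growth of $\chi_x$ and $g_x$ in $\norm{\xi-\xi_\star}$.

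\textbf{Step 3: the Green--Kubo identity.} Set $x=0$, write $g=g_0$, $\chi=\chi_0$, and let $\xi_0\sim\pi_\Xi$ be stationary.

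\emph{Absolute convergence.} By the Markov property, $\Gamma_k=\E[g(\xi_0)(Q^kg)(\xi_0)^\top]$. Cauchy--Schwarz and Step~1 bound this by
\[
C\rho^{k-1}\bigl(\E\norm{g(\xi_0)}^2\bigr)^{1/2}\bigl(\E(\norm{\xi_0-\xi_\star}+M)^2\bigr)^{1/2},
\]
which is finite by \eqref{eq:g0-moment-gk} and summable in $k$. The same bounds show $\chi(\xi_0)\in L^2$.

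\emph{Identification of $\Sigma$.} Expand $\E[DD^\top]$ with $D=g(\xi_1)+\chi(\xi_1)-\chi(\xi_0)$. Use stationarity to get $\E[\chi(\xi_1)\chi(\xi_1)^\top]=\E[\chi(\xi_0)\chi(\xi_0)^\top]$, and the Markov property to get $\E[\chi(\xi_1)\chi(\xi_0)^\top]=\E[Q\chi(\xi_0)\chi(\xi_0)^\top]$. Substituting $Q\chi=\chi-Qg$ and simplifying, every $\chi\chi^\top$ term cancels. The expression collapses to
\[
\E[DD^\top]=\Gamma_0+\E[g(\xi_0)\chi(\xi_0)^\top]+\E[\chi(\xi_0)g(\xi_0)^\top].
\]
Finally, $\E[g(\xi_0)\chi(\xi_0)^\top]=\sum_{k\ge1}\E[g(\xi_0)Q^kg(\xi_0)^\top]=\sum_{k\ge1}\Gamma_k$. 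Interchanging the sum and the expectation is justified by the domination from Step~1. Hence $\E[DD^\top]=\Sigma$, and $\Sigma\succeq0$ because it is the expectation of $DD^\top$.

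\textbf{Main obstacle.} I expect the hardest part to be the algebra in Step~3, namely confirming that the $\chi\chi^\top$ terms cancel exactly, together with justifying every interchange in $L^2(\pi_\Xi)$. Both rest on the linear-growth bound for $\chi$ and on the second-moment assumption \eqref{eq:g0-moment-gk}.
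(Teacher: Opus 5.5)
Your proof is correct. Steps~1 and~2 match the paper's argument: synchronous coupling together with \assitemref{itm:N2} gives $\norm{Q^kg_x(\xi)}\le L_{g,\Phi}\rho^{k-1}(\norm{\xi-\xi_\star}+M)$, and $Q$ is then applied termwise. You diverge from the paper only in identifying $\E[D_0D_0^\top]$ with the Green--Kubo series.

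The paper writes $\sum_{n=1}^N f_n=\sum_{n=0}^{N-1}D_n+\chi_0-\chi_N$ and computes $\lim_N N^{-1}\Var\bigl(\sum_{n\le N}f_n\bigr)$ in two ways. The first uses orthogonality of the martingale increments, with the telescoping term negligible because $\chi_0\in L^2(\pi_\Xi)$. The second uses stationarity and absolute summability of the $\Gamma_k$, which implicitly handles the weights $1-k/N$.

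You instead do an exact one-step computation. The Poisson equation cancels every $\chi\chi^\top$ term, leaving $\E[DD^\top]=\Gamma_0+\E[g\chi^\top]+\E[\chi g^\top]$. Expanding $\chi=\sum_{k\ge1}Q^kg$ inside the expectation then gives $\sum_k\Gamma_k$ and its transpose; the expansion is justified by the geometric bound and \eqref{eq:g0-moment-gk}.

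Your route involves no limit in $N$. It also yields directly the identity $\Sigma=\int M_0\,d\pi_\Xi$, which the paper has to derive separately when it identifies $\bar M$ with $\Sigma$ in the proof of Lemma~\ref{lem:markov-generator-identification}. The paper's route, in exchange, shows explicitly that $\Sigma$ is the limiting covariance of $N^{-1/2}\sum_{n\le N}g(0,\xi_n)$. That is the interpretation behind calling $\Sigma$ the asymptotic covariance, although the lemma as stated does not require it.
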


The next lemma gives moment estimates for $X_\alpha$ which will imply tightness of \(Y_\alpha=\alpha^{-1/m}X_\alpha\).

\begin{lemma}
\label{lem:scaling-tightness-package}
Assume Assumptions~\ref{ass:Hscale} and~\ref{ass:Nscale}.  There exist constants \(C,\alpha_0>0\) such that, for every \(0<\alpha\le\alpha_0\),
\begin{align}
        \E\ip{X_\alpha}{h(X_\alpha)}
        &\le C\alpha, \label{eq:scaling-xh-main}\\
        \E\bigl[\norm{X_\alpha}^m\1_{\{\norm{X_\alpha}\le1\}}\bigr]
        &\le C\alpha, \label{eq:scaling-local-moment-main}\\
        \PP(\norm{X_\alpha}\ge1)
        &\le C\alpha, \label{eq:scaling-unit-tail-main}\\
        \E\norm{h(X_\alpha)+g(X_\alpha,\xi_\alpha^+)}^2
        &\le C. \label{eq:scaling-gradient-second-main}
\end{align}
\end{lemma}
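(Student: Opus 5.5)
The plan is to read all four bounds off stationarity applied to a quadratic test function. The one complication is the Markovian correlation between $X_\alpha$ and $\xi_\alpha^+$, which we remove with the Poisson corrector of Lemma~\ref{lem:poisson-gk-construction}.

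\textbf{Stationary identity.} Expand $\norm{X_\alpha^+}^2$ using $X_\alpha^+\stackrel d= X_\alpha$. This gives
\[
2\alpha\,\E\ip{X_\alpha}{h(X_\alpha)+g(X_\alpha,\xi_\alpha^+)}=\alpha^2\,\E\norm{h(X_\alpha)+g(X_\alpha,\xi_\alpha^+)}^2 .
\]
We do not split $h+g$ directly. Instead we use co-coercivity \eqref{eq:sample-cocoercivity} with $y=0$:
\[
\ip{x}{\mathsf G(x,\xi)-\mathsf G(0,\xi)}\ge L_{\mathsf G}^{-1}\norm{\mathsf G(x,\xi)-\mathsf G(0,\xi)}^2 ,
\]
where $\mathsf G(0,\xi)=g(0,\xi)$ because $h(0)=0$. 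Combined with $\norm{a+b}^2\le 2\norm a^2+2\norm b^2$, the right-hand side of the stationary identity is at most
\[
2\alpha^2 L_{\mathsf G}\,\E\ip{X_\alpha}{\mathsf G-g(0,\cdot)}+2\alpha^2\,\E\norm{g(0,\xi_\alpha^+)}^2 .
\]
For $\alpha\le 1/(2L_{\mathsf G})$ the first term is absorbed into the left-hand side. What remains is
\[
\E\ip{X_\alpha}{\mathsf G(X_\alpha,\xi_\alpha^+)-g(0,\xi_\alpha^+)}\lesssim \alpha\,\E\norm{g(0,\xi^+_\alpha)}^2+\bigl|\E\ip{X_\alpha}{g(0,\xi_\alpha^+)}\bigr| .
\]
The $\xi$-marginal of $\pi_\alpha$ is $\pi_\Xi$, since it is invariant for the driving chain and Lemma~\ref{lem:xi-invariant-law} gives uniqueness. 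Hence the second moment is finite by \eqref{eq:g0-moment-gk}. The left-hand side dominates $\E\ip{X_\alpha}{h(X_\alpha)}$ plus the cross term $\E\ip{X_\alpha}{g(X_\alpha,\xi^+_\alpha)-g(0,\xi^+_\alpha)}$. After conditioning on $(X_\alpha,\xi_\alpha)$ this cross term becomes $\bar g(x,\xi)-\bar g(0,\xi)$, which \eqref{eq:mean-perturbation-control} bounds below by $-\theta\ip{x}{h(x)}$. So the left-hand side is at least $(1-\theta)\E\ip{X_\alpha}{h(X_\alpha)}$.

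\textbf{Cross term (the main obstacle).} We need $|\E\ip{X_\alpha}{g(0,\xi^+_\alpha)}|\le C\alpha+\tfrac{1-\theta}{4}\E\ip{X_\alpha}{h(X_\alpha)}$. The decomposition \eqref{eq:martingale-poisson-decomposition} at $x=0$ gives $g(0,\xi^+)=D_0+\chi_0(\xi)-\chi_0(\xi^+)$. The $D_0$ part has zero mean against $X_\alpha$ by \eqref{eq:Dx-martingale-difference}. For the telescoping part, stationarity of $(X_\alpha,\xi_\alpha)$ lets us write
\[
\E\ip{X_\alpha}{\chi_0(\xi_\alpha)-\chi_0(\xi_\alpha^+)}=\E\ip{X_\alpha-X_\alpha^+}{\chi_0(\xi_\alpha^+)}=\alpha\,\E\ip{\mathsf G(X_\alpha,\xi_\alpha^+)}{\chi_0(\xi^+_\alpha)} .
\]
By Cauchy--Schwarz and Young this is at most $\alpha\varepsilon\,\E\norm{\mathsf G}^2+C_\varepsilon\alpha\,\E\norm{\chi_0}^2$. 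The moment $\E\norm{\chi_0}^2$ is finite: $\chi_0$ is controlled by the contraction (N1), the Lipschitz bound (N2), and \eqref{eq:g0-moment-gk}, and establishing this finiteness is where the real work lies.

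\textbf{Closing the loop.} From the stationary identity,
\[
\alpha\,\E\norm{\mathsf G}^2=2\,\E\ip{X_\alpha}{\mathsf G}\le C\,\E\ip{X_\alpha}{\mathsf G-g(0,\cdot)}+C\,\E\norm{g(0,\cdot)}^2
\]
by co-coercivity. The remaining loose term $\E\ip{X_\alpha}{g(0,\xi^+_\alpha)}$ is the very cross term being bounded. Choosing $\varepsilon$ small therefore closes a linear inequality in the quantities $a:=\E\ip{X_\alpha}{h}$ and $b:=\alpha\,\E\norm{\mathsf G}^2$. The result is $a\le C\alpha$, which is \eqref{eq:scaling-xh-main}, together with $b\le C\alpha$, which is \eqref{eq:scaling-gradient-second-main}. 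Throughout, the finiteness of these moments under $\pi_\alpha$ is guaranteed by Corollary~\ref{cor:ordinary-W1-markov}.

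\textbf{Local and tail bounds.} These follow from \eqref{eq:scaling-xh-main}. Integrating (H2) along the ray, $\ip{x}{h(x)}=\int_0^1\ip{x}{\nabla^2H(tx)x}\,dt\ge c\norm{x}^m$ for $\norm x\le R_H$. For $\norm x\ge R_H$, convexity makes $t\mapsto\ip{x}{h(tx)}$ nondecreasing, so $\ip{x}{h(x)}\ge c_1>0$. Assuming WLOG $R_H\le1$, these give $\norm x^m\1_{\{\norm x\le1\}}\le C\ip{x}{h(x)}$ and $\1_{\{\norm x\ge1\}}\le C\ip{x}{h(x)}$. Taking expectations yields \eqref{eq:scaling-local-moment-main} and \eqref{eq:scaling-unit-tail-main}.
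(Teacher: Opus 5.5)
\textbf{Gap: finite second moments when \(\beta=2\).} Your closing sentence says the finiteness of the moments under \(\pi_\alpha\) comes from Corollary~\ref{cor:ordinary-W1-markov}. That is false in the quadratic-tail case \(\beta=2\). There \(\mathcal T_\beta\equiv1\), so \(1\le V_\alpha\le 1+\delta_\alpha\), and the corollary with \(\Gamma_2(r)=1+r\) only gives \(\E\norm{X_\alpha}<\infty\).

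Your argument needs \(\E\norm{X_\alpha}^2<\infty\), and hence \(\E\norm{\mathsf G(X_\alpha,\xi_\alpha^+)}^2<\infty\), at three places:
\begin{itemize}
\item Subtracting \(\E\norm{X_\alpha^+}^2=\E\norm{X_\alpha}^2\) in the stationary identity is otherwise \(\infty-\infty\).
\item The shift \(\E\ip{X_\alpha}{\chi_0(\xi_\alpha)}=\E\ip{X_\alpha^+}{\chi_0(\xi_\alpha^+)}\) and the vanishing of the \(D_0\)-term need \(\norm{X_\alpha}\) times an \(L^2(\pi_\Xi)\)-function of the noise to be integrable, and \(X_\alpha\) and \(\xi_\alpha\) are dependent.
\item Absorbing \(\varepsilon b\) into \(b\) requires \(b<\infty\).
\end{itemize}
For \(1\le\beta<2\) your claim is fine, since \(\Gamma_\beta\) grows like \(e^{\kappa r^{2-\beta}}\) and gives all polynomial moments.

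For \(\beta=2\) you need a separate a priori bound. The paper supplies one in Lemma~\ref{lem:scaling-stationary-L2-markov}:
\begin{itemize}
\item Start from \(X_0=0\), \(\xi_0\sim\pi_\Xi\).
\item Show \(\sup_n\E\norm{X_n}^2<\infty\) using the global coercivity \(\ip{x}{h(x)}\ge c\norm{x}^2-C\) from \assitemref{itm:H3a}. A crude drift bound suffices: \(\ip{x}{h(x)+\bar g(x,\xi)}\ge(1-\theta)\ip{x}{h(x)}+\ip{x}{\bar g(0,\xi)}\), then Young with \(\E_{\pi_\Xi}\norm{\bar g(0,\xi)}^2<\infty\).
\item Pass to \(\pi_\alpha\) using the convergence in Theorem~\ref{thm:main-markov} and lower semicontinuity of \(\norm{x}^2\).
\end{itemize}

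\textbf{Comparison with the paper.} Once that bound is added, your route is correct and differs from the paper's in how \(\E\norm{\mathsf G}^2\) is controlled. The paper uses Lemma~\ref{lem:scaling-noise-second-markov}, \(\E\norm{g(X_\alpha,\xi_\alpha^+)}^2\le A+B\,\E\ip{X_\alpha}{h(X_\alpha)}\), together with \eqref{eq:population-cocoercivity}. That lemma depends on the tail regime: \assitemref{itm:N5} when \(\beta<2\), and Lipschitz bounds plus quadratic coercivity when \(\beta=2\). You instead apply samplewise co-coercivity against \(\mathsf G(0,\xi^+)=g(0,\xi^+)\), which gives \(\norm{\mathsf G}^2\le2L_{\mathsf G}\ip{x}{\mathsf G-g_0}+2\norm{g_0}^2\). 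You then close a linear inequality in \(a,b\) using only \assitemref{itm:N7}, \assitemref{itm:N8} and the Poisson corrector, with no case split in \(\beta\). This is cleaner for the two main bounds. Your Poisson shift for the cross term is the same as the paper's.

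\textbf{Minor points.} The shift actually gives \(\E\ip{X_\alpha^+-X_\alpha}{\chi_0(\xi_\alpha^+)}=-\alpha\E\ip{\mathsf G}{\chi_0(\xi_\alpha^+)}\); your sign is flipped, which is harmless after taking absolute values. Also, \(\chi_0\in L^2(\pi_\Xi)\) needs no extra work: it is \eqref{eq:poisson-solution-growth}, with \(R_\chi\in L^2(\pi_\Xi)\) by \eqref{eq:g0-moment-gk}.
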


The following lemma controls the dependence between $Y_\alpha$ and $\xi_\alpha$.

\begin{lemma}
\label{lem:noisestate-decorrelation}
Assume Assumptions~\ref{ass:Hscale} and~\ref{ass:Nscale}.  Let \(\psi:\R^d\to\R\) be bounded and globally Lipschitz, and let \(f:\Xi\to\R\) be bounded, Lipschitz, and centered under \(\pi_\Xi\).  Then
\[
        \abs{\E[\psi(Y_\alpha)f(\xi_\alpha)]}
        \le C_{\psi,f}\alpha^{1-1/m},
\]
where \(C_{\psi,f}>0\) is independent of \(\alpha\).  In particular,
\[
        \E[\psi(Y_\alpha)f(\xi_\alpha)]\to0.
\]
\end{lemma}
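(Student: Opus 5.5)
We prove the decorrelation bound through a Poisson equation for \(f\) and stationarity of \(\pi_\alpha\).

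\textbf{Step 1: Poisson equation for \(f\).} Set \(\zeta_f:=\sum_{k\ge0}Q^kf\). Assumption~\ref{ass:N}\assitemref{itm:N1} gives the synchronous-coupling bound
\[
\abs{Q^kf(\xi)-Q^kf(\eta)}\le \Lip(f)\,(\E L_\Phi)^k\norm{\xi-\eta}.
\]
Since \(f\) is centered under \(\pi_\Xi\),
\[
\abs{Q^kf(\xi)}\le \Lip(f)\,\rho^k\int\norm{\xi-\eta}\,\pi_\Xi(d\eta),\qquad \rho:=\E L_\Phi(U_1)<1.
\]
The right-hand side is finite by \eqref{eq:g0-moment-gk}. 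So the series converges, \(\zeta_f\) is Lipschitz with constant \(\Lip(f)/(1-\rho)\), it grows at most linearly in \(\norm{\xi-\xi_\star}\), and it satisfies \(\zeta_f-Q\zeta_f=f\). This is the scalar analogue of Lemma~\ref{lem:poisson-gk-construction}. Because the second moment of \(\xi_\alpha\) under \(\pi_\alpha\) is the \(\pi_\Xi\) moment, \(\E\abs{\zeta_f(\xi_\alpha)}^2<\infty\) uniformly in \(\alpha\).

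\textbf{Step 2: Stationarity identity.} Write \(\E[\psi(Y_\alpha)f(\xi_\alpha)]=\E[\psi(Y_\alpha)\zeta_f(\xi_\alpha)]-\E[\psi(Y_\alpha)Q\zeta_f(\xi_\alpha)]\). Since \(U_1\) is independent of \((X_\alpha,\xi_\alpha)\), the second term equals \(\E[\psi(Y_\alpha)\zeta_f(\xi_\alpha^+)]\). By stationarity, \((Y_\alpha^+,\xi_\alpha^+)\stackrel d=(Y_\alpha,\xi_\alpha)\), so the first term equals \(\E[\psi(Y_\alpha^+)\zeta_f(\xi_\alpha^+)]\). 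Hence
\[
\E[\psi(Y_\alpha)f(\xi_\alpha)]=\E\bigl[\{\psi(Y_\alpha^+)-\psi(Y_\alpha)\}\,\zeta_f(\xi_\alpha^+)\bigr].
\]

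\textbf{Step 3: Bound the increment.} By Lipschitzness of \(\psi\),
\[
\abs{\psi(Y_\alpha^+)-\psi(Y_\alpha)}\le \Lip(\psi)\,\norm{Y_\alpha^+-Y_\alpha}=\Lip(\psi)\,\alpha^{1-1/m}\norm{h(X_\alpha)+g(X_\alpha,\xi_\alpha^+)}.
\]
Cauchy--Schwarz, \eqref{eq:scaling-gradient-second-main}, and the uniform bound \(\E\abs{\zeta_f(\xi_\alpha^+)}^2\le C\) give \(\abs{\E[\psi(Y_\alpha)f(\xi_\alpha)]}\le C_{\psi,f}\alpha^{1-1/m}\). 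Here \(\xi_\alpha^+\sim\pi_\Xi\) because the \(\xi\)-marginal of \(\pi_\alpha\) is \(\pi_\Xi\).

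\textbf{Main obstacle.} The delicate point is that the \(\xi\)-marginal of \(\pi_\alpha\) is exactly \(\pi_\Xi\). The marginal of \(\pi_\alpha\) is invariant for \(Q\), since the \(\xi\)-coordinate evolves autonomously. We then need \(\pi_\Xi\) to be the unique invariant law within the relevant integrability class, via Lemma~\ref{lem:xi-invariant-law}. The finite first moment in \(\xi\) follows from \(\pi_\alpha\in\mathcal P_1(\mathsf Z,d_{V_\alpha,\alpha})\) and Corollary~\ref{cor:ordinary-W1-markov}. Once the marginal is identified, the \(\pi_\Xi\) second-moment bound in \eqref{eq:g0-moment-gk} is available, and everything else is routine.
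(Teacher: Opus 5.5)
Your proposal is correct and follows the paper's proof essentially step for step: the same Poisson solution $u=\sum_{k\ge0}Q^kf$, the same stationarity identity $\E[\psi(Y_\alpha)f(\xi_\alpha)]=\E[(\psi(Y_\alpha^+)-\psi(Y_\alpha))u(\xi_\alpha^+)]$, and the same Cauchy--Schwarz step with \eqref{eq:scaling-gradient-second-main}. The differences are minor. The paper uses boundedness of $f$ to get only logarithmic growth of $u$, so a first moment of $\pi_\Xi$ suffices, whereas you use linear growth plus the second moment from \assitemref{itm:N8}. Also, Lemma~\ref{lem:xi-invariant-law} already gives uniqueness of $\pi_\Xi$ among all Borel invariant laws, so your detour through Corollary~\ref{cor:ordinary-W1-markov} to identify the $\xi$-marginal of $\pi_\alpha$ is valid but unnecessary.
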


The next lemma identifies every weak subsequential limit of \(Y_\alpha\).
\begin{lemma}
\label{lem:markov-generator-identification}
Assume Assumptions~\ref{ass:Hscale} and~\ref{ass:Nscale}.  Let
\(\alpha_k\downarrow0\) and suppose
\[
        Y_{\alpha_k}\Rightarrow \nu.
\]
Then, for every \(\varphi\in C_c^3(\R^d)\),
\begin{equation}
\label{eq:stationary-generator-limit-main}
        \int_{\R^d}
        \left[
        -\ip{h_0(y)}{\nabla\varphi(y)}
        +
        \frac12\tr\bigl(\Sigma\nabla^2\varphi(y)\bigr)
        \right]\,\nu(dy)
        =
        0.
\end{equation}
\end{lemma}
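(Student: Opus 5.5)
The plan is to use stationarity, $\E\varphi(Y_\alpha^+)=\E\varphi(Y_\alpha)$. I would Taylor-expand $\varphi(Y_\alpha^+)-\varphi(Y_\alpha)$ to second order, divide by $\alpha^{2-2/m}$, and identify each surviving term as $\alpha=\alpha_k\downarrow0$. Write $\Delta_\alpha:=Y_\alpha^+-Y_\alpha=-\alpha^{2-2/m}h_\alpha(Y_\alpha)-\alpha^{1-1/m}g(\alpha^{1/m}Y_\alpha,\xi_\alpha^+)$. Since $\varphi\in C_c^3$, we have
\[
0=\E\ip{\nabla\varphi(Y_\alpha)}{\Delta_\alpha}+\tfrac12\E\,\Delta_\alpha^\top\nabla^2\varphi(Y_\alpha)\Delta_\alpha+\E R_\alpha,
\qquad
|R_\alpha|\le C_\varphi\min\{\norm{\Delta_\alpha}^2,\norm{\Delta_\alpha}^3\}.
\]
Note that $\norm{\Delta_\alpha}=\alpha^{1-1/m}\norm{h(X_\alpha)+g(X_\alpha,\xi_\alpha^+)}$. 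By \eqref{eq:scaling-gradient-second-main}, $\E\norm{\Delta_\alpha}^2=O(\alpha^{2-2/m})$. To get $\E R_\alpha=o(\alpha^{2-2/m})$ I need uniform integrability of $\norm{h(X_\alpha)+g(X_\alpha,\xi_\alpha^+)}^2$. I would obtain it by splitting on $\{\norm{X_\alpha}\le1\}$, where \assitemref{itm:N5} (or, when $\beta=2$, the Lipschitz bounds of Lemma~\ref{lem:N7-consequences} plus \eqref{eq:g0-moment-gk}) gives uniformly bounded higher moments. On the complement, whose probability is $O(\alpha)$ by \eqref{eq:scaling-unit-tail-main}, the crude bound $|R_\alpha|\le C_\varphi\norm{\Delta_\alpha}^2$ together with the extra factor $\alpha^{1-1/m}$ suffices.

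\emph{Drift term.} The drift contributes $-\alpha^{2-2/m}\E\ip{h_\alpha(Y_\alpha)}{\nabla\varphi(Y_\alpha)}$. By \assitemref{itm:H4} and the $(m-1)$-homogeneity of $h_0$, we have $h_\alpha\to h_0$ uniformly on $\supp\varphi$. After division, this term therefore converges to $-\int\ip{h_0}{\nabla\varphi}\,d\nu$ by weak convergence. The drift parts of the quadratic term are of order $\alpha^{3-3/m}$ or smaller and vanish after division.

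\emph{First-order noise term.} This is the key step, because $Y_\alpha$ is correlated with $\xi_\alpha^+$ and the term is only $O(\alpha^{1-1/m})$ a priori. I would apply Lemma~\ref{lem:poisson-gk-construction} with $x=X_\alpha$ to write $g_{X_\alpha}(\xi_\alpha^+)=D_{X_\alpha}(\xi_\alpha,U_1)+\chi_{X_\alpha}(\xi_\alpha)-\chi_{X_\alpha}(\xi_\alpha^+)$. The martingale part has zero contribution by \eqref{eq:Dx-martingale-difference}, since $(Y_\alpha,\xi_\alpha)$ is independent of $U_1$. For the telescoping part, I would use joint stationarity $(Y_\alpha^+,\xi_\alpha^+)\stackrel d=(Y_\alpha,\xi_\alpha)$ to replace $\E\ip{\chi_{X_\alpha}(\xi_\alpha)}{\nabla\varphi(Y_\alpha)}$ by $\E\ip{\chi_{X_\alpha^+}(\xi_\alpha^+)}{\nabla\varphi(Y_\alpha^+)}$. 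The term then becomes
\[
\E\ip{\chi_{X_\alpha^+}(\xi_\alpha^+)}{\nabla\varphi(Y_\alpha^+)}-\E\ip{\chi_{X_\alpha}(\xi_\alpha^+)}{\nabla\varphi(Y_\alpha)}.
\]
Expanding $\nabla\varphi(Y_\alpha^+)=\nabla\varphi(Y_\alpha)+\nabla^2\varphi(Y_\alpha)\Delta_\alpha+\dots$ produces $-\alpha^{2-2/m}\E[\chi_{X_\alpha}(\xi_\alpha^+)^\top\nabla^2\varphi(Y_\alpha)g_{X_\alpha}(\xi_\alpha^+)]$. After multiplying by the prefactor $-\alpha^{1-1/m}$, this is a second-order contribution of the right size. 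The remaining piece involves $\chi_{X_\alpha^+}-\chi_{X_\alpha}$. Here I need a regularity estimate $\norm{\chi_x(\xi)-\chi_y(\xi)}\le C\norm{x-y}(1+\norm{\xi-\xi_\star})$, presumably from the contraction \assitemref{itm:N1} applied to the centered, $x$-Lipschitz family $g_x-g_y$. I expect this estimate to be the main obstacle, since \assitemref{itm:N2} gives Lipschitz control in $\xi$ but not jointly in $(x,\xi)$. If it fails, I would fall back on a truncated series $\sum_{k\le K}Q^kg_x$, with $K=K(\alpha)\to\infty$ slowly and the tail controlled by geometric decay at $x=0$ plus the $x$-Lipschitz bound. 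Either way, $\norm{X_\alpha^+-X_\alpha}=O(\alpha)$ in $L^2$, so this piece is $O(\alpha^{2-2/m}\cdot\alpha^{1/m})$ after accounting for scales, and hence negligible.

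\emph{Identifying $\Sigma$.} The collected second-order terms are $\tfrac12\E[g^\top\nabla^2\varphi\,g]+\E[\chi^\top\nabla^2\varphi\,g]$, all evaluated at $(X_\alpha,\xi_\alpha^+)$. On $\supp\varphi$ we have $X_\alpha=O(\alpha^{1/m})$, so by Lipschitzness in $x$ I can replace $g_{X_\alpha},\chi_{X_\alpha}$ by $g_0,\chi_0$. After conditioning on $\xi_\alpha$, these become $\E\,\tr(\nabla^2\varphi(Y_\alpha)M(\xi_\alpha))$ for a matrix function $M$ of $\xi$. By Lemma~\ref{lem:noisestate-decorrelation}, after approximating $M$ by bounded Lipschitz functions using \eqref{eq:g0-moment-gk}, I can replace $M(\xi_\alpha)$ by its $\pi_\Xi$-mean. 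That mean equals $\tfrac12\bigl(\Gamma_0+\sum_{k\ge1}(\Gamma_k+\Gamma_k^\top)\bigr)=\tfrac12\Sigma$ by \eqref{eq:GK-lag-series} and $\chi_0=\sum_{k\ge1}Q^kg_0$. Letting $\alpha_k\downarrow0$ and using $Y_{\alpha_k}\Rightarrow\nu$ with $\tr(\Sigma\nabla^2\varphi)\in C_b$ yields \eqref{eq:stationary-generator-limit-main}.
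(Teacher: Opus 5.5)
Your argument is the paper's in different packaging. The Poisson decomposition of $g_{X_\alpha}(\xi_\alpha^+)$ followed by the stationarity shift $(Y_\alpha,\xi_\alpha)\mapsto(Y_\alpha^+,\xi_\alpha^+)$ is exactly what the paper encodes in the perturbed test function $\varphi(y)-\alpha^{1-1/m}\ip{\nabla\varphi(y)}{\chi_{\alpha^{1/m}y}(\xi)}$. The downstream steps also coincide: the $\nabla\varphi$-increment producing the cross terms of $M_x$, replacing $x$ by $0$ on compacts, decorrelating via Lemma~\ref{lem:noisestate-decorrelation} plus $L^1(\pi_\Xi)$-approximation, and $\int M_0\,d\pi_\Xi=\Sigma$. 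Your direct identification of that mean from $\chi_0=\sum_{k\ge1}Q^kg_0$ is a fine substitute for the paper's route through $\E[D_0D_0^\top]$.

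The step that does not go through as written is the $x$-regularity of $\chi_x$. The bound $\norm{\chi_x(\xi)-\chi_y(\xi)}\le C\norm{x-y}(1+\norm{\xi-\xi_\star})$ does not follow from the assumptions. The coupling argument would need the $\xi$-Lipschitz constant of $g_x-g_y$ to be $O(\norm{x-y})$, whereas \assitemref{itm:N2} only bounds the $\xi$-Lipschitz constant of each $g_x$ uniformly in $x$. What you actually have is $\norm{Q^k(g_x-g_y)(\xi)}\le L\norm{x-y}$, with no decay in $k$, and $\norm{Q^k(g_x-g_y)(\xi)}\le C\rho_\Xi^{k-1}R_\chi(\xi)$, with no factor $\norm{x-y}$.

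Your truncation fallback can be repaired, but two details matter.
\begin{itemize}
\item Truncation leaves a non-telescoping term $Q^{K+1}g_{X_\alpha}(\xi_\alpha)$ at the singular order $\alpha^{1/m-1}$. It must be bounded by the geometric decay $\norm{Q^kg_x(\xi)}\le C\rho_\Xi^{k-1}R_\chi(\xi)$, the bound behind \eqref{eq:poisson-solution-growth}. This bound holds uniformly in $x$ because every $g_x$ is $\pi_\Xi$-centered by \assitemref{itm:N8} and $\xi$-Lipschitz uniformly by \assitemref{itm:N2}. Bounding it instead by the decay at $x=0$ plus an $x$-Lipschitz correction leaves a term of order $\alpha^{1/m-1}\cdot\alpha^{1/m}$, which does not vanish.
\item $K$ cannot grow arbitrarily slowly. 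You need $\alpha^{1/m-1}\rho_\Xi^{K}\to0$, i.e.\ $K\ge c\log(1/\alpha)$. After that, the head of the series costs only $O(K\alpha^{1/m})$.
\end{itemize}
The paper avoids truncation altogether. Interpolating the two bounds above with $\min\{a,b\}\le a^pb^{1-p}$ yields the H\"older estimate \eqref{eq:poisson-solution-x-holder}. Choosing $p\in(1-1/m,1)$ then makes the $\chi$-increment term $O(\alpha^{p-(1-1/m)})$. The same estimate also justifies replacing $\chi_{X_\alpha}$ by $\chi_0$ in the covariance term, where you again invoked the unavailable Lipschitz bound.

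A smaller point concerns the Taylor remainder on $\{\norm{X_\alpha}>1\}$: the crude bound $\abs{R_\alpha}\le C\norm{\Delta_\alpha}^2$ carries no extra factor $\alpha^{1-1/m}$. What works is that $Y_\alpha\notin\supp\varphi$ there for small $\alpha$, so $R_\alpha=\varphi(Y_\alpha^+)$ and $\abs{R_\alpha}\le\norm{\varphi}_\infty\1_{\{\norm{\Delta_\alpha}\ge\alpha^{-1/m}/2\}}\le C\alpha^{2/m}\norm{\Delta_\alpha}^2$. Alternatively, cite the Lindeberg bound \eqref{eq:scaling-gradient-lindeberg-main}. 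Also, when $\beta=2$ you have uniform integrability, not bounded higher moments, of $\norm{g(0,\xi_\alpha^+)}^2$. This suffices because its law does not depend on $\alpha$.
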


Finally, Equation~\eqref{eq:stationary-generator-limit-main} identifies a unique limiting law.

\begin{lemma}
\label{lem:limit-diffusion-identification}
Assume Assumption~\ref{ass:Hscale}, and let \(\Sigma\succeq0\).  The (possibly degenerate) diffusion
\[
        dY_t=-h_0(Y_t)\,dt+\Sigma^{1/2}\,dB_t
\]
has a unique invariant law, denoted by \(\nu_\infty\).  Moreover, any probability law \(\nu\) satisfying \eqref{eq:stationary-generator-limit-main} for every \(\varphi\in C_c^3(\R^d)\) equals \(\nu_\infty\).
\end{lemma}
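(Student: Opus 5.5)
Assumption~\ref{ass:Hscale} gives a convex, degree-$m$ homogeneous $H_0\in C^2$ with $h_0=\nabla H_0$. The plan is to prove existence through a Lyapunov argument, and uniqueness by reducing to the nondegenerate directions of $\Sigma$ and using the stationary Fokker--Planck equation there.

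\textbf{Coercivity of $H_0$.} From \assitemref{itm:H2} and \assitemref{itm:H4}, $\ip{y}{h_0(y)}=\lim_{t\downarrow0}t^{-m}\ip{ty}{h(ty)}$. Integrating the Hessian bound along rays gives $\ip{x}{h(x)}\ge c\norm{x}^m$ near $0$, hence $\ip{y}{h_0(y)}=mH_0(y)\ge c\norm{y}^m$. The drift $h_0$ is locally Lipschitz, since $H_0\in C^2$, and satisfies $\ip{y}{h_0(y)}\ge c\norm{y}^m$ for all $y$, so it is dissipative.

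\textbf{Existence.} The SDE has a pathwise unique strong solution that does not explode, using $\norm{y}^2$ as a Lyapunov function: $\mathcal L\norm{y}^2=-2\ip{y}{h_0}+\tr\Sigma\le -c\norm{y}^m+C$. Krylov--Bogoliubov applied to the time averages of the laws started at $0$ then gives tightness, and every limit point is an invariant law $\nu_\infty$. Any such $\nu_\infty$ satisfies the stationary generator identity \eqref{eq:stationary-generator-limit-main} for $\varphi\in C_c^3$, because $\varphi(Y_t)-\int_0^t\mathcal L\varphi(Y_s)\,ds$ is a martingale. So it suffices to prove that \eqref{eq:stationary-generator-limit-main} has at most one probability solution; this also yields uniqueness of the invariant law.

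\textbf{Uniqueness: splitting along $\Sigma$.} Let $S=\operatorname{range}\Sigma$ and $N=S^\perp$, and write $y=(s,n)$. First I show that any solution $\nu$ of \eqref{eq:stationary-generator-limit-main} is supported on $S$.
\begin{itemize}
\item For a compactly supported function $\varphi$ of $n$ only, the second-order term vanishes, so $\int\ip{h_0}{\nabla\varphi}\,d\nu=0$. Thus $\nu$ is invariant for the first-order operator $\ip{P_Nh_0(y)}{\nabla_n}$.
\item Take $\varphi(y)=\psi(\norm{P_Ny}^2)$ with $\psi$ increasing, so that the operator applied to $\varphi$ equals $2\psi'\,\ip{P_Ny}{h_0(y)}$.
\end{itemize}
Here a structural input is needed. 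Since $H_0$ is convex with its minimum at $0$, the restriction $n\mapsto H_0(s,n)$ is convex, but it need not be minimized at $n=0$, so $\ip{P_Ny}{h_0}$ does not have an obvious sign. Instead I use the Lyapunov function $H_0$ itself. Testing \eqref{eq:stationary-generator-limit-main} with truncations of $H_0$ gives
\[
\int\norm{h_0}^2\,d\nu=\tfrac12\int\tr(\Sigma\nabla^2H_0)\,d\nu,
\]
where the right-hand side only involves second derivatives in $S$-directions. I would try to combine this identity with a direct argument on the deterministic characteristics in $N$ (convex gradient flow in $n$). This should show that $\nu$ is a stationary law of a coupled hypoelliptic system, and then invoke uniqueness via a Harris-type argument. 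The Harris argument needs two inputs: controllability (the deterministic gradient flow of the convex $H_0$ contracts toward its unique minimizer $0$, and noise reaches a neighbourhood) and a local Doeblin condition.

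\textbf{Main obstacle.} The case where $\Sigma$ is degenerate is the hard part. If $\Sigma\succ0$, the argument is easy: \eqref{eq:stationary-generator-limit-main} says $\nu$ solves $\mathcal L^*\nu=0$ in distributions, elliptic regularity gives a smooth positive density, and uniqueness follows from Bogachev--Röckner--Shaposhnikov uniqueness of solutions of stationary Fokker--Planck equations with a Lyapunov function. In the degenerate case, the first thing I would try is the identity $\int\norm{h_0}^2\,d\nu=\frac12\int\tr(\Sigma\nabla^2H_0)\,d\nu$. When $\Sigma=0$ it forces $h_0=0$ $\nu$-a.s., hence $\nu=\delta_0$, since $\ip{y}{h_0(y)}\ge c\norm{y}^m$. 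For the intermediate cases, I would approximate $\Sigma$ by $\Sigma+\varepsilon I$ and use stability together with Echeverría's theorem, which shows that $\nu$ is stationary for the martingale problem. Uniqueness would then follow from the contraction of the synchronous coupling: $h_0$ is monotone (since $H_0$ is convex), and convergence is strict thanks to the coercivity $\ip{y}{h_0(y)}\ge c\norm{y}^m$.
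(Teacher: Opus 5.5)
Your coercivity bound, the existence argument, and the reduction to uniqueness of solutions of \eqref{eq:stationary-generator-limit-main} are fine. Your final fallback (Echeverria's theorem to show that $\nu$ is invariant, then synchronous coupling) is exactly the paper's route.

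The gap is in the step that carries the uniqueness. You say the coupling converges ``strictly thanks to the coercivity $\ip{y}{h_0(y)}\ge c\norm{y}^m$''. That is a one-point estimate: it compares $y$ only with the minimizer. Together with monotonicity of $h_0$, it gives nothing beyond $\frac{d}{dt}\norm{\Delta_t}^2=-2\ip{\Delta_t}{h_0(Y_t)-h_0(\widetilde Y_t)}\le0$ for $\Delta_t=Y_t-\widetilde Y_t$.

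Convexity, $m$-homogeneity and radial coercivity do not rule out flat directions. In $d=2$, take $H_0=N^m$ with $m>2$, where $N$ is a norm, smooth off the origin, whose unit sphere contains a segment lying in $\{\ell=1\}$ for some linear functional $\ell$. On the cone over that segment $H_0=\ell^m$, so $h_0(y)=h_0(z)$ for distinct $y,z$ in the cone with $\ell(y)=\ell(z)$. There your inputs give no contraction of the coupled difference.

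What is needed is the two-point estimate
\[
\ip{y-z}{h_0(y)-h_0(z)}\ge c_m\norm{y-z}^m,\qquad y,z\in\R^d,
\]
which is Lemma~\ref{lem:tangent-power-monotone}. This is where the full Hessian bound $\nabla^2H(x)\succeq c_{\rm in}\norm{x}^{m-2}I_d$ of \assitemref{itm:H2}, in every direction and not only radially, has to enter. The steps are:
\begin{enumerate}
\item integrate the bound along the segment from $\alpha^{1/m}z$ to $\alpha^{1/m}y$;
\item use $\inf_{\norm v=1,\,w\in\R^d}\int_0^1\norm{w+tv}^{m-2}\,dt>0$;
\item rescale by $\alpha^{-1}$ and let $\alpha\downarrow0$ via \assitemref{itm:H4}.
\end{enumerate}
Then $\frac{d}{dt}\norm{\Delta_t}^2\le-2c_m\norm{\Delta_t}^m$ pathwise, so $\Delta_t\to0$ and any two invariant laws coincide. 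The paper runs this in $L^2$ after showing that invariant laws have finite $m$-th moments, but a pathwise comparison also works.

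The splitting along $\Sigma$ should be dropped rather than repaired, because its first claim is false: solutions of \eqref{eq:stationary-generator-limit-main} need not be supported on $S=\operatorname{range}\Sigma$. Take $m=2$, $h_0(y)=Ay$ with $A=\bigl(\begin{smallmatrix}2&1\\1&2\end{smallmatrix}\bigr)$, and $\Sigma=e_1e_1^\top$. The stationary covariance solving $AC+CA^\top=\Sigma$ is $C=\frac1{24}\bigl(\begin{smallmatrix}7&-2\\-2&1\end{smallmatrix}\bigr)\succ0$, so $\nu_\infty$ charges $S^\perp$.

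The Harris/hypoelliptic program and the approximation by $\Sigma+\varepsilon I$ are also unnecessary; $\nu$ solves the equation for $\Sigma$ itself, so there is nothing to approximate. Echeverria's theorem applies directly to the degenerate operator $\mathcal L$ on $C_c^\infty(\R^d)$. The positive maximum principle holds for $\Sigma\succeq0$, and the martingale problem is well posed by the pathwise uniqueness and nonexplosion you already established. Hence $\nu$ is invariant, and the coupling argument above identifies it with $\nu_\infty$.
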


The proofs of Lemmas~\ref{lem:poisson-gk-construction},
\ref{lem:scaling-tightness-package}, \ref{lem:noisestate-decorrelation},
\ref{lem:markov-generator-identification}, and
\ref{lem:limit-diffusion-identification} are given in
Appendices~\ref{sec:proof-poisson-gk-construction},
\ref{sec:proof-scaling-tightness-package},
\ref{sec:proof-noisestate-decorrelation},
\ref{sec:proof-markov-generator-identification}, and
\ref{sec:proof-limit-diffusion-identification}, respectively.

\begin{proof}[Proof of Theorem~\ref{thm:clt-iso-markov}]
Fix \(R\ge1\).  For all sufficiently small \(\alpha\), we have
\(R\alpha^{1/m}\le1\), and Lemma~\ref{lem:scaling-tightness-package}
gives
\begin{align*}
        \PP(\norm{Y_\alpha}>R)
        &=
        \PP(\norm{X_\alpha}>R\alpha^{1/m})\\
        &\le
        \PP(R\alpha^{1/m}<\norm{X_\alpha}\le1)
        +\PP(\norm{X_\alpha}>1)\\
        &\le
        \frac{
        \E[\norm{X_\alpha}^m
        \1_{\{\norm{X_\alpha}\le1\}}]
        }{R^m\alpha}
        +\PP(\norm{X_\alpha}>1)\\
        &\le
        CR^{-m}+C\alpha,
\end{align*}
where the last line uses
\eqref{eq:scaling-local-moment-main} and
\eqref{eq:scaling-unit-tail-main}.  Hence
\[
        \limsup_{\alpha\downarrow0}
        \PP(\norm{Y_\alpha}>R)
        \le CR^{-m}.
\]
Letting \(R\to\infty\) shows that the family
\(\{Y_\alpha\}_{\alpha>0}\) is tight as \(\alpha\downarrow0\).

We next identify all possible subsequential limits.  Let
\((\alpha_k)_{k\ge1}\) be an arbitrary sequence satisfying
\(\alpha_k\downarrow0\).  By tightness, there exist integers
\(
        1\le k_1<k_2<\cdots
\)
and a probability law \(\nu\) on \(\R^d\) such that
\(
        Y_{\alpha_{k_j}}\Rightarrow\nu
\)
as $j\to\infty$.
Since \(\alpha_{k_j}\downarrow0\), we may apply
Lemma~\ref{lem:markov-generator-identification} to the sequence
\((\alpha_{k_j})_{j\ge1}\).  It follows that \(\nu\) satisfies
the stationary generator equation
\eqref{eq:stationary-generator-limit-main}.  Therefore,
Lemma~\ref{lem:limit-diffusion-identification} implies that
\(
        \nu=\nu_\infty,
\)
where \(\nu_\infty\) is the unique invariant law of the limiting
diffusion \eqref{eq:limitSDE-markov}.  Thus,
\(
        Y_{\alpha_{k_j}}\Rightarrow\nu_\infty.
\)

The same argument applies to every subsequence of the original
sequence \((\alpha_k)_{k\ge1}\): every such subsequence has a
further subsequence along which the corresponding random variables
converge weakly to \(\nu_\infty\). As a result, the entire sequence satisfies
\(
        Y_{\alpha_k}\Rightarrow\nu_\infty
\)
as $k\to\infty$.
Because the sequence \((\alpha_k)_{k\ge1}\) was arbitrary, we
conclude that
\(
        Y_\alpha=\alpha^{-1/m}X_\alpha
        \Rightarrow Y_\infty
\)
as $\alpha\downarrow0$ and $Y_\infty\sim\nu_\infty$. 
\end{proof}

\begin{remark}
The generator argument above identifies weak subsequential limits rather than
solving a Stein equation for the limiting invariant law.  This is particularly
useful when \(m>2\), since the limiting generator has nonlinear \(m\)-flat
drift, a generally non-Gaussian invariant law, and possibly degenerate
covariance.  
Obtaining uniform estimates for solutions of the associated
Stein equations that are needed for quantitative convergence rates would
require additional analysis of Poisson equations for diffusions with
nonlinear drift.
\end{remark}



\section{Conclusion}
\label{sec:conclusion}

This paper develops an invariant-law theory for constant-stepsize SGD beyond
the strongly convex setting.  For each sufficiently small constant stepsize, we
prove geometric convergence to a unique invariant law in a Wasserstein
distance induced by a metric adapted to two geometric features of the objective: an
\(m\)-flat near-minimizer region and a \(\beta\)-subquadratic tail.  We then identify the
small-stepsize scaling limit of the invariant law as the solution of \eqref{eq:limitSDE-markov}.
In particular, the classical Gaussian limit is recovered when
\(m=2\), while flatter objectives can produce larger and generally non-Gaussian
stationary fluctuations. Moreover, Markovian dependence in gradient noise enters through the asymptotic
covariance \(\Sigma\).

The main conceptual point is that local and global features play different
roles.  The exponent \(m\) determines the scale of the invariant law and the scaling limit,
whereas \(\beta\) determines the weight function needed to control
excursions.  This separation is reflected in the examples: quantile and
tail-risk estimation illustrate nonquadratic local flatness, while smooth
robust and logistic losses illustrate subquadratic tail growth.

Several directions remain open.  First, the separable extension developed here
allows different coordinate flatness exponents, but the genuinely nonseparable
anisotropic case remains to be understood.  If different coordinate groups have
natural scales \(s_i(\alpha)=\alpha^{1/m_i}\), the anisotropically rescaled
generator contains several effective time scales; a natural approach is
multi-time-scale stochastic averaging
\cite{Khasminskii1968,Borkar1997,KushnerYin2003,KangKurtz2013}.  Second, quantitative convergence rates for the scaling limit of invariant laws may be
accessible through Stein's method for the limiting diffusion, 
based on the solution of the Poisson equation
\cite{Barbour1990,BravermanDaiFeng2016,PardouxVeretennikov2001,
PardouxVeretennikov2003,PardouxVeretennikov2005}.  Other important extensions include replacing the one-step contraction in geometric ergodicity
by a multi-step contraction, 
which would cover Markovian data streams for which contraction only emerges over several transitions, and relaxing the globally contractive driving-chain dynamics
toward Harris-type or mixing-based Markovian noise
\cite{MeynTweedie2009,EberleGuillinZimmer2019}.

\section*{Acknowledgments}

Cheng Mao was supported in part by NSF CAREER Award 2338062. 
Debankur Mukherjee was partially supported by the NSF grant CPS-2240982.

\appendix

\section{Constant-stepsize contraction}
\label{sec:constant-step-technical-estimates}

This appendix proves the metric fixed-point principle and the estimates used in
Theorem~\ref{thm:main-markov}, followed by the projected Wasserstein--1
corollary.  Constants denoted by \(C,c\) may change from line to line but are
independent of \(\alpha\), \(x\), and the current value \(\xi\) of the driving
chain.  We decrease the small-stepsize threshold when needed.

\subsection{Fixed-point principle}
\label{subsec:metric-fixed-point-appendix}

\begin{proof}[Proof of Proposition~\ref{prop:CD-to-ergodic}]
The assumptions imply that \(P\) maps \(\mathcal P_1(E,d)\) into itself,
since
\(\E d(F_U(z),z_\star)\le r d(z,z_\star)+\E d(F_U(z_\star),z_\star)\).
The one-step estimate \eqref{eq:metric-one-step-contraction} iterates under
the synchronous coupling and gives \eqref{eq:metric-W1-contraction}.  Let
\(\mu_0=\delta_{z_\star}\) and \(\mu_n=\mu_0P^n\).  By
\eqref{eq:metric-reference-integrability}, \(W_1^d(\mu_1,\mu_0)<\infty\),
and hence
\(W_1^d(\mu_{k+1},\mu_k)\le r^kW_1^d(\mu_1,\mu_0)\).  Thus \((\mu_n)\)
is Cauchy in \((\mathcal P_1(E,d),W_1^d)\).  Let its limit be \(\pi\).
Applying the contraction to \(\mu_n\) and \(\pi\) shows
\(\mu_{n+1}\to\pi P\), while \(\mu_{n+1}\to\pi\), so \(\pi P=\pi\).

For fixed \(z\in E\), taking \(\mu=\delta_z\) and \(\nu=\pi\) in
\eqref{eq:metric-W1-contraction} gives
\(W_1^d(\delta_zP^n,\pi)\le r^nW_1^d(\delta_z,\pi)\to0\); the right-hand
side is finite because \(\pi\in\mathcal P_1(E,d)\).  Consequently, every
bounded \(d\)-Lipschitz \(\varphi\) satisfies
\(|P^n\varphi(z)-\pi(\varphi)|\le
\Lip_d(\varphi)W_1^d(\delta_zP^n,\pi)\to0\).

Let \(\widetilde\pi\) be any Borel invariant probability law.  By the
pointwise convergence, invariance, and dominated convergence,
\(\widetilde\pi(\varphi)=\lim_n\widetilde\pi(P^n\varphi)=\pi(\varphi)\).
Bounded Lipschitz functions determine Borel probability measures on the
Polish space \((E,d)\), so \(\widetilde\pi=\pi\).
\end{proof}

\subsection{Proof of Lemma~\ref{lem:N7-consequences}}
\label{subsec:update-condition-appendix}

\begin{proof}[Proof of Lemma~\ref{lem:N7-consequences}]
Put \(A_{\rm s}(x,\xi):=(A(x,\xi)+A(x,\xi)^\top)/2\).  Fix
\(\xi\in\Xi\).  Applying Cauchy--Schwarz to
\eqref{eq:sample-cocoercivity} gives
\(\norm{\mathsf G(x,\xi)-\mathsf G(y,\xi)}
\le L_{\mathsf G}\norm{x-y}\) for \(x,y\in\R^d\); the conclusion is
immediate when the left-hand difference vanishes and otherwise follows by
division by its norm.  Since \(A(x,\xi)=\nabla_x\mathsf G(x,\xi)\), this
proves the first bound in \eqref{eq:global-jacobian-bounds}.

Next fix \(x,v\in\R^d\).  Apply \eqref{eq:sample-cocoercivity} to
\(x+tv\) and \(x\), divide by \(t^2\), and let \(t\to0\).  Then
\[
\norm{A(x,\xi)v}^2
\le L_{\mathsf G}\ip{v}{A(x,\xi)v}
=L_{\mathsf G}\ip{v}{A_{\rm s}(x,\xi)v},
\qquad
0\preceq A_{\rm s}(x,\xi)\preceq L_{\mathsf G}I_d.
\]
Here the first inequality implies \(A_{\rm s}(x,\xi)\succeq0\), and the
second also uses \(\opnorm{A_{\rm s}}\le\opnorm A\).

Since \(\mathsf G=h+g\), \eqref{eq:mean-perturbation-control} implies
\[
\ip{x-y}{\E[\mathsf G(x,\Phi(\xi,U_1))-
\mathsf G(y,\Phi(\xi,U_1))]}
\ge(1-\theta)\ip{x-y}{h(x)-h(y)}.
\]
Apply this inequality to \(x+tv\) and \(x\), divide by \(t^2\), and let
\(t\to0\).  For \(t\ne0\), the preceding Lipschitz bound gives
\(\norm{[\mathsf G(x+tv,\Phi(\xi,U_1))-
\mathsf G(x,\Phi(\xi,U_1))]/t}\le L_{\mathsf G}\norm v\), so dominated
convergence yields
\[
\E[A_{\rm s}(x,\Phi(\xi,U_1))]\succeq(1-\theta)\nabla^2H(x),
\qquad
0\preceq\nabla^2H(x)\preceq\frac{L_{\mathsf G}}{1-\theta}I_d.
\]
Since \(\nabla_xg(x,\xi)=A(x,\xi)-\nabla^2H(x)\), the triangle
inequality proves the second bound in \eqref{eq:global-jacobian-bounds}.
Integrating the two Jacobian bounds along line segments yields the stated
global Lipschitz estimates.  Finally, \eqref{eq:population-cocoercivity}
is the Baillon--Haddad inequality for the convex
\(L_{\mathsf G}/(1-\theta)\)-smooth function \(H\), applied with
\(h(0)=0\).

It remains to prove the two contraction bounds.  Write \(A=A(x,\xi)\)
and \(A_{\rm s}=A_{\rm s}(x,\xi)\).  Since
\(\alpha L_{\mathsf G}\le1\),
\[
\begin{aligned}
\norm{(I_d-\alpha A)v}^2
&=\norm v^2-2\alpha\ip{v}{A_{\rm s}v}+\alpha^2\norm{Av}^2\\
&\le\norm v^2-(2\alpha-\alpha^2L_{\mathsf G})\ip{v}{A_{\rm s}v}
\le\norm v^2-\alpha\ip{v}{A_{\rm s}v}\le\norm v^2.
\end{aligned}
\]
This proves \eqref{eq:samplewise-nonexpansive}.  If \(v\ne0\), then
\[
\begin{aligned}
0&\le\alpha\frac{\ip{v}{A_{\rm s}v}}{\norm v^2}
\le\alpha\opnorm{A_{\rm s}}\le\alpha\opnorm A\le1,\\
\norm{(I_d-\alpha A)v}
&\le\left(1-\frac{\alpha}{2}
\frac{\ip{v}{A_{\rm s}v}}{\norm v^2}\right)\norm v,
\end{aligned}
\]
where the second inequality uses \(\sqrt{1-t}\le1-t/2\).  Replacing
\(\xi\) by \(\Phi(\xi,U_1)\), taking expectation with \(v\) fixed, and
using the averaged matrix inequality above proves
\eqref{eq:averaged-directional-contraction}.
\end{proof}

\subsection{Preparatory estimates}

\begin{lemma}
\label{lem:fixed-deterministic-bounds}
Under Assumption~\ref{ass:H}, there exists \(C_h>0\) such that
\(\norm{h(x)}\le C_h\norm{x}^{m-1}\) whenever
\(\norm{x}\le R_H/2\).
\end{lemma}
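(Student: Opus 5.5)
This follows directly from the local flatness condition \assitemref{itm:H2}, the global Lipschitz bound of Lemma~\ref{lem:N7-consequences}, and continuity. Since \(R_H/2<R_H\), every \(x\) with \(0<\norm{x}\le R_H/2\) lies in the region where \assitemref{itm:H2} applies, so \(\norm{h(x)}\le C_{\rm in}\norm{x}^{m-1}\) there. The only point not covered is \(x=0\), together with the convention for \(m=2\).

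At \(x=0\), convexity and \(0\in\arg\min H\) (from \assitemref{itm:H1}) give \(h(0)=\nabla H(0)=0\). The bound \(\norm{h(0)}\le C_h\cdot 0^{m-1}\) therefore holds trivially for \(m>1\), and \(m\ge2\) is assumed.

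The case \(m=2\) needs no special treatment. There \(\norm{x}^{m-1}=\norm{x}\), and two routes work. One is continuity: \(\norm{h(x)}\le C_{\rm in}\norm{x}\) holds for \(0<\norm{x}\le R_H\) and extends to \(x=0\). The other is independent of \assitemref{itm:H2}: Lemma~\ref{lem:N7-consequences} says \(h\) is globally \(L_{\mathsf G}/(1-\theta)\)-Lipschitz with \(h(0)=0\). Both give a linear bound.

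So \(C_h:=C_{\rm in}\) works in every case. If one prefers to avoid relying on \assitemref{itm:H2}'s upper bound near zero for \(m=2\), the alternative is \(C_h:=\max\{C_{\rm in},L_{\mathsf G}/(1-\theta)\}\). The main (and only mild) point of care is the origin, which \assitemref{itm:H2} excludes; it is resolved by \(h(0)=0\). The restriction to \(R_H/2\) rather than \(R_H\) is harmless and presumably matches the support of the cutoff \(\omega\) used later.
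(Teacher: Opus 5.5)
Your argument is correct and matches the paper's proof, which simply cites the second inequality in \assitemref{itm:H2} with $C_h=C_{\rm in}$; your explicit handling of $x=0$ via $h(0)=0$ is a harmless, slightly more careful addition. The optional alternative through Lemma~\ref{lem:N7-consequences} relies on Assumption~\ref{ass:N}\assitemref{itm:N7}, which is not among this lemma's hypotheses (only Assumption~\ref{ass:H} is), so you should drop it rather than offer it as a fallback.
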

\begin{proof}
This is the second inequality in
Assumption~\ref{ass:H}\assitemref{itm:H2}. 
\end{proof}

\begin{lemma}
\label{lem:fixed-truncated-bounds}
Assume \(m>2\), and define
\[
\begin{aligned}
G_\xi&:=\norm{g(0,\Phi(\xi,U_1))},&
\bar a_\alpha&:=\inf_{\xi\in\Xi}\E\min\{\alpha G_\xi,1\},\\
\bar a_{\alpha,p}
&:=\inf_{\xi\in\Xi}\E\bigl[\min\{(\alpha G_\xi)^p,1\}\bigr],&
&\hspace{-2em}p\in(0,1].
\end{aligned}
\]
Under Assumption~\ref{ass:N}\assitemref{itm:N6}, there exist constants
\(c_1,c_2>0\) and \(\alpha_g\in(0,1]\) such that, for
\(0<\alpha\le\alpha_g\),
\begin{align}
\bar a_\alpha&\ge c_1\alpha,
\label{eq:fixed-bara-linear}\\
\bar a_{\alpha,p}&\ge c_2\bar a_\alpha^p.
\label{eq:fixed-bara-p}
\end{align}
Moreover, under Assumption~\ref{ass:N}\assitemref{itm:N3} if \(\beta=2\),
and under \assitemref{itm:N5} if \(\beta<2\), there exists \(C_g>0\)
such that
\begin{equation}
\label{eq:fixed-bara-upper}
\bar a_\alpha\le C_g\alpha.
\end{equation}
Consequently \(\bar a_\alpha=\Theta(\alpha)\).
\end{lemma}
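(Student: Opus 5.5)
For the lower bound \eqref{eq:fixed-bara-linear} I will use the nondegeneracy assumption \assitemref{itm:N6}. For every \(\xi\in\Xi\), on the event \(\{G_\xi\ge\varepsilon_g\}\) we have \(\min\{\alpha G_\xi,1\}\ge\min\{\alpha\varepsilon_g,1\}=\alpha\varepsilon_g\) once \(\alpha\le\alpha_g:=\min\{1,\varepsilon_g^{-1}\}\). Since the integrand is nonnegative,
\[
\E\min\{\alpha G_\xi,1\}\ge \alpha\varepsilon_g\,\PP(G_\xi\ge\varepsilon_g)\ge p_g\varepsilon_g\,\alpha .
\]
Taking the infimum over \(\xi\) gives \eqref{eq:fixed-bara-linear} with \(c_1=p_g\varepsilon_g\).

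For \eqref{eq:fixed-bara-p} I will again use only the event \(\{G_\xi\ge\varepsilon_g\}\). On it, \(\min\{(\alpha G_\xi)^p,1\}\ge(\alpha\varepsilon_g)^p\), so
\[
\bar a_{\alpha,p}\ge p_g\varepsilon_g^p\alpha^p .
\]
Since \(\min\{t,1\}\le 1\) pointwise, we also have \(\bar a_\alpha\le 1\). To compare \(\alpha^p\) with \(\bar a_\alpha^p\), I need the upper bound \(\bar a_\alpha\le C_g\alpha\), which gives \(\alpha^p\ge C_g^{-p}\bar a_\alpha^p\). The constants \(C_g\) and \(p\le 1\) then combine into \(c_2=p_g\varepsilon_g^p\min\{1,C_g^{-1}\}\), which is uniform in \(p\in(0,1]\) because \(C_g^{-p}\ge\min\{1,C_g^{-1}\}\). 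So the plan is to prove \eqref{eq:fixed-bara-upper} first and then deduce \eqref{eq:fixed-bara-p}. This means \eqref{eq:fixed-bara-p} is stated under the integrability hypotheses as well. If the statement is meant to hold under \assitemref{itm:N6} alone, I would fall back on the cruder route through \(\bar a_\alpha\le1\), which only gives \(\alpha^p\ge\alpha^p\bar a_\alpha^p\) and loses a factor; so I will read it as using both assumptions.

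For the upper bound, \(\min\{\alpha G_\xi,1\}\le\alpha G_\xi\), so it suffices to bound \(\E G_\xi\) uniformly in \(\xi\in\Xi\). In the case \(\beta<2\), \assitemref{itm:N5} at \(x=0\) gives \(\sup_\xi\E e^{\lambda_0 G_\xi/2}<\infty\), and \(t\le e^{\lambda t}/\lambda\) then bounds \(\E G_\xi\) uniformly. In the case \(\beta=2\), \assitemref{itm:N3} only gives integrability at the single point \(\xi_\star\). I will transfer it to all \(\xi\) using \assitemref{itm:N2}:
\[
\norm{g(0,\Phi(\xi,u))}\le\norm{g(0,\Phi(\xi_\star,u))}+L_{g,\Phi}\norm{\xi-\xi_\star}.
\]
This yields \(\E G_\xi\le C+L_{g,\Phi}\norm{\xi-\xi_\star}\), which is not uniform over an unbounded \(\Xi\).

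This uniformity is the main obstacle. I would try the following. If \(\Xi\) is bounded, the bound above is already uniform and we are done. Otherwise, I would note that the infimum in \(\bar a_\alpha\) only needs \(\bar a_\alpha\le\E\min\{\alpha G_{\xi_\star},1\}\le\alpha\,\E G_{\xi_\star}\), because an infimum is at most its value at \(\xi=\xi_\star\). This gives \eqref{eq:fixed-bara-upper} with \(C_g=\E G_{\xi_\star}\) in both tail cases, with no uniformity required. Combining this with \eqref{eq:fixed-bara-linear} gives \(\bar a_\alpha=\Theta(\alpha)\).
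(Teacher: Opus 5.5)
Your proposal is correct and is essentially the paper's proof. It uses the same lower bounds on the event \(\{G_\xi\ge\varepsilon_g\}\), the same single-point estimate \(\bar a_\alpha\le\E\min\{\alpha G_{\xi_\star},1\}\le\alpha\,\E G_{\xi_\star}\) (with \(\E G_{\xi_\star}<\infty\) from \assitemref{itm:N3} or \assitemref{itm:N5}), and the same use of this upper bound to turn \(\alpha^p\) into \(\bar a_\alpha^p\) for \eqref{eq:fixed-bara-p}. You are right that \eqref{eq:fixed-bara-p} needs the integrability hypotheses and not only \assitemref{itm:N6} (the paper's proof uses them as well), and, as you eventually note, the detour through uniform-in-\(\xi\) bounds and \assitemref{itm:N2} is unnecessary.
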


\begin{proof}
Assumption~\ref{ass:N}\assitemref{itm:N6} gives
\(\inf_{\xi\in\Xi}\PP(G_\xi\ge\varepsilon_g)\ge p_g\).  For
\(0<\alpha\le\varepsilon_g^{-1}\),
\(\min\{\alpha G_\xi,1\}\ge
\alpha\varepsilon_g\1_{\{G_\xi\ge\varepsilon_g\}}\), and therefore
\(\bar a_\alpha\ge\alpha\varepsilon_gp_g\), proving
\eqref{eq:fixed-bara-linear}.  Similarly,
\(\min\{(\alpha G_\xi)^p,1\}\ge
(\alpha\varepsilon_g)^p\1_{\{G_\xi\ge\varepsilon_g\}}\), so
\(\bar a_{\alpha,p}\ge\alpha^p\varepsilon_g^pp_g\).

It remains to compare \(\alpha^p\) with \(\bar a_\alpha^p\).  Let
\(\xi_\star\) be the reference point in
Assumption~\ref{ass:N}\assitemref{itm:N3}.  If \(\beta=2\), then
\(\E G_{\xi_\star}<\infty\) by \assitemref{itm:N3}; if \(\beta<2\),
Assumption~\ref{ass:N}\assitemref{itm:N5} at \(x=0\) and
\(\xi=\xi_\star\) gives the same conclusion.  Thus
\(\bar a_\alpha\le\E\min(\alpha G_{\xi_\star},1)
\le\alpha\E G_{\xi_\star}\).  Combining this with the previous lower
bound on \(\bar a_{\alpha,p}\) proves \eqref{eq:fixed-bara-p}; the same
inequality gives \eqref{eq:fixed-bara-upper}.  Together with
\eqref{eq:fixed-bara-linear}, this proves
\(\bar a_\alpha=\Theta(\alpha)\). 
\end{proof}

\begin{lemma}
\label{lem:fixed-elementary-estimates}
The following estimates hold.
\begin{enumerate}[label=\textup{(\roman*)}, leftmargin=2.5em]
\item Let \(Z\) be an \(\R^d\)-valued random vector with \(\E Z=0\) and
\(\E e^{\lambda\norm Z}\le M\) for some \(\lambda>0\).  Then there exist
\(t_0\in(0,\lambda/2]\) and \(C>0\) such that
\(\E e^{t\langle u,Z\rangle}\le e^{Ct^2}\) for every unit vector \(u\)
and every \(|t|\le t_0\).
\item If \(Y\ge0\) and \(\E e^{\lambda Y}\le M\), then
\(\E e^{tY}\le1+(2M/\lambda)t\) for \(0\le t\le\lambda/2\).
\item Suppose \(1\le\beta<2\).  There exists \(C_\beta>0\) such that,
whenever \(y\ne0\) and \(\norm z\le\norm y/2\),
\[
\norm{y-z}^{2-\beta}
\le \norm y^{2-\beta}-(2-\beta)\norm y^{1-\beta}
\left\langle\frac{y}{\norm y},z\right\rangle
+C_\beta\norm y^{-\beta}\norm z^2.
\]
\end{enumerate}
\end{lemma}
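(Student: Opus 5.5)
We prove the three items separately; each is a standard elementary estimate.

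\emph{(i)} We Taylor-expand the exponential and use the centering. For a unit vector \(u\), put \(W:=\langle u,Z\rangle\), so that \(\abs{W}\le\norm Z\). Since \(e^{s}\le 1+s+\tfrac{s^2}{2}e^{\abs s}\), we get
\[
\E e^{tW}\le 1+t\,\E W+\tfrac{t^2}{2}\E\bigl[W^2e^{\abs t\abs W}\bigr]
=1+\tfrac{t^2}{2}\E\bigl[\norm Z^2e^{\abs t\norm Z}\bigr].
\]
Take \(t_0=\lambda/2\). For \(\abs t\le t_0\) we use \(\norm Z^2\le (8/\lambda^2)e^{\lambda\norm Z/2}\), which follows from \(s^2\le 2(2/\lambda)^2e^{\lambda s/2}\). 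This gives \(\E[\norm Z^2e^{\abs t\norm Z}]\le (8/\lambda^2)M\). Hence \(\E e^{tW}\le1+Ct^2\le e^{Ct^2}\) with \(C=4M/\lambda^2\), uniformly in \(u\).

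\emph{(ii)} We use convexity of \(t\mapsto e^{tY}\) together with the mean value theorem. For \(0\le t\le\lambda/2\),
\[
e^{tY}-1=\int_0^t Ye^{sY}\,ds\le tYe^{\lambda Y/2}.
\]
Moreover \(Ye^{\lambda Y/2}\le (2/\lambda)e^{\lambda Y}\), since \(y\le(2/\lambda)e^{\lambda y/2}\) (indeed \(e^{x}\ge x\) with \(x=\lambda y/2\)). Taking expectations yields \(\E e^{tY}\le1+(2M/\lambda)t\).

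\emph{(iii)} We apply a second-order Taylor expansion of \(\phi(w):=\norm w^{2-\beta}\) along the segment \(w_s=y-sz\), \(s\in[0,1\)]. Since \(\norm z\le\norm y/2\), every point satisfies \(\norm{w_s}\ge\norm y/2>0\), so \(\phi\) is smooth on the segment. The gradient is \(\nabla\phi(w)=(2-\beta)\norm w^{-\beta}w\). At \(w=y\) this gives the linear term \(-(2-\beta)\norm y^{1-\beta}\langle y/\norm y,z\rangle\).

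The Hessian is
\[
\nabla^2\phi(w)=(2-\beta)\norm w^{-\beta}\bigl(I-\beta\,\hat w\hat w^\top\bigr),
\]
whose operator norm is at most \((2-\beta)\max\{1,\beta-1\}\norm w^{-\beta}\le 2\norm w^{-\beta}\). On the segment we have \(\norm{w_s}^{-\beta}\le 2^{\beta}\norm y^{-\beta}\). The Taylor remainder is therefore at most \(\tfrac12\cdot 2\cdot2^\beta\norm y^{-\beta}\norm z^2\), which gives \(C_\beta=2^\beta\) (or \(4\)).

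The only point needing care is keeping the segment away from the origin in (iii), and the hypothesis \(\norm z\le\norm y/2\) is exactly what secures this. There is no real obstacle.
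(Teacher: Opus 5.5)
Your proposal is correct and follows the paper's proof essentially step for step. For (i) you use the second-order exponential Taylor bound plus a polynomial-versus-exponential comparison; for (ii) the estimate \(e^{tY}-1\le tYe^{\lambda Y/2}\) together with \(ye^{\lambda y/2}\le(2/\lambda)e^{\lambda y}\); and for (iii) a second-order Taylor expansion of \(\norm{w}^{2-\beta}\) along \(y-sz\), which stays at distance at least \(\norm y/2\) from the origin. The only difference is that you make the constants explicit, namely \(C=4M/\lambda^2\) in (i) and \(C_\beta=2^\beta\) in (iii) via the Hessian \((2-\beta)\norm w^{-\beta}(I-\beta\hat w\hat w^\top)\), where the paper states them less sharply or leaves them implicit.
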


\begin{proof}
For (i), set \(X=\langle u,Z\rangle\), so \(\E X=0\) and
\(|X|\le\norm Z\).  For \(|t|\le\lambda/2\),
\[
\begin{aligned}
 e^{tX}&\le 1+tX+\frac{t^2\norm Z^2}{2}e^{|t|\norm Z},&
r^2e^{(\lambda/2)r}&\le \frac{16}{\lambda^2}e^{\lambda r},\\
 \E e^{tX}&\le 1+\frac{8M}{\lambda^2}t^2
 \le \exp\!\left(\frac{8M}{\lambda^2}t^2\right).
\end{aligned}
\]
For (ii), \(e^{tY}-1\le tYe^{tY}\le tYe^{(\lambda/2)Y}\) and
\(re^{(\lambda/2)r}\le(2/\lambda)e^{\lambda r}\), so
\(\E e^{tY}\le1+(2M/\lambda)t\).
For (iii), apply Taylor's theorem to
\(\psi(w)=\norm{w}^{2-\beta}\) on \(\R^d\setminus\{0\}\).
Along \(y-sz\), \(s\in[0,1]\), the condition
\(\norm z\le\norm y/2\) gives \(\norm{y-sz}\ge\norm y/2\); since
\(\opnorm{\nabla^2\psi(w)}\le C_\beta\norm w^{-\beta}\), the claimed
inequality follows.
\end{proof}

\subsection{Proof of Lemma~\ref{lem:fixed-conditional-drift}}
\label{subsec:fixed-regional-drift}

For fixed \(\xi\), write
\[
        \xi^+:=\Phi(\xi,U_1),
        \qquad
        f_{\xi^+}(x)=x-\alpha\{h(x)+g(x,\xi^+)\}.
\]
For \(e\in\Sph^{d-1}\), set
\(
        \mathcal D_{\xi,\alpha}(x,e)
        :=
        \norm{(I_d-\alpha A(x,\xi^+))e}.
\)
By \eqref{eq:samplewise-nonexpansive},
\begin{equation}
\label{eq:fixed-directional-nonexpansive}
        \mathcal D_{\xi,\alpha}(x,e)\le1,
        \qquad e\in\Sph^{d-1}.
\end{equation}
Moreover, by \eqref{eq:averaged-directional-contraction},
\begin{equation}
\label{eq:fixed-directional-basic}
        \E\mathcal D_{\xi,\alpha}(x,e)
        \le
        1-\frac{\alpha(1-\theta)}{2}
        \ip{e}{\nabla^2H(x)e},
        \qquad e\in\Sph^{d-1}.
\end{equation}

We first record two weight estimates used below.

\begin{lemma}
\label{lem:compact-weight-increment}
There exist constants \(C,c>0\), independent of
\(\alpha,x,\xi,\kappa,\kappa_0\), such that, for all sufficiently small
\(\alpha\), the following hold.  If \(\norm{x}\le \frac{R_H}{2}\), then
\begin{equation}
\label{eq:core-overshoot-restored}
        \E\bigl[(\mathcal T_\beta(f_{\xi^+}(x))-1)_+\bigr]
        \le
        C e^{-c/\alpha},
\end{equation}
with the left-hand side equal to zero when \(\beta=2\).  If \(\norm{x}\le R_H\),
then, with the convention that \(\kappa=0\) when \(\beta=2\),
\begin{equation}
\label{eq:compact-positive-increment}
        \E\bigl[(V_\alpha(f_{\xi^+}(x))-V_\alpha(x))_+\bigr]
        \le
        C(\kappa+\kappa_0)\alpha V_\alpha(x),
\end{equation}
and consequently
\begin{equation}
\label{eq:compact-growth}
        \E V_\alpha(f_{\xi^+}(x))
        \le
        \bigl(1+C(\kappa+\kappa_0)\alpha\bigr)V_\alpha(x).
\end{equation}
\end{lemma}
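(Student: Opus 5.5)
The statement to prove is Lemma~\ref{lem:compact-weight-increment}. The plan has three steps: bound the one-step displacement of the iterate, use it to show the tail weight is essentially inactive when $\norm{x}\le R_H/2$, and then bound the positive increments of the two summands of $V_\alpha$ on $\norm{x}\le R_H$.

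\emph{Displacement bound.} Write $\Delta:=f_{\xi^+}(x)-x=-\alpha\{h(x)+g(x,\xi^+)\}$. For $\norm{x}\le R_H$, Lemma~\ref{lem:N7-consequences} gives $\norm{h(x)}\le C$. It also shows $g(\cdot,\xi^+)$ is globally Lipschitz uniformly in $\xi$, so $\norm{g(x,\xi^+)}\le \norm{g(0,\xi^+)}+C$. Hence $\norm{\Delta}\le \alpha(C+G_\xi)$, with $G_\xi=\norm{g(0,\Phi(\xi,U_1))}$ as in Lemma~\ref{lem:fixed-truncated-bounds}.
\begin{itemize}
\item When $\beta<2$, \assitemref{itm:N5} at $x=0$ gives the uniform exponential moment $\sup_\xi\E e^{\lambda_0 G_\xi}<\infty$.
\item When $\beta=2$, only integrability is needed, and $\mathcal T_\beta\equiv1$, so \eqref{eq:core-overshoot-restored} is trivial.
\end{itemize}

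\emph{Proof of \eqref{eq:core-overshoot-restored}.} Take $\beta<2$ and $\norm{x}\le R_H/2$. Then $\mathcal T_\beta(f_{\xi^+}(x))>1$ requires $\norm{\Delta}\ge R_H/4$, which forces $G_\xi\ge c/\alpha$ for small $\alpha$. On that event $\mathcal T_\beta(f_{\xi^+}(x))\le \exp\{\kappa(R_H+\alpha(C+G_\xi))^{2-\beta}\}\le C e^{\kappa' G_\xi}$ with $\kappa'$ small, since $2-\beta\le1$ and $\kappa$ is small. Cauchy--Schwarz, combined with the Chernoff bound $\PP(G_\xi\ge c/\alpha)\le Ce^{-\lambda_0 c/\alpha}$, then gives $Ce^{-c/\alpha}$. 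This needs $\kappa\le\lambda_0/4$, say, which is consistent with choosing $\kappa$ small.

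\emph{Proof of \eqref{eq:compact-positive-increment}.} Take $\norm{x}\le R_H$ and split $(V_\alpha(f)-V_\alpha(x))_+\le(\mathcal T_\beta(f)-\mathcal T_\beta(x))_++\delta_\alpha(\omega(f)-\omega(x))_+$.
\begin{itemize}
\item \emph{The $\omega$ term.} $\omega$ is Lipschitz when $m>3$. When $2<m\le3$ it is Hölder of order $m-2$, since $t\mapsto 1-t^{m-2}$ is concave. So $\delta_\alpha(\omega(f)-\omega(x))_+\le C\kappa_0\alpha^{m-2}\norm{\Delta}$ when $m>3$. Because $\alpha^{m-2}\le\alpha$ for $m\ge3$, and $\E\norm\Delta\le C\alpha$, this is $\le C\kappa_0\alpha^2$. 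When $2<m\le3$ the term is $\le C\kappa_0\alpha\,\norm{\Delta}^{m-2}$, and $\E\norm\Delta^{m-2}\le C\alpha^{m-2}\le C$ by Jensen. Either way the bound is $\le C\kappa_0\alpha\le C\kappa_0\alpha V_\alpha(x)$, using $V_\alpha\ge1$.
\item \emph{The $\mathcal T_\beta$ term.} The map $r\mapsto \kappa(r^{2-\beta}-(3R_H/4)^{2-\beta})_+$ has derivative $\le C\kappa$ on $r\ge 3R_H/4$, and it vanishes below that radius. So $\log\mathcal T_\beta$ is $C\kappa$-Lipschitz everywhere; in particular there is no singularity at the origin. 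Therefore $(\mathcal T_\beta(f)-\mathcal T_\beta(x))_+\le \mathcal T_\beta(x)(e^{C\kappa\norm\Delta}-1)$. Lemma~\ref{lem:fixed-elementary-estimates}(ii), applied to $Y=\norm\Delta/\alpha$ (which has a uniform exponential moment) with $t=C\kappa\alpha\le\lambda_0/2$, bounds the expectation by $\mathcal T_\beta(x)\,C\kappa\alpha$.
\end{itemize}
Summing the two pieces and using $\mathcal T_\beta\le V_\alpha$ yields \eqref{eq:compact-positive-increment}. Then \eqref{eq:compact-growth} follows at once from $\E V_\alpha(f)\le V_\alpha(x)+\E(V_\alpha(f)-V_\alpha(x))_+$.

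\emph{Main obstacle.} I expect the delicate point to be the $\omega$-increment for $2<m\le3$. There the Hölder modulus only gives order $\alpha\cdot\E\norm\Delta^{m-2}$, so the prefactor $\delta_\alpha=\kappa_0\alpha$ must be exactly what keeps the bound at order $\alpha$. I would double-check that $\E\norm\Delta^{m-2}$ is uniformly bounded in $\xi$ using only the first-moment control on $G_\xi$, which Jensen supplies.
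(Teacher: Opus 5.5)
\paragraph{Gap: the $\omega$-term when $\beta=2$.} When $\beta=2$ and $m>2$ your proof does not give a constant independent of $\xi$. You bound $\delta_\alpha(\omega(f_{\xi^+}(x))-\omega(x))_+$ via the H\"older modulus of $\omega$ and then use $\E\norm{\Delta}\le C\alpha$, or its Jensen form for $\E\norm{\Delta}^{m-2}$. That step needs $\sup_{\xi\in\Xi}\E\norm{g(0,\Phi(\xi,U_1))}<\infty$.

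\begin{itemize}
\item For $\beta<2$ this supremum is finite by \assitemref{itm:N5} at $x=0$.
\item For $\beta=2$, \assitemref{itm:N5} is not imposed, and \assitemref{itm:N3} controls this moment only at $\xi_\star$. Then \assitemref{itm:N2} gives only $\E\norm{g(0,\Phi(\xi,U_1))}\le\E\norm{g(0,\Phi(\xi_\star,U_1))}+L_{g,\Phi}\norm{\xi-\xi_\star}$, which can be unbounded in $\xi$. For example, take $d=d_\Xi=1$, $\Phi(\xi,u)=\xi/2+u$ with $u$ Gaussian, and $g(x,\xi)=\xi$; all the $\beta=2$ hypotheses hold.
\end{itemize}

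Your remark that ``only integrability is needed'' when $\beta=2$ is exactly where this slips, and Jensen cannot repair it. The resulting constant would depend on $\xi$. The lemma must be uniform in $\xi$, because it feeds the $\xi$-uniform contraction factors in Lemma~\ref{lem:fixed-bridge} and, through them, the path argument of Lemma~\ref{lem:aug-lipschitz}.

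\paragraph{The fix.} The paper's one-line bound repairs this and also removes what you called the main obstacle. Since $0\le\omega\le1$, one has $\delta_\alpha(\omega(f_{\xi^+}(x))-\omega(x))_+\le\delta_\alpha\le\kappa_0\alpha$ for every $m\ge2$, because $\kappa_0\alpha^{m-2}\le\kappa_0\alpha$ when $m>3$. This uses no moment and no H\"older input. The H\"older continuity of $\omega$ is needed only in Lemma~\ref{lem:fixed-core}, where one must show that $\omega$ actually decreases.

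\paragraph{The rest of your proposal.} The remainder is sound, and your overshoot estimate is the paper's argument in substance.

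For the $\mathcal T_\beta$-increment you take a cleaner route than the paper. Since $\beta\ge1$, the radial derivative $\kappa(2-\beta)r^{1-\beta}$ is largest at $r=3R_H/4$, so $\log\mathcal T_\beta$ is globally $C\kappa$-Lipschitz. Then $\mathcal T_\beta(f_{\xi^+}(x))\le\mathcal T_\beta(x)e^{C\kappa\norm{\Delta}}$, and Lemma~\ref{lem:fixed-elementary-estimates}(ii) gives $C\kappa\alpha\,\mathcal T_\beta(x)$ in one step.

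The paper instead splits on $\{\norm{g(x,\xi^+)}\le\alpha^{-1/2}\}$. On that event it uses a local Lipschitz bound for $\mathcal T_\beta$ on a compact enlargement of the ball. On the complement it uses the exponential tail from \assitemref{itm:N5}. This yields an additive $C\kappa\alpha$.

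Your multiplicative form avoids the split. It is essentially the comparison the paper itself uses later for the far region in Lemma~\ref{lem:global-weight-positive-increment}.
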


\begin{proof}
The case \(\beta=2\) has \(\mathcal T_\beta\equiv1\), so
\eqref{eq:core-overshoot-restored} is immediate.  Suppose throughout the rest
of the proof that \(1\le\beta<2\).  Since \(h\) is
continuous, it is bounded on the compact set \(\{\norm{x}\le R_H\}\).  Also
\(1+\norm{x}^{\beta-1}\) is bounded on this compact set, so Assumption~\ref{ass:N}\assitemref{itm:N5}
implies that, for some \(\lambda_K,M_K>0\),
\begin{equation}
\label{eq:compact-g-exp-moment}
        \sup_{\norm{x}\le R_H}\sup_{\xi\in\Xi}
        \E\exp\{\lambda_K\norm{g(x,\xi^+)}\}\le M_K.
\end{equation}

First assume \(\norm{x}\le \frac{R_H}{2}\).  If \(\mathcal T_\beta(f_{\xi^+}(x))>1\), then
\(\norm{f_{\xi^+}(x)}>\frac{3R_H}{4}\).  Since \(\frac{3R_H}{4}>\frac{R_H}{2}\) and
\(\norm{h(x)}\le C\), this implies, after decreasing the stepsize threshold,
\(\alpha\norm{g(x,\xi^+)}\ge c\).  Moreover,
\[
        \bigl(\norm{f_{\xi^+}(x)}^{2-\beta}-(\tfrac{3R_H}{4})^{2-\beta}\bigr)_+
        \le C\{1+\alpha\norm{g(x,\xi^+)}\},
\]
because \(2-\beta\le1\).  Taking \(\kappa\) small enough that
\(C\kappa\alpha\le \lambda_K/2\), \eqref{eq:compact-g-exp-moment} gives
\[
\begin{aligned}
        \E\bigl[(\mathcal T_\beta(f_{\xi^+}(x))-1)_+\bigr]
        &\le
        C\E\!\bigl[
        e^{C\kappa\alpha\norm{g(x,\xi^+)}}
        \1_{\{\norm{g(x,\xi^+)}\ge c/\alpha\}}
        \bigr]  \\
        &\le C e^{-c/\alpha},
\end{aligned}
\]
which proves \eqref{eq:core-overshoot-restored}.

It remains to prove \eqref{eq:compact-positive-increment}.  Fix
\(\norm{x}\le R_H\) and set \(G=\norm{g(x,\xi^+)}\).  Split according to
\(B_\alpha:=\{G\le \alpha^{-1/2}\}\).  On \(B_\alpha\),
\(
        \norm{f_{\xi^+}(x)-x}\le C\alpha(1+G)
        \le C\alpha^{1/2},
\)
so both \(x\) and \(f_{\xi^+}(x)\) lie in a fixed compact enlargement of
\(\{\norm{y}\le R_H\}\).  On this enlargement, \(\mathcal T_\beta\) has Lipschitz
constant at most \(C\kappa\), for \(0<\kappa\le1\).  Hence
\[
\begin{aligned}
        \E\bigl[(\mathcal T_\beta(f_{\xi^+}(x))-\mathcal T_\beta(x))_+\1_{B_\alpha}\bigr]
        &\le
        C\kappa\alpha\E(1+G)
        \le C\kappa\alpha.
\end{aligned}
\]
On \(B_\alpha^c\), use
\(e^a-e^b\le (a-b)_+e^a\) for \(a\ge b\), the bound
\(\bigl(\norm{f_{\xi^+}(x)}^{2-\beta}-(\tfrac{3R_H}{4})^{2-\beta}\bigr)_+\le C\{1+(\alpha G)^{2-\beta}\}\), and
\eqref{eq:compact-g-exp-moment}.  Decreasing \(\alpha\) if necessary,
\[
\begin{aligned}
&\E\bigl[(\mathcal T_\beta(f_{\xi^+}(x))-\mathcal T_\beta(x))_+\1_{B_\alpha^c}\bigr]  \\
&\qquad\le
C\kappa\E\!\bigl[(1+(\alpha G)^{2-\beta})
        e^{C\kappa(1+(\alpha G)^{2-\beta})}
        \1_{\{G>\alpha^{-1/2}\}}\bigr]
\le C\kappa e^{-c\alpha^{-1/2}}
\le C\kappa\alpha.
\end{aligned}
\]
Therefore
\begin{equation}
\label{eq:compact-U-positive-increment-expanded}
        \E\bigl[(\mathcal T_\beta(f_{\xi^+}(x))-\mathcal T_\beta(x))_+\bigr]
        \le C\kappa\alpha.
\end{equation}
The \(\omega\)-part satisfies \(0\le\omega\le1\), and hence
\[
        \delta_\alpha(\omega(f_{\xi^+}(x))-\omega(x))_+
        \le \delta_\alpha\le C\kappa_0\alpha.
\]
Since \(V_\alpha(x)\ge1\), \eqref{eq:compact-positive-increment} follows from
\eqref{eq:compact-U-positive-increment-expanded}.  Finally,
\eqref{eq:compact-growth} follows from
\(V_\alpha(f)\le V_\alpha(x)+(V_\alpha(f)-V_\alpha(x))_+\).
\end{proof}

\begin{lemma}
\label{lem:subquadratic-tail-decrease}
Assume \(1\le\beta<2\).  There exist \(c>0\) and
\(\alpha_{\rm tail}\in(0,1]\) such that, for all
\(0<\alpha\le\alpha_{\rm tail}\), all \(\xi\in\Xi\), and all
\(\norm{x}\ge R_H\),
\begin{equation}
\label{eq:far-U-beta-decrease-new}
        \E \mathcal T_\beta(f_{\xi^+}(x))
        \le
        (1-c\alpha)\mathcal T_\beta(x).
\end{equation}
\end{lemma}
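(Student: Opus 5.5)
The plan is a standard exponential Lyapunov drift computation for the function \(\phi(y):=\kappa\norm{y}^{2-\beta}\), split according to whether the step is small relative to \(\norm{x}\). Write \(f=f_{\xi^+}(x)=x-\alpha z\) with \(z:=h(x)+g(x,\xi^+)\). For \(\norm{x}\ge R_H\) and \(\alpha\) small, the exponent in \(\mathcal T_\beta(x)\) is active, so \(\mathcal T_\beta(x)=e^{\kappa(\norm{x}^{2-\beta}-r_0)}\) with \(r_0:=(3R_H/4)^{2-\beta}\). Since \(\mathcal T_\beta(f)\le e^{\kappa(\norm{f}^{2-\beta}-r_0)}\) always holds (because the positive part only matters when the exponent is negative, in which case \(\mathcal T_\beta(f)=1\le e^{\kappa(\cdots)}\) fails). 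To avoid this issue, I will instead use \(\mathcal T_\beta(f)\le \max\{1,e^{\kappa(\norm f^{2-\beta}-r_0)}\}\le e^{\kappa(\norm f^{2-\beta}-r_0)}+1\cdot\1_{\{\norm f\le 3R_H/4\}}\). The indicator event forces \(\alpha\norm{z}\ge R_H/4\), and its probability is at most \(Ce^{-c/\alpha}\cdot\) (something controlled by \assitemref{itm:N5}, as below). It therefore suffices to show
\[
\E\exp\{\kappa(\norm{f}^{2-\beta}-\norm{x}^{2-\beta})\}\le 1-2c\alpha ,
\]
with the bad-event contributions absorbed.

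\textbf{Good event.} Let \(B:=\{\alpha\norm z\le \norm x/2\}\). On \(B\), Lemma~\ref{lem:fixed-elementary-estimates}(iii) with \(y=x\) and \(z\) replaced by \(\alpha z\) gives
\[
\norm f^{2-\beta}-\norm x^{2-\beta}\le -(2-\beta)\alpha\norm x^{1-\beta}\ip{x/\norm x}{z}+C_\beta\alpha^2\norm x^{-\beta}\norm z^2=:\Delta .
\]
Now \(\norm x^{1-\beta}\norm z\le \norm x^{1-\beta}(C_{\rm out}\norm x^{\beta-1}+\norm{g})\) by \assitemref{itm:H3b}. Hence \(\Delta\) equals \(-\alpha(2-\beta)\norm x^{-\beta}\ip{x}{h+\bar g}\) plus a conditionally centered term \(\alpha W\), plus a quadratic term, where \(W\) has a uniform exponential moment by \assitemref{itm:N5}, because \(\norm{g}/\norm x^{\beta-1}\lesssim\norm g/(1+\norm x^{\beta-1})\) for \(\norm x\ge R_H\). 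Here I use \(\norm x^{1-\beta}\le \norm x^{1-\beta}\); the quadratic term is at most \(\alpha^2\norm x^{2-2\beta}\cdot\norm x^{-\beta}\cdots\), which is \(O(\alpha^2)\) times a variable with an exponential moment, since \(\norm x^{-\beta}\norm z^2\le C\norm x^{\beta-2}(1+\norm g^2/\norm x^{2\beta-2})\), bounded for \(\norm x\ge R_H\). By \assitemref{itm:N4}, the mean part is at most \(-\alpha(2-\beta)c_{\rm diss}\). Expanding \(e^{\kappa\Delta}\le 1+\kappa\Delta+\kappa^2\Delta^2e^{\kappa|\Delta|}\) and using Lemma~\ref{lem:fixed-elementary-estimates}(i),(ii), the expectation of \(e^{\kappa\Delta}\) is at most \(1-\kappa\alpha(2-\beta)c_{\rm diss}+C\kappa^2\alpha^2\). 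This is at most \(1-3c\alpha\) for small \(\alpha\) with \(\kappa\le\lambda_0/C\) fixed.

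\textbf{Bad event.} On \(B^c\), \(\alpha\norm z>\norm x/2\ge R_H/2\). Since \(\alpha\norm h\le C\alpha\norm x^{\beta-1}\ll\norm x\), this forces \(\norm g\ge c\norm x/\alpha\ge c\alpha^{-1}(1+\norm x^{\beta-1})\). There I bound \(\norm f^{2-\beta}\le\norm x^{2-\beta}+(\alpha\norm z)^{2-\beta}\) (subadditivity of \(t^{2-\beta}\)), so the contribution is at most
\[
\E\bigl[e^{\kappa(\alpha\norm z)^{2-\beta}}\1_{B^c}\bigr]\le \E\bigl[e^{C\kappa\alpha\norm g/(1+\norm x^{\beta-1})\cdot(1+\norm x^{\beta-1})^{?}}\cdots\bigr].
\]
This is the step I expect to be the main obstacle: making the exponent comparable to \(\lambda_0\norm g/(1+\norm x^{\beta-1})\) uniformly in large \(\norm x\). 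I would use \((\alpha\norm z)^{2-\beta}\le \alpha\norm z\cdot(\alpha\norm z)^{1-\beta}\le \alpha\norm z(\norm x/2)^{1-\beta}\) on \(B^c\). This turns the exponent into \(C\kappa\alpha\norm g/\norm x^{\beta-1}+C\kappa\alpha\), controlled by \assitemref{itm:N5} once \(\kappa\alpha\le\lambda_0/4\). Cauchy--Schwarz together with the Chernoff bound \(\PP(\norm g/(1+\norm x^{\beta-1})\ge c/\alpha)\le Ce^{-c\lambda_0/\alpha}\) then gives \(Ce^{-c/\alpha}\le c\alpha\). The same Chernoff bound handles the indicator \(\{\norm f\le 3R_H/4\}\), whose weight relative to \(\mathcal T_\beta(x)\ge1\) is \(Ce^{-c/\alpha}\). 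Summing the three contributions yields \eqref{eq:far-U-beta-decrease-new}.
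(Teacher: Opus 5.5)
Your proposal is correct in outline and has the same skeleton as the paper: a Taylor expansion of \(\norm{\cdot}^{2-\beta}\) via Lemma~\ref{lem:fixed-elementary-estimates}(iii) on a good event, \assitemref{itm:N4} for the drift, \assitemref{itm:N5} for the fluctuation, and a crude exponential bound plus a Chernoff estimate on the complement. The difference is the good event.

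The paper truncates the noise: \(E_x=\{N_x\le\alpha^{-\eta_0}\}\), with \(N_x=\norm{g(x,\xi^+)}/(1+\norm{x}^{\beta-1})\) and \(\eta_0<1/2\). On \(E_x\) the step is at most \(\varepsilon_*\norm{x}\), so the cutoff in \(\mathcal T_\beta\) is inactive and no extra indicator is needed. The Taylor remainder there is deterministically \(O(\alpha^{2-2\eta_0})=o(\alpha)\), so the exponential factorizes into a deterministic drift factor times a centered term, handled by Lemma~\ref{lem:fixed-elementary-estimates}(i). On \(E_x^c\) the paper uses the concavity bound \(\norm{f}^{2-\beta}-\norm{x}^{2-\beta}\le(2-\beta)\norm{x}^{1-\beta}\alpha\norm{z}\le C\alpha(1+N_x)\), which holds for every realization, and pays \(e^{-c\alpha^{-\eta_0}}\).

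Your geometric event \(B=\{\alpha\norm{z}\le\norm{x}/2\}\) is the natural domain of Lemma~\ref{lem:fixed-elementary-estimates}(iii) and gives the sharper \(\PP(B^c)\le Ce^{-c/\alpha}\). Your bound \((\alpha\norm{z})^{1-\beta}\le(\norm{x}/2)^{1-\beta}\) on \(B^c\) recovers the same exponent \(C\kappa\alpha(1+N_x)\). The price is that the remainder on \(B\) is random, and that needs more care than you gave it.

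You bound the quadratic term by \(C\alpha^2(1+N_x^2)\) and call \(1+N_x^2\) a variable with an exponential moment. That is false in general: \assitemref{itm:N5} gives exponential moments of \(N_x\), not of \(N_x^2\). With that bound, \(\E[\Delta^2e^{\kappa\abs{\Delta}}]\) may be infinite. The repair is to use the constraint that defines \(B\). There \(C_\beta\alpha^2\norm{x}^{-\beta}\norm{z}^2\le\tfrac{C_\beta}{2}\alpha\norm{x}^{1-\beta}\norm{z}\le C\alpha(1+N_x)\). Hence \(\abs{\Delta}\1_B\le C\alpha(1+N_x)\), and \(\E[\Delta^2e^{\kappa\abs{\Delta}}\1_B]\le C\alpha^2\) once \(C\kappa\alpha\le\lambda_0/2\). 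In the linear term you can keep \(C\alpha^2(1+N_x)^2\), whose expectation is \(O(\alpha^2)\).

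Relatedly, you must keep \(\1_B\) throughout, since the unrestricted \(\E e^{\kappa\Delta}\) need not be finite. This destroys exact centering of the noise term \(\alpha W\). The error \(\alpha\abs{\E[W\1_{B^c}]}\le\alpha(\E W^2)^{1/2}\PP(B^c)^{1/2}\) is exponentially small, so this is harmless.

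For the cutoff indicator, use \(\{\norm{f}\le 3R_H/4\}\subset\{\alpha\norm{z}\ge\norm{x}/4\}\), not \(\alpha\norm{z}\ge R_H/4\). When \(\beta>1\), \(\alpha\norm{h(x)}\) alone can exceed \(R_H/4\) for large \(\norm{x}\). So the weaker inclusion does not force \(N_x\ge c/\alpha\) uniformly, and your Chernoff bound would not apply.

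With these repairs your argument proves the lemma.
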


\begin{proof}
Let \(r=\norm{x}\), \(G=g(x,\xi^+)\), and
\(\zeta=G-\bar g(x,\xi)\).  Then
\(\E[\zeta\mid\xi]=0\) and
\(
        f_{\xi^+}(x)=x-\alpha\{h(x)+\bar g(x,\xi)+\zeta\}.
\)
By Assumption~\ref{ass:N}\assitemref{itm:N5},
\(\norm{\bar g(x,\xi)}\le C(1+\norm{x}^{\beta-1})\),
uniformly in \((x,\xi)\).  Fix \(\eta_0\in(0,1/2)\), put
\[
        N_x:=\frac{\norm{g(x,\xi^+)}}{1+\norm{x}^{\beta-1}},
        \qquad E_x:=\{N_x\le\alpha^{-\eta_0}\}.
\]
On \(E_x\), for sufficiently small
\(\alpha\),
\[
        \alpha\norm{h(x)+G}
        \le C\alpha^{1-\eta_0}r^{\beta-1}
        \le \varepsilon_* r,
        \qquad r\ge R_H,
\]
with \(\varepsilon_*>0\) chosen so that the segment from \(x\) to
\(f_{\xi^+}(x)\) remains in the tail region where the cutoff in \(\mathcal T_\beta\) is
inactive.  Lemma~\ref{lem:fixed-elementary-estimates}(iii), applied to
\(z=\alpha\{h(x)+G\}\), gives on \(E_x\)
\[
\begin{aligned}
        \norm{f_{\xi^+}(x)}^{2-\beta}
        &\le
        r^{2-\beta}
        -(2-\beta)\alpha r^{-\beta}\ip{x}{h(x)+\bar g(x,\xi)}
        -(2-\beta)\alpha r^{-\beta}\ip{x}{\zeta}
        +C\alpha^{2-2\eta_0}.
\end{aligned}
\]
By the tail dissipativity condition \assitemref{itm:N4},
\(r^{-\beta}\ip{x}{h(x)+\bar g(x,\xi)}\ge c_{\rm diss}\).  Therefore,
conditionally on \(\xi\),
\[
        \mathcal T_\beta(f_{\xi^+}(x))\1_{E_x}
        \le
        \mathcal T_\beta(x)
        \exp\{-\kappa(2-\beta)c_{\rm diss}\alpha
        -\kappa(2-\beta)\alpha r^{-\beta}\ip{x}{\zeta}
        +C\alpha^{2-2\eta_0}\}.
\]
Writing \(e=x/r\), the centered term is
\[
        r^{-\beta}\ip{x}{\zeta}
        =r^{1-\beta}(1+\norm{x}^{\beta-1})
        \ip{e}{
        \frac{g(x,\xi^+)}{1+\norm{x}^{\beta-1}}
        -\E\!\left[\frac{g(x,\xi^+)}{1+\norm{x}^{\beta-1}}\,\middle|\,\xi\right]}.
\]
Since \(r^{1-\beta}(1+\norm{x}^{\beta-1})\) is uniformly bounded for \(r\ge R_H\), the
coefficient of the centered normalized noise is \(O(\alpha)\).  The conditional
exponential moment in \assitemref{itm:N5} also gives a uniform exponential moment for
this centered normalized noise.  Lemma~\ref{lem:fixed-elementary-estimates}(i)
therefore implies
\[
        \E\left[
        \exp\{-\kappa(2-\beta)\alpha r^{-\beta}\ip{x}{\zeta}\}\mid \xi
        \right]
        \le e^{C\alpha^2}.
\]
Thus
\begin{equation}
\label{eq:tail-good-event-bound}
        \E[\mathcal T_\beta(f_{\xi^+}(x))\1_{E_x}]
        \le
        e^{-c\alpha+C\alpha^{2-2\eta_0}}\mathcal T_\beta(x).
\end{equation}

It remains to control \(E_x^c\).  Because \(2-\beta\in(0,1]\) and
\(s\mapsto s^{2-\beta}\) is concave,
\[
\begin{aligned}
        \norm{x-\alpha(h(x)+G)}^{2-\beta}-r^{2-\beta}
        &\le
        (r+\alpha\norm{h(x)+G})^{2-\beta}-r^{2-\beta}                         \\
        &\le
        (2-\beta)r^{1-\beta}\alpha\norm{h(x)+G}.
\end{aligned}
\]
For \(r\ge R_H\), Assumption~\ref{ass:H}\assitemref{itm:H3b} and the definition of
\(N_x\) give
\(
        \norm{h(x)+G}\le C r^{\beta-1}(1+N_x).
\)
The powers of \(r\) therefore cancel, and we obtain the deterministic bound
\(
        \norm{f_{\xi^+}(x)}^{2-\beta}-r^{2-\beta}
        \le C\alpha(1+N_x).
\)
Consequently,
\[
        \mathcal T_\beta(f_{\xi^+}(x))
        \le
        \mathcal T_\beta(x)\exp\{C\kappa\alpha(1+N_x)\}.
\]
Choosing \(\alpha\) so that \(C\kappa\alpha\le\lambda_0/2\), Assumption
\assitemref{itm:N5} yields
\begin{equation}
\label{eq:tail-bad-event-bound}
        \E[\mathcal T_\beta(f_{\xi^+}(x))\1_{E_x^c}]
        \le
        Ce^{-c\alpha^{-\eta_0}}\mathcal T_\beta(x).
\end{equation}
Combining \eqref{eq:tail-good-event-bound} and
\eqref{eq:tail-bad-event-bound}, using \(2-2\eta_0>1\) and the fact that
\(e^{-c\alpha^{-\eta_0}}=o(\alpha)\), gives
\eqref{eq:far-U-beta-decrease-new} after reducing the stepsize threshold.
\end{proof}

\begin{lemma}
\label{lem:global-weight-positive-increment}
There exist \(C>0\) and \(\alpha_0\in(0,1]\) such that, for every
\(0<\alpha\le\alpha_0\), \(x\in\R^d\), and \(\xi\in\Xi\),
\begin{equation}
\label{eq:global-weight-positive-increment}
        \E\left[
        \bigl(V_\alpha(f_{\Phi(\xi,U_1)}(x))-V_\alpha(x)\bigr)_+
        \right]
        \le C\alpha V_\alpha(x).
\end{equation}
Consequently,
\begin{equation}
\label{eq:mean-contractive-weight-gate}
        \E\left[
        L_\Phi(U_1)V_\alpha(f_{\Phi(\xi,U_1)}(x))
        \right]
        \le(\rho_\Xi+C\alpha)V_\alpha(x).
\end{equation}
\end{lemma}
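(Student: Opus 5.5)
The natural route is a case split on $\norm{x}$ at the threshold $R_H$, using Lemma~\ref{lem:compact-weight-increment} inside and Lemma~\ref{lem:subquadratic-tail-decrease} (or triviality) outside. After that, \eqref{eq:mean-contractive-weight-gate} follows from $L_\Phi\le1$.

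\textbf{Inner region, $\norm{x}\le R_H$.} Here \eqref{eq:compact-positive-increment} already gives
\[
\E[(V_\alpha(f_{\xi^+}(x))-V_\alpha(x))_+]\le C(\kappa+\kappa_0)\alpha V_\alpha(x).
\]
The constant $C$ in that lemma is independent of $\xi$, so this is \eqref{eq:global-weight-positive-increment} with constant $C(\kappa+\kappa_0)$.

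\textbf{Outer region, $\norm{x}\ge R_H$.} Since $\omega$ is supported in $\{\norm{x}\le R_H/2\}$, we have $V_\alpha(x)=\mathcal T_\beta(x)$. Also
\[
V_\alpha(f)\le \mathcal T_\beta(f)+\delta_\alpha\omega(f)\le \mathcal T_\beta(f)+\delta_\alpha,
\]
with $\delta_\alpha\le\kappa_0\alpha$. Combined with $V_\alpha(x)\ge1$, this reduces the claim to bounding $\E[(\mathcal T_\beta(f)-\mathcal T_\beta(x))_+]$ by $C\alpha\,\mathcal T_\beta(x)$.

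When $\beta=2$ this quantity is zero. When $1\le\beta<2$, I would reuse the deterministic bound from the proof of Lemma~\ref{lem:subquadratic-tail-decrease}. For $r=\norm{x}\ge R_H$, concavity of $s\mapsto s^{2-\beta}$ gives
\[
\norm{f}^{2-\beta}-r^{2-\beta}\le C\alpha(1+N_x).
\]
The positive-part function is monotone, so this implies $\mathcal T_\beta(f)\le\mathcal T_\beta(x)e^{C\kappa\alpha(1+N_x)}$. Using $e^a-1\le ae^a$, we get
\[
(\mathcal T_\beta(f)-\mathcal T_\beta(x))_+\le \mathcal T_\beta(x)\,C\kappa\alpha(1+N_x)e^{C\kappa\alpha(1+N_x)}.
\]
Once $C\kappa\alpha\le\lambda_0/2$, the expectation is bounded by $C\alpha\,\mathcal T_\beta(x)$ via Lemma~\ref{lem:fixed-elementary-estimates}(ii) and \assitemref{itm:N5}.

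\textbf{Consequence \eqref{eq:mean-contractive-weight-gate}.} Here $\rho_\Xi$ presumably denotes $\E L_\Phi(U_1)<1$. Since $0\le L_\Phi\le1$, write $V_\alpha(f)\le V_\alpha(x)+(V_\alpha(f)-V_\alpha(x))_+$. Then
\[
\E[L_\Phi V_\alpha(f)]\le \E L_\Phi\,V_\alpha(x)+\E[(V_\alpha(f)-V_\alpha(x))_+]\le(\rho_\Xi+C\alpha)V_\alpha(x).
\]
This step uses that $V_\alpha(x)$ is deterministic given $x$.

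The main obstacle is uniformity in the tail case. The constant must not depend on $r$, which is why the cancellation of powers of $r$ in the deterministic increment bound is essential. The exponential moment of $N_x$ from \assitemref{itm:N5} must also be uniform in $(x,\xi)$, and it is.
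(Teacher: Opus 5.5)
Your proposal is correct and follows the paper's proof essentially step for step. The paper also reads the inner region off \eqref{eq:compact-positive-increment}. In the outer region it bounds the $\omega$-part by $\delta_\alpha$ and combines the deterministic tail comparison $\mathcal T_\beta(f)\le\mathcal T_\beta(x)e^{C\kappa\alpha(1+N_x)}$ with $e^t-1\le te^t$ and \assitemref{itm:N5}. It derives the consequence from $0\le L_\Phi\le1$ exactly as you do.

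One small point: bounding $\E[(1+N_x)e^{C\kappa\alpha(1+N_x)}]$ uses the inequality $re^{(\lambda/2)r}\le(2/\lambda)e^{\lambda r}$ from the proof of Lemma~\ref{lem:fixed-elementary-estimates}(ii), not the statement of that lemma. Alternatively, you can apply (ii) directly to $\E[e^{C\kappa\alpha(1+N_x)}]-1$. Either way this is immaterial.
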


\begin{proof}
For \(\norm{x}\le R_H\), \eqref{eq:global-weight-positive-increment}
is \eqref{eq:compact-positive-increment}.  Suppose \(\norm{x}\ge R_H\).
If \(\beta=2\), then \(\mathcal T_\beta\equiv1\) and \(\omega(x)=0\), so
\[
        \bigl(V_\alpha(f_{\Phi(\xi,U_1)}(x))-V_\alpha(x)\bigr)_+
        \le\delta_\alpha\le C\alpha V_\alpha(x).
\]
If \(1\le\beta<2\), use the normalized noise
\(
        N_x:=\frac{\norm{g(x,\Phi(\xi,U_1))}}
        {1+\norm{x}^{\beta-1}}.
\)
The deterministic tail comparison used in the proof of
Lemma~\ref{lem:subquadratic-tail-decrease} gives
\[
        \mathcal T_\beta(f_{\Phi(\xi,U_1)}(x))
        \le\mathcal T_\beta(x)e^{C\kappa\alpha(1+N_x)}.
\]
Since \(e^t-1\le te^t\) for \(t\ge0\), Assumption~\ref{ass:N}
\assitemref{itm:N5} yields, uniformly in \(x,\xi\),
\[
\begin{aligned}
        \E\bigl[
        (\mathcal T_\beta(f_{\Phi(\xi,U_1)}(x))
        -\mathcal T_\beta(x))_+\bigr]
        &\le
        C\alpha\mathcal T_\beta(x)
        \E[(1+N_x)e^{C\kappa\alpha(1+N_x)}]\\
        &\le C\alpha V_\alpha(x).
\end{aligned}
\]
The positive increment of \(\delta_\alpha\omega\) is at most
\(\delta_\alpha\le C\alpha V_\alpha(x)\), which proves
\eqref{eq:global-weight-positive-increment}.  Finally, \(0\le L_\Phi\le1\)
and \(\E L_\Phi(U_1)=\rho_\Xi\) give
\[
\begin{aligned}
        \E[L_\Phi(U_1)V_\alpha(f_{\Phi(\xi,U_1)}(x))]
        &\le \rho_\Xi V_\alpha(x)
        +\E[(V_\alpha(f_{\Phi(\xi,U_1)}(x))-V_\alpha(x))_+],
\end{aligned}
\]
and \eqref{eq:mean-contractive-weight-gate} follows.
\end{proof}

\begin{lemma}
\label{lem:fixed-core}
There exist \(c_{\rm core}>0\) and \(\alpha_{\rm core}\in(0,1]\) such that,
for every \(0<\alpha\le\alpha_{\rm core}\), every \(\xi\in\Xi\), and every
\(\norm{x}\le \frac{R_H}{2}\),
\begin{equation*}
        \sup_{e\in\Sph^{d-1}}
        (\mathcal K_{\xi,\alpha}V_\alpha)(x;e)
        \le
        (1-c_{\rm core}\alpha^{m-1})V_\alpha(x).
\end{equation*}
\end{lemma}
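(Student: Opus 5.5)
The plan is to split the core region $\norm{x}\le R_H/2$ into an outer shell $K\alpha\le\norm{x}\le R_H/2$, where the Jacobian factor $\mathcal D_{\xi,\alpha}$ contracts, and an inner ball $\norm{x}\le K\alpha$, where the correction $\delta_\alpha\omega$ has to supply the contraction through the nondegenerate noise \assitemref{itm:N6}. Two facts hold throughout the core. First, $\mathcal T_\beta(x)=1$, so $V_\alpha(x)=1+\delta_\alpha\omega(x)$. Second, by \eqref{eq:fixed-directional-nonexpansive} we have $\mathcal D_{\xi,\alpha}\le 1$, so
\[
(\mathcal K_{\xi,\alpha}V_\alpha)(x;e)\le \E\mathcal D_{\xi,\alpha}(x,e)+\E\bigl[(\mathcal T_\beta(f_{\xi^+}(x))-1)_+\bigr]+\delta_\alpha\E\,\omega(f_{\xi^+}(x)).
\]
By \eqref{eq:core-overshoot-restored}, the middle term is at most $Ce^{-c/\alpha}=o(\alpha^{m-1})$. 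By \eqref{eq:fixed-directional-basic} and \assitemref{itm:H2}, uniformly in $e$,
\[
\E\mathcal D_{\xi,\alpha}(x,e)\le 1-c\alpha\norm{x}^{m-2}.
\]

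\emph{Case $m=2$.} Here $\delta_\alpha=0$, and the two bounds above give $1-c\alpha+Ce^{-c/\alpha}\le 1-c'\alpha$ directly.

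\emph{Case $m>2$, reduction.} Write $p=m-2$ if $2<m\le 3$ and $p=1$ if $m>3$, so that $\omega(y)=1-(2\norm{y}/R_H)^p$ near the core. The whole task is then to compare $\E\norm{f_{\xi^+}(x)}^p-\norm{x}^p$ with the Jacobian gain. Note that $\delta_\alpha\cdot\alpha^p\asymp\kappa_0\alpha^{m-1}$ in both subcases.

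\emph{Outer shell, $\norm{x}\ge K\alpha$.} I only need a crude lower bound on $\norm{f}^p$. By convexity of the norm and the Lipschitz bounds of Lemma~\ref{lem:N7-consequences},
\[
\E\norm{f_{\xi^+}(x)}\ge\norm{x}-\alpha\bigl(\norm{h(x)}+\norm{\bar g(x,\xi)}\bigr)\ge\norm{x}-C\alpha,
\]
using that $\bar g(0,\xi)$ is bounded through \assitemref{itm:N5} or \assitemref{itm:N3} and the Lipschitz continuity of $g$. For $p<1$ I use concavity of $s\mapsto s^p$ to obtain $\E\norm{f}^p-\norm{x}^p\ge -C\alpha\norm{x}^{p-1}$; this will need care, via Jensen applied after truncation on the event $\alpha\norm{g}\le\norm{x}/2$, with the complement handled by exponential moments. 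The increase of $\delta_\alpha\omega$ is therefore at most $C\kappa_0\alpha^{m-1}\cdot\alpha^{1-p}\norm{x}^{p-1}$. The ratio of this to the gain $c\alpha\norm{x}^{m-2}$ is of order $\kappa_0(\alpha/\norm{x})^{m-2}\le\kappa_0K^{-(m-2)}$ in both subcases. Choosing $\kappa_0\ll K^{m-2}$ leaves a net decrease $c\alpha\norm{x}^{m-2}\ge c(K\alpha)^{m-2}\alpha\gtrsim\alpha^{m-1}$. Since $V_\alpha(x)\le 1+\kappa_0$, this gives the claimed bound.

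\emph{Inner ball, $\norm{x}\le K\alpha$.} Here I use only $\E\mathcal D\le 1$ and show that $\omega$ strictly decreases in expectation. On the event $\{\norm{g(0,\xi^+)}\ge\varepsilon_g\}$, which has probability at least $p_g$, we have
\[
\norm{f}\ge\alpha\varepsilon_g-\norm{x}-\alpha\norm{h(x)}-C\alpha\norm{x}\ge\alpha\varepsilon_g/2,
\]
provided $K\le\varepsilon_g/4$. More generally, I would compare with Lemma~\ref{lem:fixed-truncated-bounds}, namely $\bar a_{\alpha,p}\ge c_2\bar a_\alpha^p\ge c\alpha^p$, to obtain $\E\min\{\norm{f}^p,1\}\ge c\alpha^p-CK^p\alpha^p\ge c\alpha^p/2$. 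Consequently $\delta_\alpha\bigl(\omega(x)-\E\omega(f)\bigr)\gtrsim\kappa_0\alpha^{m-1}$, and the left-hand side is at most $1+\delta_\alpha\omega(x)-c\kappa_0\alpha^{m-1}+Ce^{-c/\alpha}$, as required. The order of constants is: first $K$ (from $\varepsilon_g$), then $\kappa_0$ (from $K$), then $\alpha_{\rm core}$.

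\emph{Main obstacle.} I expect the hardest step to be the outer-shell control for $2<m<3$, where $\omega$ is non-Lipschitz at $0$ and the drift $\bar g(x,\xi)$ need not vanish. There the error terms $\norm{x}^{p-1}\alpha$ must be matched against the gain precisely, with heavy noise excursions absorbed by \assitemref{itm:N5}.
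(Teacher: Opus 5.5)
Your split of the core at $\norm{x}\asymp K\alpha$ is a workable alternative to the paper's argument. The following parts are correct:
\begin{itemize}
\item the $m=2$ case;
\item the reduction via $\mathcal D_{\xi,\alpha}\le1$ and \eqref{eq:core-overshoot-restored}, which is exactly \eqref{eq:fixed-core-pre};
\item the inner-ball argument, provided $K$ is chosen small in terms of $p_g$ as well as $\varepsilon_g$ (e.g.\ $K^p\le\tfrac12p_g(\varepsilon_g/2)^p$), so that the loss $(2K\alpha/R_H)^p$ in $\omega(x)$ does not cancel the drop $\asymp p_g(\alpha\varepsilon_g)^p$ in $\E\omega(f_{\xi^+}(x))$.
\end{itemize}

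The outer-shell step, however, fails as written. Your bound $\E\norm{f_{\xi^+}(x)}\ge\norm{x}-C\alpha$, uniformly in $\xi$, needs $\sup_\xi\norm{\bar g(0,\xi)}<\infty$. This is not available when $\beta=2$. \assitemref{itm:N3} controls $g(0,\Phi(\xi_\star,U_1))$ only at the single point $\xi_\star$, and \assitemref{itm:N2} then gives only $\norm{\bar g(0,\xi)}\le C+L_{g,\Phi}\norm{\xi-\xi_\star}$. Also, \assitemref{itm:N5}, your source of exponential moments, is not imposed when $\beta=2$.

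This is a real obstruction. Take $d=1$, $H''(x)=\min\{x^2,1\}$, $\Phi(\xi,u)=\xi/2+u$ with $U_1\sim N(0,1)$, and $g(x,\xi)=\xi$. These satisfy Assumptions~\ref{ass:H} and~\ref{ass:N} with $m=4$, $\beta=2$. The admissible state $\xi=2(x-\alpha h(x))/\alpha$ gives $f_{\xi^+}(x)=-\alpha U_1$, so the iterate lands within $O(\alpha)$ of the origin. Hence $\E\omega(f_{\xi^+}(x))-\omega(x)\ge 2\norm{x}/R_H-O(\alpha)$. For $\norm{x}$ of order one, the $\delta_\alpha\omega$ term therefore grows by order $\delta_\alpha$, which is a factor $\alpha^{-1}$ more than your claimed $C\kappa_0\alpha^{m-p}\norm{x}^{p-1}$.

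A second issue holds even with uniformly integrable noise. A lower bound on $\E\norm{f}$ obtained through the mean $\E f$ does not control $\E\omega(f)=1-(2/R_H)^p\E\min\{\norm{f},R_H/2\}^p$, because of the positive part in $\omega$. You would need $\E\norm{f-x}$, not $\norm{\E f-x}$.

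The repair is simpler than what you attempted: on the shell, do not estimate $\norm{f}$ at all. Since $\omega\le1$ and $\norm{x}\le R_H/2$,
\[
\delta_\alpha\bigl(\E\omega(f_{\xi^+}(x))-\omega(x)\bigr)\le\delta_\alpha\Bigl(\frac{2\norm{x}}{R_H}\Bigr)^{p}.
\]
This is at most $C\kappa_0$ times the curvature gain $c\alpha\norm{x}^{m-2}$ when $2<m\le3$. When $m>3$ it is at most $C\kappa_0(\alpha/\norm{x})^{m-3}\le C\kappa_0K^{-(m-3)}$ times that gain. So the step you flagged as the main obstacle disappears, and no noise moments beyond \assitemref{itm:N6} and those already in \eqref{eq:core-overshoot-restored} are needed.

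With this repair, your proof is a case-split version of the paper's. The paper writes $f_{\xi^+}(x)=-\alpha g(0,\xi^+)+v$ with $\norm{v}\le C\norm{x}$, using only Lipschitz bounds. It then uses the $s_m$-H\"older continuity of $\omega$ to get, on the whole core,
\[
\E\omega(f_{\xi^+}(x))-\omega(x)\le-c\,\E\min\{(\alpha\norm{g(0,\xi^+)})^{s_m},1\}+C\norm{x}^{s_m}.
\]
Your inner-ball estimate is this bound restricted to $\norm{x}\le K\alpha$, and its error term $C\norm{x}^{s_m}$ is of the size of the trivial bound above. The paper absorbs $C\delta_\alpha\norm{x}^{s_m}$ into $a_{\rm in}\alpha\norm{x}^{m-2}$ directly for $m\le3$, and by Young's inequality with exponent $(m-2)/(m-3)$ for $m>3$. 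Young's inequality thus does the work of your split. Your version makes the two mechanisms more visible: the noise-driven drop of $\omega$ near the origin, and the curvature contraction away from it. The paper's is a single estimate with no auxiliary scale $K$.
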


\begin{proof}
Fix \(x\) with \(r=\norm{x}\le \frac{R_H}{2}\), and fix \(e\in\Sph^{d-1}\).  Since
\(\frac{3R_H}{4}>\frac{R_H}{2}\), \(\mathcal T_\beta(x)=1\).  Put
\(\mathcal R(u)=\mathcal T_\beta(u)-1\) when \(\beta<2\), and
\(\mathcal R\equiv0\) when \(\beta=2\).  Then
\[
        V_\alpha(u)=1+\delta_\alpha\omega(u)+\mathcal R(u),
        \qquad
        V_\alpha(x)=1+\delta_\alpha\omega(x).
\]
Using \eqref{eq:fixed-directional-nonexpansive},
\begin{equation}
\label{eq:fixed-core-pre}
\begin{aligned}
        (\mathcal K_{\xi,\alpha}V_\alpha)(x;e)-V_\alpha(x)
        &\le
        \E[\mathcal D_{\xi,\alpha}(x,e)]-1
        +\delta_\alpha\{\E\omega(f_{\xi^+}(x))-\omega(x)\}  \\
        &\qquad
        +\E\mathcal R(f_{\xi^+}(x)).
\end{aligned}
\end{equation}

Put \(a_{\rm in}:=(1-\theta)c_{\rm in}/2\).
If \(m=2\), Assumption~\ref{ass:H}\assitemref{itm:H2} gives
\(\ip{e}{\nabla^2H(x)e}\ge c_{\rm in}\) for \(\norm{x}\le \frac{R_H}{2}\).  Hence
\[
        \E\mathcal D_{\xi,\alpha}(x,e)
        \le 1-a_{\rm in}\alpha.
\]
Since \(\omega\equiv0\) when \(m=2\), \eqref{eq:fixed-core-pre} and
\eqref{eq:core-overshoot-restored} give the claim after absorbing the
exponentially small term into \(\alpha\).

Assume now \(m>2\), and put \(s_m:=(m-2)\wedge1\).  Then
\[
        \omega(y)
        =
        \left(1-\left(\frac{2\norm{y}}{R_H}\right)^{s_m}\right)_+,
        \qquad y\in\R^d.
\]
By Assumption~\ref{ass:H}\assitemref{itm:H2} and
\eqref{eq:fixed-directional-basic},
\(
        \E\mathcal D_{\xi,\alpha}(x,e)
        \le 1-a_{\rm in}\alpha r^{m-2}.
\)
We claim that
\begin{equation}
\label{eq:fixed-core-omega-drop-new}
        \E\omega(f_{\xi^+}(x))-\omega(x)
        \le
        -c_\omega\E\min\{(\alpha\norm{g(0,\xi^+)})^{s_m},1\}
        +C_\omega r^{s_m}.
\end{equation}
Indeed, write
\[
        f_{\xi^+}(x)=-\alpha g(0,\xi^+)+v,
        \qquad
        v=x-\alpha h(x)-\alpha\{g(x,\xi^+)-g(0,\xi^+)\}.
\]
Lemma~\ref{lem:fixed-deterministic-bounds} gives \(\norm h(x)\le Cr^{m-1}\)
for \(\norm{x}\le \frac{R_H}{2}\), while \eqref{eq:global-jacobian-bounds} gives
\(\norm{g(x,\xi^+)-g(0,\xi^+)}
\le (2-\theta)L_{\mathsf G}r/(1-\theta)\).  Hence, after
reducing the stepsize threshold, \(\norm v\le Cr\).  The function
\(y\mapsto\omega(y)\) is \(s_m\)-H\"older, because
\((1-s^{s_m})_+\) is \(s_m\)-H\"older on \([0,\infty)\).  Therefore
\[
        \omega(f_{\xi^+}(x))
        \le
        \omega(-\alpha g(0,\xi^+))+C\norm v^{s_m}
        \le
        \left(1-\left(\frac{2\alpha\norm{g(0,\xi^+)}}{R_H}\right)^{s_m}\right)_+
        +Cr^{s_m}.
\]
Since
\[
        \left(1-\left(\frac{2z}{R_H}\right)^{s_m}\right)_+
        \le 1-c\min\{z^{s_m},1\},
        \qquad z\ge0,
\]
and \(\omega(x)=1-(\frac{2r}{R_H})^{s_m}\) for \(r\le \frac{R_H}{2}\),
\eqref{eq:fixed-core-omega-drop-new} follows.

Combining \eqref{eq:fixed-core-pre}, \eqref{eq:fixed-core-omega-drop-new},
\eqref{eq:core-overshoot-restored}, and Lemma~\ref{lem:fixed-truncated-bounds},
we get
\begin{equation}
\label{eq:core-combined-before-absorption}
        (\mathcal K_{\xi,\alpha}V_\alpha)(x;e)-V_\alpha(x)
        \le
        -a_{\rm in}\alpha r^{m-2}
        -c\delta_\alpha\bar a_{\alpha,s_m}
        +C\delta_\alpha r^{s_m}
        +Ce^{-c/\alpha}.
\end{equation}
If \(2<m\le3\), then \(s_m=m-2\) and \(\delta_\alpha=\kappa_0\alpha\).
Using \(\bar a_{\alpha,s_m}\ge c\bar a_\alpha^{m-2}\),
\eqref{eq:core-combined-before-absorption} becomes
\[
        (\mathcal K_{\xi,\alpha}V_\alpha)(x;e)-V_\alpha(x)
        \le
        -(a_{\rm in}-C\kappa_0)\alpha r^{m-2}
        -c\kappa_0\alpha\bar a_\alpha^{m-2}
        +Ce^{-c/\alpha}.
\]
Choose \(\kappa_0\) so small that \(a_{\rm in}-C\kappa_0\ge a_{\rm in}/2\).
Then \(\bar a_\alpha\ge c_1\alpha\), and the exponentially small term is
negligible compared with \(\alpha^{m-1}\).  Hence
\begin{equation}
\label{eq:core-additive-drop-flat-case}
        (\mathcal K_{\xi,\alpha}V_\alpha)(x;e)-V_\alpha(x)
        \le -c\alpha^{m-1}.
\end{equation}

If \(m>3\), then \(s_m=1\),
\(\delta_\alpha=\kappa_0\alpha^{m-2}\), and
\(\bar a_{\alpha,1}=\bar a_\alpha\).  Factoring out \(\alpha\) in
\eqref{eq:core-combined-before-absorption} gives
\[
\begin{aligned}
        (\mathcal K_{\xi,\alpha}V_\alpha)(x;e)-V_\alpha(x)
        \le
        \alpha\bigl[
        -a_{\rm in}r^{m-2}
        -c\kappa_0\alpha^{m-3}\bar a_\alpha
        +C\kappa_0\alpha^{m-3}r
        \bigr]
        +Ce^{-c/\alpha}.
\end{aligned}
\]
Put \(q:=(m-2)/(m-3)>1\).  Young's inequality yields
\[
        C\kappa_0\alpha^{m-3}r
        \le
        \frac{a_{\rm in}}{2}r^{m-2}
        +C_*\kappa_0^q\alpha^{m-2}.
\]
Since \(\bar a_\alpha\ge c_1\alpha\), the negative noise term inside the
brackets is at most \(-cc_1\kappa_0\alpha^{m-2}\).  Choose \(\kappa_0\)
so small that \(C_*\kappa_0^{q-1}\le cc_1/2\).  After restoring the outer
factor \(\alpha\), the Young remainder is absorbed by the negative noise
term.  Finally, \(e^{-c/\alpha}=o(\alpha^{m-1})\), so
\eqref{eq:core-additive-drop-flat-case} also holds for \(m>3\).

On the near-minimizer region, \(1\le V_\alpha(x)\le C\) uniformly in \(\alpha\).
Since \eqref{eq:core-additive-drop-flat-case} gives an additive decrease of at
least \(c\alpha^{m-1}\), it implies
\[
        (\mathcal K_{\xi,\alpha}V_\alpha)(x;e)
        \le
        (1-c_{\rm core}\alpha^{m-1})V_\alpha(x),
\]
after decreasing \(c_{\rm core}\).  Taking the supremum over \(e\) completes the
proof.
\end{proof}

\begin{lemma}
\label{lem:fixed-bridge}
There exist \(c_{\rm br}>0\) and \(\alpha_{\rm br}\in(0,1]\) such that, for
every \(0<\alpha\le\alpha_{\rm br}\), every \(\xi\in\Xi\), and every
\(\frac{R_H}{2}\le\norm{x}\le R_H\),
\[
        \sup_{e\in\Sph^{d-1}}
        (\mathcal K_{\xi,\alpha}V_\alpha)(x;e)
        \le
        (1-c_{\rm br}\alpha^{m-1})V_\alpha(x).
\]
\end{lemma}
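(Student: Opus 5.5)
In the annulus \(\frac{R_H}{2}\le\norm{x}\le R_H\) the Hessian is bounded below by a constant. By Assumption~\ref{ass:H}\assitemref{itm:H2}, for \(\norm{x}\ge R_H/2\) and every \(e\in\Sph^{d-1}\) we have \(\ip{e}{\nabla^2H(x)e}\ge c_{\rm in}(R_H/2)^{m-2}\). Together with \eqref{eq:fixed-directional-basic}, this gives \(\E\mathcal D_{\xi,\alpha}(x,e)\le 1-a\alpha\) with \(a:=a_{\rm in}(R_H/2)^{m-2}>0\). Here a linear-in-\(\alpha\) decrease is available, which is stronger than the target \(\alpha^{m-1}\). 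The plan is therefore to show that the weight can grow by at most a small multiple of \(\alpha\), and to let the directional factor absorb this growth.

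Concretely, I would bound the kernel using \eqref{eq:fixed-directional-nonexpansive}:
\[
(\mathcal K_{\xi,\alpha}V_\alpha)(x;e)
\le \E\bigl[\mathcal D_{\xi,\alpha}(x,e)\bigr]V_\alpha(x)
+\E\bigl[(V_\alpha(f_{\xi^+}(x))-V_\alpha(x))_+\bigr].
\]
This splits \(V_\alpha(f)\) as \(V_\alpha(x)+(V_\alpha(f)-V_\alpha(x))\) and uses \(\mathcal D\le1\) on the increment term. Negative increments may simply be dropped, since \(\mathcal D\ge0\). Since \(\norm{x}\le R_H\), the positive-increment estimate \eqref{eq:compact-positive-increment} of Lemma~\ref{lem:compact-weight-increment} applies and gives \(\E[(V_\alpha(f)-V_\alpha(x))_+]\le C(\kappa+\kappa_0)\alpha V_\alpha(x)\). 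Crucially, the constant \(C\) there is independent of \(\kappa,\kappa_0\). Hence
\[
(\mathcal K_{\xi,\alpha}V_\alpha)(x;e)\le \bigl(1-a\alpha+C(\kappa+\kappa_0)\alpha\bigr)V_\alpha(x).
\]

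The main point is to ensure that \(\kappa\) and \(\kappa_0\) can be taken small enough that \(C(\kappa+\kappa_0)\le a/2\). The constant \(a\) depends only on \(c_{\rm in},\theta,R_H,m\), not on \(\kappa,\kappa_0\), so this is a matter of adding one more smallness requirement. The requirement must be compatible with the choices made in Lemma~\ref{lem:fixed-core}, where \(\kappa_0\) was only required to be small, and in Lemma~\ref{lem:subquadratic-tail-decrease}, where \(\kappa\) was also only required to be small. Since every prior constraint is an upper bound, shrinking both parameters further is harmless; I would simply note this as a constraint fixed once in the final choice of parameters. (In the case \(m=2\) we have \(\kappa_0\) irrelevant since \(\delta_\alpha=0\), and in the case \(\beta=2\) we have \(\kappa=0\).)

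With this choice the bound becomes \(1-\frac a2\alpha\). Since \(\alpha\le1\) and \(m\ge2\), we have \(\alpha\ge\alpha^{m-1}\), so the claim follows with \(c_{\rm br}=a/2\) after taking the supremum over \(e\). The only delicate step is the bookkeeping above: verifying that Lemma~\ref{lem:compact-weight-increment}'s constant is uniform in \((\kappa,\kappa_0)\) for \(\kappa\le1\). That lemma states this uniformity explicitly, so the argument should go through without new estimates.
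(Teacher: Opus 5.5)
Your proposal is correct and follows essentially the same route as the paper. The annulus Hessian bound combined with \eqref{eq:fixed-directional-basic} gives $\E\mathcal D_{\xi,\alpha}(x,e)\le 1-c\alpha$. Splitting $V_\alpha(f_{\xi^+}(x))\le V_\alpha(x)+(V_\alpha(f_{\xi^+}(x))-V_\alpha(x))_+$ and using $0\le\mathcal D_{\xi,\alpha}\le 1$ together with \eqref{eq:compact-positive-increment} then gives the factor $1-c\alpha+C(\kappa+\kappa_0)\alpha$. Shrinking $\kappa,\kappa_0$ and using $\alpha\ge\alpha^{m-1}$ finishes the proof.

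Your remark that every other constraint on $\kappa,\kappa_0$ is an upper bound is worthwhile bookkeeping that the paper leaves implicit. It shows the extra smallness requirement is compatible with the other lemmas, provided $\kappa$ is fixed before $\kappa_0$, since the far-region absorption ties $\kappa_0$ to the $\kappa$-dependent decay rate.
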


\begin{proof}
Fix \(e\in\Sph^{d-1}\).  By Assumption~\ref{ass:H}\assitemref{itm:H2}, for
\(\frac{R_H}{2}\le\norm{x}\le R_H\),
\(
        \ip{e}{\nabla^2H(x)e}
        \ge
        c_{\rm in}\left(\frac{R_H}{2}\right)^{m-2}.
\)
Together with \eqref{eq:fixed-directional-basic}, this gives
\(
        \E\mathcal D_{\xi,\alpha}(x,e)
        \le
        1-c\alpha
\)
for some \(c>0\).  Since \(\mathcal D_{\xi,\alpha}(x,e)\le1\),
\[
\begin{aligned}
        (\mathcal K_{\xi,\alpha}V_\alpha)(x;e)
        &\le
        V_\alpha(x)\E\mathcal D_{\xi,\alpha}(x,e)
        +\E[(V_\alpha(f_{\xi^+}(x))-V_\alpha(x))_+]  \\
        &\le
        (1-c\alpha+C(\kappa+\kappa_0)\alpha)V_\alpha(x),
\end{aligned}
\]
where the last line uses \eqref{eq:compact-positive-increment}.  Choose
\(\kappa\) and \(\kappa_0\) sufficiently small to obtain a factor
\(1-c\alpha\).  Since \(0<\alpha\le1\) and \(m\ge2\),
\(\alpha\ge\alpha^{m-1}\), which gives the stated estimate after decreasing
the constant.
\end{proof}

\begin{lemma}
\label{lem:fixed-far}
There exist \(c_{\rm far}>0\) and \(\alpha_{\rm far}\in(0,1]\) such that, for
every \(0<\alpha\le\alpha_{\rm far}\), every \(\xi\in\Xi\), and every
\(\norm{x}\ge R_H\),
\[
        \sup_{e\in\Sph^{d-1}}
        (\mathcal K_{\xi,\alpha}V_\alpha)(x;e)
        \le
        (1-c_{\rm far}\alpha^{m-1})V_\alpha(x).
\]
\end{lemma}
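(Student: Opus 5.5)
Outside $\{\norm{x}\le R_H\}$ the near-minimizer correction is inactive: $\omega(x)=0$ whenever $\norm{x}\ge R_H/2$, so $V_\alpha(x)=\mathcal T_\beta(x)$ for $\norm{x}\ge R_H$. The plan is to get a contraction of order $\alpha$, which implies the claimed $\alpha^{m-1}$ factor because $\alpha\ge\alpha^{m-1}$. Fix $e\in\Sph^{d-1}$. By \eqref{eq:fixed-directional-nonexpansive} we have $\mathcal D_{\xi,\alpha}(x,e)\le1$, hence
\[
(\mathcal K_{\xi,\alpha}V_\alpha)(x;e)\le \E\bigl[\mathcal D_{\xi,\alpha}(x,e)\,\mathcal T_\beta(f_{\xi^+}(x))\bigr]+\delta_\alpha\,\E\bigl[\omega(f_{\xi^+}(x))\bigr].
\]
The argument then splits according to the tail exponent.

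\emph{Subquadratic case $1\le\beta<2$.} Bound $\mathcal D_{\xi,\alpha}\le1$ and apply Lemma~\ref{lem:subquadratic-tail-decrease}. This gives $\E\mathcal T_\beta(f_{\xi^+}(x))\le(1-c\alpha)\mathcal T_\beta(x)$. The remaining term is the $\omega$ term. It is nonzero only if $\norm{f_{\xi^+}(x)}<R_H/2$, which requires a jump of size at least $\norm{x}/2$, that is, $\alpha\norm{h(x)+g(x,\xi^+)}\ge\norm{x}/2$. Using $\norm{h(x)}\le Cr^{\beta-1}$ and the normalized noise $N_x$ from \assitemref{itm:N5}, this event forces $N_x\ge c\alpha^{-1}r^{2-\beta}\ge c\alpha^{-1}$. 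By Markov's inequality with the exponential moment, its probability is at most $Ce^{-c/\alpha}$. Hence the $\omega$ term is at most $\delta_\alpha Ce^{-c/\alpha}=o(\alpha)$, and since $\mathcal T_\beta\ge1$ it can be absorbed into the $-c\alpha\mathcal T_\beta(x)$ decrease. (Simply bounding the term by $\delta_\alpha$ also suffices when $m>3$ or when $\kappa_0$ is small, because $\delta_\alpha\le\kappa_0\alpha$ and $\kappa_0$ may be reduced relative to the $c$ of Lemma~\ref{lem:subquadratic-tail-decrease}, which is independent of $\kappa_0$.)

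\emph{Quadratic case $\beta=2$.} Here $\mathcal T_\beta\equiv1$, so the contraction must come from the directional factor. By \assitemref{itm:H3a}, $\nabla^2H(x)\succeq c_{\rm out}I_d$ for $\norm{x}\ge R_H$, and \eqref{eq:fixed-directional-basic} gives $\E\mathcal D_{\xi,\alpha}(x,e)\le1-\tfrac{(1-\theta)c_{\rm out}}{2}\alpha$. The $\omega$ term is at most $\delta_\alpha\le\kappa_0\alpha$ when $m>2$, and vanishes when $m=2$. Choosing $\kappa_0\le(1-\theta)c_{\rm out}/4$ yields $(\mathcal K V_\alpha)(x;e)\le 1-c\alpha$, with $V_\alpha(x)=1$. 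One must check that this choice of $\kappa_0$ is compatible with the smallness conditions imposed in Lemmas~\ref{lem:fixed-core} and \ref{lem:fixed-bridge}. It is, since all of those are upper bounds on $\kappa_0$.

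The main obstacle is this bookkeeping of constants. The choice of $\kappa$ (in $\mathcal T_\beta$) and of $\kappa_0$ must be made uniformly across the core, bridge, and far regions, and the $\omega$ leakage must be controlled either through the exponential-tail estimate or by smallness of $\kappa_0$. The final step is to take the supremum over $e$ and set $c_{\rm far}=\min(c,\cdot)$ using $\alpha\ge\alpha^{m-1}$.
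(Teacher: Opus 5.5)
Your proposal is correct and follows essentially the same route as the paper. For \(\beta=2\) you use \assitemref{itm:H3a} with \eqref{eq:fixed-directional-basic} and absorb \(\delta_\alpha\le\kappa_0\alpha\) by taking \(\kappa_0\) small. For \(1\le\beta<2\) you use \(\mathcal D_{\xi,\alpha}\le1\) with Lemma~\ref{lem:subquadratic-tail-decrease}, and you finish with \(\alpha\ge\alpha^{m-1}\). Your exponential-tail bound making the \(\omega\)-leakage \(O(\delta_\alpha e^{-c/\alpha})\) is valid but not needed: the paper bounds that term by \(\delta_\alpha\le C\kappa_0\alpha\) and absorbs it by shrinking \(\kappa_0\), as in your parenthetical remark.
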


\begin{proof}
First suppose \(\beta=2\).  Then \(\mathcal T_\beta\equiv1\), and \(\omega(x)=0\) for
\(\norm{x}\ge R_H\), so \(V_\alpha(x)=1\).  By
Assumption~\ref{ass:H}\assitemref{itm:H3a} and \eqref{eq:fixed-directional-basic},
\(
        \E\mathcal D_{\xi,\alpha}(x,e)
        \le
        1-c\alpha
\)
for some \(c>0\).
Since \(0\le\omega\le1\),
\(
        (\mathcal K_{\xi,\alpha}V_\alpha)(x;e)
        \le
        1-c\alpha+\delta_\alpha.
\)
Since \(\delta_\alpha\le\kappa_0\alpha\), choosing \(\kappa_0\) small gives
a factor \(1-c\alpha\).

Now suppose \(1\le\beta<2\).  On \(\norm{x}\ge R_H\), \(\omega(x)=0\), so
\(V_\alpha(x)=\mathcal T_\beta(x)\).  By \eqref{eq:fixed-directional-nonexpansive} and
Lemma~\ref{lem:subquadratic-tail-decrease},
\[
        (\mathcal K_{\xi,\alpha}V_\alpha)(x;e)
        \le
        \E \mathcal T_\beta(f_{\xi^+}(x))+\delta_\alpha
        \le
        (1-c\alpha)\mathcal T_\beta(x)+\delta_\alpha.
\]
Since \(\mathcal T_\beta(x)\ge \exp\{\kappa(R_H^{2-\beta}-(\tfrac{3R_H}{4})^{2-\beta})\}>1\) on the far
region and \(\delta_\alpha\le C\kappa_0\alpha\), choosing \(\kappa_0\) small
absorbs the last term and again gives a factor \(1-c\alpha\).  In both tail
cases, \(\alpha\ge\alpha^{m-1}\); taking the supremum over \(e\) and
decreasing the constant proves the stated estimate.
\end{proof}

\begin{proof}[Proof of Lemma~\ref{lem:fixed-conditional-drift}]
The estimates in Lemmas~\ref{lem:fixed-core}, \ref{lem:fixed-bridge},
and~\ref{lem:fixed-far} give \eqref{eq:conditional-directional-drift-main}
after taking
\(
        c_0:=\min\{c_{\rm core},c_{\rm br},c_{\rm far}\}
\)
and reducing the common stepsize threshold.

It remains to prove the growth estimate \eqref{eq:conditional-growth-main}.  On
\(\norm{x}\le R_H\), it is exactly \eqref{eq:compact-growth}.  If
\(\norm{x}\ge R_H\) and \(\beta=2\), then \(V_\alpha(x)=1\) and
\(V_\alpha(f_{\xi^+}(x))\le1+\delta_\alpha\le(1+C\alpha)V_\alpha(x)\).  If
\(\norm{x}\ge R_H\) and \(1\le\beta<2\), then
Lemma~\ref{lem:subquadratic-tail-decrease} and \(0\le\omega\le1\) give
\[
        \E V_\alpha(f_{\xi^+}(x))
        \le
        (1-c\alpha)\mathcal T_\beta(x)+\delta_\alpha
        \le
        (1+C\alpha)V_\alpha(x),
\]
again after reducing the threshold and increasing \(C\).  This proves
Lemma~\ref{lem:fixed-conditional-drift}.
\end{proof}

\subsection{Proof of Lemma~\ref{lem:aug-lipschitz}}
\label{subsec:augmented-induced-metric-proof}

Set
\(
        \rho_\Xi:=\E L_\Phi(U_1)<1,
\)
where the inequality follows from Assumption~\ref{ass:N}\assitemref{itm:N1}.

\begin{proof}[Proof of Lemma~\ref{lem:aug-lipschitz}]
Let \(C_V\) be as in Lemma~\ref{lem:fixed-conditional-drift}.
Let \(C_+\) be as in Lemma~\ref{lem:global-weight-positive-increment}.
Choose \(\alpha_0\in(0,1]\) small enough that the conclusions of
Lemmas~\ref{lem:fixed-conditional-drift}
and~\ref{lem:global-weight-positive-increment} hold for
\(0<\alpha\le\alpha_0\) and
\begin{equation}
\label{eq:augmented-xi-dominance-new}
        \rho_\Xi+C_+\alpha
        +\alpha^2L_{g,\Phi}(1+C_V\alpha)
        \le
        1-c_0\alpha^{m-1},
        \qquad 0<\alpha\le\alpha_0.
\end{equation}
This is possible because \(\rho_\Xi<1\) and
\(\alpha^{m-1}\to0\).

Fix \(z=(x,\xi)\), \(z'=(y,\eta)\), and an absolutely continuous path
\(\gamma(t)=(x(t),\xi(t))\) from \(z\) to \(z'\).  For fixed \(u\), write
\[
        \xi^+(t):=\Phi(\xi(t),u),
        \qquad
        X^+(t):=x(t)-\alpha\{h(x(t))+g(x(t),\xi^+(t))\}.
\]
The curve \(F_u\circ\gamma=(X^+(\cdot),\xi^+(\cdot))\) is absolutely
continuous.  Indeed, \(\Phi(\cdot,u)\) is Lipschitz by
Assumption~\ref{ass:N}\assitemref{itm:N1}, the composed dependence
\(g(x,\Phi(\xi,u))\) is Lipschitz in \(\xi\) by
Assumption~\ref{ass:N}\assitemref{itm:N2}, and the map \(x\mapsto h(x)+g(x,\zeta)\)
has derivative \(A(x,\zeta)\), which is uniformly bounded by
\eqref{eq:global-jacobian-bounds}.

Let \(\operatorname{md}_\alpha\) denote the metric derivative with respect to
the augmented base norm \(\vert\cdot\vert_\alpha\).  For a.e. \(t\),
\begin{equation}
\label{eq:augmented-metric-derivative-bound}
\begin{aligned}
        \operatorname{md}_\alpha(F_u\circ\gamma)(t)
        &\le
        \norm{(I_d-\alpha A(x(t),\xi^+(t)))\dot x(t)}  \\
        &\qquad
        +(L_\Phi(u)+\alpha^2L_{g,\Phi})\alpha^{-1}\norm{\dot\xi(t)}.
\end{aligned}
\end{equation}
To prove \eqref{eq:augmented-metric-derivative-bound}, compare
\(F_u(\gamma(s))\) and \(F_u(\gamma(t))\).  In the \(x\)-coordinate, first
freeze the noise argument at \(\xi^+(t)\); division by \(|s-t|\) and passage to
the a.e.\ differentiability point gives the derivative
\((I_d-\alpha A(x(t),\xi^+(t)))\dot x(t)\).  The remaining change in the
\(x\)-coordinate is bounded by
\(\alpha L_{g,\Phi}\norm{\xi(s)-\xi(t)}\).  In the noise coordinate,
Assumption~\ref{ass:N}\assitemref{itm:N1} gives
\(\norm{\Phi(\xi(s),u)-\Phi(\xi(t),u)}
\le L_\Phi(u)\norm{\xi(s)-\xi(t)}\).  Dividing by \(|s-t|\) and using the
\(\alpha^{-1}\)-weight in the augmented norm gives
\eqref{eq:augmented-metric-derivative-bound}.

If \(\dot x(t)\ne0\), set \(e(t)=\dot x(t)/\norm{\dot x(t)}\); otherwise choose
any measurable unit vector \(e(t)\).  By the definition of induced length,
\eqref{eq:augmented-metric-derivative-bound},
Lemmas~\ref{lem:fixed-conditional-drift}
and~\ref{lem:global-weight-positive-increment}, and
\eqref{eq:augmented-xi-dominance-new},
\[
\begin{aligned}
&\E\, d_{V_\alpha,\alpha}\bigl(F_{U_1}(z),F_{U_1}(z')\bigr)  \\
&\quad\le
\int_0^1
\E\!\left[
V_\alpha(X^+(t))
\norm{(I_d-\alpha A(x(t),\xi^+(t)))\dot x(t)}
\right]dt                                      \\
&\qquad
+
\int_0^1
\E\!\left[
(L_\Phi(U_1)+\alpha^2L_{g,\Phi})V_\alpha(X^+(t))
\right]
\alpha^{-1}\norm{\dot\xi(t)}\,dt               \\
&\quad\le
(1-c_0\alpha^{m-1})
\int_0^1
V_\alpha(x(t))
\left(\norm{\dot x(t)}+\alpha^{-1}\norm{\dot\xi(t)}\right)dt.
\end{aligned}
\]
Taking the infimum over all admissible paths \(\gamma\) gives
\eqref{eq:main-contraction-aug}.
\end{proof}

\subsection{Proof of Lemma~\ref{lem:fixed-reference-estimates}}
\label{subsec:fixed-reference-estimates-proof}

\begin{proof}[Proof of Lemma~\ref{lem:fixed-reference-estimates}]
Let \(z_\star=(0,\xi_\star)\), where \(\xi_\star\) is the point from Assumption~\ref{ass:N}\assitemref{itm:N3}.  Write
\[
        \xi_\star^+:=\Phi(\xi_\star,U_1),
        \qquad
        x_{\rm ref}^+:=-\alpha g(0,\xi_\star^+).
\]
Move from \(z_\star=(0,\xi_\star)\) to \((0,\xi_\star^+)\), and then from \((0,\xi_\star^+)\) to \((x_{\rm ref}^+,\xi_\star^+)\).  Since \(V_\alpha\) depends only on the \(x\)-coordinate,
\[
        d_{V_\alpha,\alpha}\bigl(F_{U_1}(z_\star),z_\star\bigr)
        \le
        V_\alpha(0)\alpha^{-1}\norm{\xi_\star^+-\xi_\star}
        +
        d_{V_\alpha}^{X}(x_{\rm ref}^+,0).
\]
The first term is integrable by Assumption~\ref{ass:N}\assitemref{itm:N3}.

If \(\beta=2\), then \(\mathcal T_\beta\equiv1\) and \(V_\alpha\le1+\delta_\alpha\).  Hence
\[
        d_{V_\alpha}^{X}(x_{\rm ref}^+,0)
        \le
        (1+\delta_\alpha)\norm{x_{\rm ref}^+}
        =
        (1+\delta_\alpha)\alpha\norm{g(0,\xi_\star^+)},
\]
whose expectation is finite by Assumption~\ref{ass:N}\assitemref{itm:N3}.

If \(\beta<2\), Assumption~\ref{ass:N}\assitemref{itm:N5} at \(x=0\) and \(\xi=\xi_\star\) gives an exponential moment of \(g(0,\xi_\star^+)\).  Along the straight line from \(0\) to \(x_{\rm ref}^+\),
\[
        d_{V_\alpha}^{X}(x_{\rm ref}^+,0)
        \le
        \alpha\norm{g(0,\xi_\star^+)}
        \left[
        1+\delta_\alpha+
        \exp\!\left\{\kappa\bigl(\alpha^{2-\beta}\norm{g(0,\xi_\star^+)}^{2-\beta}-(\tfrac{3R_H}{4})^{2-\beta}\bigr)_+\right\}
        \right].
\]
Since \(2-\beta\in(0,1]\), \(r^{2-\beta}\le1+r\) for \(r\ge0\), we may choose \(\alpha_0\in(0,1]\) such that, for \(0<\alpha\le\alpha_0\), the exponential factor above is dominated by
\(
        C e^{\lambda\norm{g(0,\xi_\star^+)}}
\)
for some \(\lambda>0\) below the exponential-moment threshold supplied by \assitemref{itm:N5}.  Thus
\(
        \E d_{V_\alpha}^{X}(x_{\rm ref}^+,0)<\infty.
\)
This proves \eqref{eq:fixed-reference-integrability}.
\end{proof}

\subsection{Proof of Corollary~\ref{cor:ordinary-W1-markov}}
\label{sec:proof-ordinary-W1}
\begin{proof}[Proof of Corollary~\ref{cor:ordinary-W1-markov}]
Since \(V_\alpha\ge1\), the augmented induced metric dominates the
Euclidean distance in the $X$-coordinate:
\(
        d_{V_\alpha,\alpha}\bigl((x,\xi),(y,\eta)\bigr)
        \ge \norm{x-y}.
\)
Hence, for probability laws \(\mu,\nu\) on \(\mathsf Z\),
\[
        W_1(\mu_X,\nu_X)\le W_{V_\alpha,\alpha}(\mu,\nu).
\]
Applying this with \(\nu=\pi_\alpha\), and then using
Theorem~\ref{thm:main-markov}, gives
\[
        W_1\bigl((\mu P_\alpha^n)_X,(\pi_\alpha)_X\bigr)
        \le
        W_{V_\alpha,\alpha}(\mu P_\alpha^n,\pi_\alpha)
        \le
        (1-c\alpha^{m-1})^n
        W_{V_\alpha,\alpha}(\mu,\pi_\alpha).
\]
Since \(\pi_\alpha\in\mathcal P_1(\mathsf Z,d_{V_\alpha,\alpha})\), the final
quantity is finite if and only if
\(\mu\in\mathcal P_1(\mathsf Z,d_{V_\alpha,\alpha})\).

It remains to characterize \(\mathcal P_1(\mathsf Z,d_{V_\alpha,\alpha})\).  Since
\(
        \vert(x,\xi)\vert_\alpha
        =
        \norm{x}+\alpha^{-1}\norm{\xi},
\)
and since \(V_\alpha\ge1\), projection of paths gives the lower bounds
\[
        d_{V_\alpha,\alpha}\bigl((x,\xi),(0,\xi_\star)\bigr)
        \ge
        d_{V_\alpha}^{X}(x,0),
        \qquad
        d_{V_\alpha,\alpha}\bigl((x,\xi),(0,\xi_\star)\bigr)
        \ge
        \alpha^{-1}\norm{\xi-\xi_\star}.
\]
Conversely, moving first in the \(x\)-coordinate and then in the
\(\xi\)-coordinate gives
\[
        d_{V_\alpha,\alpha}\bigl((x,\xi),(0,\xi_\star)\bigr)
        \le
        d_{V_\alpha}^{X}(x,0)
        +
        V_\alpha(0)\alpha^{-1}\norm{\xi-\xi_\star}.
\]
Thus \(d_{V_\alpha,\alpha}((x,\xi),(0,\xi_\star))\)-integrability is
equivalent to integrability of
\(
        d_{V_\alpha}^{X}(x,0)
        +
        \alpha^{-1}\norm{\xi-\xi_\star}.
\)

If \(\beta=2\), then \(\mathcal T_\beta\equiv1\) and
\(1\le V_\alpha\le1+\delta_\alpha\).  Hence
\(
        \norm{x-y}
        \le
        d_{V_\alpha}^{X}(x,y)
        \le
        (1+\delta_\alpha)\norm{x-y},
\)
so \(d_{V_\alpha}^{X}(x,0)\)-integrability is equivalent to
\(\norm{x}\)-integrability.

If \(1\le\beta<2\), let \(R_H\) be as in
Assumption~\ref{ass:H}\assitemref{itm:H2} and define
\[
        \Psi_\kappa(r)
        :=
        \int_0^r
        \exp\{\kappa\bigl(s^{2-\beta}-(\tfrac{3R_H}{4})^{2-\beta}\bigr)_+\}\,ds.
\]
Because the correction \(\delta_\alpha\omega\) is compactly supported and
\(0\le\omega\le1\),
\[
        \Psi_\kappa(\norm{x})
        \le
        d_{V_\alpha}^{X}(x,0)
        \le
        \Psi_\kappa(\norm{x})+\frac{\delta_\alpha R_H}{2}.
\]
Moreover,
\[
        \Psi_\kappa(r)
        \sim
        \frac{e^{-\kappa(\frac{3R_H}{4})^{2-\beta}}}{\kappa(2-\beta)}
        r^{\beta-1}
        e^{\kappa r^{2-\beta}},
        \qquad r\to\infty.
\]
Therefore \(d_{V_\alpha}^{X}(x,0)\)-integrability is equivalent to
\[
        \int
        (1+\norm{x})^{\beta-1}
        \exp\{\kappa\norm{x}^{2-\beta}\}\,\mu(dx,d\xi)
        <\infty.
\]
Since \(2-\beta\in(0,1]\), for \(r\ge0\),
\(
        r^{2-\beta}\le(1+r)^{2-\beta}\le1+r^{2-\beta},
\)
and hence
\[
        e^{-\kappa}e^{\kappa r^{2-\beta}}
        \le
        \exp\!\left\{\kappa\bigl((1+r)^{2-\beta}-1\bigr)\right\}
        \le
        e^{\kappa r^{2-\beta}}.
\]
Thus the preceding integrability condition is equivalent to integrability of
\(\Gamma_\beta(\norm{x})\).
Combining this with the \(\xi\)-coordinate term gives the displayed description of
\(\mathcal P_1(\mathsf Z,d_{V_\alpha,\alpha})\).
\end{proof}

\section{Scaling-limit estimates}
\label{sec:scaling-technical-estimates}

This appendix proves the auxiliary results used for the scaling limit of
invariant laws.  Constants denoted by \(C,c\) may change from line to line but
are independent of \(\alpha\), unless stated otherwise.

\subsection{Deterministic scaling estimates}

\begin{lemma}
\label{lem:scaling-deterministic-estimates}
Under Assumption~\ref{ass:H}, the following bounds hold.
\begin{enumerate}[label=\textup{(\roman*)}, leftmargin=2.6em]
\item If \(\norm{x}\le \frac{R_H}{2}\), then
\(
        \ip{x}{h(x)}
        \ge
        \frac{c_{\rm in}}{m-1}\norm{x}^m.
\)

\item There exists \(C_{\rm tail}>0\) such that
\[
        \norm{x}^\beta
        \le
        C_{\rm tail}\bigl(1+\ip{x}{h(x)}\bigr),
        \qquad 1\le\beta<2,
\]
and
\[
        \norm{x}^2
        \le
        C_{\rm tail}\bigl(1+\ip{x}{h(x)}\bigr),
        \qquad \beta=2.
\]

\item With \(r_0:=\min\{1,\frac{R_H}{2}\}\), there exists \(c_*>0\) such that
\[
        \ip{x}{h(x)}\ge c_*,
        \qquad \norm{x}\ge r_0.
\]

\item There exists \(C_a>0\) such that, for all \(x\in\R^d\),
\(
        (1+\norm{x})(1+\norm{x}^{\beta-1})
        \le
        C_a\bigl(1+\ip{x}{h(x)}\bigr).
\)
\end{enumerate}
\end{lemma}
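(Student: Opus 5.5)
We prove the four items in order. Each one uses convexity of \(H\) along rays together with the two regional conditions \assitemref{itm:H2} and \assitemref{itm:H3}. Recall \(h(0)=0\), since \(0\in\arg\min H\) and \(H\in C^2\).

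For (i), fix \(\norm{x}\le R_H/2\) and write \(\ip{x}{h(x)}=\int_0^1\ip{x}{\nabla^2H(tx)x}\,dt\). Assumption~\ref{ass:H}\assitemref{itm:H2} gives \(\ip{x}{\nabla^2H(tx)x}\ge c_{\rm in}t^{m-2}\norm{x}^{m}\) for \(t\in(0,1]\). The single point \(t=0\) is harmless when \(m>2\), and for \(m=2\) the bound extends to it by continuity. Integrating yields \(\frac{c_{\rm in}}{m-1}\norm{x}^m\).

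For (iii), the key observation is monotonicity along rays: \(t\mapsto\ip{x}{h(tx)}\) is nondecreasing because \(H\) is convex. Hence \(\ip{x}{h(x)}\ge t^{-1}\ip{tx}{h(tx)}\) for \(t\in(0,1]\). Given \(\norm{x}\ge r_0\), take \(t=r_0/\norm{x}\). Then, using (i),
\[
\ip{x}{h(x)}\ge\frac{\norm{x}}{r_0}\cdot\frac{c_{\rm in}}{m-1}r_0^{m}\ge\frac{c_{\rm in}}{m-1}r_0^{m}=:c_*.
\]
The same computation in fact gives the linear lower bound \(\ip{x}{h(x)}\ge c\norm{x}\) for \(\norm{x}\ge r_0\), which we will reuse.

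For (ii), consider \(1\le\beta<2\) first. For \(\norm{x}\ge R_H\), \assitemref{itm:H3b} gives \(\ip{x}{h(x)}\ge c_{\rm out}\norm{x}^\beta\) directly. For \(\norm{x}\le R_H\), \(\norm{x}^\beta\le R_H^\beta\) is absorbed into the constant. Now take \(\beta=2\). For \(\norm{x}\ge R_H\), write \(x=\norm{x}e\) and split
\[
\ip{x}{h(x)}=\ip{x}{h(R_He)}+\int_{R_H}^{\norm{x}}\norm{x}\,\ip{e}{\nabla^2H(se)e}\,ds .
\]
The first term is nonnegative by convexity, since \(\ip{e}{h(R_He)}\ge\ip{e}{h(0)}=0\). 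By \assitemref{itm:H3a} the integral is at least \(c_{\rm out}\norm{x}(\norm{x}-R_H)\). This dominates \(\tfrac{c_{\rm out}}2\norm{x}^2\) once \(\norm{x}\ge2R_H\). The bounded region \(\norm{x}\le2R_H\) is again absorbed into the constant.

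For (iv), first note that \((1+\norm{x})(1+\norm{x}^{\beta-1})\le C(1+\norm{x}^\beta)\) for \(\beta\in[1,2]\). This holds because \(\norm{x}^{\beta-1}\le1+\norm{x}\) and \(\norm{x}\le1+\norm{x}^\beta\), so the left side is at most \(C(1+\norm{x})^{\beta}\le C'(1+\norm{x}^\beta)\). When \(\beta<2\) the claim then follows from (ii). When \(\beta=2\), (ii) gives \(\norm{x}^2\le C(1+\ip{x}{h(x)})\), and \(\norm{x}^\beta=\norm{x}^2\), so we are done.

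No step looks like a genuine obstacle. The one place needing care is the convexity-along-rays argument in the \(\beta=2\) case of (ii), where one must check that \(\ip{e}{h(R_He)}\ge0\). Elsewhere the work is just tracking which region each bound comes from.
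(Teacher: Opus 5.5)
Your proof is correct and follows the paper's argument essentially step for step: the Hessian integral along the segment for (i), monotonicity of \(s\mapsto\ip{u}{h(su)}\) along rays for (iii), and for \(\beta=2\) in (ii) an application of the quadratic-tail Hessian bound on the outer part of the ray, where you split at \(R_He\) and the paper splits at \(x/2\), which is immaterial. The only thing to tidy is the justification in (iv): the two inequalities you cite do not by themselves give \(C(1+\norm{x})^\beta\), but \(1+\norm{x}^{\beta-1}\le 2(1+\norm{x})^{\beta-1}\) together with \((1+\norm{x})^\beta\le 2^{\beta-1}(1+\norm{x}^\beta)\) does.
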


\begin{proof}
For the lower bound near the minimizer, use
\(
        h(x)=\int_0^1\nabla^2H(tx)x\,dt.
\)
Assumption~\ref{ass:H}\assitemref{itm:H2} gives
\[
        \ip{x}{h(x)}
        =
        \int_0^1\ip{x}{\nabla^2H(tx)x}\,dt
        \ge
        c_{\rm in}\norm{x}^m\int_0^1t^{m-2}\,dt
        =
        \frac{c_{\rm in}}{m-1}\norm{x}^m.
\]

If \(1\le\beta<2\), the tail condition \(\ip{x}{h(x)}\ge c_{\rm out}\norm{x}^\beta\) on \(\{\norm{x}\ge R_H\}\), together with boundedness on \(\{\norm{x}<R_H\}\), gives
\(
        \norm{x}^{\beta}
        \le C\bigl(1+\ip{x}{h(x)}\bigr).
\)
If \(\beta=2\), then for \(\norm{x}\ge2R_H\), convexity and minimality at zero imply
\(
        \ip{x}{h(x/2)}\ge0.
\)
Using Assumption~\ref{ass:H}\assitemref{itm:H3a} on the segment \(\{tx:1/2\le t\le1\}\),
\[
\begin{aligned}
        \ip{x}{h(x)}
        &\ge
        \ip{x}{h(x)-h(x/2)}                                      \\
        &=
        \int_{1/2}^1\ip{x}{\nabla^2H(tx)x}\,dt
        \ge
        \frac{c_{\rm out}}2\norm{x}^2.
\end{aligned}
\]
The region \(\{\norm{x}<2R_H\}\) is absorbed into the additive constant.

Finally, write \(x=ru\), where \(r=\norm{x}\) and \(u\in\Sph^{d-1}\).
Convexity of \(H\) and minimality at zero imply that
\(s\mapsto\ip{u}{h(su)}\) is nondecreasing on \([0,\infty)\).  For
\(r\ge r_0\),
\[
        \ip{x}{h(x)}
        =
        r\ip{u}{h(ru)}
        \ge
        r_0\ip{u}{h(r_0u)}
        =
        \ip{r_0u}{h(r_0u)}
        \ge
        \frac{c_{\rm in}}{m-1}r_0^m.
\]
This proves item \textup{(iii)}.  For item \textup{(iv)}, if \(\beta=2\), then
\(
        (1+\norm{x})(1+\norm{x}^{\beta-1})
        \le C(1+\norm{x}^2),
\)
and the quadratic-tail estimate in item \textup{(ii)} gives the result.  If \(1\le\beta<2\), then
\(
        (1+\norm{x})(1+\norm{x}^{\beta-1})
        \le C(1+\norm{x}^\beta),
\)
and the subquadratic-tail estimate in item \textup{(ii)} gives the result.  This proves the lemma.
\end{proof}

\subsection{Driving-chain ergodicity and Poisson equation}

\begin{lemma}
\label{lem:xi-invariant-law}
Assume Assumption~\ref{ass:N}\assitemref{itm:N1} and the reference-point condition \eqref{eq:markov-reference-state}.  Let \(Q\) be the transition kernel of
\(
        \xi_{n+1}=\Phi(\xi_n,U_{n+1}).
\)
Then \(Q\) admits an invariant law \(\pi_\Xi\in\mathcal P_1(\Xi)\), unique among all Borel invariant probability laws on \(\Xi\).  Moreover,
\(
        \E\norm{\xi_n^\xi-\xi_n^\eta}
        \le\rho_\Xi^n\norm{\xi-\eta},
\)
for synchronously coupled chains started from \(\xi\) and \(\eta\), and
\[
        W_1(\mu Q,\nu Q)\le \rho_\Xi W_1(\mu,\nu),
        \qquad \mu,\nu\in\mathcal P_1(\Xi),
\]
and if \(\pi_\alpha\) is the invariant law of the augmented chain, then its \(\Xi\)-marginal is \(\pi_\Xi\).
\end{lemma}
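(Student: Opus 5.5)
The plan is to apply Proposition~\ref{prop:CD-to-ergodic} to the driving chain on the Polish space \((\Xi,\norm{\cdot})\). The space is Polish because \(\Xi\) is a closed subset of \(\R^{d_\Xi}\).

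\emph{Contraction.} Under the synchronous coupling \(\xi_{n+1}^\xi=\Phi(\xi_n^\xi,U_{n+1})\) and \(\xi_{n+1}^\eta=\Phi(\xi_n^\eta,U_{n+1})\), Assumption~\ref{ass:N}\assitemref{itm:N1} gives
\[
\norm{\xi_{n+1}^\xi-\xi_{n+1}^\eta}\le L_\Phi(U_{n+1})\norm{\xi_n^\xi-\xi_n^\eta}.
\]
The innovation \(U_{n+1}\) is independent of \(\mathcal F_n\), so conditioning on \(\mathcal F_n\) and taking expectations gives
\[
\E\norm{\xi_{n+1}^\xi-\xi_{n+1}^\eta}\le\rho_\Xi\,\E\norm{\xi_n^\xi-\xi_n^\eta}.
\]
Induction then yields \(\E\norm{\xi_n^\xi-\xi_n^\eta}\le\rho_\Xi^n\norm{\xi-\eta}\). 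The case \(n=1\) is exactly \eqref{eq:metric-one-step-contraction} with \(r=\rho_\Xi<1\). The reference integrability \eqref{eq:metric-reference-integrability} at \(\xi_\star\) is \eqref{eq:markov-reference-state}. Proposition~\ref{prop:CD-to-ergodic} therefore provides an invariant law \(\pi_\Xi\in\mathcal P_1(\Xi)\) that is unique among all Borel invariant laws; the uniqueness argument in that proposition already covers arbitrary Borel invariant laws. It also provides \(W_1(\mu Q,\nu Q)\le\rho_\Xi W_1(\mu,\nu)\), which follows by gluing an optimal coupling of \((\mu,\nu)\) with the synchronous one-step coupling.

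\emph{Marginal identification.} By \eqref{eq:augmented-random-map}, the \(\xi\)-coordinate of \(F_{\alpha,u}(x,\xi)\) is \(\Phi(\xi,u)\), so it does not depend on \(x\). Hence if \((X_0,\xi_0)\sim\pi_\alpha\), the \(\Xi\)-marginal \(\mu\) of \(\pi_\alpha\) satisfies \(\mu Q=\mu\). Borel-uniqueness then forces \(\mu=\pi_\Xi\), with no integrability of \(\mu\) needed. For completeness one can also note that \(\mu\in\mathcal P_1(\Xi)\): \(\pi_\alpha\in\mathcal P_1(\mathsf Z,d_{V_\alpha,\alpha})\), and \(d_{V_\alpha,\alpha}\ge\alpha^{-1}\norm{\xi-\xi'}\).

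The only delicate point is the measure-theoretic justification of the conditional expectation step, which is where the independence of \(U_{n+1}\) from the past is used. Everything else reduces to Proposition~\ref{prop:CD-to-ergodic}.
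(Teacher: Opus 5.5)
Your proposal is correct and follows essentially the same route as the paper. The paper carries out inline the synchronous-coupling contraction, the Banach fixed-point argument in $(\mathcal P_1(\Xi),W_1)$, the Borel-uniqueness step via $Q^n\varphi(\xi)\to\pi_\Xi(\varphi)$ with dominated convergence, and the marginal identification. You instead package the middle steps by invoking Proposition~\ref{prop:CD-to-ergodic}, and you rightly note that its proof, not just its stated conclusion, gives uniqueness among all Borel invariant laws.
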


\begin{proof}
Let \(\xi_\star\) be as in Assumption~\ref{ass:N}\assitemref{itm:N3}.  If \(\eta\sim\mu\in\mathcal P_1(\Xi)\), then Assumption~\ref{ass:N}\assitemref{itm:N1} gives
\[
\begin{aligned}
        \E\norm{\Phi(\eta,U_1)-\xi_\star}
        &\le
        \E\norm{\Phi(\eta,U_1)-\Phi(\xi_\star,U_1)}
        +
        \E\norm{\Phi(\xi_\star,U_1)-\xi_\star}     \\
        &\le
        \rho_\Xi\E\norm{\eta-\xi_\star}
        +
        \E\norm{\Phi(\xi_\star,U_1)-\xi_\star}.
\end{aligned}
\]
Thus \(Q\) maps \(\mathcal P_1(\Xi)\) into itself.

For any coupling \((\eta,\widetilde\eta)\) of \((\mu,\nu)\), drive both chains by the same innovation \(U_1\).  Then
\[
        \E\norm{\Phi(\eta,U_1)-\Phi(\widetilde\eta,U_1)}
        \le
        \rho_\Xi\E\norm{\eta-\widetilde\eta}.
\]
Iteration with independent innovations gives the stated synchronous
\(n\)-step estimate.
Taking the infimum over couplings yields
\(
        W_1(\mu Q,\nu Q)\le\rho_\Xi W_1(\mu,\nu).
\)
Since \(\Xi\) is closed in a finite-dimensional Euclidean space, \((\mathcal P_1(\Xi),W_1)\) is complete.  Banach's fixed-point theorem gives a unique invariant law \(\pi_\Xi\in\mathcal P_1(\Xi)\).

To prove uniqueness among all Borel invariant laws, let \(\rho\) be such a law
on \(\Xi\).  For each fixed \(\xi\), the contraction estimate gives
\(\delta_\xi Q^n\to\pi_\Xi\) in \(W_1\).  Hence, for every bounded Lipschitz
\(\varphi\),
\(
        Q^n\varphi(\xi)\to\pi_\Xi(\varphi).
\)
By invariance and dominated convergence,
\(
        \rho(\varphi)
        =
        \rho(Q^n\varphi)
        \to
        \pi_\Xi(\varphi).
\)
Bounded Lipschitz functions determine Borel probability laws on the Polish space \(\Xi\), so \(\rho=\pi_\Xi\).

Finally, if \((X,\xi)\sim\pi_\alpha\), invariance of the augmented chain
implies that \(\Phi(\xi,U_1)\) has the same law as \(\xi\).  Thus the
\(\Xi\)-marginal of \(\pi_\alpha\) is invariant for \(Q\) and equals
\(\pi_\Xi\).
\end{proof}

\begin{lemma}
\label{lem:poisson-noise}
Assume the hypotheses of Lemma~\ref{lem:xi-invariant-law}.  Let \(f:\Xi\to\R\) be bounded, Lipschitz, and centered under \(\pi_\Xi\).  Define
\(
        u(\xi):=\sum_{k=0}^{\infty}Q^kf(\xi).
\)
Then the series converges absolutely for every \(\xi\), \(u\) is Lipschitz with
\(
        \Lip(u)\le\frac{\Lip(f)}{1-\rho_\Xi},
\)
\(u\in L^2(\pi_\Xi)\), and
\(
        u-Qu=f.
\)
\end{lemma}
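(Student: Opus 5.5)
The plan is to control $Q^k f$ pointwise through the synchronous coupling of Lemma~\ref{lem:xi-invariant-law}, and then read off convergence, the Lipschitz bound, square-integrability and the Poisson equation in that order.

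\textbf{Pointwise bound on $Q^kf$.} For fixed $\xi$, let $(\eta_0,\eta_k)$ have $\eta_0\sim\pi_\Xi$, with $\eta_k$ run synchronously alongside the chain started at $\xi$. Since $f$ is centered and $\pi_\Xi Q^k=\pi_\Xi$,
\[
        |Q^kf(\xi)|
        =\Bigl|Q^kf(\xi)-\int Q^kf\,d\pi_\Xi\Bigr|
        \le \Lip(f)\,\E\norm{\xi_k^\xi-\xi_k^{\eta_0}}
        \le \Lip(f)\,\rho_\Xi^k\int\norm{\xi-\eta}\,\pi_\Xi(d\eta).
\]
The last integral is finite because $\pi_\Xi\in\mathcal P_1(\Xi)$, and it is at most $\norm{\xi-\xi_\star}+\int\norm{\eta-\xi_\star}\,d\pi_\Xi$. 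Summing the geometric series in $k$ gives absolute convergence for every $\xi$, together with the linear-growth bound
\[
        |u(\xi)|\le \frac{\Lip(f)}{1-\rho_\Xi}\bigl(\norm{\xi-\xi_\star}+C\bigr).
\]

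\textbf{Lipschitz bound.} The same coupling, applied to two starting points, gives
\[
        |Q^kf(\xi)-Q^kf(\eta)|\le \Lip(f)\,\E\norm{\xi_k^\xi-\xi_k^\eta}\le \Lip(f)\,\rho_\Xi^k\norm{\xi-\eta}.
\]
Summing over $k$ yields $\Lip(u)\le \Lip(f)/(1-\rho_\Xi)$.

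\textbf{Poisson equation.} Each $Q^kf$ is bounded by $\sup|f|$ and Lipschitz, so for a fixed $\xi$ the partial sums $\sum_{k=0}^n Q^kf(\Phi(\xi,U_1))$ are dominated by
\[
        \sum_k \Lip(f)\rho_\Xi^k\bigl(\norm{\Phi(\xi,U_1)-\xi_\star}+C\bigr),
\]
which is integrable because $\E\norm{\Phi(\xi,U_1)-\xi_\star}\le\rho_\Xi\norm{\xi-\xi_\star}+\E\norm{\Phi(\xi_\star,U_1)-\xi_\star}<\infty$. Dominated convergence then lets $Q$ pass through the series, so $Qu=\sum_{k\ge1}Q^kf$, and hence $u-Qu=f$.

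\textbf{The $L^2(\pi_\Xi)$ claim.} This is the delicate step. The linear-growth bound gives $u\in L^2(\pi_\Xi)$ as soon as $\pi_\Xi$ has a second moment, and under Assumption~\ref{ass:Nscale}\assitemref{itm:N8} the moment condition \eqref{eq:g0-moment-gk} supplies exactly $\int\norm{\xi-\xi_\star}^2\,d\pi_\Xi<\infty$. The lemma, however, only assumes the hypotheses of Lemma~\ref{lem:xi-invariant-law}. If second moments are not available there, I would instead bound each term by $|Q^kf|\le\min\{\sup|f|,\ \Lip(f)\rho_\Xi^k(\norm{\xi-\xi_\star}+C)\}$. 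Taking $k_0(\xi)\approx \log(1+\norm{\xi-\xi_\star})/\log(1/\rho_\Xi)$, the terms with $k\le k_0$ are bounded by $\sup|f|$ and the tail $k>k_0$ is geometric and of order one, so
\[
        |u(\xi)|\le C\bigl(1+\log(1+\norm{\xi-\xi_\star})\bigr).
\]
The square of this logarithm is integrable against any measure with a first moment, so $u\in L^2(\pi_\Xi)$ follows from $\pi_\Xi\in\mathcal P_1(\Xi)$ alone, and I would present this version of the argument.
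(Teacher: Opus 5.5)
Your proposal is correct and matches the paper's proof essentially step for step. Both arguments use synchronous coupling against the stationary chain for the geometric bound on $Q^kf$ and for $\Lip(u)$. Both take the minimum with the sup-norm bound to obtain logarithmic growth of $u$, so that $u\in L^2(\pi_\Xi)$ follows from $\pi_\Xi\in\mathcal P_1(\Xi)$ alone, and both apply $Q$ termwise to get $u-Qu=f$. Your explicit dominating function for interchanging $Q$ with the series is a slightly more careful version of the paper's one-line appeal to absolute convergence.
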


\begin{proof}
Let \(\xi_k^\xi\) be the driving chain started from \(\xi\).  Let \(\eta_0\sim\pi_\Xi\) and drive the stationary chain \((\eta_k)\) by the same innovations.  Since \(f\) is centered under \(\pi_\Xi\),
\(
        Q^kf(\xi)
        =
        \E[f(\xi_k^\xi)-f(\eta_k)].
\)
Therefore
\(
        \abs{Q^kf(\xi)}
        \le
        \min\left\{
        2\norm{f}_\infty,\,
        \Lip(f)\rho_\Xi^k\bigl(\norm{\xi}+\E_{\pi_\Xi}\norm{\eta}\bigr)
        \right\}.
\)
This bound proves absolute convergence.  Similarly, synchronous coupling of two chains started from \(\xi\) and \(\eta\) gives
\(
        \abs{Q^kf(\xi)-Q^kf(\eta)}
        \le
        \Lip (f)\rho_\Xi^k\norm{\xi-\eta},
\)
and summing over \(k\) proves the Lipschitz bound for \(u\).

The first bound also implies logarithmic growth:
\(
        \abs{u(\xi)}
        \le
        C_f\bigl(1+\log(1+\norm{\xi})\bigr).
\)
Since \(\pi_\Xi\in\mathcal P_1(\Xi)\) and \(\log^2(1+r)\le C(1+r)\), we get \(u\in L^2(\pi_\Xi)\).  Finally, absolute convergence allows termwise application of \(Q\), giving
\[
        Qu=\sum_{k=1}^{\infty}Q^kf,
        \qquad
        u-Qu=f.
\]
\end{proof}

\subsection{Proof of Lemma~\ref{lem:poisson-gk-construction}}
\label{sec:proof-poisson-gk-construction}
\begin{proof}[Proof of Lemma~\ref{lem:poisson-gk-construction}]
Let \(\xi_k^\xi\) and \(\xi_k^\eta\) be two copies of the driving chain,
started from \(\xi\) and \(\eta\) and coupled through the same innovations.
By Assumption~\ref{ass:N}\assitemref{itm:N1},
\(
        \E\norm{\xi_k^\xi-\xi_k^\eta}
        \le \rho_\Xi^k\norm{\xi-\eta}.
\)
For \(k\ge1\), Assumption~\ref{ass:N}\assitemref{itm:N2} gives
\[
        \norm{Q^kg_x(\xi)-Q^kg_x(\eta)}
        \le
        L_{g,\Phi}\rho_\Xi^{k-1}\norm{\xi-\eta}.
\]
Let \(\eta\sim\pi_\Xi\), independent of the driving innovations.  Since \(\pi_\Xi g_x=0\) by \eqref{eq:stationary-centering-gkx},
\[
        Q^kg_x(\xi)=\E[g_x(\xi_k^\xi)-g_x(\xi_k^\eta)],
        \qquad k\ge1.
\]
Consequently
\[
        \norm{Q^kg_x(\xi)}
        \le
        L_{g,\Phi}\rho_\Xi^{k-1}
        \bigl(\norm{\xi-\xi_\star}
        +\E_{\pi_\Xi}\norm{\eta-\xi_\star}\bigr),
\]
and the series defining \(\chi_x\) converges absolutely for every \(\xi\).
Applying \(Q\) termwise gives \(\chi_x-Q\chi_x=Qg_x\).  The same estimate
gives
\begin{equation}
\label{eq:poisson-solution-growth}
        \norm{\chi_x(\xi)}
        \le C_\chi R_\chi(\xi),
        \qquad
        R_\chi(\xi):=1+\norm{\xi-\xi_\star},
\end{equation}
with \(R_\chi\in L^2(\pi_\Xi)\) by \eqref{eq:g0-moment-gk}.  Centering of \(\chi_x\) follows from invariance of \(\pi_\Xi\):
\(
        \int_\Xi \chi_x(\xi)\,\pi_\Xi(d\xi)=0,
\)
because \(\int Q^kg_x\,d\pi_\Xi=\int g_x\,d\pi_\Xi=0\) for every \(k\ge1\).

It remains to control the dependence on \(x\).  The uniform derivative bound
\eqref{eq:global-jacobian-bounds} shows that, for some \(L_x>0\),
\[
        \norm{g_x(\zeta)-g_y(\zeta)}
        \le L_x\norm{x-y},
        \qquad x,y\in\R^d,\ \zeta\in\Xi.
\]
Fix \(p\in(0,1)\), and put \(f_{x,y}:=g_x-g_y\).  By
\eqref{eq:stationary-centering-gkx}, \(\pi_\Xi f_{x,y}=0\).  On the one hand,
\[
        \norm{Q^kf_{x,y}(\xi)}
        \le L_x\norm{x-y},
        \qquad k\ge1.
\]
On the other hand, comparison with the stationary chain used above and
Assumption~\ref{ass:N}\assitemref{itm:N2}, applied separately to \(g_x\)
and \(g_y\), give
\[
\begin{aligned}
        \norm{Q^kf_{x,y}(\xi)}
        &\le
        2L_{g,\Phi}\rho_\Xi^{k-1}
        \bigl(\norm{\xi-\xi_\star}
        +\E_{\pi_\Xi}\norm{\eta-\xi_\star}\bigr)\\
        &\le C\rho_\Xi^{k-1}R_\chi(\xi).
\end{aligned}
\]
Since \(\min\{a,b\}\le a^pb^{1-p}\) for \(a,b\ge0\), summing these two
bounds over \(k\ge1\) yields the H\"older estimate
\begin{equation}
\label{eq:poisson-solution-x-holder}
        \norm{\chi_x(\xi)-\chi_y(\xi)}
        \le C_p\norm{x-y}^pR_\chi(\xi)^{1-p},
        \qquad x,y\in\R^d,\ \xi\in\Xi.
\end{equation}
The martingale-difference property \eqref{eq:Dx-martingale-difference} is
immediate from \eqref{eq:chi-poisson}.

Finally, let \(f(\xi)=g(0,\xi)\).  The geometric estimate used to prove
\eqref{eq:poisson-solution-growth}, with \(x=0\), gives
\[
        \norm{Q^kf(\xi)}
        \le C\rho_\Xi^{k-1}(1+\norm{\xi-\xi_\star}),
        \qquad k\ge1.
\]
Together with \eqref{eq:g0-moment-gk}, this implies absolute summability of
the lag covariances.  Write the stationary chain as
\((\xi_n)_{n\in\mathbb Z}\), set \(f_n=f(\xi_n)\) and
\(\chi_n=\chi_0(\xi_n)\), and define
\(
        D_n:=f_{n+1}+\chi_{n+1}-\chi_n.
\)
Then \((D_n)\) is a martingale-difference sequence and
\(
        f_{n+1}=D_n+\chi_n-\chi_{n+1}.
\)
Therefore, for \(N\ge1\),
\(
        \sum_{n=1}^N f_n
        =
        \sum_{n=0}^{N-1}D_n+\chi_0-\chi_N.
\)
Since \(\chi_0\in L^2(\pi_\Xi)\), the telescoping term is negligible after division by \(N\) in the covariance of the partial sums.  Hence
\[
        \lim_{N\to\infty}
        \frac1N\Var\!\left(\sum_{n=1}^N f_n\right)
        =
        \E[D_0D_0^\top]
        =
        \Sigma.
\]
On the other hand, stationarity and absolute summability of the lag covariances give
\[
        \lim_{N\to\infty}
        \frac1N\Var\!\left(\sum_{n=1}^N f_n\right)
        =
        \Gamma_0+\sum_{k\ge1}(\Gamma_k+\Gamma_k^\top).
\]
This proves \eqref{eq:GK-lag-series} and completes the proof of the lemma.
\end{proof}

\subsection{Proof of Lemma~\ref{lem:scaling-tightness-package}}
\label{sec:proof-scaling-tightness-package}

\begin{lemma}
\label{lem:scaling-noise-second-markov}
Assume Assumptions~\ref{ass:Hscale} and~\ref{ass:Nscale}.  Then there exist constants \(A,B>0\), independent of \(\alpha\), such that
\(
        \E\norm{g(X_\alpha,\xi_\alpha^+)}^2
        \le
        A+B\,\E\ip{X_\alpha}{h(X_\alpha)}.
\)
\end{lemma}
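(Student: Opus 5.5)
The plan is to bound the noise pointwise in $x$ and then absorb the resulting polynomial growth into $1+\ip{x}{h(x)}$ using Lemma~\ref{lem:scaling-deterministic-estimates}. Conditioning on $(X_\alpha,\xi_\alpha)$ and using that $U_1$ is independent of the current state, we have
\[
\E\norm{g(X_\alpha,\xi_\alpha^+)}^2=\E\,\Psi(X_\alpha,\xi_\alpha),\qquad \Psi(x,\xi):=\E\norm{g(x,\Phi(\xi,U_1))}^2 .
\]
It therefore suffices to bound $\Psi(x,\xi)$ by $A+B\ip{x}{h(x)}$ plus terms whose $\pi_\alpha$-expectation is bounded uniformly in $\alpha$.

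\emph{Case $1\le\beta<2$.} Assumption~\ref{ass:N}\assitemref{itm:N5} gives a uniform exponential moment for $\norm{g(x,\Phi(\xi,U_1))}/(1+\norm{x}^{\beta-1})$. Hence its second moment is bounded uniformly in $(x,\xi)$, and
\[
\Psi(x,\xi)\le C(1+\norm{x}^{\beta-1})^2\le C(1+\norm{x}^{\beta}),
\]
since $2(\beta-1)\le\beta$ for $\beta\le2$. Lemma~\ref{lem:scaling-deterministic-estimates}(ii) then bounds this by $C(1+\ip{x}{h(x)})$, which gives the claim directly.

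\emph{Case $\beta=2$.} There is no uniform-in-$\xi$ control, so I would split
\[
g(x,\xi^+)=g(0,\xi^+)+\{g(x,\xi^+)-g(0,\xi^+)\}.
\]
By Lemma~\ref{lem:N7-consequences}, the second piece is bounded by $C\norm{x}$. Its squared contribution is therefore at most $C\norm{x}^2\le C(1+\ip{x}{h(x)})$, by Lemma~\ref{lem:scaling-deterministic-estimates}(ii) with $\beta=2$. For the first piece, the key observation is that $\xi_\alpha^+\sim\pi_\Xi$ by stationarity and Lemma~\ref{lem:xi-invariant-law}. So $\E\norm{g(0,\xi_\alpha^+)}^2=\int\norm{g(0,\xi)}^2\,\pi_\Xi(d\xi)<\infty$ by \eqref{eq:g0-moment-gk}, a constant independent of $\alpha$. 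Combining the two pieces via $(a+b)^2\le 2a^2+2b^2$ gives $A+B\,\E\ip{X_\alpha}{h(X_\alpha)}$. This argument works for all $\beta$, so it could also replace the case split, but the case $\beta<2$ is cleaner as above.

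The main point to check is that the $\xi$-dependence is handled through the stationary marginal rather than pointwise in $\xi$. For $\beta=2$, assumption \assitemref{itm:N3} only controls $g(0,\cdot)$ at $\xi_\star$. The route through $\pi_\Xi$ and \eqref{eq:g0-moment-gk} avoids this, because the statement involves only expectations under $\pi_\alpha$.

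Finiteness of $\E\ip{X_\alpha}{h(X_\alpha)}$ is not needed for the inequality itself, since the right-hand side may be $+\infty$. Moreover, $\pi_\alpha\in\mathcal P_1$ guarantees integrability for $\beta<2$ through exponential moments.
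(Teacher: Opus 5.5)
Your proof is correct and matches the paper's argument essentially step for step. Both use the same case split on $\beta$. For $\beta<2$, both use \assitemref{itm:N5} with $(1+\norm{x}^{\beta-1})^2\le C(1+\norm{x}^\beta)$. For $\beta=2$, both use the Lipschitz split through $g(0,\cdot)$, with $\xi_\alpha^+\sim\pi_\Xi$ and \eqref{eq:g0-moment-gk}.

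One correction to your aside: the $\beta=2$ argument does not work for all $\beta$. Under \assitemref{itm:H3b} one has $\ip{x}{h(x)}\le C\norm{x}^\beta$, so $\norm{x}^2\le C(1+\ip{x}{h(x)})$ fails, and the case split is genuinely needed rather than merely cleaner.
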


\begin{proof}
By Lemma~\ref{lem:xi-invariant-law}, \(\xi_\alpha^+\sim\pi_\Xi\).

If \(\beta=2\), \eqref{eq:global-jacobian-bounds} gives a uniform
\(x\)-Lipschitz constant \((2-\theta)L_{\mathsf G}/(1-\theta)\) for \(g\).  Hence
\[
        \norm{g(x,\zeta)}^2
        \le
        2\norm{g(0,\zeta)}^2
        +2\left(\frac{(2-\theta)L_{\mathsf G}}{1-\theta}\right)^2\norm{x}^2.
\]
The stationary second-moment condition \eqref{eq:g0-moment-gk} gives
\(
        \E\norm{g(0,\xi_\alpha^+)}^2<\infty,
\)
and Lemma~\ref{lem:scaling-deterministic-estimates} gives
\(
        \norm{x}^2\le C(1+\ip{x}{h(x)}).
\)
Taking expectations proves the claim in the quadratic-tail case.

If \(1\le\beta<2\), Assumption~\ref{ass:N}\assitemref{itm:N5} gives a uniform second moment for
\(
        \frac{g(x,\Phi(\xi,U_1))}{1+\norm{x}^{\beta-1}}.
\)
Therefore, conditionally on \((X_\alpha,\xi_\alpha)\),
\[
        \E\bigl[\norm{g(X_\alpha,\xi_\alpha^+)}^2\mid X_\alpha,\xi_\alpha\bigr]
        \le
        C(1+\norm{X_\alpha}^{\beta-1})^2
        \le
        C(1+\norm{X_\alpha}^{\beta}).
\]
Using again Lemma~\ref{lem:scaling-deterministic-estimates},
\(
        \norm{x}^{\beta}\le C(1+\ip{x}{h(x)}),
\)
and taking expectations proves the claim.
\end{proof}

\begin{lemma}
\label{lem:scaling-stationary-L2-markov}
Assume Assumptions~\ref{ass:Hscale} and~\ref{ass:Nscale}.  For all sufficiently small \(\alpha\),
\[
        \E\norm{X_\alpha}^2<\infty,
        \qquad
        \E\norm{h(X_\alpha)+g(X_\alpha,\xi_\alpha^+)}^2<\infty.
\]
Moreover,
\begin{equation}
\label{eq:stationary-cross-poisson-bound}
        \E\ip{X_\alpha}{g(X_\alpha,\xi_\alpha^+)}
        \ge
        -\theta\E\ip{X_\alpha}{h(X_\alpha)}
        -C\alpha\left(1+\E\ip{X_\alpha}{h(X_\alpha)}\right).
\end{equation}
\end{lemma}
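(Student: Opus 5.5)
We prove the two integrability claims first, and then the cross-term bound \eqref{eq:stationary-cross-poisson-bound} through the Poisson decomposition of Lemma~\ref{lem:poisson-gk-construction}.

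\emph{Integrability.} By Theorem~\ref{thm:main-markov}, \(\pi_\alpha\in\mathcal P_1(\mathsf Z,d_{V_\alpha,\alpha})\). Corollary~\ref{cor:ordinary-W1-markov} identifies this class.
\begin{itemize}
\item For \(1\le\beta<2\), the class requires an exponential moment of \(\norm{X_\alpha}^{2-\beta}\), which gives all polynomial moments, in particular \(\E\norm{X_\alpha}^2<\infty\).
\item For \(\beta=2\), the class only gives a first moment. We upgrade it with a standard Lyapunov argument on \(\norm{x}^2\). Using \eqref{eq:population-cocoercivity} and Lemma~\ref{lem:scaling-deterministic-estimates}(ii), the one-step drift of \(\norm{X_n}^2\) is at most \(-c\alpha\norm{X_n}^2+C\alpha\), up to a cross term with the noise. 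This cross term is handled exactly as below. Running the chain from \(\delta_{z_\star}\) gives a uniform-in-\(n\) bound on \(\E\norm{X_n}^2\). Weak convergence to \(\pi_\alpha\) (Proposition~\ref{prop:CD-to-ergodic}) and Fatou's lemma then transfer this bound to \(\pi_\alpha\).
\end{itemize}
The second-moment claim for \(h(X_\alpha)+g(X_\alpha,\xi_\alpha^+)\) then follows. Indeed, \(h\) is globally Lipschitz by Lemma~\ref{lem:N7-consequences}, and Lemma~\ref{lem:scaling-noise-second-markov} bounds \(\E\norm{g}^2\) by \(A+B\E\ip{X}{h(X)}\). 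The latter is finite because \(\ip{x}{h(x)}\le C\norm{x}^2\).

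\emph{Cross term.} Set \(\xi=\xi_\alpha\), \(\xi^+=\xi_\alpha^+\) and \(X=X_\alpha\), and apply \eqref{eq:martingale-poisson-decomposition} at \(x=X\):
\[
g(X,\xi^+)=D_X(\xi,U_1)+\chi_X(\xi)-\chi_X(\xi^+).
\]
\begin{itemize}
\item The martingale part contributes zero, since \(U_1\) is independent of \((X,\xi)\) and \eqref{eq:Dx-martingale-difference} holds.
\item For the telescoping part, stationarity gives \((X^+,\xi^+)\stackrel d=(X,\xi)\). Hence \(\E\ip{X}{\chi_X(\xi)}=\E\ip{X^+}{\chi_{X^+}(\xi^+)}\), and therefore
\[
\E\ip{X}{\chi_X(\xi)-\chi_X(\xi^+)}=\E\ip{X^+-X}{\chi_{X^+}(\xi^+)}+\E\ip{X}{\chi_{X^+}(\xi^+)-\chi_X(\xi^+)}.
\]
\end{itemize}
In the first term, \(X^+-X=-\alpha(h+g)\). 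By \eqref{eq:poisson-solution-growth} and Cauchy--Schwarz it is at most
\[
C\alpha\bigl(\E\norm{h+g}^2\bigr)^{1/2}\norm{R_\chi}_{L^2(\pi_\Xi)}.
\]
By Lemma~\ref{lem:scaling-noise-second-markov} and the Lipschitz bound on \(h\) combined with \eqref{eq:population-cocoercivity}, this is \(\le C\alpha(1+\E\ip{X}{h(X)})\).

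\emph{Main obstacle.} The second term is where the difficulty lies. Using the Hölder estimate \eqref{eq:poisson-solution-x-holder} directly costs \(\norm{X}\,\alpha^p\norm{h+g}^p\), which is not \(O(\alpha)\). The plan is to avoid it by working with the unconditional quantity. Since \(\chi_x\) is centered under \(\pi_\Xi\) and \(\xi^+\) is independent of nothing useful, we write \(\E\ip{X}{g(X,\xi^+)}=\E\ip{X}{\bar g(X,\xi)}\) by conditioning on \((X,\xi)\). We then use \eqref{eq:mean-perturbation-control} with \(y=0\),
\[
\ip{x}{\bar g(x,\xi)-\bar g(0,\xi)}\ge-\theta\ip{x}{h(x)},
\]
which yields exactly the \(-\theta\E\ip{X}{h(X)}\) term. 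What remains is \(\E\ip{X}{\bar g(0,\xi)}=\E\ip{X}{g(0,\xi^+)}\), which has no \(x\)-dependence in the noise. The Poisson decomposition above is applied only to \(g_0\), so the Hölder term disappears: \(\chi_0\) does not depend on \(x\), and only the \(O(\alpha)\) term \(\E\ip{X^+-X}{\chi_0(\xi^+)}\) survives. Combining the two pieces gives \eqref{eq:stationary-cross-poisson-bound}.

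The one step to verify is that all expectations used are finite. This is supplied by the integrability established in the first step.
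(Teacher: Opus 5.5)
Your proof is essentially the paper's. For $1\le\beta<2$ the exponential weight gives all polynomial moments. For the cross term you apply \eqref{eq:mean-perturbation-control} at $y=0$ to extract $-\theta\E\ip{X_\alpha}{h(X_\alpha)}$, and you use the Poisson decomposition only for $g(0,\cdot)$. Stationarity then turns the telescoping part into $-\alpha\E\ip{h(X_\alpha)+g(X_\alpha,\xi_\alpha^+)}{\chi_0(\xi_\alpha^+)}$, which Cauchy--Schwarz bounds by $C\alpha\bigl(1+\E\ip{X_\alpha}{h(X_\alpha)}\bigr)$.

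The one step to tighten is the $\beta=2$ second-moment bound. There, ``handled exactly as below'' does not literally apply, because the stationarity identity fails along the transient chain. You have two easy fixes:
\begin{itemize}
\item add the corrector $-2\alpha\ip{x}{\chi_0(\xi)}$ to $\norm{x}^2$, as the paper does with $J_\alpha^0$ and $\xi_0\sim\pi_\Xi$; or
\item bound $\abs{\E\ip{X_n}{\bar g(0,\xi_n)}}\le\varepsilon\E\norm{X_n}^2+C_\varepsilon$, where $C_\varepsilon$ is uniform in $n$, e.g.\ when $\xi_0\sim\pi_\Xi$, by \eqref{eq:g0-moment-gk}. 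This suffices because only a finite bound is needed there.
\end{itemize}
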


\begin{proof}
We first justify the second moments.  If \(1\le\beta<2\), the induced
first-moment bound in Theorem~\ref{thm:main-markov} implies that \(X_\alpha\)
has finite moments of every polynomial order.  Lemma~\ref{lem:scaling-noise-second-markov}
and the global Lipschitz bound on \(h\) from Lemma~\ref{lem:N7-consequences}
then give
\(
        \E\norm{h(X_\alpha)+g(X_\alpha,\xi_\alpha^+)}^2<\infty.
\)

For \(\beta=2\), Lemma~\ref{lem:scaling-deterministic-estimates} gives the
global coercivity bound
\begin{equation}
\label{eq:h-quadratic-coercive-for-L2}
        \ip{x}{h(x)}\ge c\norm{x}^2-C,
        \qquad x\in\R^d.
\end{equation}
Indeed, outside a sufficiently large ball this follows from the quadratic-tail lower Hessian bound, while the remaining bounded region is absorbed into the additive constant.

Start the augmented chain from \(X_0=0\) and \(\xi_0\sim\pi_\Xi\).  Then \(\xi_n\sim\pi_\Xi\) for all \(n\).  Define the martingale-corrected quadratic Lyapunov function
\(
        J_\alpha^0(x,\xi)
        :=\norm{x}^2-2\alpha\ip{x}{\chi_0(\xi)}.
\)
For any law whose \(\Xi\)-marginal is \(\pi_\Xi\), \eqref{eq:poisson-solution-growth}, Young's inequality, and \(R_\chi\in L^2(\pi_\Xi)\) give
\begin{equation}
\label{eq:Lalpha-comparable-L2}
        \frac12\E\norm{X}^2-C\alpha^2
        \le
        \E J_\alpha^0(X,\xi)
        \le
        \frac32\E\norm{X}^2+C\alpha^2.
\end{equation}
For finite \(n\), the second moments below are finite by induction from \(X_0=0\), the finite variance of \(g(0,\xi)\) under \(\pi_\Xi\), and the global Lipschitz bounds in Lemma~\ref{lem:N7-consequences}.  Write
\[
        \mathsf G_n:=\mathsf G(X_n,\xi_{n+1}),
        \qquad
        X_{n+1}=X_n-\alpha\mathsf G_n.
\]
The mean-perturbation condition \eqref{eq:mean-perturbation-control}, with
\(y=0\), implies
\[
\begin{aligned}
        &\E\!\left[
        \ip{X_n}{\mathsf G(X_n,\xi_{n+1})-\mathsf G(0,\xi_{n+1})}
        \,\middle|\,X_n,\xi_n\right]\\
        &\qquad=
        \ip{X_n}{
        h(X_n)+\bar g(X_n,\xi_n)-\bar g(0,\xi_n)}
        \ge
        (1-\theta)\ip{X_n}{h(X_n)}.
\end{aligned}
\]
The Poisson identity at the minimizer gives
\(
        \E\ip{X_n}{g(0,\xi_{n+1})}
        =
        \E\ip{X_n}{\chi_0(\xi_n)-\chi_0(\xi_{n+1})}.
\)
Expanding \(J_\alpha^0\), using
\(g(X_n,\xi_{n+1})=g(0,\xi_{n+1})+
\mathsf G(X_n,\xi_{n+1})-\mathsf G(0,\xi_{n+1})-h(X_n)\), and applying the preceding cancellation yield
\[
\begin{aligned}
        &\E[J_\alpha^0(X_{n+1},\xi_{n+1})-
        J_\alpha^0(X_n,\xi_n)]\\
        &\quad=-2\alpha\E\ip{X_n}{
        \mathsf G(X_n,\xi_{n+1})-\mathsf G(0,\xi_{n+1})} \\
        &\quad+\alpha^2\E\norm{\mathsf G_n}^2
        +2\alpha^2\E\ip{\mathsf G_n}{\chi_0(\xi_{n+1})}.
\end{aligned}
\]
The global Lipschitz bounds in Lemma~\ref{lem:N7-consequences}, together with
\(\E_{\pi_\Xi}\norm{g(0,\xi)}^2<\infty\), imply
\(
        \E\norm{\mathsf G_n}^2\le C\bigl(1+\E\norm{X_n}^2\bigr).
\)
Moreover, Cauchy--Schwarz and \(\chi_0\in L^2(\pi_\Xi)\) give
\[
        \abs{\E\ip{\mathsf G_n}{\chi_0(\xi_{n+1})}}
        \le
        \bigl(\E\norm{\mathsf G_n}^2\bigr)^{1/2}
        \bigl(\E_{\pi_\Xi}\norm{\chi_0}^2\bigr)^{1/2}
        \le C\bigl(1+\E\norm{X_n}^2\bigr).
\]
Combining these bounds with \eqref{eq:h-quadratic-coercive-for-L2} yields
\[
        \E J_\alpha^0(X_{n+1},\xi_{n+1})
        \le
        \E J_\alpha^0(X_n,\xi_n)-c\alpha\E\norm{X_n}^2+C\alpha
        +C\alpha^2\bigl(1+\E\norm{X_n}^2\bigr).
\]
For sufficiently small \(\alpha\), \eqref{eq:Lalpha-comparable-L2} absorbs the \(C\alpha^2\E\norm{X_n}^2\) term and gives
\begin{equation}
\label{eq:Lalpha-geometric-bound}
        \E J_\alpha^0(X_{n+1},\xi_{n+1})
        \le
        (1-c\alpha)\E J_\alpha^0(X_n,\xi_n)+C\alpha.
\end{equation}
Iterating \eqref{eq:Lalpha-geometric-bound} and using \(X_0=0\) gives
\(\sup_n\E\norm{X_n}^2<\infty\).  By Theorem~\ref{thm:main-markov}, the
finite-time laws converge weakly to \(\pi_\alpha\).  Lower semicontinuity of
\(x\mapsto\norm{x}^2\) then gives
\(\E_{\pi_\alpha}\norm{X_\alpha}^2<\infty\).  Lemma~\ref{lem:scaling-noise-second-markov}
and the global Lipschitz bound in Lemma~\ref{lem:N7-consequences} give the
second moment of \(h(X_\alpha)+g(X_\alpha,\xi_\alpha^+)\).

We now prove \eqref{eq:stationary-cross-poisson-bound}.
Lemma~\ref{lem:scaling-noise-second-markov} and
\eqref{eq:population-cocoercivity} imply
\[
        \E\norm{h(X_\alpha)+g(X_\alpha,\xi_\alpha^+)}^2
        \le C\left(1+\E\ip{X_\alpha}{h(X_\alpha)}\right).
\]
By stationarity and the Poisson identity at the minimizer,
\[
\begin{aligned}
        \E\ip{X_\alpha}{g(0,\xi_\alpha^+)}
        &=
        \E\ip{X_\alpha}{\chi_0(\xi_\alpha)-\chi_0(\xi_\alpha^+)}\\
        &=
        \E\ip{X_\alpha^+-X_\alpha}{\chi_0(\xi_\alpha^+)}\\
        &=
        -\alpha\E\ip{
        h(X_\alpha)+g(X_\alpha,\xi_\alpha^+)}
        {\chi_0(\xi_\alpha^+)}.
\end{aligned}
\]
Thus Cauchy--Schwarz gives
\[
        \E\ip{X_\alpha}{g(0,\xi_\alpha^+)}
        \ge
        -C\alpha
        \left(\E\norm{h(X_\alpha)+g(X_\alpha,\xi_\alpha^+)}^2\right)^{1/2}
        \ge
        -C\alpha\left(1+\E\ip{X_\alpha}{h(X_\alpha)}\right).
\]
Finally, the mean-perturbation condition with \(y=0\) gives
\[
\begin{aligned}
        &\E\ip{X_\alpha}{
        h(X_\alpha)+g(X_\alpha,\xi_\alpha^+)-g(0,\xi_\alpha^+)}\\
        &\qquad\ge(1-\theta)\E\ip{X_\alpha}{h(X_\alpha)}.
\end{aligned}
\]
Combining the last two displays and subtracting
\(\E\ip{X_\alpha}{h(X_\alpha)}\) proves
\eqref{eq:stationary-cross-poisson-bound}.
This proves the lemma.
\end{proof}

\begin{proof}[Proof of Lemma~\ref{lem:scaling-tightness-package}]
By Lemma~\ref{lem:scaling-stationary-L2-markov}, the stationary square identity is justified.  Since \(X_\alpha^+\stackrel d=X_\alpha\),
\[
        0
        =
        -2\alpha
        \E\ip{X_\alpha}{h(X_\alpha)+g(X_\alpha,\xi_\alpha^+)}
        +
        \alpha^2
        \E\norm{h(X_\alpha)+g(X_\alpha,\xi_\alpha^+)}^2.
\]
Using \eqref{eq:stationary-cross-poisson-bound},
Lemma~\ref{lem:scaling-noise-second-markov}, and
\eqref{eq:population-cocoercivity}, we obtain
\[
        (1-\theta)\E\ip{X_\alpha}{h(X_\alpha)}
        \le
        C\alpha
        +C\alpha\E\ip{X_\alpha}{h(X_\alpha)}.
\]
For small \(\alpha\), the last term is absorbed into the left-hand side, giving
\(
        \E\ip{X_\alpha}{h(X_\alpha)}\le C\alpha.
\)
This proves \eqref{eq:scaling-xh-main}, and Lemma~\ref{lem:scaling-noise-second-markov} then gives \eqref{eq:scaling-gradient-second-main}.

The local moment and outside-probability estimates now follow from the same drift lower bounds.  Let \(r_0:=\min\{1,\frac{R_H}{2}\}\).  Lemma~\ref{lem:scaling-deterministic-estimates} gives
\[
        \norm{x}^m\le C\ip{x}{h(x)},
        \qquad \norm{x}\le r_0,
        \qquad
        \ip{x}{h(x)}\ge c_*,
        \qquad \norm{x}\ge r_0.
\]
Therefore
\[
\begin{aligned}
        \E[\norm{X_\alpha}^m\1_{\{\norm{X_\alpha}\le1\}}]
        &\le
        C\E\ip{X_\alpha}{h(X_\alpha)}
        +\PP(\norm{X_\alpha}\ge r_0)                                      \\
        &\le C\alpha,
\end{aligned}
\]
and
\[
        \PP(\norm{X_\alpha}\ge1)
        \le \PP(\norm{X_\alpha}\ge r_0)
        \le c_*^{-1}\E\ip{X_\alpha}{h(X_\alpha)}
        \le C\alpha.
\]
This proves all estimates in Lemma~\ref{lem:scaling-tightness-package}. 
\end{proof}

\begin{lemma}
\label{lem:scaled-increment-lindeberg}
Under the assumptions of Lemma~\ref{lem:scaling-tightness-package}, for every \(\eta>0\),
\begin{equation}
\label{eq:scaling-gradient-lindeberg-main}
        \E\left[
        \frac{\norm{Y_\alpha^+-Y_\alpha}^2}{\alpha^{2-2/m}}
        \1_{\{\norm{Y_\alpha^+-Y_\alpha}>\eta\}}
        \right]\longrightarrow0.
\end{equation}
Moreover, for every fixed \(R>0\), the family
\[
        \left\{
        \norm{g(\alpha^{1/m}Y_\alpha,\xi_\alpha^+)}^2
        \1_{\{\norm{Y_\alpha}\le R\}}:0<\alpha\le\alpha_0
        \right\}
\]
is uniformly integrable.
\end{lemma}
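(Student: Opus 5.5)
Both claims come from writing the scaled increment as
\[
\frac{Y_\alpha^+-Y_\alpha}{\alpha^{1-1/m}}=-\bigl\{h(X_\alpha)+g(X_\alpha,\xi_\alpha^+)\bigr\},
\]
so the quantity in \eqref{eq:scaling-gradient-lindeberg-main} equals $\E[\norm{\mathsf G_\alpha}^2\1_{\{\norm{\mathsf G_\alpha}>\eta\alpha^{-(1-1/m)}\}}]$ with $\mathsf G_\alpha:=h(X_\alpha)+g(X_\alpha,\xi_\alpha^+)$. The threshold diverges, and \eqref{eq:scaling-gradient-second-main} gives only a uniform second moment. I therefore plan to prove uniform integrability of $\norm{\mathsf G_\alpha}^2$, after which the first claim follows at once.

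\textbf{Splitting.} I split on $\{\norm{X_\alpha}\le r\}$ and $\{\norm{X_\alpha}>r\}$ for a fixed small $r$, say $r=1$.

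\textbf{Inner region.} On $\{\norm{X_\alpha}\le 1\}$, $h$ is bounded. By the Lipschitz bound on $g$ from Lemma~\ref{lem:N7-consequences},
\[
\norm{g(X_\alpha,\xi_\alpha^+)}\le\norm{g(0,\xi_\alpha^+)}+C.
\]
Since $\xi_\alpha^+\sim\pi_\Xi$ by Lemma~\ref{lem:xi-invariant-law}, the family $\norm{g(0,\xi_\alpha^+)}^2$ is a single integrable random variable under a fixed law, by \eqref{eq:g0-moment-gk}. Hence $\norm{\mathsf G_\alpha}^2\1_{\{\norm{X_\alpha}\le1\}}$ is dominated by $C(1+\norm{g(0,\xi_\alpha^+)}^2)$, whose distribution does not depend on $\alpha$. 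This gives uniform integrability on the inner region. The same domination, applied on $\{\norm{Y_\alpha}\le R\}\subseteq\{\norm{X_\alpha}\le R\alpha^{1/m}\}$, proves the second claim directly.

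\textbf{Outer region.} On $\{\norm{X_\alpha}>1\}$ I need $\E[\norm{\mathsf G_\alpha}^2\1_{\{\norm{X_\alpha}>1\}}]\to0$, which is stronger than uniform integrability. Here I would reuse the argument of Lemma~\ref{lem:scaling-noise-second-markov}, conditioning on $(X_\alpha,\xi_\alpha)$.
\begin{itemize}
\item If $\beta<2$, (N5) together with the bound $\norm{h(x)}\le C(1+\norm{x}^{\beta-1})$ gives a conditional second moment bounded by $C(1+\norm{X_\alpha}^{\beta})$. On $\norm{x}\ge1$, Lemma~\ref{lem:scaling-deterministic-estimates} bounds this by $C\ip{x}{h(x)}$.
\item If $\beta=2$, the Lipschitz bounds give $\norm{\mathsf G_\alpha}^2\le C(\norm{g(0,\xi_\alpha^+)}^2+\norm{X_\alpha}^2)$. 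The $\norm{X_\alpha}^2$ part is at most $C\ip{X_\alpha}{h(X_\alpha)}$ on $\norm{X_\alpha}\ge1$. The $\norm{g(0,\xi_\alpha^+)}^2\1_{\{\norm{X_\alpha}>1\}}$ part tends to $0$ because $\PP(\norm{X_\alpha}>1)\le C\alpha$ by \eqref{eq:scaling-unit-tail-main} and the variable is uniformly integrable under a fixed law (absolute continuity of the integral).
\end{itemize}
In both cases \eqref{eq:scaling-xh-main} then yields $\E[\norm{\mathsf G_\alpha}^2\1_{\{\norm{X_\alpha}>1\}}]\le C\alpha+o(1)\to0$. For $\beta<2$ I would also handle the $h$-part and the $g$-part separately, bounding $\norm{h(x)}^2\le C\ip{x}{h(x)}$ by \eqref{eq:population-cocoercivity}.

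\textbf{Conclusion.} The two regions together give uniform integrability of $\norm{\mathsf G_\alpha}^2$. Combined with $\PP(\norm{\mathsf G_\alpha}>\eta\alpha^{-(1-1/m)})\to0$, which follows from Chebyshev and \eqref{eq:scaling-gradient-second-main}, this proves \eqref{eq:scaling-gradient-lindeberg-main}.

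\textbf{Main obstacle.} The delicate step is the outer region, since only a uniform second moment is available there. The resolution is that the outer mass is $O(\alpha)$ by \eqref{eq:scaling-unit-tail-main} and $\ip{x}{h(x)}$ dominates the conditional second moment there.
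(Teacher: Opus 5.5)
Your argument is correct, but it is organized differently from the paper's. The paper works directly with the moving threshold $\eta\alpha^{-(1-1/m)}$. For $\beta=2$ it splits $\norm{\mathsf G_\alpha}\le C\norm{X_\alpha}+\norm{g(0,\xi_\alpha^+)}$ via $(a+b)^2\1_{\{a+b>t\}}\le 4a^2\1_{\{a>t/2\}}+4b^2\1_{\{b>t/2\}}$, and notes that the $X_\alpha$-threshold eventually lies in the quadratic-tail region. For $1\le\beta<2$ it splits on whether $w_\alpha=1+\norm{X_\alpha}^{\beta-1}$ lies below the intermediate level $\alpha^{-(m-1)/(2m)}$. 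Below that level it uses the exponential tail from \assitemref{itm:N5}, and above it the bound $w_\alpha^2\le C\ip{X_\alpha}{h(X_\alpha)}$.

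You instead cut at the fixed radius $\norm{X_\alpha}=1$ and show that $\norm{\mathsf G_\alpha}^2$ itself is uniformly integrable as $\alpha\downarrow0$. Inside the unit ball, the global $x$-Lipschitz bound of Lemma~\ref{lem:N7-consequences} dominates everything by $C(1+\norm{g(0,\xi_\alpha^+)}^2)$, whose law does not depend on $\alpha$ because $\xi_\alpha^+\sim\pi_\Xi$. Outside the ball, the entire conditional second moment, not only its tail, is absorbed into $\ip{X_\alpha}{h(X_\alpha)}$, since $\ip{x}{h(x)}\ge c_*$ there by Lemma~\ref{lem:scaling-deterministic-estimates}\textup{(iii)}. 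Then \eqref{eq:scaling-xh-main} and \eqref{eq:scaling-unit-tail-main} make that contribution vanish.

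Your route buys three things. It treats both tail regimes uniformly. It uses only second moments from \assitemref{itm:N5}, not its exponential tail. It also reaches a stronger intermediate conclusion: uniform integrability, which gives the Lindeberg bound for any diverging threshold. The paper's subquadratic argument, by contrast, controls the noise only through the growth condition \assitemref{itm:N5} and does not use the Lipschitz-in-$x$ bound on $g$ at this step. The same contrast holds for the second claim: your Lipschitz domination works for every $\beta$, whereas the paper invokes \assitemref{itm:N5} when $\beta<2$.
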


\begin{proof}
We first prove the local uniform integrability statement.  On the event \(\{\norm{Y_\alpha}\le R\}\), the point \(x=\alpha^{1/m}Y_\alpha\) remains in a fixed compact set.  If \(\beta=2\), the uniform \(x\)-Lipschitz bound \eqref{eq:global-jacobian-bounds} gives
\(
        \norm{g(x,\xi_\alpha^+)}^2
        \le C_R\{1+\norm{g(0,\xi_\alpha^+)}^2\},
\)
and \(\xi_\alpha^+\sim\pi_\Xi\), so \eqref{eq:g0-moment-gk} gives uniform integrability.  If \(1\le\beta<2\), Assumption~\ref{ass:N}\assitemref{itm:N5} gives a uniform exponential moment for the ratio
\(
        \frac{\norm{g(x,\xi_\alpha^+)}}{1+\norm{x}^{\beta-1}},
\)
while \(1+\norm{x}^{\beta-1}\) is bounded on the same compact set.  This again gives uniform integrability.

For \eqref{eq:scaling-gradient-lindeberg-main}, write
\[
        Y_\alpha^+-Y_\alpha
        =-\alpha^{1-1/m}
        \{h(X_\alpha)+g(X_\alpha,\xi_\alpha^+)\}.
\]
It is enough to prove
\begin{equation}
\label{eq:scaled-Falpha-lindeberg-proof}
        \E\left[
        \norm{\mathsf G_\alpha}^2
        \1_{\{\alpha^{1-1/m}\norm{\mathsf G_\alpha}>\eta\}}
        \right]\to0,
        \qquad
        \mathsf G_\alpha:=\mathsf G(X_\alpha,\xi_\alpha^+).
\end{equation}

Consider first \(\beta=2\).  The global Lipschitz bounds in
Lemma~\ref{lem:N7-consequences} imply
\(
        \norm{\mathsf G_\alpha}
        \le C\norm{X_\alpha}+\norm{g(0,\xi_\alpha^+)}.
\)
Using \((a+b)^2\1_{\{a+b>t\}}\le4a^2\1_{\{a>t/2\}}+4b^2\1_{\{b>t/2\}}\),
the contribution of \(g(0,\xi_\alpha^+)\) to
\eqref{eq:scaled-Falpha-lindeberg-proof} tends to zero by dominated
convergence under \(\pi_\Xi\).  For the \(X_\alpha\)-term, the threshold
\(c\eta\alpha^{1/m-1}\) tends to infinity.  For all sufficiently small
\(\alpha\), it lies in the quadratic-tail region, where
\(\norm{x}^2\le C\ip{x}{h(x)}\).  Therefore
\[
        \E\!\bigl[\norm{X_\alpha}^2
        \1_{\{\alpha^{1-1/m}C\norm{X_\alpha}>\eta\}}\bigr]
        \le C\E\ip{X_\alpha}{h(X_\alpha)}\to0
\]
by \eqref{eq:scaling-xh-main}.  This verifies the Lindeberg condition in the quadratic-tail case.

Now assume \(1\le\beta<2\), and set
\(
        w_\alpha:=1+\norm{X_\alpha}^{\beta-1}.
\)
Since \(\norm{h(x)}\le C(1+\norm{x}^{\beta-1})\) globally, conditionally on \((X_\alpha,\xi_\alpha)\) we have
\[
        \norm{\mathsf G_\alpha}
        \le C w_\alpha\left\{1+
        \frac{\norm{g(X_\alpha,\xi_\alpha^+)}}{w_\alpha}\right\}.
\]
The exponential moment in Assumption~\ref{ass:N}\assitemref{itm:N5} gives a uniform conditional second moment and a uniform conditional exponential tail for the normalized factor.  Split according to
\(w_\alpha\le \alpha^{-(m-1)/(2m)}\).  On this event, the normalized factor must exceed a multiple of \(\alpha^{-(m-1)/(2m)}\), so the contribution is bounded by
\(
        C\E w_\alpha^2 e^{-c\alpha^{-(m-1)/(2m)}}\to0.
\)
On the complementary event, the conditional second-moment bound gives a contribution at most
\(
        C\E\bigl[w_\alpha^2
        \1_{\{w_\alpha>\alpha^{-(m-1)/(2m)}\}}\bigr].
\)
To show that this expectation vanishes, fix a sufficiently large \(R_0\).
The event \(\{w_\alpha>\alpha^{-(m-1)/(2m)}\}\) is eventually contained in
\(\{\norm{X_\alpha}\ge R_0\}\).  On this region,
Lemma~\ref{lem:scaling-deterministic-estimates} gives
\((1+\norm{x}^{\beta-1})^2\le C(1+\norm{x}^\beta)\le C\ip{x}{h(x)}\),
after increasing \(R_0\) if necessary.  Hence
\[
        \E\bigl[w_\alpha^2
        \1_{\{w_\alpha>\alpha^{-(m-1)/(2m)}\}}\bigr]
        \le C\E\ip{X_\alpha}{h(X_\alpha)}\to0.
\]
This proves \eqref{eq:scaled-Falpha-lindeberg-proof}, and hence \eqref{eq:scaling-gradient-lindeberg-main}.
\end{proof}

\subsection{Proof of Lemma~\ref{lem:noisestate-decorrelation}}
\label{sec:proof-noisestate-decorrelation}

\begin{proof}[Proof of Lemma~\ref{lem:noisestate-decorrelation}]
Let \(u\) be the Poisson solution from Lemma~\ref{lem:poisson-noise}; thus
\[
        u-Qu=f,
        \qquad
        u\in L^2(\pi_\Xi).
\]
Since \(\psi(Y_\alpha)\) is measurable with respect to \((X_\alpha,\xi_\alpha)\),
\[
\begin{aligned}
        \E[\psi(Y_\alpha)f(\xi_\alpha)]
        &=
        \E[\psi(Y_\alpha)u(\xi_\alpha)]
        -
        \E[\psi(Y_\alpha)Qu(\xi_\alpha)]                                      \\
        &=
        \E[\psi(Y_\alpha)u(\xi_\alpha)]
        -
        \E[\psi(Y_\alpha)u(\xi_\alpha^+)].
\end{aligned}
\]
Add and subtract \(\psi(Y_\alpha^+)u(\xi_\alpha^+)\).  Since
\(
        (Y_\alpha^+,\xi_\alpha^+)\stackrel d=(Y_\alpha,\xi_\alpha),
\)
the first and third terms cancel, giving
\[
        \E[\psi(Y_\alpha)f(\xi_\alpha)]
        =
        \E[(\psi(Y_\alpha^+)-\psi(Y_\alpha))u(\xi_\alpha^+)].
\]
Therefore, by Cauchy--Schwarz,
\[
        \abs{\E[\psi(Y_\alpha)f(\xi_\alpha)]}
        \le
        \Lip(\psi)
        \bigl(\E\norm{Y_\alpha^+-Y_\alpha}^2\bigr)^{1/2}
        \bigl(\E\abs{u(\xi_\alpha^+)}^2\bigr)^{1/2}.
\]
The second factor is finite and independent of \(\alpha\), because \(\xi_\alpha^+\sim\pi_\Xi\).  For the first factor,
\[
        Y_\alpha^+-Y_\alpha
        =
        -\alpha^{1-1/m}
        \{h(X_\alpha)+g(X_\alpha,\xi_\alpha^+)\}.
\]
Lemma~\ref{lem:scaling-tightness-package} gives
\(
        \E\norm{Y_\alpha^+-Y_\alpha}^2
        \le
        C\alpha^{2-2/m}.
\)
This proves the desired bound.
 
\end{proof}

\subsection{Proof of Lemma~\ref{lem:markov-generator-identification}}
\label{sec:proof-markov-generator-identification}

By \eqref{eq:population-cocoercivity}, \eqref{eq:scaling-xh-main}, and
Lemma~\ref{lem:scaling-noise-second-markov},
\begin{equation}
\label{eq:scaling-generator-moment-bounds}
        \E\norm{h(X_\alpha)}^2\le C\alpha,
        \qquad
        \sup_{0<\alpha\le\alpha_0}
        \E\norm{g(X_\alpha,\xi_\alpha^+)}^2<\infty,
        \qquad
        \E\bigl[\norm{h(X_\alpha)}
        \norm{g(X_\alpha,\xi_\alpha^+)}\bigr]\longrightarrow0.
\end{equation}
The last conclusion follows from the first two by Cauchy--Schwarz.

For the covariance calculation below, define, for \(x\in\R^d\),
\begin{equation*}
        M_x(\zeta)
        :=
        g_x(\zeta)g_x(\zeta)^\top
        +g_x(\zeta)\chi_x(\zeta)^\top
        +\chi_x(\zeta)g_x(\zeta)^\top,
        \qquad \zeta\in\Xi.
\end{equation*}
The next two lemmas justify replacing \(M_{\alpha^{1/m}Y_\alpha}\) by
\(M_0\) on compact sets and handle integrable functions of the driving chain.

\begin{lemma}
\label{lem:covariance-replacement-compact}
Under Assumptions~\ref{ass:Hscale} and~\ref{ass:Nscale}, for every compact set \(K\subset\R^d\),
one has \(QM_0\in L^1(\pi_\Xi)\) and
\begin{equation}
\label{eq:Q-Mx-M0-L1-compact}
        \E\left[
        \norm{QM_{\alpha^{1/m}Y_\alpha}(\xi_\alpha)-QM_0(\xi_\alpha)}
        \1_{\{Y_\alpha\in K\}}
        \right]
        \longrightarrow0.
\end{equation}
\end{lemma}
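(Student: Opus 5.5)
Throughout, write \(x_\alpha:=\alpha^{1/m}Y_\alpha\). On \(\{Y_\alpha\in K\}\) we have \(\norm{x_\alpha}\le \alpha^{1/m}\sup_K\norm{y}\to0\), so the claim reduces to a continuity statement: \(x\mapsto QM_x\) is continuous at \(x=0\) in a weighted \(L^1(\pi_\Xi)\) sense. We then use that the \(\xi_\alpha\)-marginal of \(\pi_\alpha\) is \(\pi_\Xi\) (Lemma~\ref{lem:xi-invariant-law}).

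\emph{Pointwise bound on the difference.} Fix \(\zeta\in\Xi\) and \(\norm{x}\le 1\). Write
\[
M_x-M_0=(g_x-g_0)g_x^\top+g_0(g_x-g_0)^\top+(g_x-g_0)\chi_x^\top+g_0(\chi_x-\chi_0)^\top+(\chi_x-\chi_0)g_x^\top+\chi_0(g_x-g_0)^\top .
\]
We control each factor as follows.
\begin{itemize}
\item The uniform \(x\)-Lipschitz bound \eqref{eq:global-jacobian-bounds} gives \(\norm{g_x(\zeta)-g_0(\zeta)}\le L_x\norm{x}\).
\item Consequently \(\norm{g_x(\zeta)}\le\norm{g_0(\zeta)}+L_x\).
\item By \eqref{eq:poisson-solution-growth}, \(\norm{\chi_x(\zeta)}\le C_\chi R_\chi(\zeta)\).
\item By the Hölder estimate \eqref{eq:poisson-solution-x-holder} with \(p=1/2\), \(\norm{\chi_x(\zeta)-\chi_0(\zeta)}\le C\norm{x}^{1/2}R_\chi(\zeta)^{1/2}\).
\end{itemize}
Combining these,
\[
\norm{M_x(\zeta)-M_0(\zeta)}\le C\norm{x}^{1/2}\bigl(1+\norm{g_0(\zeta)}^2+R_\chi(\zeta)^2\bigr)=:C\norm{x}^{1/2}W(\zeta),
\]
using \(\norm{x}\le\norm{x}^{1/2}\) and \(ab\le a^2+b^2\). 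By \eqref{eq:g0-moment-gk}, \(W\in L^1(\pi_\Xi)\). The same bounds at \(x=0\) give \(\norm{M_0}\le CW\), so \(QM_0\in L^1(\pi_\Xi)\) because \(\pi_\Xi Q=\pi_\Xi\).

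\emph{Passing through \(Q\) and taking expectations.} Applying \(Q\) (an expectation over \(U_1\)) yields, for each fixed \(x\),
\[
\norm{QM_x(\xi)-QM_0(\xi)}\le C\norm{x}^{1/2}\,QW(\xi).
\]
Here \(x\) is random, but it is a function of \((X_\alpha,\xi_\alpha)\), and \(QM_{x}\) is understood as \(x\) frozen inside \(Q\). Since \(U_1\) is independent of \((X_\alpha,\xi_\alpha)\), the bound applies with \(x=x_\alpha\). On \(\{Y_\alpha\in K\}\) we have \(\norm{x_\alpha}^{1/2}\le C_K\alpha^{1/(2m)}\). Hence the expectation in \eqref{eq:Q-Mx-M0-L1-compact} is at most
\[
C_K\alpha^{1/(2m)}\,\E\, QW(\xi_\alpha)=C_K\alpha^{1/(2m)}\int W\,d\pi_\Xi\to0 ,
\]
since \(\xi_\alpha\sim\pi_\Xi\).

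\emph{Main obstacle.} The delicate point is the \(x\)-dependence of the Poisson solution \(\chi_x\). It is only Hölder in \(x\), with a \(\xi\)-dependent constant growing like \(R_\chi^{1-p}\), and it must be paired with \(g_0\), which is merely square integrable. Choosing \(p=1/2\) (any \(p\in(0,1)\) works) keeps every product within second moments of \(g_0\) and \(\norm{\xi-\xi_\star}\), which \eqref{eq:g0-moment-gk} supplies. Measurability of \((x,\zeta)\mapsto M_x(\zeta)\) follows from continuity in \(x\) (Lipschitz for \(g\), Hölder for \(\chi\)) together with measurability in \(\zeta\), so the freezing argument is legitimate.
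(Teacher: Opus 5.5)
Your proposal is correct and follows essentially the same route as the paper. Both expand \(M_x-M_0\), combine the uniform Lipschitz bound on \(g_x\) with the growth and H\"older bounds for the Poisson solution \(\chi_x\) to get \(\norm{M_x-M_0}\le C\norm{x}^{p}\) times a \(\pi_\Xi\)-integrable weight for \(\norm{x}\le1\), and then pass through \(Q\) using invariance of \(\pi_\Xi\) and \(\xi_\alpha\sim\pi_\Xi\). The differences are cosmetic: you fix \(p=1/2\) and use the weight \(1+\norm{g_0}^2+R_\chi^2\) via AM--GM, whereas the paper keeps a general \(p\in(0,1)\) and uses \(F=1+\norm{g_0}+R_\chi+\norm{g_0}R_\chi\), which is integrable by Cauchy--Schwarz.
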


\begin{proof}
The uniform \(x\)-Lipschitz bound \eqref{eq:global-jacobian-bounds} gives
\(
        \norm{g_x(\zeta)-g_0(\zeta)}\le C\norm{x}.
\)
Fix \(p\in(0,1)\).  The estimates for the solution of the Poisson equation in
\eqref{eq:poisson-solution-growth} and
\eqref{eq:poisson-solution-x-holder} give
\[
        \norm{\chi_x(\zeta)}+\norm{\chi_0(\zeta)}\le C R_\chi(\zeta),
        \qquad
        \norm{\chi_x(\zeta)-\chi_0(\zeta)}
        \le C_p\norm{x}^pR_\chi(\zeta)^{1-p}.
\]
Expanding \(M_x-M_0\), for \(\norm{x}\le1\) we obtain
\(
        \norm{M_x(\zeta)-M_0(\zeta)}
        \le C_p(\norm{x}+\norm{x}^p)F(\zeta),
\)
where
\[
        F(\zeta):=
        1+\norm{g_0(\zeta)}+R_\chi(\zeta)
        +\norm{g_0(\zeta)}R_\chi(\zeta).
\]
The stationary second moments in \eqref{eq:g0-moment-gk} and
Cauchy--Schwarz imply \(F\in L^1(\pi_\Xi)\).  In particular,
\(M_0\in L^1(\pi_\Xi)\), and invariance gives
\(QM_0\in L^1(\pi_\Xi)\).  If \(K\) is compact, then, for all sufficiently
small \(\alpha\), \(\alpha^{1/m}\sup_{y\in K}\norm y\le1\).  On
\(\{Y_\alpha\in K\}\), the preceding pointwise bound, invariance of
\(\pi_\Xi\), and \(\xi_\alpha\sim\pi_\Xi\) give
\[
\begin{aligned}
        &\E\left[
        \norm{QM_{\alpha^{1/m}Y_\alpha}(\xi_\alpha)-QM_0(\xi_\alpha)}
        \1_{\{Y_\alpha\in K\}}\right]\\
        &\qquad\le
        C_{p,K}(\alpha^{1/m}+\alpha^{p/m})\E QF(\xi_\alpha)
        =
        C_{p,K}(\alpha^{1/m}+\alpha^{p/m})\int_\Xi F\,d\pi_\Xi
        \longrightarrow0.
\end{aligned}
\]
This proves \eqref{eq:Q-Mx-M0-L1-compact}.
\end{proof}

\begin{lemma}
\label{lem:unbounded-decorrelation}
Assume Assumptions~\ref{ass:Hscale} and~\ref{ass:Nscale}.  Let
\(\alpha_k\downarrow0\) and suppose \(Y_{\alpha_k}\Rightarrow\nu\).  If
\(\psi:\R^d\to\R\) is bounded and globally Lipschitz, and if
\(\Psi\in L^1(\pi_\Xi)\), then
\begin{equation*}
        \E[\psi(Y_{\alpha_k})\Psi(\xi_{\alpha_k})]
        \longrightarrow
        \left(\int_{\R^d}\psi(y)\,\nu(dy)\right)
        \left(\int_\Xi\Psi(\xi)\,\pi_\Xi(d\xi)\right).
\end{equation*}
\end{lemma}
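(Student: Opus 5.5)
We prove Lemma~\ref{lem:unbounded-decorrelation} in two steps: first for bounded Lipschitz \(\Psi\), and then for general \(\Psi\in L^1(\pi_\Xi)\) by approximation. Throughout we use that the \(\xi\)-marginal of \(\pi_\alpha\) is exactly \(\pi_\Xi\) for every \(\alpha\) (Lemma~\ref{lem:xi-invariant-law}). This uniformity in \(\alpha\) is what makes the approximation step work.

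\emph{Step 1 (bounded Lipschitz \(\Psi\)).} Write \(\Psi=\bar\Psi+f\), where \(\bar\Psi:=\int\Psi\,d\pi_\Xi\) and \(f:=\Psi-\bar\Psi\). Then \(f\) is bounded, Lipschitz and centered under \(\pi_\Xi\). This gives
\[
\E[\psi(Y_{\alpha_k})\Psi(\xi_{\alpha_k})]
=\bar\Psi\,\E\psi(Y_{\alpha_k})+\E[\psi(Y_{\alpha_k})f(\xi_{\alpha_k})].
\]
The second term is at most \(C_{\psi,f}\alpha_k^{1-1/m}\to0\) by Lemma~\ref{lem:noisestate-decorrelation}. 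Since \(\psi\) is bounded and continuous and \(Y_{\alpha_k}\Rightarrow\nu\), the first term converges to \(\bar\Psi\int\psi\,d\nu\). This proves the claim for bounded Lipschitz \(\Psi\).

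\emph{Step 2 (approximation).} Bounded Lipschitz functions are dense in \(L^1(\pi_\Xi)\), because \(\Xi\) is a closed subset of Euclidean space and \(\pi_\Xi\) is a finite Borel measure. Given \(\varepsilon>0\), choose a bounded Lipschitz \(\Psi_\varepsilon\) with \(\|\Psi-\Psi_\varepsilon\|_{L^1(\pi_\Xi)}\le\varepsilon\). Since \(\xi_{\alpha_k}\sim\pi_\Xi\) for every \(k\),
\[
\bigl|\E[\psi(Y_{\alpha_k})(\Psi-\Psi_\varepsilon)(\xi_{\alpha_k})]\bigr|
\le\|\psi\|_\infty\,\varepsilon
\]
uniformly in \(k\). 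The limiting product changes by at most the same amount:
\[
\Bigl|\int\psi\,d\nu\Bigr|\,\Bigl|\int(\Psi-\Psi_\varepsilon)\,d\pi_\Xi\Bigr|\le\|\psi\|_\infty\,\varepsilon.
\]
Applying Step 1 to \(\Psi_\varepsilon\) yields
\[
\limsup_k\Bigl|\E[\psi(Y_{\alpha_k})\Psi(\xi_{\alpha_k})]-\int\psi\,d\nu\int\Psi\,d\pi_\Xi\Bigr|\le2\|\psi\|_\infty\,\varepsilon.
\]
Letting \(\varepsilon\downarrow0\) concludes the proof. If \(\Psi\) is vector- or matrix-valued, as in the later application to \(QM_0\), we apply the argument entrywise.

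The only point needing care is the density claim. We take it from standard facts: continuous bounded functions are dense in \(L^1\) of a finite Borel measure on a metric space, and these in turn are approximated in \(L^1\) by bounded Lipschitz functions, for instance via inf-convolutions truncated at level \(\|\cdot\|_\infty\). The argument deliberately avoids any Poisson equation for unbounded \(\Psi\), whose solution need not lie in \(L^2\); Step 2 is what removes the need for it.
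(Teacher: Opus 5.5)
Your proof is correct and follows essentially the same route as the paper. Both arguments handle the constant part by weak convergence, apply Lemma~\ref{lem:noisestate-decorrelation} to a centered bounded Lipschitz approximant, and use the exact identity \(\xi_{\alpha_k}\sim\pi_\Xi\) to make the \(L^1(\pi_\Xi)\)-approximation error uniform in \(k\). The only difference is order: the paper centers \(\Psi\) first and then re-centers its approximant, which costs a second \(\varepsilon\); you approximate first and center afterwards, and you reach the same \(2\norm{\psi}_\infty\varepsilon\) bound.
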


\begin{proof}
The constant part of \(\Psi\) is handled by weak convergence of
\(Y_{\alpha_k}\), because the \(\Xi\)-marginal of \(\pi_{\alpha_k}\) is
\(\pi_\Xi\) by Lemma~\ref{lem:xi-invariant-law}.  It remains to
consider centered \(\Psi\), i.e. \(\pi_\Xi(\Psi)=0\).  Bounded Lipschitz
functions are dense in \(L^1(\pi_\Xi)\) on the closed Euclidean set \(\Xi\).
Thus, for every \(\varepsilon>0\), choose a bounded Lipschitz \(f\) such that
\(\norm{\Psi-f}_{L^1(\pi_\Xi)}<\varepsilon\).  Replacing \(f\) by
\(f-\pi_\Xi(f)\) increases this error by at most another \(\varepsilon\), so
we may assume \(\pi_\Xi(f)=0\).  Since \(\xi_{\alpha_k}\sim\pi_\Xi\),
\[
        \abs{\E[\psi(Y_{\alpha_k})(\Psi-f)(\xi_{\alpha_k})]}
        \le 2\norm{\psi}_\infty\varepsilon.
\]
Lemma~\ref{lem:noisestate-decorrelation} gives
\(\E[\psi(Y_{\alpha_k})f(\xi_{\alpha_k})]\to0\).  Letting
\(\varepsilon\downarrow0\) proves the centered case and hence the lemma.
 
\end{proof}

\begin{lemma}
\label{lem:perturbed-generator-prelimit}
Under Assumptions~\ref{ass:Hscale} and~\ref{ass:Nscale}, for every
\(\varphi\in C_c^3(\R^d)\),
\begin{equation}
\label{eq:prelimit-generator-identity}
        0=
        -\E\ip{h_\alpha(Y_\alpha)}{\nabla\varphi(Y_\alpha)}
        +\frac12\E\tr\!\left(
        QM_{\alpha^{1/m}Y_\alpha}(\xi_\alpha)\nabla^2\varphi(Y_\alpha)
        \right)+o(1),
\end{equation}
where the \(o(1)\) is deterministic and tends to zero as \(\alpha\downarrow0\).
\end{lemma}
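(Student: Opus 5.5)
The plan is to derive \eqref{eq:prelimit-generator-identity} from the stationary identity \(\E[\varphi(Y_\alpha^+)-\varphi(Y_\alpha)]=0\). The first-order noise term is corrected through the Poisson decomposition of Lemma~\ref{lem:poisson-gk-construction}, and everything is divided by \(\alpha^{2-2/m}\). Throughout I write \(x=\alpha^{1/m}Y_\alpha=X_\alpha\), \(x^+=X_\alpha^+\), \(\Delta:=Y_\alpha^+-Y_\alpha=-\alpha^{2-2/m}h_\alpha(Y_\alpha)-\alpha^{1-1/m}g_x(\xi_\alpha^+)\), and \(\mathsf G_\alpha=h(X_\alpha)+g(X_\alpha,\xi_\alpha^+)\).

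\textbf{Step 1 (Taylor expansion).} Expand
\[
\varphi(Y_\alpha^+)-\varphi(Y_\alpha)=\ip{\nabla\varphi(Y_\alpha)}{\Delta}+\tfrac12\ip{\Delta}{\nabla^2\varphi(Y_\alpha)\Delta}+\mathcal E_3,
\qquad |\mathcal E_3|\le C_\varphi\min\{\norm\Delta^3,\norm\Delta^2\}.
\]
\begin{itemize}
\item Since \(\E\norm\Delta^2\le C\alpha^{2-2/m}\), the bound on \(\mathcal E_3\), split at \(\norm\Delta\le\eta\), together with \eqref{eq:scaling-gradient-lindeberg-main} gives \(\E|\mathcal E_3|=o(\alpha^{2-2/m})\).
\item In the quadratic term, only \(\alpha^{2-2/m}\tfrac12\E\tr(\nabla^2\varphi(Y_\alpha)g_xg_x^\top)\) survives. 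The \(h\)-pieces are \(o(\alpha^{2-2/m})\) by \eqref{eq:scaling-generator-moment-bounds}, because \(\alpha^{2-2/m}h_\alpha(Y_\alpha)=\alpha^{1-1/m}h(X_\alpha)\) and \(\E\norm{h(X_\alpha)}^2\le C\alpha\).
\item The drift part of the linear term is exactly \(-\alpha^{2-2/m}\E\ip{h_\alpha(Y_\alpha)}{\nabla\varphi(Y_\alpha)}\).
\end{itemize}

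\textbf{Step 2 (Poisson correction of the linear noise term).} Conditioning on \((X_\alpha,\xi_\alpha)\) with \(x\) frozen and using \eqref{eq:chi-poisson},
\[
\E[g_x(\xi_\alpha^+)\mid X_\alpha,\xi_\alpha]=Qg_x(\xi_\alpha)=\chi_x(\xi_\alpha)-Q\chi_x(\xi_\alpha).
\]
Hence
\[
-\alpha^{1-1/m}\E\ip{\nabla\varphi(Y_\alpha)}{g_x(\xi^+_\alpha)}=-\alpha^{1-1/m}\E\ip{\nabla\varphi(Y_\alpha)}{\chi_x(\xi_\alpha)-\chi_x(\xi_\alpha^+)}.
\]
Stationarity of \((Y_\alpha,\xi_\alpha)\) gives \(\E\ip{\nabla\varphi(Y_\alpha)}{\chi_{x}(\xi_\alpha)}=\E\ip{\nabla\varphi(Y_\alpha^+)}{\chi_{x^+}(\xi_\alpha^+)}\). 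Integrability here follows from \eqref{eq:poisson-solution-growth} and \(R_\chi\in L^2(\pi_\Xi)\). The term therefore becomes
\[
\alpha^{1-1/m}\E\bigl[\ip{\nabla\varphi(Y_\alpha^+)-\nabla\varphi(Y_\alpha)}{\chi_{x^+}(\xi_\alpha^+)}+\ip{\nabla\varphi(Y_\alpha)}{\chi_{x^+}(\xi_\alpha^+)-\chi_x(\xi_\alpha^+)}\bigr].
\]
\begin{itemize}
\item In the first piece I replace \(\nabla\varphi(Y^+_\alpha)-\nabla\varphi(Y_\alpha)\) by \(\nabla^2\varphi(Y_\alpha)\Delta\), and then \(\Delta\) by \(-\alpha^{1-1/m}g_x(\xi_\alpha^+)\). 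This yields \(-\alpha^{2-2/m}\E\ip{\nabla^2\varphi(Y_\alpha)g_x(\xi_\alpha^+)}{\chi_{x}(\xi_\alpha^+)}\) plus errors. The sign is absorbed since the equation is set equal to zero.
\item Symmetrizing against the symmetric \(\nabla^2\varphi\) produces the cross terms \(g_x\chi_x^\top+\chi_xg_x^\top\) of \(M_x\).
\item Conditioning on \((X_\alpha,\xi_\alpha)\) then turns \(\E[M_x(\xi_\alpha^+)\mid\cdot]\) into \(QM_x(\xi_\alpha)\).
\end{itemize}

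\textbf{Step 3 (errors; the main obstacle).} The difficult part is controlling the remainders that involve the unbounded Poisson solution \(\chi\). There are three of them, all handled with Cauchy--Schwarz, \(R_\chi\in L^2(\pi_\Xi)\), \(\xi_\alpha^+\sim\pi_\Xi\), and \eqref{eq:scaling-gradient-second-main}.
\begin{itemize}
\item \emph{Change of subscript \(x^+\to x\) inside \(\chi\).} By \eqref{eq:poisson-solution-x-holder} and \(\norm{x^+-x}=\alpha\norm{\mathsf G_\alpha}\), this piece is at most \(C\alpha^{1-1/m}\alpha^{p}\E[\norm{\mathsf G_\alpha}^pR_\chi^{1-p}]=O(\alpha^{1-1/m+p})\). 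Choosing \(p\in(1-1/m,1)\) makes it \(o(\alpha^{2-2/m})\). The same Hölder bound handles \(\chi_{x^+}\) versus \(\chi_x\) in the first piece.
\item \emph{Second-order Taylor remainder of \(\nabla\varphi\).} This is bounded by \(C\min\{\norm\Delta^2,\norm\Delta\}R_\chi\). I handle it again by splitting on \(\norm\Delta\le\eta\), using Lindeberg-type uniform integrability from Lemma~\ref{lem:scaled-increment-lindeberg} together with Hölder.
\item \emph{Drift part of \(\Delta\) paired with \(\chi\).} This is \(O(\alpha^{2-2/m}\cdot\alpha^{1/2})\) by \(\E\norm{h(X_\alpha)}^2\le C\alpha\).
\end{itemize}
The delicate point is to keep every estimate with exponent \(\alpha^{1-1/m}\) times something genuinely \(o(\alpha^{1-1/m})\). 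If the \(\min\{\cdot\}\) remainder against the unbounded \(R_\chi\) resists uniform integrability, I would first truncate \(R_\chi\) at level \(K\) and use \(\E[R_\chi^2\1_{R_\chi>K}]\to0\) under \(\pi_\Xi\).

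Dividing by \(\alpha^{2-2/m}\) then gives \eqref{eq:prelimit-generator-identity} with a deterministic \(o(1)\).
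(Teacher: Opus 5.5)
Your argument follows the paper's proof. Applying stationarity to $\varphi$, and then separately to $(y,\xi)\mapsto\ip{\nabla\varphi(y)}{\chi_{\alpha^{1/m}y}(\xi)}$, is exactly stationarity of the perturbed test function $\varphi-\alpha^{1-1/m}\ip{\nabla\varphi}{\chi}$ used there. You then make the same split into a gradient-increment piece and a Poisson-solution-increment piece, and control the latter by \eqref{eq:poisson-solution-x-holder} with $p\in(1-1/m,1)$. Dropping the paper's localization to $K_1$ and its event $\{\norm{Y_\alpha^+-Y_\alpha}\le 1/2\}$ is legitimate, because \eqref{eq:poisson-solution-growth} and \eqref{eq:poisson-solution-x-holder} hold with constants uniform in $x$. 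Your splitting of $\min\{\norm{\Delta}^2,\norm{\Delta}\}$, combined with Cauchy--Schwarz and the Lindeberg bound, does the same job.

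There is, however, a sign error in Step 2, and your stated justification for ignoring it is wrong. After using stationarity, the linear noise term equals $-\alpha^{1-1/m}\E\bigl[\ip{\nabla\varphi(Y_\alpha^+)-\nabla\varphi(Y_\alpha)}{\chi_{x^+}(\xi_\alpha^+)}+\ip{\nabla\varphi(Y_\alpha)}{\chi_{x^+}(\xi_\alpha^+)-\chi_x(\xi_\alpha^+)}\bigr]$. You dropped the leading minus sign. If you keep it, substituting $\nabla\varphi(Y_\alpha^+)-\nabla\varphi(Y_\alpha)\approx-\alpha^{1-1/m}\nabla^2\varphi(Y_\alpha)g_x(\xi_\alpha^+)$ gives $+\alpha^{2-2/m}\E\ip{\nabla^2\varphi(Y_\alpha)g_x(\xi_\alpha^+)}{\chi_x(\xi_\alpha^+)}$. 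So the cross terms enter with the same sign as the $\tfrac12 g_xg_x^\top$ term from Step 1, and that is what produces $M_x$.

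The remark that ``the sign is absorbed since the equation is set equal to zero'' is not valid. Multiplying the identity by $-1$ flips every term, not just this one. Flipping only this term would give $g_xg_x^\top-g_x\chi_x^\top-\chi_xg_x^\top$. At $x=0$ its $\pi_\Xi$-mean is $\Gamma_0-\sum_{k\ge1}(\Gamma_k+\Gamma_k^\top)$, not $\Sigma$, since $\E_{\pi_\Xi}[g_0\chi_0^\top]=\sum_{k\ge1}\Gamma_k$. With the sign tracked correctly, the rest of your estimates go through as written.
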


\begin{proof}
Write
\[
        G_\alpha:=g_{X_\alpha}(\xi_\alpha^+),
        \qquad
        \mathsf G_\alpha:=\mathsf G(X_\alpha,\xi_\alpha^+).
\]
Then
\[
        Y_\alpha^+-Y_\alpha
        =-\alpha^{2-2/m}h_\alpha(Y_\alpha)-\alpha^{1-1/m}G_\alpha
        =-\alpha^{1-1/m}\mathsf G_\alpha.
\]
Let \(K_0:=\supp(\nabla\varphi)\), and let
\(
        K_1:=\{y\in\R^d:\operatorname{dist}(y,K_0)\le1\}.
\)
If \(K_0=\varnothing\), the conclusion is immediate, so assume otherwise.
Define
\[
        \widetilde\chi_{\alpha,y}(\xi)
        :=
        \begin{cases}
        \chi_{\alpha^{1/m}y}(\xi),&y\in K_1,\\
        0,&y\notin K_1.
        \end{cases}
\]
Set
\(
        A_\alpha(y,\xi)
        :=\ip{\nabla\varphi(y)}{\widetilde\chi_{\alpha,y}(\xi)}.
\)
This is well defined because \(\nabla\varphi=0\) outside \(K_0\), and
\(\abs{A_\alpha(y,\xi)}\le C\norm{\nabla\varphi}_\infty R_\chi(\xi)\).
Introduce the localized perturbed test function
\begin{equation}
\label{eq:perturbed-test-function-GK}
        \varphi_\alpha(y,\xi)
        :=
        \varphi(y)-\alpha^{1-1/m}A_\alpha(y,\xi).
\end{equation}
The integrability required below follows from
\eqref{eq:scaling-gradient-second-main}, \(R_\chi\in L^2(\pi_\Xi)\), and
\eqref{eq:poisson-solution-growth}.  Stationarity gives
\begin{equation}
\label{eq:perturbed-stationary-identity}
        \E[\varphi_\alpha(Y_\alpha^+,\xi_\alpha^+)-\varphi_\alpha(Y_\alpha,\xi_\alpha)]=0.
\end{equation}

First consider the uncorrected part.  Taylor's formula and \(Y_\alpha^+-Y_\alpha=-\alpha^{2-2/m}h_\alpha(Y_\alpha)-\alpha^{1-1/m}G_\alpha\) give
\begin{equation}
\label{eq:uncorrected-taylor-generator}
\begin{aligned}
        \frac{\varphi(Y_\alpha^+)-\varphi(Y_\alpha)}{\alpha^{2-2/m}}
        &=-\ip{h_\alpha(Y_\alpha)}{\nabla\varphi(Y_\alpha)}
        -\alpha^{1/m-1}\ip{\nabla\varphi(Y_\alpha)}{G_\alpha} \\
        &\quad+
        \frac12\tr\!\left(G_\alpha G_\alpha^\top\nabla^2\varphi(Y_\alpha)\right)
        +r_\alpha^{(1)},
\end{aligned}
\end{equation}
with \(\E|r_\alpha^{(1)}|\to0\).  The terms in the quadratic expansion containing \(h_\alpha\) are negligible because
\[
        \alpha^{2-2/m}\norm{h_\alpha(Y_\alpha)}^2=\norm{h(X_\alpha)}^2,
        \qquad
        \alpha^{1-1/m}\norm{h_\alpha(Y_\alpha)}\norm{G_\alpha}
        =\norm{h(X_\alpha)}\norm{G_\alpha},
\]
and \eqref{eq:scaling-generator-moment-bounds} makes both expectations tend to
zero.  The third-order Taylor remainder is controlled by the uniform continuity of \(\nabla^2\varphi\): for every \(\eta>0\), after division by \(\alpha^{2-2/m}\) its absolute value is bounded by
\[
        \eta\frac{\norm{Y_\alpha^+-Y_\alpha}^2}{\alpha^{2-2/m}}
        +C_\eta
        \frac{\norm{Y_\alpha^+-Y_\alpha}^2}{\alpha^{2-2/m}}
        \1_{\{\norm{Y_\alpha^+-Y_\alpha}>\eta\}}.
\]
The first term is bounded in expectation uniformly in \(\alpha\) by
\eqref{eq:scaling-gradient-second-main}; the second tends to zero by
\eqref{eq:scaling-gradient-lindeberg-main}.  Sending \(\eta\downarrow0\) proves
\(\E|r_\alpha^{(1)}|\to0\).

Now expand the term involving the solution of the Poisson equation in
\eqref{eq:perturbed-test-function-GK}.  In
\[
        -\alpha^{1/m-1}\E\left[
        A_\alpha(Y_\alpha^+,\xi_\alpha^+)
        -A_\alpha(Y_\alpha,\xi_\alpha)
        \right],
\]
add and subtract \(A_\alpha(Y_\alpha,\xi_\alpha^+)\).  Since
\(A_\alpha(Y_\alpha,\cdot)\) can be nonzero only when
\(Y_\alpha\in K_0\), the first resulting increment is
\[
\begin{aligned}
        -\alpha^{1/m-1}
        \E\left[A_\alpha(Y_\alpha,\xi_\alpha^+)
        -A_\alpha(Y_\alpha,\xi_\alpha)\right]
        &=\alpha^{1/m-1}
        \E\ip{\nabla\varphi(Y_\alpha)}{Qg_{X_\alpha}(\xi_\alpha)}.
\end{aligned}
\]
Here the solution of the Poisson equation is evaluated at
\(X_\alpha=\alpha^{1/m}Y_\alpha\) with \(Y_\alpha\in K_0\).  The last display follows from
\(\chi_x-Q\chi_x=Qg_x\) and cancels the conditional mean of the singular
term in \eqref{eq:uncorrected-taylor-generator}, since
\(\E[G_\alpha\mid Y_\alpha,\xi_\alpha]=Qg_{X_\alpha}(\xi_\alpha)\).

It remains to control
\[
        -\alpha^{1/m-1}\E\left[
        A_\alpha(Y_\alpha^+,\xi_\alpha^+)
        -A_\alpha(Y_\alpha,\xi_\alpha^+)\right].
\]
Let \(E_\alpha:=\{\norm{Y_\alpha^+-Y_\alpha}\le1/2\}\).  By
\eqref{eq:scaling-gradient-lindeberg-main},
\[
        \PP(E_\alpha^c)
        \le4\E\left[\norm{Y_\alpha^+-Y_\alpha}^2\1_{E_\alpha^c}\right]
        =o(\alpha^{2-2/m}).
\]
The bound on \(A_\alpha\), Cauchy--Schwarz, and
\(\xi_\alpha^+\sim\pi_\Xi\) therefore give
\begin{equation}
\label{eq:localized-poisson-large-jump}
\begin{aligned}
        &\alpha^{1/m-1}\E\left[
        \left|A_\alpha(Y_\alpha^+,\xi_\alpha^+)
        -A_\alpha(Y_\alpha,\xi_\alpha^+)\right|
        \1_{E_\alpha^c}\right]\\
        &\qquad\le C\alpha^{1/m-1}
        \bigl(\E_{\pi_\Xi}R_\chi^2\bigr)^{1/2}
        \PP(E_\alpha^c)^{1/2}=o(1).
\end{aligned}
\end{equation}

On \(E_\alpha\), if either term involving \(A_\alpha\) is nonzero, then
one of \(Y_\alpha,Y_\alpha^+\) belongs to \(K_0\), and both belong to
\(K_1\).  Hence
\begin{align*}
        &A_\alpha(Y_\alpha^+,\xi_\alpha^+)
        -A_\alpha(Y_\alpha,\xi_\alpha^+)\\
        &\quad=
        \ip{\nabla\varphi(Y_\alpha^+)-\nabla\varphi(Y_\alpha)}
        {\widetilde\chi_{\alpha,Y_\alpha}(\xi_\alpha^+)}
        +\ip{\nabla\varphi(Y_\alpha^+)}
        {\widetilde\chi_{\alpha,Y_\alpha^+}(\xi_\alpha^+)
        -\widetilde\chi_{\alpha,Y_\alpha}(\xi_\alpha^+)}.
\end{align*}

For the first term on \(E_\alpha\), Taylor's formula gives
\begin{equation*}
\begin{aligned}
        &-\alpha^{1/m-1}
        \E\ip{\nabla\varphi(Y_\alpha^+)-\nabla\varphi(Y_\alpha)}
        {\widetilde\chi_{\alpha,Y_\alpha}(\xi_\alpha^+)}\1_{E_\alpha} \\
        &\qquad=
        \E\tr\!\left(G_\alpha
        \widetilde\chi_{\alpha,Y_\alpha}(\xi_\alpha^+)^\top
        \nabla^2\varphi(Y_\alpha)\right)+o(1).
\end{aligned}
\end{equation*}
The contribution of \(E_\alpha^c\) to the displayed leading term tends to
zero.  Indeed,
\[
        \E[\norm{G_\alpha}^2\1_{E_\alpha^c}]
        \le2\E[\norm{\mathsf G_\alpha}^2\1_{E_\alpha^c}]
        +2\E\norm{h(X_\alpha)}^2\longrightarrow0
\]
by \eqref{eq:scaling-gradient-lindeberg-main} and
\eqref{eq:scaling-generator-moment-bounds}; Cauchy--Schwarz with
\(R_\chi\in L^2(\pi_\Xi)\) then applies.  Also,
\(\nabla^2\varphi(Y_\alpha)\ne0\) implies \(Y_\alpha\in K_0\), so the
solution appearing in the leading term is well defined.

The contribution of \(-\alpha^{2-2/m}h_\alpha\) is
\(O(\alpha^{1-1/m})\): the factor
\(\nabla^2\varphi(Y_\alpha)\) restricts this term to a fixed compact set,
where \(h_\alpha\) is locally uniformly bounded by the expansion in
Assumption~\ref{ass:Hscale}\assitemref{itm:H4}, while
\(\E R_\chi(\xi_\alpha^+)<\infty\).

To control the Taylor remainder, let \(\omega_\varphi\) be a bounded modulus
of continuity of \(\nabla^2\varphi\).  Since
\(Y_\alpha^+-Y_\alpha=-\alpha^{1-1/m}\mathsf G_\alpha\), the absolute
expectation of the remainder after division by \(\alpha^{1-1/m}\) is at
most
\[
        C\E\!\left[
        \omega_\varphi(\alpha^{1-1/m}\norm{\mathsf G_\alpha})
        \norm{\mathsf G_\alpha}R_\chi(\xi_\alpha^+)\right].
\]
For every \(\eta>0\), the contribution of
\(\{\alpha^{1-1/m}\norm{\mathsf G_\alpha}\le\eta\}\) is bounded by
\[
        C\omega_\varphi(\eta)
        \bigl(\E\norm{\mathsf G_\alpha}^2\bigr)^{1/2}
        \bigl(\E_{\pi_\Xi}R_\chi^2\bigr)^{1/2}
        \le C\omega_\varphi(\eta).
\]
On the complementary event, Cauchy--Schwarz bounds the contribution by
\[
        C\left(
        \E\!\left[\norm{\mathsf G_\alpha}^2
        \1_{\{\alpha^{1-1/m}\norm{\mathsf G_\alpha}>\eta\}}\right]
        \right)^{1/2}
        \bigl(\E_{\pi_\Xi}R_\chi^2\bigr)^{1/2},
\]
which tends to zero by \eqref{eq:scaling-gradient-lindeberg-main}.  Sending
\(\eta\downarrow0\) proves the \(o(1)\) remainder in the gradient increment.

For the second term on \(E_\alpha\), which changes the solution of the Poisson equation from
\(\alpha^{1/m}Y_\alpha\) to \(\alpha^{1/m}Y_\alpha^+\), fix
the exponent \(p\in(1-1/m,1)\) in
\eqref{eq:poisson-solution-x-holder}.  That estimate gives
\[
\begin{aligned}
        &\alpha^{1/m-1}
        \E\left|\ip{\nabla\varphi(Y_\alpha^+)}
        {\widetilde\chi_{\alpha,Y_\alpha^+}(\xi_\alpha^+)
        -\widetilde\chi_{\alpha,Y_\alpha}(\xi_\alpha^+)}\right|
        \1_{E_\alpha} \\
        &\qquad\le
        C_p\alpha^{p-(1-1/m)}
        \E[\norm{\mathsf G_\alpha}^p
        R_\chi(\xi_\alpha^+)^{1-p}]\\
        &\qquad\le
        C_p\alpha^{p-(1-1/m)}
        =o(1).
\end{aligned}
\]
Here the second inequality follows from weighted AM--GM,
\(a^pb^{1-p}\le pa+(1-p)b\), the uniform second moment of
\(\mathsf G_\alpha\), and \(R_\chi\in L^2(\pi_\Xi)\).

Combining the preceding identities with
\eqref{eq:perturbed-stationary-identity} gives
\[
\begin{aligned}
        0&=-\E\ip{h_\alpha(Y_\alpha)}{\nabla\varphi(Y_\alpha)} \\
        &\quad+\frac12\E\tr\!\left(
        \Bigl[G_\alpha G_\alpha^\top
        +G_\alpha\widetilde\chi_{\alpha,Y_\alpha}(\xi_\alpha^+)^\top
        +\widetilde\chi_{\alpha,Y_\alpha}(\xi_\alpha^+)G_\alpha^\top\Bigr]
        \nabla^2\varphi(Y_\alpha)
        \right)+o(1).
\end{aligned}
\]
After multiplication by \(\nabla^2\varphi(Y_\alpha)\), conditioning on
\((Y_\alpha,\xi_\alpha)\) identifies the conditional mean with
\(QM_{\alpha^{1/m}Y_\alpha}(\xi_\alpha)\nabla^2\varphi(Y_\alpha)\).
This proves \eqref{eq:prelimit-generator-identity}.
\end{proof}

\begin{proof}[Proof of Lemma~\ref{lem:markov-generator-identification}]
Let \(\varphi\in C_c^3(\R^d)\) and let
\(K:=\supp(\nabla^2\varphi)\).  Lemma~\ref{lem:perturbed-generator-prelimit}
gives \eqref{eq:prelimit-generator-identity}.  Along the subsequence
\(\alpha_k\downarrow0\), local uniform convergence of \(h_\alpha\) and weak
convergence give
\begin{equation}
\label{eq:drift-limit-markov-new}
        \E\ip{h_{\alpha_k}(Y_{\alpha_k})}{\nabla\varphi(Y_{\alpha_k})}
        \longrightarrow
        \int_{\R^d}\ip{h_0(y)}{\nabla\varphi(y)}\,\nu(dy).
\end{equation}
Here \(\nabla\varphi\) is compactly supported and \(h_\alpha\to h_0\)
locally uniformly by Assumption~\ref{ass:Hscale}\assitemref{itm:H4}.

For the covariance term, \eqref{eq:Q-Mx-M0-L1-compact} replaces
\(QM_{\alpha_k^{1/m}Y_{\alpha_k}}(\xi_{\alpha_k})\) by
\(QM_0(\xi_{\alpha_k})\) on \(K\).  Each entry of \(QM_0\) belongs to
\(L^1(\pi_\Xi)\) by Lemma~\ref{lem:covariance-replacement-compact}.
Applying Lemma~\ref{lem:unbounded-decorrelation} entrywise, with the bounded
Lipschitz entries of \(\nabla^2\varphi\), yields
\begin{equation}
\label{eq:QM0-decorrelation-limit}
        \E\tr\!\left(QM_0(\xi_{\alpha_k})\nabla^2\varphi(Y_{\alpha_k})\right)
        \longrightarrow
        \int_{\R^d}\tr\!\left(\bar M\nabla^2\varphi(y)\right)\nu(dy),
\end{equation}
where
\(
        \bar M:=\int_\Xi M_0(\xi)\,\pi_\Xi(d\xi).
\)
It remains only to identify \(\bar M\) with \(\Sigma\).  Let \((\xi_0,\xi_1)\) be two successive values of the stationary driving chain and write \(f=g_0\), \(\chi=\chi_0\).  Since \(\chi-Q\chi=Qf\),
\(
        \E[f(\xi_1)+\chi(\xi_1)\mid\xi_0]=Qf(\xi_0)+Q\chi(\xi_0)=\chi(\xi_0).
\)
With \(D_0=f(\xi_1)+\chi(\xi_1)-\chi(\xi_0)\), expansion and the preceding conditional identity give
\[
\begin{aligned}
        \Sigma
        &=\E[D_0D_0^\top]  \\
        &=\E\bigl[f(\xi_1)f(\xi_1)^\top
        +f(\xi_1)\chi(\xi_1)^\top
        +\chi(\xi_1)f(\xi_1)^\top\bigr] \\
        &=\int_\Xi M_0(\xi)\,\pi_\Xi(d\xi)=\bar M.
\end{aligned}
\]
Combining \eqref{eq:prelimit-generator-identity}, \eqref{eq:drift-limit-markov-new}, and \eqref{eq:QM0-decorrelation-limit} proves \eqref{eq:stationary-generator-limit-main}. 
\end{proof}

\subsection{Proof of Lemma~\ref{lem:limit-diffusion-identification}}
\label{sec:proof-limit-diffusion-identification}

\begin{lemma}
\label{lem:tangent-coercive}
Under Assumption~\ref{ass:Hscale} (specifically, \assitemref{itm:H2} and \assitemref{itm:H4}),
\[
        \ip{y}{h_0(y)}
        \ge
        \frac{c_{\rm in}}{m-1}\norm{y}^m,
        \qquad y\in\R^d.
\]
\end{lemma}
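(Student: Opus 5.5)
The plan is to transfer the lower bound of Lemma~\ref{lem:scaling-deterministic-estimates}(i) from \(h\) to \(h_0\) by a blow-up limit, and then extend it to all of \(\R^d\) by homogeneity.

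First, fix \(y\ne0\) and set \(x=ty\) with \(t>0\) small enough that \(\norm{x}\le R_H/2\). Lemma~\ref{lem:scaling-deterministic-estimates}(i) gives
\[
\ip{ty}{h(ty)}\ge\frac{c_{\rm in}}{m-1}t^m\norm{y}^m.
\]

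Second, use Assumption~\ref{ass:Hscale}\assitemref{itm:H4} to write \(h(ty)=h_0(ty)+o(t^{m-1})\) as \(t\downarrow0\), with \(y\) fixed. Since \(H_0\) is homogeneous of degree \(m\), differentiating \(H_0(\lambda z)=\lambda^mH_0(z)\) in \(z\) shows that \(h_0\) is homogeneous of degree \(m-1\). Hence \(h_0(ty)=t^{m-1}h_0(y)\). Substituting this into the first display gives
\[
t^m\ip{y}{h_0(y)}+o(t^m)\ge\frac{c_{\rm in}}{m-1}t^m\norm{y}^m.
\]

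Third, divide by \(t^m\) and let \(t\downarrow0\); this yields the claimed inequality for every \(y\ne0\). At \(y=0\) the inequality is trivial, since \(h_0(0)=\nabla H_0(0)=0\).

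No step here is a real obstacle. The only point needing care is the justification of the homogeneity of \(h_0\), and it is immediate from differentiating the scaling identity for \(H_0\). In particular, no uniformity in the direction \(y\) is needed, because each \(y\) is treated separately.
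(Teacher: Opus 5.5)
Your proposal is correct and is essentially the paper's own argument. The paper rescales \(x=\alpha^{1/m}y\), which is your \(t\). It applies the same bound \(\ip{x}{h(x)}\ge \frac{c_{\rm in}}{m-1}\norm{x}^m\) from \assitemref{itm:H2}, divides by \(\alpha\) (your \(t^m\)), and passes to the limit using \assitemref{itm:H4} together with the degree-\((m-1)\) homogeneity of \(h_0\).
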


\begin{proof}
The claim is trivial at \(y=0\).  For \(y\ne0\), set \(x=\alpha^{1/m}y\).  For all sufficiently small \(\alpha\), \(\norm{x}\le \frac{R_H}{2}\).  The identity
\(h(x)=\int_0^1\nabla^2H(tx)x\,dt\) and
Assumption~\ref{ass:H}\assitemref{itm:H2} give
\(
        \ip{x}{h(x)}
        \ge
        \frac{c_{\rm in}}{m-1}\norm{x}^m.
\)
Equivalently,
\[
        \ip{y}{\alpha^{-(m-1)/m}h(\alpha^{1/m}y)}
        =
        \alpha^{-1}\ip{x}{h(x)}
        \ge
        \frac{c_{\rm in}}{m-1}\norm{y}^m.
\]
Letting \(\alpha\downarrow0\) and using the expansion in Assumption~\ref{ass:Hscale}\assitemref{itm:H4} together with the homogeneity of \(h_0\) gives the result.
\end{proof}

\begin{lemma}
\label{lem:tangent-power-monotone}
Under Assumption~\ref{ass:Hscale} (specifically, \assitemref{itm:H2} and \assitemref{itm:H4}), there exists \(c_m>0\) such that, for all \(y,z\in\R^d\),
\(
        \ip{y-z}{h_0(y)-h_0(z)}
        \ge c_m\norm{y-z}^m.
\)
\end{lemma}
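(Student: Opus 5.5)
We will prove the estimate by a scaling argument: verify the inequality for the rescaled gradients \(h_\alpha(y)=\alpha^{-(m-1)/m}h(\alpha^{1/m}y)\), which inherit the Hessian lower bound of Assumption~\ref{ass:H}\assitemref{itm:H2}, and then pass to the limit \(\alpha\downarrow0\) using \assitemref{itm:H4}. This is the same device as in the proof of Lemma~\ref{lem:tangent-coercive}, but applied to differences along a segment instead of along a ray from the origin. Working with \(h_\alpha\) is necessary because \assitemref{itm:H4} gives only a first-order expansion of \(\nabla H\), so we cannot read off a Hessian bound for \(H_0\) directly.

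Fix \(y,z\in\R^d\), set \(w:=y-z\), and assume \(w\neq0\). For \(\alpha\) small enough that \(\alpha^{1/m}\max\{\norm y,\norm z\}\le R_H\), the whole segment \(\{\alpha^{1/m}(z+tw):t\in[0,1]\}\) lies in the ball where \assitemref{itm:H2} applies. Since \(H\in C^2\),
\[
        \ip{w}{h_\alpha(y)-h_\alpha(z)}
        =\alpha^{2/m-1}\int_0^1\ip{w}{\nabla^2H(\alpha^{1/m}(z+tw))\,w}\,dt .
\]
At every \(t\) with \(z+tw\neq0\), \assitemref{itm:H2} gives \(\nabla^2H(\alpha^{1/m}(z+tw))\succeq c_{\rm in}\alpha^{(m-2)/m}\norm{z+tw}^{m-2}I_d\). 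There is at most one \(t\) with \(z+tw=0\), so this exceptional point is Lebesgue-null and does not affect the integral. When \(m=2\), the bound holds there as well by the continuity convention in \assitemref{itm:H2}. The powers of \(\alpha\) cancel, and we obtain, uniformly in small \(\alpha\),
\[
        \ip{w}{h_\alpha(y)-h_\alpha(z)}
        \ge c_{\rm in}\norm w^2\int_0^1\norm{z+tw}^{m-2}\,dt .
\]

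Next we let \(\alpha\downarrow0\). By \assitemref{itm:H4} and the \((m-1)\)-homogeneity of \(h_0=\nabla H_0\), \(h_\alpha(y)\to h_0(y)\) and \(h_\alpha(z)\to h_0(z)\), as already used in Lemma~\ref{lem:tangent-coercive}. Since the right-hand side does not depend on \(\alpha\), the same lower bound holds with \(h_0\) in place of \(h_\alpha\).

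The only remaining point, and the one needing any thought, is the uniform lower bound \(\int_0^1\norm{z+tw}^{m-2}\,dt\ge c\norm w^{m-2}\), with \(c\) independent of \(z\). For \(m=2\) the integral equals \(1\) and there is nothing to prove. For \(m>2\), the curve \(t\mapsto z+tw\) moves at constant speed \(\norm w\). Hence the set of times at which it lies in the ball \(\{\norm u<\norm w/4\}\) is an interval of length at most \((\norm w/2)/\norm w=1/2\). On the complementary set, which has measure at least \(1/2\), the integrand is at least \((\norm w/4)^{m-2}\). Therefore the integral is at least \(\tfrac12 4^{-(m-2)}\norm w^{m-2}\), and the lemma follows with \(c_m:=c_{\rm in}\,2^{-1}4^{-(m-2)}\).
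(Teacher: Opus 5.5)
Your proposal is correct and follows the paper's proof essentially step for step: you rescale to $h_\alpha$, integrate the Hessian lower bound of \assitemref{itm:H2} along the segment so that the powers of $\alpha$ cancel, pass to the limit using \assitemref{itm:H4} and the $(m-1)$-homogeneity of $h_0$, and finish with a $z$-independent lower bound on $\int_0^1\norm{z+tw}^{m-2}\,dt$. The only difference is cosmetic and lies in that last bound: the paper uses an orthogonal decomposition to reduce to the one-dimensional integral $\int_0^1\abs{a+t}^{m-2}\,dt$, whereas you bound the time the segment spends in the ball of radius $\norm{w}/4$; both arguments are valid.
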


\begin{proof}
The claim is trivial when \(y=z\).  Set \(e:=y-z\ne0\).  For sufficiently
small \(\alpha\), the segment between \(\alpha^{1/m}z\) and
\(\alpha^{1/m}y\) is contained in
\(\{\norm{x}\le \frac{R_H}{2}\}\).  It can pass through the origin for at
most one parameter value, which does not affect the integral below.  Using
Assumption~\ref{ass:H}\assitemref{itm:H2} along the segment gives
\[
\begin{aligned}
        &\ip{e}{\alpha^{-(m-1)/m}\{h(\alpha^{1/m}y)-h(\alpha^{1/m}z)\}} \\
        &\qquad =
        \alpha^{-1}\ip{\alpha^{1/m}e}{h(\alpha^{1/m}y)-h(\alpha^{1/m}z)} \\
        &\qquad =
        \alpha^{-1}\int_0^1
        \ip{\alpha^{1/m}e}{\nabla^2H(\alpha^{1/m}(z+te))\alpha^{1/m}e}\,dt \\
        &\qquad \ge
        c_{\rm in}\norm{e}^2
        \int_0^1\norm{z+te}^{m-2}\,dt.
\end{aligned}
\]
We use the elementary segment bound
\begin{equation}
\label{eq:elementary-segment-bound-power}
        \inf_{\norm v=1}\inf_{w\in\R^d}
        \int_0^1\norm{w+tv}^{m-2}\,dt>0.
\end{equation}
To prove \eqref{eq:elementary-segment-bound-power}, the case \(m=2\) is
immediate.  For \(m>2\), decompose \(w=a v+w_\perp\), with
\(w_\perp\perp v\).  Then
\[
        \int_0^1\norm{w+tv}^{m-2}\,dt
        \ge
        \int_0^1 |a+t|^{m-2}\,dt.
\]
The last integral is minimized over \(a\in\R\) when the interval \([a,a+1]\) is centered at the origin, and its minimum is
\(2\int_0^{1/2}t^{m-2}\,dt>0\).  Thus \eqref{eq:elementary-segment-bound-power} holds and gives
\(
        \int_0^1\norm{z+te}^{m-2}\,dt
        \ge c\norm{e}^{m-2}.
\)
Letting \(\alpha\downarrow0\) and using the expansion in Assumption~\ref{ass:Hscale}\assitemref{itm:H4} together with the homogeneity of \(h_0\) gives the result.
\end{proof}

\begin{proof}[Proof of Lemma~\ref{lem:limit-diffusion-identification}]
The drift \(-h_0\) is locally Lipschitz because \(H_0\in C^2\), and it has
polynomial growth by homogeneity.  Thus the SDE has a pathwise unique maximal
strong solution up to its explosion time.  Let
\[
        W(y):=1+\norm{y}^2,
        \qquad
        \mathcal L\varphi(y)
        :=-\ip{h_0(y)}{\nabla\varphi(y)}
        +\frac12\tr(\Sigma\nabla^2\varphi(y)).
\]
By Lemma~\ref{lem:tangent-coercive},
\begin{equation}
\label{eq:limit-diffusion-LW}
        \mathcal LW(y)
        =-2\ip{y}{h_0(y)}+\tr(\Sigma)
        \le C-c\norm{y}^m.
\end{equation}
If \(\tau_R:=\inf\{t:\norm{Y_t}\ge R\}\), It\^o's formula applied to \(W(Y_{t\wedge\tau_R})\) gives
\(
        \E_y W(Y_{t\wedge\tau_R})\le W(y)+Ct.
\)
On \(\{\tau_R\le t\}\), the left-hand side is at least \(1+R^2\).  Hence
\(\PP_y(\tau_R\le t)\le (W(y)+Ct)/(1+R^2)\to0\), which proves
nonexplosion.  Applying It\^o's formula without stopping and using
\eqref{eq:limit-diffusion-LW} gives
\(
        \frac1T\int_0^T \E_y\norm{Y_t}^m\,dt
        \le C+\frac{W(y)}{cT}.
\)
The time-averaged laws \(T^{-1}\int_0^T P_t(y,\cdot)\,dt\) are therefore tight.  Since the nonexplosive locally Lipschitz SDE is Feller, the Krylov--Bogoliubov argument gives at least one invariant probability law.

We next show that every invariant probability law has finite \(m\)-moment.  Let \(\theta_n\in C^2([0,\infty))\) be nondecreasing and concave, with \(0\le\theta_n'\le1\), \(\theta_n'(s)=1\) for \(s\le n\), \(\theta_n'(s)=0\) for \(s\ge2n\), \(\theta_n''\le0\), and \(\theta_n'(s)\uparrow1\) for every fixed \(s\).  Set \(W_n(y):=\theta_n(W(y))\).  Then \(W_n\) is bounded with bounded first and second derivatives.  If \(\mu\) is invariant, then \(\int \mathcal LW_n\,d\mu=0\).  Moreover,
\[
        \mathcal LW_n(y)
        =\theta_n'(W(y))\mathcal LW(y)
        +\frac12\theta_n''(W(y))
        \norm{\Sigma^{1/2}\nabla W(y)}^2
        \le \theta_n'(W(y))(C-c\norm{y}^m),
\]
because \(\theta_n''\le0\) and \(\Sigma\succeq0\).  Hence
\[
        c\int \theta_n'(W(y))\norm{y}^m\,\mu(dy)
        \le C\int \theta_n'(W(y))\,\mu(dy)
        \le C.
\]
Letting \(n\to\infty\) and using monotone convergence gives
\(
        \int \norm{y}^m\,\mu(dy)<\infty.
\)
In particular every invariant law has finite second moment.

Let \(\mu\) and \(\nu\) be invariant laws.  Choose a coupling \((Y_0,\widetilde Y_0)\) with finite second moment and drive both solutions by the same Brownian motion.  Since the diffusion coefficient is constant, \(\Delta_t:=Y_t-\widetilde Y_t\) satisfies, up to localization,
\[
        \frac{d}{dt}\norm{\Delta_t}^2
        =-2\ip{\Delta_t}{h_0(Y_t)-h_0(\widetilde Y_t)}
        \le -2c_m\norm{\Delta_t}^m
\]
by Lemma~\ref{lem:tangent-power-monotone}.  With \(D(t):=\E\norm{\Delta_t}^2\), Fatou's lemma and Jensen's inequality give
\(
        D'(t)\le -2c_m\E\norm{\Delta_t}^m
        \le -2c_mD(t)^{m/2}.
\)
Thus \(D(t)\to0\), exponentially if \(m=2\) and polynomially if \(m>2\).  The marginals remain \(\mu\) and \(\nu\), so \(W_2(\mu,\nu)^2\le D(t)\to0\).  Hence \(\mu=\nu\).  Denote the unique invariant law by \(\nu_\infty\).

Finally, suppose \(\rho\) satisfies the stationary weak equation for every
\(\varphi\in C_c^3(\R^d)\).  It then holds on \(C_c^\infty(\R^d)\).  The
martingale problem for \(\mathcal L\) is well posed because the SDE is
pathwise unique and nonexplosive.  The Echeverria criterion therefore implies
that \(\rho\) is invariant; see \cite[Ch.~8]{EthierKurtz1986}.  By uniqueness,
\(\rho=\nu_\infty\).
\end{proof}

\section{Separable extensions}
\label{sec:separable-proofs-appendix}

This appendix proves the separable results from
Section~\ref{subsec:separable-main}.  We sum the one-dimensional contractions
for the constant-stepsize result and apply the generator argument to the
coordinates that remain nonzero under the common normalization.

\subsection{Proof of Corollary~\ref{cor:sep-markov-fixed}}
\label{sec:proof-sep-markov-fixed}

\begin{proof}[Proof of Corollary~\ref{cor:sep-markov-fixed}]
Write \(d_{i,\alpha}:=d_{V_{\alpha,i},\alpha}^{(i)}\).  For each coordinate
\(i\), the one-dimensional proof gives a constant \(c_i>0\) and, after
taking the minimum of the finitely many stepsize thresholds, the synchronous
one-step estimate
\[
    \E\Bigl[d_{i,\alpha}\bigl(F_{U_1}^{(i)}(x_i,\xi),F_{U_1}^{(i)}(y_i,\eta)\bigr)\Bigr]
    \le (1-c_i\alpha^{m_i-1})\,d_{i,\alpha}\bigl((x_i,\xi),(y_i,\eta)\bigr)
\]
for the coordinate map
\[
    F_u^{(i)}(x_i,\xi)
    :=
    \Bigl(x_i-\alpha\bigl(h_i(x_i)+g_i(x_i,\Phi(\xi,u))\bigr),\Phi(\xi,u)\Bigr).
\]
The \(i\)th coordinate projection of the full separable map is
\(
    (x,\xi)\mapsto F_u^{(i)}(x_i,\xi).
\)
Since the same innovation \(U_1\) drives every coordinate, summing the
coordinatewise estimates gives
\[
    \E\Bigl[d_{{\rm sep},\alpha}\bigl(F_{U_1}(x,\xi),F_{U_1}(y,\eta)\bigr)\Bigr]
    \le (1-c_0\alpha^{m_{\max}-1})\,d_{{\rm sep},\alpha}\bigl((x,\xi),(y,\eta)\bigr),
\]
where \(c_0:=\min_{1\le i\le d}c_i>0\).  Indeed, for
\(0<\alpha\le1\),
\[
        c_i\alpha^{m_i-1}
        \ge c_0\alpha^{m_{\max}-1},
        \qquad 1\le i\le d.
\]
This is the required one-step contraction in the additive separable metric.

It remains to check the reference-point integrability.  The one-dimensional
hypotheses may use different reference points.  Let \(\xi_\star^{(i)}\) be one
for coordinate \(i\), and fix \(\xi^\circ\in\Xi\).  By
Assumption~\ref{ass:N}\assitemref{itm:N1},
\[
    \E\norm{\Phi(\xi^\circ,U_1)-\xi^\circ}
    \le (1+\rho_\Xi)\norm{\xi^\circ-\xi_\star^{(i)}}
    + \E\norm{\Phi(\xi_\star^{(i)},U_1)-\xi_\star^{(i)}}<\infty.
\]
If \(\beta_i=2\), then Assumption~\ref{ass:N}\assitemref{itm:N2} gives
\[
    \E\abs{g_i(0,\Phi(\xi^\circ,U_1))}
    \le
    \E\abs{g_i(0,\Phi(\xi_\star^{(i)},U_1))}
    + L_{g,\Phi}\norm{\xi^\circ-\xi_\star^{(i)}}<\infty.
\]
Thus \(\xi^\circ\) works for every coordinate.  Each \(d_{i,\alpha}\)
dominates the base norm
\(\abs{x_i-y_i}+\alpha^{-1}\norm{\xi-\eta}\) and is locally comparable to it
on bounded sets.  Hence \(d_{{\rm sep},\alpha}\) is complete and separable on
\(\R^d\times\Xi\) and has the usual Borel \(\sigma\)-field.  Summing the
coordinatewise reference-point costs at \((0,\xi^\circ)\) and applying
Proposition~\ref{prop:CD-to-ergodic} gives the invariant law, uniqueness, and
\eqref{eq:sep-markov-contraction}.
\end{proof}

\subsection{Proof of Corollary~\ref{cor:sep-markov-scaling}}
\label{sec:proof-sep-markov-scaling}
\begin{proof}[Proof of Corollary~\ref{cor:sep-markov-scaling}]
Write \(q:=\lvert\mathcal I_*\rvert\), use a subscript \(*\) to denote
restriction to the coordinates in \(\mathcal I_*\), and set
\(
        Y_{\alpha,*}
        :=\alpha^{-1/m_{\max}}
        X_{\infty,*}^{(\alpha),\rm sep}.
\)
The coordinatewise moment bounds from the proof of
Theorem~\ref{thm:clt-iso-markov} imply that
\((Y_{\alpha,*})_\alpha\) is tight in \(\R^q\).  We first identify its
subsequential limits.  By separability, the update of the active block does
not depend on the remaining iterate coordinates.  On the scale
\(\alpha^{1/m_{\max}}\), its stationary one-step increment is
\[
        Y_{\alpha,*}^+-Y_{\alpha,*}
        =-\alpha^{2-2/m_{\max}}h_{\alpha,*}(Y_{\alpha,*})
        -\alpha^{1-1/m_{\max}}
        g_*(\alpha^{1/m_{\max}}Y_{\alpha,*},\xi_\alpha^+),
\]
where
\[
        h_{\alpha,*}(y)
        :=\bigl(
        \alpha^{-(m_{\max}-1)/m_{\max}}
        h_i(\alpha^{1/m_{\max}}y_i)
        \bigr)_{i\in\mathcal I_*},
\]
and \(g_*(x,\xi):=(g_i(x_i,\xi))_{i\in\mathcal I_*}\).  The coordinatewise
tangent assumptions give
\[
        h_{\alpha,*}\longrightarrow h_{0,*},
        \qquad
        h_{0,*}(y):=(h_{i,0}(y_i))_{i\in\mathcal I_*}
\]
locally uniformly.

For \(x\in\R^q\), let
\(
        \chi_{*,x}(\xi)
        :=(\chi_{i,x_i}(\xi))_{i\in\mathcal I_*},
\)
where \(\chi_{i,x_i}\) is the one-dimensional solution of the Poisson
equation for coordinate \(i\).  At the minimizer, define
\[
        D_i(\xi,U_1)
        :=g_i(0,\Phi(\xi,U_1))
        +\chi_{i,0}(\Phi(\xi,U_1))-\chi_{i,0}(\xi),
        \qquad i\in\mathcal I_*.
\]
The vector \(D_*=(D_i)_{i\in\mathcal I_*}\) is conditionally centered and,
by Lemma~\ref{lem:poisson-gk-construction},
\[
        \Sigma_{\rm act}:=\E[D_*D_*^\top]
        =(\Sigma_{\rm sep})_{\mathcal I_*\times\mathcal I_*}.
\]
Thus the off-diagonal asymptotic covariances created by the common driving
chain are retained within the active block.

For \(\varphi\in C_c^3(\R^q)\), use the perturbed test function
\[
        \varphi_\alpha(y,\xi)
        :=\varphi(y)
        -\alpha^{1-1/m_{\max}}
        \ip{\nabla\varphi(y)}
        {\chi_{*,\alpha^{1/m_{\max}}y}(\xi)}.
\]
The calculation in Lemma~\ref{lem:markov-generator-identification} applies
componentwise to this block.  For
\[
        M_{*,x}(\xi)
        :=g_*(x,\xi)g_*(x,\xi)^\top
        +g_*(x,\xi)\chi_{*,x}(\xi)^\top
        +\chi_{*,x}(\xi)g_*(x,\xi)^\top,
\]
each entry is a finite sum of products of one-dimensional terms.  The
coordinatewise Lipschitz estimates for \(g_i\), the H\"older estimates
\eqref{eq:poisson-solution-x-holder} for \(\chi_i\), and the coordinatewise
stationary second moments therefore give, as in
Lemma~\ref{lem:covariance-replacement-compact},
\[
        \E\!\left[
        \norm{
        QM_{*,\alpha^{1/m_{\max}}Y_{\alpha,*}}(\xi_\alpha)
        -QM_{*,0}(\xi_\alpha)}
        \1_{\{Y_{\alpha,*}\in K\}}
        \right]\longrightarrow0
\]
for every compact \(K\subset\R^q\).  The entries of \(QM_{*,0}\) belong to
\(L^1(\pi_\Xi)\).  Applying Lemma~\ref{lem:unbounded-decorrelation} entrywise
to \(\nabla^2\varphi(Y_{\alpha,*})\) gives the required product limits.
Consequently, every subsequential weak
limit \(\nu_*\) of \(Y_{\alpha,*}\) satisfies
\[
        \int_{\R^q}
        \left[
        -\ip{\nabla\varphi(y)}{h_{0,*}(y)}
        +\frac12\tr\bigl(\Sigma_{\rm act}\nabla^2\varphi(y)\bigr)
        \right]\nu_*(dy)=0,
        \qquad \varphi\in C_c^3(\R^q).
\]
Because every coordinate in \(\mathcal I_*\) has exponent \(m_{\max}\),
the active drift satisfies, for constants \(c,c'>0\),
\[
        \ip{y}{h_{0,*}(y)}
        \ge c\sum_{i\in\mathcal I_*}|y_i|^{m_{\max}}
        \ge c'\norm{y}^{m_{\max}},
\]
and the same argument gives
\(
        \ip{y-z}{h_{0,*}(y)-h_{0,*}(z)}
        \ge c'\norm{y-z}^{m_{\max}}.
\)
Lemma~\ref{lem:limit-diffusion-identification} therefore identifies a unique
invariant law for the active block of \eqref{eq:sep-limitSDE-d}.  Hence the
entire family \(Y_{\alpha,*}\) converges to this law.

It remains to consider coordinates outside \(\mathcal I_*\).  If
\(i\notin\mathcal I_*\), then
\[
        \frac{X_{\infty,i}^{(\alpha),\rm sep}}
        {\alpha^{1/m_{\max}}}
        =\alpha^{1/m_i-1/m_{\max}}
        \left(
        \frac{X_{\infty,i}^{(\alpha),\rm sep}}{\alpha^{1/m_i}}
        \right)
        \longrightarrow0
        \qquad\text{in probability},
\]
because the bracketed family is tight by the coordinatewise limit stated
before the corollary.  In \eqref{eq:sep-limitSDE-d}, these inactive
coordinates have no Brownian forcing and solve
\(
        dY_{i,t}=-h_{i,0}(Y_{i,t})\,dt.
\)
The one-dimensional coercivity gives
\(y_i h_{i,0}(y_i)\ge c|y_i|^{m_i}\), so every such deterministic flow
converges to zero and has \(\delta_0\) as its unique invariant law.  The
active block has the unique invariant law identified above.  Therefore
\eqref{eq:sep-limitSDE-d} has a unique invariant law, with inactive
coordinates equal to zero.  Combining the active-block convergence with the
inactive-coordinate convergence proves the asserted convergence to
\(Y_\infty^{\rm sep}\).
\end{proof}

\subsection{Proof of Remark~\ref{rem:sep-scaling-independent}}
\label{sec:pf-rmk-rem:sep-scaling-independent}
\begin{proof}[Proof of Remark~\ref{rem:sep-scaling-independent}]
For each coordinate \(i\), the one-dimensional version of Theorem~\ref{thm:clt-iso-markov} gives
\(
    \frac{X_{\infty,i}^{(\alpha)}}{\alpha^{1/m_i}}\Rightarrow Y_{\infty,i}.
\)
For each fixed \(\alpha\), the stationary law factorizes by
Remark~\ref{rem:sep-independent-coordinates}, so the rescaled coordinates are
independent.  Hence the joint characteristic function factorizes:
\[
    \E\exp\left(i\sum_{i=1}^d t_i\frac{X_{\infty,i}^{(\alpha)}}{\alpha^{1/m_i}}\right)
    = \prod_{i=1}^d \E\exp\left(i t_i\frac{X_{\infty,i}^{(\alpha)}}{\alpha^{1/m_i}}\right)
    \longrightarrow \prod_{i=1}^d \E e^{it_iY_{\infty,i}}.
\]
L\'evy's continuity theorem gives the claimed joint convergence.
\end{proof}

\bibliographystyle{abbrvnat}
\bibliography{references}

\end{document}